\setlist[enumerate]{leftmargin=*, itemsep=0pt, topsep=0pt}
\title{Faithful Group Shapley Value}
\author{%
    Kiljae Lee$^*$\\
    The Ohio State University\\
    \texttt{lee.10428@osu.edu}
    \And
    Ziqi Liu\thanks{Lee and Liu equally contributed.  
    They are {\bf co-first authors} and were listed {\bf alphabetically}.}\\
    Carnegie Mellon University\\
    \texttt{ziqiliu2@andrew.cmu.edu}
    \AND
    Weijing Tang\\
    Carnegie Mellon University\\
    \texttt{weijingt@andrew.cmu.edu}
    \And
    Yuan Zhang\thanks{Corresponding author.}\\
    The Ohio State University\\
    \texttt{yzhanghf@stat.osu.edu}
}
\Crefname{equation}{Eq.}{Eqs.}
\newtcolorbox[auto counter, number within=section]{algobox}[2][]{%
    breakable,
    floatplacement=wrap,
    colback=white,
    colframe=black,
    fonttitle=\bfseries,
    title=Algorithm~\thetcbcounter: #2,
    label={#1},
    width=0.5\textwidth,
    boxrule=0.5pt,
    arc=3pt,
    halign=center
}
\newtheorem{theorem}{Theorem}
\newtheorem*{theorem*}{Theorem}
\newtheorem{lemma}{Lemma}
\newtheorem{proposition}{Proposition}
\newtheorem{definition}{Definition}
\newtheorem{assumption}{Assumption}
\newtheorem{remark}{Remark}
\newtheoremstyle{example}
  {3pt}   % Space above
  {3pt}   % Space below
  {\itshape}  % Body font
  {}      % Indent amount
  {\bfseries} % Theorem head font
  {.}     % Punctuation after theorem head
  { }     % Space after theorem head
  {\thmname{#1}\thmnumber{ #2}\thmnote{ (#3)}} % Theorem head spec
\theoremstyle{example}
\renewcommand{\section}{\@startsection {section}{1}{\z@}%
  {-1ex plus -0.2ex minus -.1ex}% Space before (shrunk further)
  {0.7ex plus 0.1ex minus 0.1ex}% Space after (shrunk further)
  {\large\bf\raggedright}}% Style (unchanged)
\renewcommand{\subsection}{\@startsection{subsection}{2}{\z@}%
  {-0.9ex plus -0.2ex minus -.1ex}% Space before (shrunk further)
  {0.4ex plus 0.1ex}% Space after (shrunk further)
  {\normalsize\bf\raggedright}}% Style (unchanged)
\renewcommand{\subsubsection}{\@startsection{subsubsection}{3}{\z@}%
  {-0.7ex plus -0.2ex minus -.1ex}% Space before (shrunk further)
  {0.3ex plus 0.1ex}% Space after (shrunk further)
  {\normalsize\bf\raggedright}}% Style (unchanged)
\renewcommand{\paragraph}{\@startsection{paragraph}{4}{\z@}%
  {0.05ex plus 0.01ex minus .005ex}% Space before (halved)
  {-0.3em}% Space after (halved negative indentation)
  {\normalsize\bf}}% Style (unchanged)
\newcommand{\pr}{\mathbb{P}}
\newcommand{\ep}{\mathbb{E}}
\definecolor{yuancolor}{rgb}{0,0.42,0.24}
\definecolor{bleudefrance}{rgb}{0.19, 0.55, 0.91}
\newcommand{\SV}{\operatorname{SV}}
\newcommand{\GSV}{\operatorname{GSV}}
\newcommand{\FGSV}{\operatorname{FGSV}}
\renewcommand{\hat}{\widehat}
\renewcommand{\tilde}{\widetilde}
\newtheorem{keyobservation}{Key observation}
\begin{document}

\maketitle

\begin{abstract}
    Data Shapley is an important tool for data valuation, which quantifies the contribution of individual data points to machine learning models. 
    In practice, group-level data valuation is desirable when data providers contribute data in batch. 
    However, we identify that existing group-level extensions of Data Shapley are vulnerable to \textit{shell company attacks}, where strategic group splitting can unfairly inflate valuations. 
    We propose Faithful Group Shapley Value (FGSV) that uniquely defends against such attacks. 
    Building on original mathematical insights, we develop a provably fast and accurate approximation algorithm for computing FGSV. 
    Empirical experiments demonstrate that our algorithm significantly outperforms state-of-the-art methods in computational efficiency and approximation accuracy, while ensuring faithful group-level valuation.
\end{abstract}

\section{Introduction}
\label{sec:intro}

As data become increasingly crucial in modern machine learning, quantifying its value has significant implications for faithful compensation and data market design~\citep{sestino2025decoding, zhang2024survey}. 
The Shapley value, a foundational concept from cooperative game theory~\citep{shapley1953value}, stands out as the unique valuation method satisfying four desirable axioms for faithful data valuation~\citep{ghorbani2019data, jia2019towards}. 
Consequently, it has been applied to diverse applications, including collaborative intelligence~\citep{li2021shapley, liu2022gtg} and copyright compensation~\citep{wang2024economic}, and
quantifying feature importance in explainable AI~\citep{lundberg2017unified}.

\begin{wrapfigure}{r}{0.45\textwidth} 
    \vspace{-2em} 
    \includegraphics[width=0.4\textwidth]{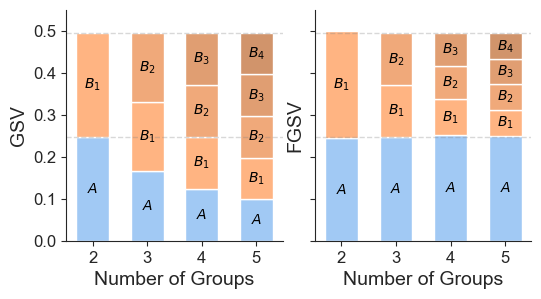}
    \vspace{-1em}
    \caption{
        Left: GSV; right: FGSV (our method).
        Vertical span: valuation.
        Group $A$ is fixed; group $B$ engages increasing degrees of \emph{shell company attack} (left$\to$right).
        Detailed experimental set-up in Appendix~\ref{sec::exp_detail}.
    }
    \vspace{-1em} 
    \label{fig::simu::gsv-vs-fgsv}
\end{wrapfigure}

Many real-world scenarios demand evaluation of \emph{data sets} instead of individual data points.
In applications like data marketplaces~\citep{zhang2024survey} or compensation allocation for generative AI~\citep{deng2023computational, wang2024economic}, data are contributed by owners who possess entire datasets, making group-level data valuation a natural choice. 
A more technical motivation is that individual-level Shapley values are often computationally challenging to approximate for big data; while group-level valuation is much faster and can provide useful insights~\citep{kang2024autoscale}.
In explainable AI, group-level feature importance reportedly provides more robust and interpretable valuation \citep{wang2025group, covert2021improving}.

As group data valuation frameworks are increasingly used in real-world scenarios such as data markets and copyright compensation, their robustness against adversarial manipulation becomes important.
Prior studies on attacks and defenses in data valuation have focused primarily on the \emph{individual-level} data valuation, such as the \textit{copier attack} that duplicates existing data to unjustly inflate value~\citep{Falconer2025-kj,han2022replication}. In contrast, we identify a new vulnerability unique to group data valuation, which has been largely unexplored in literature.
Existing extensions of the Shapley framework to group data valuation treat each pre-defined group as an atomic unit and then apply the standard Shapley value formulation~\citep{jullum2021groupshapley, kwon2024group, wang2024economic, wang2025group}, which we refer to as Group Shapley Value (GSV). 
However, we demonstrate---both theoretically and empirically---that this approach suffers from susceptibility to strategic manipulation through partitioning.
Consider the valuation of a fixed data group $A$. 
As shown in Figure~\ref{fig::simu::gsv-vs-fgsv}, partitioning the remaining data points into smaller subgroups reduces the GSV for $A$, despite no change in data content.
In other words,  malicious players may exploit the loophole in GSV by splitting their data among puppy subsidiaries, a vulnerability we coin as {\bf shell company attack}.

To address this issue, we propose a faithfulness axiom for group data valuation, such that the total valuation of the same set of data remains unchanged, regardless of how others are subdivided. 
Based on this, we introduce the {\bf Faithful Group Shapley Value (FGSV)} for a group as the sum of individual Data Shapley values of its members, and prove that it uniquely satisfies the set of axioms that are desirable for faithful group data valuation.
Figure~\ref{fig::simu::gsv-vs-fgsv} demonstrates that FGSV effectively defends against the shell company attack.

While FGSV safeguards faithfulness, its exact computation requires combinatorial computation of  individual Shapley values. 
Although numerous approximation algorithms for individual Shapley values have been developed~\citep{castro2009polynomial,ghorbani2019data, jia2019towards, wang2023note2, lundberg2017unified, covert2021improving, zhang2023efficient, li2024one, musco2025provably}, simply applying them and summing up results in each group is slow for large groups and may compound approximation errors.
Importantly, we proved that a small subset of terms dominate in FGSV's formula.
As a result, we developed an original algorithm that directly approximates FGSV fast and accurately.
Our numerical experiments demonstrate its computational efficiency and approximation accuracy against summing up individual Shapley values computed by state-of-the-art (SOTA) methods. 
We also applied FGSV to faithful copyright compensation on Stable Diffusion models for image generation.
\section{Preliminaries: individual and group Shapley values} \label{sec::background}

{\bf Individual Shapley value (SV).}
Let $\mathcal{D}=\{z_1,\ldots,z_n\} \in \mathcal{Z}^n$ denote the training data, where $n:=|\mathcal{D}|$.
Let $S\subseteq[n]$ be an index set of cardinality $s:=|S|$, and write ${\cal S}:=\{z_i:i\in S\}$.
A \emph{utility function} $U({\cal S})$ assigns a performance score (such as classification accuracy) to method trained based on data ${\cal S}$.
To simplify notation, we may refer to ``$U({\cal S})$'' by index as ``$U(S)$'' and write them interchangeably.
The \emph{individual Shapley value} of data point $i$ is defined as
\begin{equation}
    \SV(i) := 
    \sum_{S \subseteq [n] \setminus \{i\}} \frac{|S|! (n - |S| - 1)!}{n!} 
    \{ U(S \cup \{i\}) - U(S) \}.
    \label{eqn::shapley_value_def}
\end{equation}
The widespread adoption of Shapley value stems from its strong theoretical foundation that it uniquely satisfies four desired axioms, called \emph{null player}, \emph{symmetry}, \emph{linearity}, and \emph{efficiency}.
We will review these principles as part of our axiomatization for group Shapley value in Definition \ref{definition::axioms}.

{\bf Group Shapley value (GSV).} 
As mentioned in Section \ref{sec::background}, most group-level Shapley values \citep{jullum2021groupshapley, wang2024economic, wang2025group} adopt the so-called \emph{group-as-individual} (GaI) approach.
Suppose the entire data set is partitioned into $(K+1)$ disjoint groups: $\mathcal{D} = S_0\cup S_1\cup\cdots\cup S_K$, in which, $S_0 \subseteq [n]$ is the group we aim to evaluate.
The \emph{group Shapley value (GSV)} of $S_0$, as in \cite{jullum2021groupshapley, wang2024economic, wang2025group}, is defined as
\begin{align} 
    \GSV(S_0) 
    =&~ 
    \sum_{I \subseteq \{1, \dots, K\}} 
    \frac{|I|! (K - |I|)!}{(K+1)!} 
    \Big\{
        U\Big( \big\{\cup_{k \in I} S_{k}\big\} \cup S_0 \Big) - U\Big( \cup_{k \in I} S_{k} \Big)
    \Big\}.
    \label{eqn::def_GSV_GaI}
\end{align}
Clearly, \eqref{eqn::def_GSV_GaI} is a ``set version'' of \eqref{eqn::shapley_value_def}.
Thus GSV also satisfies the four axioms for SV at the group level.
However, it is questionable whether these axioms, originally formulated for individual-level valuation, are appropriate for groups. 

{\bf Fairness issue with GSV.}
The GSV valuation of $S_0$ as in \eqref{eqn::def_GSV_GaI} can be impacted by how the rest of the data are grouped.
A serious consequence is that GSV is prone to the {\bf shell company attack}: splitting a group into smaller subgroups will increase the total valuation earned by the same set of data points.
We have seen an illustration in Figure \ref{fig::simu::gsv-vs-fgsv} in Section \ref{sec:intro}.
Now we theoretically formalize this observation.
\begin{proposition}[Shell company attack] 
    \label{proposition::problems_with_group_as_individual}
    Let $\mathcal{P}$ be the underlying data distribution over the sample space $\mathcal{Z}$, and write $\mathcal{P}^s$ for the $s$-fold product distribution; thus $S \sim \mathcal{P}^s$ denotes an i.i.d.\ sample $S=(z_1,\ldots,z_s)\in\mathcal{Z}^s$ with $z_i \sim \mathcal{P}$.
    Let $\bar{U}(s) = \mathbb{E}_{S \sim \mathcal{P}^s} [ U(S) ]$ denote the expected utility for data $S$ (which only depends on $s=|S|$). 
    Now we split a group $S_k$ into two non-empty subgroups $S_k'$ and $S_k''$ (i.e., $S_k'\cup S_k'' = S_k$, $S_k'\cap S_k''=\emptyset$, and $S_k', S_k'' \neq \emptyset$),
    If $\bar{U}(s)$ satisfies a \emph{prudence} condition:
    \begin{align}
        \Delta_s^3 \bar{U}(s) := \bar{U}(s+3) - 3\bar{U}(s+2) + 3\bar{U}(s+1) - \bar{U}(s) > 0,
        \label{eqn::prudence}
    \end{align}
    then
    \begin{equation} \label{eqn::problems_with_group_as_individual}
        \ep[ \GSV(S_k) ] < \ep[ \GSV(S_k') ] + \ep[ \GSV(S_k'') ].
    \end{equation}
\end{proposition}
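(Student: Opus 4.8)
The plan is to reduce the inequality to a combinatorial statement about a weighted sum over a down-set in the Boolean lattice $2^{[K]}$, and then to close that statement with a symmetric chain decomposition. First I would set up notation: because $\bar U$ depends only on cardinality, write $n_k=|S_k|=a+b$ with $a=|S_k'|$, $b=|S_k''|$, let $m_1,\dots,m_K$ be the sizes of the $K$ groups other than $S_k$ (unchanged by the split), and put $m(T):=\sum_{i\in T}m_i$ for $T\subseteq[K]$; also relabel so that the group under evaluation plays the role of $S_0$ in \eqref{eqn::def_GSV_GaI}. Expanding $\ep[\GSV(S_k)]$, $\ep[\GSV(S_k')]$ and $\ep[\GSV(S_k'')]$ and grouping coalitions by their intersection $T$ with the $K$ groups common to both games (and, in the split game, by whether the other new subgroup lies in the coalition), the Shapley weights satisfy $c_1(t)+c_2(t)=c_0(t)$, where $c_0(t)=\tfrac{t!(K-t)!}{(K+1)!}$, $c_1(t)=\tfrac{t!(K+1-t)!}{(K+2)!}$, $c_2(t)=\tfrac{(t+1)!(K-t)!}{(K+2)!}$; this forces the three sums to telescope against one another, and I expect everything to collapse to
\begin{align}
D:=\ep[\GSV(S_k')]+\ep[\GSV(S_k'')]-\ep[\GSV(S_k)]=\sum_{T\subseteq[K]}w(|T|)\,g\big(m(T)\big),
\label{eqn:planD}
\end{align}
with $w(t):=c_1(t)-c_2(t)=\tfrac{t!(K-t)!}{(K+2)!}(K-2t)$ and $g(\mu):=\bar U(a+\mu)+\bar U(b+\mu)-\bar U(\mu)-\bar U(a+b+\mu)$.

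Next I would isolate two structural facts. First, writing $g$ as an iterated finite difference, $g(\mu)=-\sum_{j=0}^{a-1}\sum_{i=0}^{b-1}\Delta^2\bar U(\mu+i+j)$, gives $g(\mu)-g(\mu+1)=\sum_{j=0}^{a-1}\sum_{i=0}^{b-1}\Delta^3\bar U(\mu+i+j)>0$, where $\Delta^3\bar U$ is the third forward difference from \eqref{eqn::prudence}; so prudence says precisely that $g$ is strictly decreasing. Second, the antisymmetry $w(t)=-w(K-t)$ gives $\sum_{T\subseteq[K]}w(|T|)=\sum_{t=0}^K\binom{K}{t}w(t)=\tfrac{1}{(K+1)(K+2)}\sum_{t=0}^K(K-2t)=0$. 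Substituting $g(\mu)=g(0)-\sum_{\nu=0}^{\mu-1}e(\nu)$ with $e(\nu):=g(\nu)-g(\nu+1)>0$ into \eqref{eqn:planD}, dropping the constant term by the second fact, and interchanging the order of summation, I expect
\begin{align}
D=\sum_{\nu\ge 0} e(\nu)\,W(\nu),\qquad W(\nu):=\sum_{\substack{T\subseteq[K]\\ m(T)\le\nu}}w(|T|).
\label{eqn:planDD}
\end{align}
Since every $e(\nu)>0$, it then suffices to prove $W(\nu)\ge 0$ for all $\nu$ together with strictness at some $\nu$ where $e(\nu)>0$; the choice $\nu=0$ gives the latter, as $W(0)=w(0)=\tfrac{K}{(K+1)(K+2)}>0$ (the remaining groups being nonempty) and $e(0)>0$ by \eqref{eqn::prudence}.

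The main obstacle is the nonnegativity $W(\nu)\ge 0$, i.e.\ that $\sum_{T\in\mathcal F}w(|T|)\ge 0$ for the family $\mathcal F=\{T\subseteq[K]:m(T)\le\nu\}$, which is a down-set because $m$ is monotone under inclusion. I would handle it with a symmetric chain decomposition of $2^{[K]}$: each chain runs through the consecutive levels $r,r+1,\dots,K-r$ for some $r\le K/2$, and being a down-set $\mathcal F$ meets each such chain in an initial segment $\{\text{levels }r,\dots,\ell\}$, so $W(\nu)=\sum_{\text{chains}}\sum_{j=r}^{\ell}w(j)$ and it is enough that every partial sum $\sum_{j=r}^{\ell}w(j)$ with $r\le\ell\le K-r$ is nonnegative. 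This holds because $w(j)\ge 0$ for $j\le K/2$ and $w(j)\le 0$ beyond, while the full chain sum $\sum_{j=r}^{K-r}w(j)=0$ by $w(j)=-w(K-j)$; hence for $\ell\le K/2$ the partial sum is a sum of nonnegative terms, and for $\ell>K/2$ one rewrites $\sum_{j=r}^{\ell}w(j)=\sum_{j=r}^{K-\ell-1}w(j)\ge 0$ again using the antisymmetry. Summing over chains gives $W(\nu)\ge 0$, and then \eqref{eqn:planDD} yields $D>0$, which is \eqref{eqn::problems_with_group_as_individual}. The steps leading to \eqref{eqn:planD}--\eqref{eqn:planDD} are essentially bookkeeping with the Shapley weights and finite differences; the symmetric-chain argument for $W(\nu)\ge 0$ is the one genuinely non-routine ingredient.
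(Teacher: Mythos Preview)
Your proposal is correct and reaches the same key decomposition $D=\sum_{T\subseteq[K]} w(|T|)\,g(m(T))$ as the paper, with the same observation that prudence makes $g$ strictly decreasing (the paper's $\Delta$ is your $g$). The divergence is in the closing combinatorial step. The paper pairs levels $m^*$ and $K-m^*$ directly via an elementary averaging identity (its Lemma~\ref{lemma:algebraic_sum_identity_prop1}): for each $m^*<K/2$ it rewrites $\sum_{|I^*|=m^*}f(I^*)+\sum_{|I^*|=K-m^*}f(I^*)$ as an average over nested pairs $(I^*,I^*\cup L^*)$, so positivity of $D$ follows termwise from $g(s_{I^*})>g(s_{I^*\cup L^*})$. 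Your route instead applies Abel summation to reduce to $W(\nu)\ge 0$ for the down-set $\{T:m(T)\le\nu\}$, and then closes with a symmetric chain decomposition of $2^{[K]}$. Your argument is more structural---it invokes a standard extremal-combinatorics tool and in fact proves the stronger intermediate statement that $\sum_{T\in\mathcal F}w(|T|)\ge 0$ for \emph{every} down-set $\mathcal F$, not just the threshold sets $\{m(T)\le\nu\}$---whereas the paper's pairing lemma is ad~hoc but entirely self-contained and requires no outside machinery. Both approaches exploit the same antisymmetry $w(t)=-w(K-t)$; yours makes the underlying lattice structure more visible, the paper's keeps everything at the level of explicit binomial identities.
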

Condition \eqref{eqn::prudence} is a familiar concept in economics, 
characterizing risk-aversion behaviors under uncertainty \citep{gollier2001economics}.
It is also observed in machine learning, where the performance of a learning method saturates as the same type of data repeatedly come in \citep{kaplan2020scaling, hestness2017deep}.

\section{Our method}
\label{section::our-method}

\subsection{Faithfulness axiom and Faithful Group Shapley Value (FGSV)}

Motivated by the faithfulness issue, we introduce a set of axioms that formalize desirable principles for faithful group data valuation. 
We first axiomatize group-level data valuation that generalizes the Shapley's four axioms for individual valuation.
\begin{definition}[Group data valuation] 
    \label{definition::group-level-data-valuation-ziqi}
    For data $\mathcal{D} = \{z_1, \dots, z_n\}$, utility $U$ and partition $\Pi = \{S_1, \dots, S_K\}$ such that $S_{k_1}\cap S_{k_2}=\emptyset, \forall 1\leq k_1<k_2\leq n$ and $S_1 \cup \cdots \cup S_K = [n]$, 
    a \emph{group data valuation} method $\nu_{U, \mathcal{D}, \Pi}(\cdot): \Pi \to \mathbb{R}$ is a mapping  that assigns a real-valued score to each $S_k\in \Pi$.
\end{definition} 

Next, we propose a set of axioms that a faithful group data valuation method should satisfy.

\begin{definition}[Axioms for faithful group data valuation] 
    \label{definition::axioms}
    A group data valuation $\nu_{U, \mathcal{D}, \Pi}$ is called \textbf{faithful} if it satisfies the following axioms for any dataset $\mathcal{D}$, utility $U$, and partition $\Pi$:
    \begin{enumerate}
        \item \textbf{Null player}: For any $S\in \Pi$, if every subset $S' \subseteq S$ satisfies $U(S'' \cup S') = U(S'')$ for all $S'' \subseteq [n] \setminus S'$, 
        then $\nu_{U, \mathcal{D}, \Pi}(S) = 0$.
        
        \item \textbf{Symmetry}: For any $S_1, S_2 \in \Pi$, $|S_1|=|S_2|$, if there is a bijection $\sigma: S_1 \to S_2$, s.t. $U(S'' \cup S') = U(S'' \cup \sigma(S'))$ for all $S' \subseteq S_1$ and $S'' \subseteq [n] \setminus (S' \cup \sigma(S'))$, 
        then $\nu_{U, \mathcal{D}, \Pi}(S_1) = \nu_{U, \mathcal{D}, \Pi}(S_2)$.

        \item \textbf{Linearity}: For any utility functions $U_1, U_2$ and scalars $\alpha_1, \alpha_2 \in \mathbb{R}$, we have for all $S \in \Pi$ $\nu_{\alpha_1 U_1 + \alpha_2 U_2, \mathcal{D}, \Pi}(S)
        = \alpha_1 \nu_{U_1, \mathcal{D}, \Pi}(S) + \alpha_2 \nu_{U_2, \mathcal{D}, \Pi}(S)$.
    
        \item \textbf{Efficiency}: For any partition $\Pi = \{S_1, \dots, S_K\}$, we have 
        $
        \sum_{k=1}^{K} \nu_{U, \mathcal{D}, \Pi}(S_k) = U([n])$.
        
        \item \textbf{Faithfulness}: 
        \label{axiom::faithfulness}
        For any group $S \in \Pi_1 \cap \Pi_2$, we have $\nu_{U,\mathcal{D}, \Pi_1}(S) = \nu_{U,\mathcal{D}, \Pi_2}(S)$. 
    \end{enumerate}
\end{definition}

Notably, our \emph{null player} and \emph{symmetry} require group valuation to faithfully reflect the contributions of its individual members. 
They are strictly weaker assumptions than their GaI counterparts.
For example, a group assigned zero value in GaI can receive nonzero value under our axioms if some of its members have marginal contributions, but not the other way around.
\emph{Linearity} and \emph{efficiency} are standard group-level extensions.
Our newly introduced Axiom \ref{axiom::faithfulness} 
requires that a group's value is determined only on its own competitive merit and rules out the unfair competition means of shell company attack.

It turns out that there is a unique group data valuation method that can satisfy all axioms in Definition~\ref{definition::axioms}: simply add up all member's individual Shapley values.

\begin{theorem}
    \label{theorem::axioms}
    The only group data valuation method $\nu_{U, \mathcal{D}, \Pi}$ that satisfies all axioms in Definition~\ref{definition::axioms} is 
    $
    \nu_{U, \mathcal{D}, \Pi}(S) = \sum_{i \in S} \SV(i),
    $
    where $\SV(i)$ is the individual Shapley value  defined in \eqref{eqn::shapley_value_def}.
\end{theorem}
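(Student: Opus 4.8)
The plan is to establish the two directions of the claim: \emph{sufficiency} --- that $\nu_{U,\mathcal{D},\Pi}(S):=\sum_{i\in S}\SV(i)$ satisfies all five axioms --- and \emph{uniqueness} --- that it is the only such method.

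For sufficiency, \emph{Faithfulness} holds trivially, since $\sum_{i\in S}\SV(i)$ depends only on $S$, $U$, and $\mathcal{D}$, never on how the rest of $[n]$ is partitioned. \emph{Linearity} and \emph{Efficiency} reduce to the familiar properties of the individual Shapley value, the latter using $\sum_{i\in[n]}\SV(i)=U([n])$ under the standard normalization $U(\emptyset)=0$. For \emph{Null player}, the hypothesis applied to the singleton subsets $\{i\}\subseteq S$ makes each such $i$ a classical null player, so every $\SV(i)$ in the sum is $0$. The only delicate axiom is \emph{Symmetry}: I would show that the hypothesised bijection $\sigma:S_1\to S_2$ forces $\SV(i)=\SV(\sigma(i))$ for each $i\in S_1$, so the two sums agree. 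If $S_1=S_2$ this is vacuous; otherwise $S_1,S_2$ are distinct blocks of $\Pi$, hence disjoint, so $i\neq\sigma(i)$, and I would let $\pi$ be the transposition of $i$ and $\sigma(i)$. Invoking the symmetry hypothesis with $S'=\{i\}$ shows $U(\pi(A))=U(A)$ for every $A\subseteq[n]$ --- the only case actually needing the hypothesis is when exactly one of $i,\sigma(i)$ lies in $A$ --- after which $\pi$ restricts to a cardinality-preserving bijection from $2^{[n]\setminus\{i\}}$ onto $2^{[n]\setminus\{\sigma(i)\}}$ that matches the marginal-contribution terms of \eqref{eqn::shapley_value_def} one-to-one, yielding $\SV(i)=\SV(\sigma(i))$.

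For uniqueness, let $\nu$ be any group data valuation satisfying Definition~\ref{definition::axioms}. The first key step is to restrict attention to the finest partition $\Pi_{\mathrm{fin}}=\{\{1\},\dots,\{n\}\}$ and observe that, specialised to singleton blocks, our axioms reproduce \emph{exactly} Shapley's four classical axioms for individual valuation: \emph{Null player} with $S=\{i\}$ is the classical dummy axiom, \emph{Symmetry} with $S_1=\{i\},S_2=\{j\}$ is the classical equal-treatment axiom, and \emph{Linearity} and \emph{Efficiency} are verbatim the classical ones. Shapley's uniqueness theorem then gives $\nu_{U,\mathcal{D},\Pi_{\mathrm{fin}}}(\{i\})=\SV(i)$ for every $i\in[n]$; in particular $\sum_{i\in[n]}\SV(i)=U([n])$.

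The second key step propagates this to an arbitrary partition $\Pi$ and block $S\in\Pi$, and this is where \emph{Faithfulness} does the work. Introduce the auxiliary partition $\Pi_S:=\{S\}\cup\{\{j\}:j\in[n]\setminus S\}$. Since $S\in\Pi\cap\Pi_S$, Faithfulness gives $\nu_{U,\mathcal{D},\Pi}(S)=\nu_{U,\mathcal{D},\Pi_S}(S)$, and since $\{j\}\in\Pi_S\cap\Pi_{\mathrm{fin}}$ for each $j\notin S$, Faithfulness also gives $\nu_{U,\mathcal{D},\Pi_S}(\{j\})=\SV(j)$. Applying \emph{Efficiency} to $\Pi_S$ yields $\nu_{U,\mathcal{D},\Pi_S}(S)=U([n])-\sum_{j\notin S}\SV(j)=\sum_{i\in S}\SV(i)$, which combined with the previous identity completes the proof. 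I expect the main obstacle to be the Symmetry verification in the sufficiency direction --- keeping the transposition argument precise about which blocks $\sigma$ moves --- together with checking carefully that the specialisation to $\Pi_{\mathrm{fin}}$ matches the classical axiom set so that Shapley's theorem applies without modification.
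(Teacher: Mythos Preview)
Your proposal is correct and follows essentially the same approach as the paper: sufficiency is checked axiom-by-axiom (with the Symmetry verification reducing to $\SV(i)=\SV(\sigma(i))$ via the singleton choice $S'=\{i\}$), and uniqueness proceeds by first pinning down $\nu$ on the finest partition via Shapley's classical theorem, then propagating to arbitrary $S\in\Pi$ through the auxiliary partition $\Pi_S=\{S\}\cup\{\{j\}:j\notin S\}$ using Faithfulness and Efficiency. Your handling of the Symmetry case is in fact slightly more careful than the paper's (you explicitly note that distinct blocks are disjoint and frame the argument via a transposition), but the underlying ideas are identical.
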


In view of Theorem~\ref{theorem::axioms}, we propose {\bf Faithful Group Shapley Value (FGSV)}:
\begin{equation}
    \FGSV(S_0) := \sum_{i \in S_0} \SV(i).
    \label{eqn::sum-shapley-in-S0}
\end{equation}

\subsection{Fast and accurate approximation algorithm for FGSV}

The exact evaluation of FGSV requires combinatorial computation.
To develop a feasible approximation method, we make a series of key mathematical observations leading to an efficient algorithm.

To start, from the definition of FGSV \eqref{eqn::sum-shapley-in-S0} and \eqref{eqn::shapley_value_def}, we see that $\FGSV(S_0)$ is a complicated linear combination of $U(S)$ terms, where $S$ ranges over all subsets of $[n]$.
Therefore, the first step towards simplification is to discover the pattern in the coefficient in front of each $U(S)$ term.
\begin{keyobservation}
    \label{keyobs::coef}
    In $\FGSV(S_0)$, the coefficient of $U(S)$ depends on $S$ and $S_0$ only through the tuple
    $
        (s_1,s,s_0),
    $
    where recall that $s_0:=|S_0|$ and $s:=|S|$, and define $s_1:=|S_0\cap S|$.
\end{keyobservation}
In other words, any two terms $U(S)$ and $U(S')$ with $|S|=|S'|$ and $|S_0\cap S| = |S_0\cap S'|$ share the same coefficient in $\FGSV(S_0)$.
This motivates us to aggregate these terms in our analysis.
Let 
$
    \mathscr{A}_{s, s_1} 
    := 
    \{S: |S|=s, |S\cap S_0|=s_1\}
$
collect all $S$'es with the same $(s, s_1)$ configuration, and~define
\begin{align}
    \mu\left( \frac{s_1}{s}; s, s_0, n \right) 
    :=&~ 
    \frac{ \sum_{S\in {\cal A}_{s,s_1}} U(S) }
    {|{\cal A}_{s,s_1}|}
    =
    \frac{ \sum_{S: |S|=s, |S\cap S_0|=s_1} U(S) }
    { \binom{s_0}{s_1} \binom{n - s_0}{s - s_1} }.
    \label{eqn::def_mu}
\end{align}
When sampling a subset $S \subseteq [n]$ of size $s$ without replacement, the size of $S\cap S_0$, which we now denote as the boldfaced $\boldsymbol{s_1}$ to emphasize its randomness, follows a hypergeometric distribution: $
    \pr(\boldsymbol{s_1}=s_1)
    =
    \binom{s_0}{s_1}\binom{n-s_0}{s-s_1}/\binom ns.
$
Using this fact, we can re-express FGSV in terms of $\mu$. 
\begin{lemma}
    \label{lemma::group_shapley_closed_form_exact}
    Let $\boldsymbol{s_1}\sim\mathcal{HG}(n, s_0, s)$.
    We can rewrite FGSV as 
    \begin{align}
        \FGSV(S_0) 
        =&~ 
        \frac{s_0}{n} \left[ U([n]) - U(\varnothing) \right] 
        + \sum_{s=1}^{n - 1} 
        {\cal T}(s),
        \label{eqn::rewrite-FGSV}
    \end{align}
    where 
    \begin{align}
        {\cal T}(s)
        :=&~
        \mathbb{E}_{\boldsymbol{s_1} \sim \mathcal{HG}(n, s_0, s)} \left[ \frac{n}{n - s} \left( \frac{\boldsymbol{s_1}}{s} - \frac{s_0}{n} \right) \mu\left( \frac{\boldsymbol{s_1}}{s}; s, s_0, n \right) \right].
        \label{eqn::def::T(s)}
    \end{align}
\end{lemma}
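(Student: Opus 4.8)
The plan is to start from $\FGSV(S_0)=\sum_{i\in S_0}\SV(i)$, expand each $\SV(i)$ via \eqref{eqn::shapley_value_def}, and regroup the resulting double sum by collecting the coefficient of $U(T)$ for each $T\subseteq[n]$. Fix $T$ and write $t:=|T|$ and $t_1:=|T\cap S_0|$. Then $U(T)$ arises in two ways inside $\sum_{i\in S_0}\SV(i)$: as a ``$+U(S\cup\{i\})$'' contribution precisely when $i\in T\cap S_0$ and $S=T\setminus\{i\}$, each such instance carrying weight $\tfrac{(t-1)!\,(n-t)!}{n!}$, with $t_1$ valid choices of $i$; and as a ``$-U(S)$'' contribution precisely when $i\in S_0\setminus T$ and $S=T$, each carrying weight $-\tfrac{t!\,(n-t-1)!}{n!}$, with $s_0-t_1$ valid choices of $i$. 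Hence the coefficient of $U(T)$ depends on $T$ only through $(t,t_1)$ --- this reproves Key observation~\ref{keyobs::coef} --- and for $1\le t\le n-1$ equals
\[
    c(t,t_1)=\frac{(t-1)!\,(n-t-1)!}{n!}\bigl[t_1(n-t)-t(s_0-t_1)\bigr]
    =\frac{(t-1)!\,(n-t-1)!}{n!}\,(n\,t_1-t\,s_0),
\]
using the collapse $t_1(n-t)-t(s_0-t_1)=n\,t_1-t\,s_0$. The endpoints must be treated separately, since the generic factorial expression would involve $(-1)!$ there: for $T=\varnothing$ only the ``minus'' mechanism applies, giving coefficient $-s_0/n$, and for $T=[n]$ only the ``plus'' mechanism applies, giving coefficient $+s_0/n$; together these two endpoint terms produce exactly the leading term $\tfrac{s_0}{n}[U([n])-U(\varnothing)]$ in \eqref{eqn::rewrite-FGSV}.

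Next I would rewrite $c(t,t_1)=\tfrac{t!\,(n-t-1)!}{n!}\cdot n\bigl(\tfrac{t_1}{t}-\tfrac{s_0}{n}\bigr)$ and aggregate all $T$ with a common $(t,t_1)$. By \eqref{eqn::def_mu}, $\sum_{T:\,|T|=t,\,|T\cap S_0|=t_1}U(T)=\binom{s_0}{t_1}\binom{n-s_0}{t-t_1}\,\mu\bigl(\tfrac{t_1}{t};t,s_0,n\bigr)$, so the total contribution of size-$t$ subsets (for $1\le t\le n-1$) is
\[
    \sum_{t_1}\binom{s_0}{t_1}\binom{n-s_0}{t-t_1}\,\frac{t!\,(n-t-1)!}{n!}\cdot n\Bigl(\frac{t_1}{t}-\frac{s_0}{n}\Bigr)\,\mu\Bigl(\frac{t_1}{t};t,s_0,n\Bigr).
\]
I would then identify the hypergeometric weights, $\binom{s_0}{t_1}\binom{n-s_0}{t-t_1}=\binom{n}{t}\,\pr(\boldsymbol{s_1}=t_1)$ for $\boldsymbol{s_1}\sim\mathcal{HG}(n,s_0,t)$, together with the normalization identity $\binom{n}{t}\cdot\tfrac{t!\,(n-t-1)!}{n!}\cdot n=\tfrac{n}{n-t}$. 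Substituting these turns the display above into exactly $\mathcal{T}(t)$ as defined in \eqref{eqn::def::T(s)}; summing over $t=1,\dots,n-1$ and adding the two endpoint terms yields \eqref{eqn::rewrite-FGSV}.

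Every step is an elementary manipulation, so the only place demanding care is the combinatorial bookkeeping in the first step: correctly counting when $U(T)$ appears with a plus versus a minus sign across the double sum $\sum_{i\in S_0}\sum_{S}$, and in particular handling $T=\varnothing$ and $T=[n]$ by hand, since exactly one of the two mechanisms is active there and the generic factorial expression degenerates. The two algebraic facts invoked --- the collapse $t_1(n-t)-t(s_0-t_1)=n\,t_1-t\,s_0$ and the normalization $\binom{n}{t}\tfrac{t!(n-t-1)!}{n!}n=\tfrac{n}{n-t}$ --- are both one-line checks.
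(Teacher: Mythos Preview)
Your proposal is correct and follows essentially the same approach as the paper's own proof: both expand $\FGSV(S_0)=\sum_{i\in S_0}\SV(i)$, collect the coefficient of each $U(T)$ by counting its appearances as a ``plus'' term (when $i\in T\cap S_0$) versus a ``minus'' term (when $i\in S_0\setminus T$), handle the endpoints $T=\varnothing$ and $T=[n]$ separately, aggregate by $(t,t_1)$ via the definition of $\mu$, and then recognize the hypergeometric weights to arrive at $\mathcal{T}(t)$. The only differences are cosmetic --- your intermediate factorization $\tfrac{(t-1)!\,(n-t-1)!}{n!}(nt_1-ts_0)$ versus the paper's $\tfrac{1}{\binom{n}{s}}\tfrac{s_1 n - s_0 s}{s(n-s)}$ --- and the argument is otherwise identical.
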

Instead of directly estimating ${\cal T}(s)$ via Monte Carlo, we discover two key observations that deepen our understandings of \eqref{eqn::rewrite-FGSV} and \eqref{eqn::def::T(s)}, building on which, we can greatly reduce computational cost.
\begin{keyobservation}
    \label{keyobservation::HG-exp-decay}
    The probability $\pr(\boldsymbol{s_1}=s_1)$ decays exponentially in $|s_1 - s s_0/n|$.
\end{keyobservation} 
Key observation \ref{keyobservation::HG-exp-decay} implies that ${\cal T}(s)$ is dominated by the values of $\boldsymbol{s_1}$ around its mean $\ep[\boldsymbol{s_1}] = s s_0/n$.
To deepen our understanding of ${\cal T}(s)$, for this moment, we informally deem $\mu$ as a smooth function of the continuous variable $s_1/s$, with formal characterization provided later. Then, applying a ``Taylor expansion'' of $\mu\left( s_1/s ; s, s_0, n \right)$ around $s_1/s = s_0/n$ leads to the following intuition.
\begin{keyobservation}[Informal]
    \label{keyobservation::T(s)-approx-formula}
    $
        {\cal T}(s)
        \approx
        s^{-1} (s_0/n)(1-s_0/n) \mu'(s_0/n; s,s_0,n).
    $
\end{keyobservation}
Key observation~\ref{keyobservation::T(s)-approx-formula} reveals that, under suitable conditions, the term ${\cal T}(s)$ can be efficiently estimated by evaluating the derivative $\mu'\left( \cdot ; s, s_0, n \right)$ at a single point $s_0/n$. 

Next, we formalize the above discoveries into rigorous mathematical results. 
\begin{assumption}[Boundedness]
    \label{assump::utility_boundedness}
    For all $s \in \mathbb{N}$ and $\mathcal{S} \in \mathcal{Z}^s$,
    $
        |U(\mathcal{S})| \leq C
    $
    for a universal constant $C$.
\end{assumption}
\begin{assumption}[Second-order algorithmic stability of utility]
    \label{assump::utility_stability}
    There exist constants $C > 0$ and $\upsilon > 0$ such that for all $s \in \mathbb{N}$, $\mathcal{S} \in \mathcal{Z}^s$ and $z_1, z_1', z_2, z_2' \in \mathcal{Z}$,
    \begin{equation}
        \big| U(\mathcal{S} \cup \{z_1, z_1'\}) - U(\mathcal{S} \cup \{z_1, z_2\}) - U(\mathcal{S} \cup \{z_1', z_2'\}) + U(\mathcal{S} \cup \{z_2, z_2'\}) \big| \leq 
        C s^{-(3/2+\upsilon)}.
        \notag
    \end{equation}
\end{assumption}Assumption~\ref{assump::utility_boundedness} is a standard regularity condition commonly adopted in the data valuation literature~\citep{jia2019towards, wang2023data}, 
and Assumption~\ref{assump::utility_stability} introduces a mild second-order stability requirement. 
In Section~\ref{subsec::examples_of_second_order_stable_algorithms}, we will show examples that Assumption \ref{assump::utility_stability} is satisfied by some commonly used utility functions. 
\begin{theorem} 
    \label{proposition::calT}
    Under Assumptions~\ref{assump::utility_boundedness} and~\ref{assump::utility_stability}, for each $s \in \{1, \dots, n - 1\}$, we have
    \begin{align} 
        \mathcal{T}(s) 
        =&~ 
        n/(n-1)\cdot\alpha_0 (1 - \alpha_0) 
        \Big\{
            \Delta\mu\big( s_1^*/s; s, s_0, n \big) 
            + 
            O\big(s^{-(1 + \upsilon)}\big)
        \Big\},
        \label{eqn::calT_approximation}
    \end{align}
    where 
    $s_1^* := \lfloor s s_0/n \rfloor$, 
    $\alpha_0 := s_0/n$,
    and
    $\Delta\mu(\frac{s_1}{s}; s, s_0, n ) := \mu(\frac{s_1+1}{s}; s, s_0, n ) - \mu(\frac{s_1}{s}; s, s_0, n )$.
\end{theorem}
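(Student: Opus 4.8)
The plan is to rewrite $\mathcal{T}(s)$ \emph{exactly} as a weighted average of the first differences $\Delta\mu(\cdot/s;s,s_0,n)$ by a summation-by-parts (Abel) argument, in which the conjugate weights themselves form a hypergeometric law and the prefactor $\tfrac{n}{n-1}\alpha_0(1-\alpha_0)$ emerges as their total mass; then the error is the discrepancy between that weighted average and its value at the single point $s_1^*/s$, which I control via the second difference $\Delta^2\mu$ and Assumption~\ref{assump::utility_stability}. Writing $p_k:=\pr(\boldsymbol{s_1}=k)$ and $\mu_*:=ss_0/n=\ep[\boldsymbol{s_1}]$, the algebraic core is the hypergeometric Stein-type identity $\sum_{j>k}(j-\mu_*)p_j=\tfrac{(s_0-k)(s-k)}{n}p_k$, which I would prove by checking that $R_k:=\tfrac{(s_0-k)(s-k)}{n}p_k$ satisfies $R_k-R_{k-1}=-(k-\mu_*)p_k$ (a one-line computation from the ratio $p_k/p_{k-1}$) and vanishes past the support. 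Plugging $(k-\mu_*)p_k=R_{k-1}-R_k$ into $\mathcal{T}(s)=\tfrac{n}{s(n-s)}\sum_k(k-\mu_*)p_k\,\mu(k/s;s,s_0,n)$ and reindexing gives $\mathcal{T}(s)=\sum_k\mathcal{W}_k\,\Delta\mu(k/s;s,s_0,n)$ with $\mathcal{W}_k:=\tfrac{(s_0-k)(s-k)}{s(n-s)}p_k\ge 0$; since $(s_0-k)(s-k)$ vanishes at $k=s_0$ and $k=s$, no boundary term survives and every $\Delta\mu(k/s;\cdot)$ that appears is well defined.

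\textbf{The constant and a reduced hypergeometric.} Using $(s_0-k)\binom{s_0}{k}=s_0\binom{s_0-1}{k}$ and $(s-k)\binom{n-s_0}{s-k}=(n-s_0)\binom{n-s_0-1}{s-k-1}$, I would show that $\mathcal{W}_k/\sum_j\mathcal{W}_j$ is exactly the pmf of $\mathcal{HG}(n-2,s_0-1,s-1)$ and, by Vandermonde, $\sum_k\mathcal{W}_k=\tfrac{s_0(n-s_0)}{n(n-1)}=\tfrac{n}{n-1}\alpha_0(1-\alpha_0)$. Hence, with $\boldsymbol{k}\sim\mathcal{HG}(n-2,s_0-1,s-1)$, $\mathcal{T}(s)=\tfrac{n}{n-1}\alpha_0(1-\alpha_0)\,\ep_{\boldsymbol{k}}[\Delta\mu(\boldsymbol{k}/s;s,s_0,n)]$, so the theorem reduces to $|\ep_{\boldsymbol{k}}[\Delta\mu(\boldsymbol{k}/s;\cdot)]-\Delta\mu(s_1^*/s;\cdot)|=O(s^{-(1+\upsilon)})$. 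By telescoping, this is at most $\ep_{\boldsymbol{k}}|\boldsymbol{k}-s_1^*|\cdot\sup_j|\Delta^2\mu(j/s;s,s_0,n)|$. Since $\ep[\boldsymbol{k}]=\tfrac{(s-1)(s_0-1)}{n-2}$ differs from $\mu_*$ (hence from $s_1^*=\lfloor\mu_*\rfloor$) by $O(1)$ and $\mathrm{Var}(\boldsymbol{k})=O\!\big(\tfrac{s(n-s)}{n}\big)=O(s)$, one gets $\ep_{\boldsymbol{k}}|\boldsymbol{k}-s_1^*|\le\sqrt{\mathrm{Var}(\boldsymbol{k})}+O(1)=O(\sqrt{s})$ with a universal constant.

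\textbf{Smoothness of $\mu$ (the main obstacle).} It remains to show $\sup_j|\Delta^2\mu(j/s;s,s_0,n)|\le C s^{-(3/2+\upsilon)}$, and the delicate point is to express the second difference of the \emph{ratio} $\mu$ as an \emph{exact} expectation of the four-point objects appearing in Assumption~\ref{assump::utility_stability}, with no stray lower-order remainder — the naive ``add one element'' coupling leaves $O(1/s)$ slack. I would instead use a symmetric construction: let $\boldsymbol{\mathcal{D}}$ be uniform over size-$(s-2)$ subsets with $|\boldsymbol{\mathcal{D}}\cap S_0|=j$, and draw ordered pairs of distinct elements $(\boldsymbol{a}_1,\boldsymbol{a}_2)$ uniformly from $S_0\setminus\boldsymbol{\mathcal{D}}$ and $(\boldsymbol{b}_1,\boldsymbol{b}_2)$ uniformly from $([n]\setminus S_0)\setminus\boldsymbol{\mathcal{D}}$, independently. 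Checking marginals (each uses only that a uniform ordered pair is exchangeable and marginally uniform, plus independence of the $S_0$- and $\bar S_0$-parts of $\boldsymbol{\mathcal{D}}$) gives $\ep[U(\boldsymbol{\mathcal{D}}\cup\{\boldsymbol{a}_1,\boldsymbol{a}_2\})]=\mu(\tfrac{j+2}{s};\cdot)$, $\ep[U(\boldsymbol{\mathcal{D}}\cup\{\boldsymbol{a}_1,\boldsymbol{b}_2\})]=\ep[U(\boldsymbol{\mathcal{D}}\cup\{\boldsymbol{b}_1,\boldsymbol{a}_2\})]=\mu(\tfrac{j+1}{s};\cdot)$, and $\ep[U(\boldsymbol{\mathcal{D}}\cup\{\boldsymbol{b}_1,\boldsymbol{b}_2\})]=\mu(\tfrac{j}{s};\cdot)$, so that $\Delta^2\mu(\tfrac{j}{s};\cdot)=\ep\big[U(\boldsymbol{\mathcal{D}}\cup\{\boldsymbol{a}_1,\boldsymbol{a}_2\})-U(\boldsymbol{\mathcal{D}}\cup\{\boldsymbol{a}_1,\boldsymbol{b}_2\})-U(\boldsymbol{\mathcal{D}}\cup\{\boldsymbol{b}_1,\boldsymbol{a}_2\})+U(\boldsymbol{\mathcal{D}}\cup\{\boldsymbol{b}_1,\boldsymbol{b}_2\})\big]$. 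The bracketed quantity is literally an instance of the left-hand side of Assumption~\ref{assump::utility_stability} with base set $\boldsymbol{\mathcal{D}}$ of size $s-2$ (match $z_1,z_1',z_2',z_2$ with $\boldsymbol{a}_1,\boldsymbol{a}_2,\boldsymbol{b}_1,\boldsymbol{b}_2$), so $|\Delta^2\mu(\tfrac{j}{s};\cdot)|\le C(s-2)^{-(3/2+\upsilon)}\le C's^{-(3/2+\upsilon)}$ for $s\ge 3$, with small $s$ absorbed by the $O(\cdot)$.

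\textbf{Putting it together.} Combining the two displays yields $|\ep_{\boldsymbol{k}}[\Delta\mu(\boldsymbol{k}/s;\cdot)]-\Delta\mu(s_1^*/s;\cdot)|\le O(\sqrt{s})\cdot C's^{-(3/2+\upsilon)}=O(s^{-(1+\upsilon)})$, hence $\mathcal{T}(s)=\tfrac{n}{n-1}\alpha_0(1-\alpha_0)\{\Delta\mu(s_1^*/s;s,s_0,n)+O(s^{-(1+\upsilon)})\}$. One should verify once that all ranges line up — the support of $\boldsymbol{k}$ is $[\max(0,s+s_0-n),\min(s,s_0)-1]$, $s_1^*$ lies in it, and the construction in Step~3 is valid precisely for $j\in[\max(0,s+s_0-n),\min(s,s_0)-2]$ — so every second difference used in the telescoping falls in the range where the identity of Step~3 holds, and no edge cases arise. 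I expect Step~3 to be the only genuinely nonroutine part; Steps~1–2 are exact bookkeeping once one guesses the Stein kernel $R_k$, and the concentration estimate for $\boldsymbol{k}$ is standard for a hypergeometric.
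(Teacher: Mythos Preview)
Your proof is correct and takes a genuinely different route from the paper's. The paper proceeds by a discrete Taylor expansion: it introduces continuity-extended versions $\tilde\mu$ and $\tilde\mu'$ of $\mu$, expands $\mu(\boldsymbol{s_1}/s;\cdot)$ linearly around $\alpha_0$, substitutes into the defining expectation for $\mathcal T(s)$, and splits into three terms --- a constant term that vanishes by $\mathbb E[\boldsymbol{s_1}]=\mu_*$, a linear term that yields the main contribution via $\mathrm{Var}(\boldsymbol{s_1})$, and a remainder controlled by $\mathbb E|\boldsymbol{s_1}-\mu_*|^3$ (bounded in turn by Cauchy--Schwarz on second and fourth central moments). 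You sidestep all of this: the Stein-type identity $\sum_{j>k}(j-\mu_*)p_j=\tfrac{(s_0-k)(s-k)}{n}p_k$ gives the \emph{exact} representation $\mathcal T(s)=\tfrac{n}{n-1}\alpha_0(1-\alpha_0)\,\mathbb E_{\boldsymbol k}[\Delta\mu(\boldsymbol k/s;\cdot)]$ with $\boldsymbol k\sim\mathcal{HG}(n-2,s_0-1,s-1)$, so the prefactor $\tfrac{n}{n-1}\alpha_0(1-\alpha_0)$ falls out without approximation, and the error is just the deviation of a weighted average of $\Delta\mu$ from its value at $s_1^*$, handled by a first-moment bound $\mathbb E|\boldsymbol k-s_1^*|=O(\sqrt s)$ times $\sup_j|\Delta^2\mu|=O(s^{-(3/2+\upsilon)})$.

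The two approaches share the same core smoothness ingredient --- your Step~3 coupling is exactly the paper's Lemma~5 (second-order stability of $\mu$) recast in probabilistic language --- but diverge elsewhere. Your route is shorter and more structural: it avoids the continuity-extension machinery (the paper's Definitions of $\tilde f_s,\tilde f_s'$ and its discrete first-order Taylor lemma), uses only first absolute moments rather than third, and makes transparent that the claimed formula is an \emph{exact identity plus a centering error} rather than an approximation. The paper's route is more analytic in flavour and perhaps easier to motivate heuristically (``Taylor-expand $\mu$ around its mean''), but pays for that intuition with more technical scaffolding. Both yield the same $O(s^{-(1+\upsilon)})$ error with constants independent of $s,n,\alpha_0$.
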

Theorem \ref{proposition::calT} consolidates Key observation \ref{keyobservation::T(s)-approx-formula} and, moreover, shows that its approximation error decays rapidly as $s$ grows. 
For large $s$, we can use \eqref{eqn::calT_approximation}
to design the estimator for $\mathcal{T}(s)$.
Yet, some extra care is needed.  
In principle, each $\mu(s_1/s;s,s_0,n)$ can be estimated by subsampling $S$ from ${\cal A}_{s,s_1}$.
\begin{align} 
    \hat{\mu}_m\left( \frac{s_1}{s}; s, s_0, n \right) 
    :=&~ 
    \frac{1}{m} \sum_{j=1}^m U(S^{(j)}), \quad \text{where } \{S^{(j)}\}_{j=1}^m \stackrel{\text{i.i.d.}}{\sim} \operatorname{Uniform}(\mathscr{A}_{s, s_1}).
    \label{eqn::mu_approximation_empirical}
\end{align}
However, estimating $\mu(\frac{s_1+1}{s}; s, s_0, n)$ and $\mu(\frac{s_1}{s}; s, s_0, n)$ \emph{separately} can be \emph{statistically inefficient}, as the noise from two independent Monte Carlo estimates could mask the signal in their difference---especially when the true gap is small for large $s$. Following the variance-reduction technique used in stochastic simulation~\citep{law1982simulation}, 
we propose to estimate $\Delta\mu$ directly using \emph{paired} Monte Carlo terms:
\begin{align}
    \widehat{\Delta\mu}_m\left( \frac{s_1}{s}; s, s_0, n \right) 
    :=&~ 
    \frac{1}{m} \sum_{j=1}^m 
    \Big\{ 
        U\big( S^{(j)} \cup \{i_1^{(j)}\} \big) 
        - 
        U\big( S^{(j)} \cup \{i_2^{(j)}\} \big)
    \Big\}, 
    \label{eqn::Delta_mu_approximation_empirical}
\end{align}
where the tuple $\{(S^{(j)}, i_1^{(j)}, i_2^{(j)})\}$ is i.i.d. sampled from 
$
    \big\{ 
        (S, i_1, i_2): |S| = s, |S \cap S_0| = s_1, i_1 \in S_0 \setminus S, i_2 \in S_0^c \setminus S 
    \big\}
$. 

When $s$ is small and the approximation becomes less accurate, we instead estimate ${\cal T}(s)$ via direct Monte Carlo using \eqref{eqn::def::T(s)} and \eqref{eqn::mu_approximation_empirical}.

\begin{wrapfigure}{r}{0.56\textwidth}
  \vspace{-\intextsep}
  \begin{minipage}{\linewidth}
  {\small
    \begin{algorithm}[H] 
    \caption{Approximate $\FGSV(S_0)$}
    \label{alg::gsv}
    \begin{algorithmic}[1]
    \REQUIRE Dataset $\mathcal{D}$, group $S_0$, threshold $\bar{s}$, subsample sizes $m_1, m_2$.
    \STATE Initialize $n =|\mathcal{D}|$, $s_0= |S_0|$ and $\alpha_0 = s_0 / n$.
    \FOR{$s = 1$ to $n - 1$}
        \IF{$s < \bar{s}$}
            \STATE 
            Estimate $\hat{\mu}_{m_1}(\frac{s_1}{s}; s, s_0, n)$ for each $s_1\in [\max\{0, s{+}s_0{-}n\}, \min\{s, s_0\}]$ by Eq.~\eqref{eqn::mu_approximation_empirical}. 
            \STATE 
            Compute $\hat{\mathcal{T}}(s)$ by \eqref{eqn::def::T(s)}, replacing $\mu$ by $\hat\mu_{m_1}$.
        \ELSE
            \STATE 
            $s_1^* \gets \lfloor s \alpha_0 \rfloor$.
            
            \STATE 
            \label{algline:estimate_deltamu_final}
            Estimate $\widehat{\Delta\mu}_{m_2}(\frac{s_1^*}{s}; s, s_0, n)$ by Eq.~\eqref{eqn::Delta_mu_approximation_empirical}.
            
            \STATE $\hat{\mathcal{T}}(s) \gets \frac{n}{n - 1} \alpha_0(1 - \alpha_0) \cdot \widehat{\Delta\mu}_{m_2}(\frac{s_1^*}{s}; s, s_0, n)$.
        \ENDIF
    \ENDFOR
    \RETURN $\frac{s_0}{n} \left[ U([n]) - U(\varnothing) \right] + \sum_{s=1}^{n-1}\hat{\mathcal{T}}(s)$.
    \end{algorithmic}
    \end{algorithm}
    }
    \vspace{-1.7cm}
  \end{minipage}
\end{wrapfigure}
We formally present our method as Algorithm~\ref{alg::gsv}. 
Later, our Theorem~\ref{theorem::computationalcomplexity} will provide quantitative guidance on choosing this threshold for deciding whether $s$ is small or large.

\subsection{Computational complexity}

Following the convention, we measure computational complexity by the number of utility function evaluations required for a given approximation accuracy.
\begin{definition}[$(\epsilon, \delta)$-approximation]
    \label{def:epsilon_delta_approx}
    For a target vector $\theta \in \mathbb{R}^d$, an estimator $\hat{\theta}$ is called an \textbf{$(\epsilon, \delta)$-approximation}, if 
    $
        \pr\big( \|\hat{\theta} - \theta\|_2 \ge \epsilon \big) \leq \delta.
    $
    \vspace{-0.1cm}
\end{definition}
Our theoretical analysis for approximating $\mathcal{T}(s)$ relies on the stability of the utility function $U$. 
Specifically, we employ the concept of \emph{deletion stability} to quantify the maximum change in the utility function when a single data point is removed \citep{bousquet2002stability, hardt2016train}. 
\begin{definition}[Deletion Stability] \label{definition::deletion_stability}
    A utility function $U$ is $\beta(s)$-deletion stable for a non-increasing function $\beta: \mathbb{N} \to \mathbb{R}^+$, if
    \begin{equation*}
        |U(\mathcal{S} \cup \{z\}) - U(\mathcal{S})| \leq \beta(s),
    \end{equation*}
    for all $s \in \mathbb{N}$, $\mathcal{S} \in \mathcal{Z}^{s-1}$ and $z \in \mathcal{Z}$.
    
\end{definition}

The regime $\beta(s) = O(1/s)$ for deletion stability is commonly assumed in the literature on individual Data Shapley approximation~\citep{wu2023variance, wu2024uncertainty} and in the analysis of algorithm stability~\citep{bousquet2002stability, hardt2016train}. 

\begin{theorem} \label{theorem::computationalcomplexity}
    Suppose the utility function $U$ is $O(1/s)$-deletion stable, then Algorithm~\ref{alg::gsv} guarantees that for any truncation threshold $\bar{s}$ and sample sizes $m_1, m_2$, with probability at least $1-\delta$,
    $$
    \bigl|\widehat{\mathrm{FGSV}}(S_0)-\mathrm{FGSV}(S_0)\bigr| \lesssim \bar s\sqrt{\frac{\log(n/\delta)}{m_1}}  +  \alpha_0(1-\alpha_0)\sqrt{\frac{\log(n/\delta)}{m_2}} \log n + \alpha_0(1-\alpha_0)\bar s^{-\upsilon}.
    $$
    Specifically, choosing
    $$
    \bar s  \asymp  \epsilon^{-1/\upsilon}, \quad m_1  \asymp  \epsilon^{-\frac{4+2\upsilon}{\upsilon}}  \log\bigl(n/\delta\bigr), \quad m_2  \asymp  \max\Bigl\{{1,  \epsilon^{-2}(\alpha_0(1-\alpha_0))^2(\log(n/\delta))^3\Bigr\}},
    $$
    yields an $(\epsilon, \delta)$‑approximation of $\mathrm{FGSV}(S_0)$ with $O\left(n\cdot \max \left\{1, (\alpha_0(1 - \alpha_0))^2(\log n)^3\right\}\right)$ utility evaluations.
\end{theorem}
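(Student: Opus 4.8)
The plan is to bound the estimation error term-by-term along the exact decomposition of Lemma~\ref{lemma::group_shapley_closed_form_exact}. Since Algorithm~\ref{alg::gsv} evaluates $\tfrac{s_0}{n}[U([n])-U(\varnothing)]$ exactly and replaces each $\mathcal{T}(s)$ by $\hat{\mathcal{T}}(s)$, we have $\widehat{\mathrm{FGSV}}(S_0)-\mathrm{FGSV}(S_0)=\sum_{s=1}^{n-1}\bigl(\hat{\mathcal{T}}(s)-\mathcal{T}(s)\bigr)$, which I would split at the threshold $\bar s$ into a small-$s$ sum ($1\le s<\bar s$) and a large-$s$ sum ($\bar s\le s\le n-1$), bound each with high probability, union-bound, and finally tune $(\bar s,m_1,m_2)$ against Definition~\ref{def:epsilon_delta_approx}. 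For the small-$s$ sum the algorithm forms $\hat\mu_{m_1}$ via \eqref{eqn::mu_approximation_empirical} at every admissible $s_1$ and evaluates \eqref{eqn::def::T(s)} with $\mu\mapsto\hat\mu_{m_1}$. Since $|U|\le C$ (Assumption~\ref{assump::utility_boundedness}), Hoeffding plus a union bound over the $O(\bar s^2)$ pairs $(s,s_1)$ gives $\bigl|\hat\mu_{m_1}(s_1/s;s,s_0,n)-\mu(s_1/s;s,s_0,n)\bigr|\lesssim\sqrt{\log(n/\delta)/m_1}$ for all of them at once; because $\mathcal{T}(s)$ is a hypergeometric average of terms $\tfrac{n}{n-s}(s_1/s-s_0/n)\mu(\cdot)$ whose weights are bounded by $\tfrac{n}{n-s}\le\tfrac{n}{n-\bar s}=O(1)$ for the thresholds used below, the same rate passes to $|\hat{\mathcal{T}}(s)-\mathcal{T}(s)|\lesssim\sqrt{\log(n/\delta)/m_1}$ for each $s<\bar s$, and summing the $O(\bar s)$ terms produces the first error term $\bar s\sqrt{\log(n/\delta)/m_1}$.

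For the large-$s$ sum I would invoke Theorem~\ref{proposition::calT} to replace $\mathcal{T}(s)$ by $\tfrac{n}{n-1}\alpha_0(1-\alpha_0)\bigl\{\Delta\mu(s_1^*/s;s,s_0,n)+O(s^{-(1+\upsilon)})\bigr\}$. The deterministic bias $O(s^{-(1+\upsilon)})$, summed over $s\ge\bar s$, is the tail of a convergent $p$-series (this is precisely where $\upsilon>0$ is used) and contributes $\lesssim\alpha_0(1-\alpha_0)\bar s^{-\upsilon}$, the third error term. For the stochastic part, the crucial observation is that every summand of the paired estimator \eqref{eqn::Delta_mu_approximation_empirical}, namely $U(S^{(j)}\cup\{i_1^{(j)}\})-U(S^{(j)}\cup\{i_2^{(j)}\})$, telescopes through $U(S^{(j)})$ into two single-deletion steps at cardinality $s+1$, hence has range $O(\beta(s+1))=O(1/s)$ by Definition~\ref{definition::deletion_stability}; moreover the sampling scheme in \eqref{eqn::Delta_mu_approximation_empirical} is arranged so that $\widehat{\Delta\mu}_{m_2}$ is unbiased for precisely the finite difference $\Delta\mu(s_1^*/s;s,s_0,n)$ of Theorem~\ref{proposition::calT} (a short combinatorial check that $S^{(j)}\cup\{i_1^{(j)}\}$ and $S^{(j)}\cup\{i_2^{(j)}\}$ are uniform over the relevant $\mathscr{A}$-classes). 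Hoeffding with the $O(1/s)$ range plus a union bound over the $\le n$ values of $s$ then gives $\bigl|\widehat{\Delta\mu}_{m_2}-\Delta\mu\bigr|\lesssim\tfrac1s\sqrt{\log(n/\delta)/m_2}$ uniformly; multiplying by $\tfrac{n}{n-1}\alpha_0(1-\alpha_0)=O(\alpha_0(1-\alpha_0))$ and summing $\sum_{s\ge\bar s}\tfrac1s\lesssim\log n$ yields the second error term $\alpha_0(1-\alpha_0)\sqrt{\log(n/\delta)/m_2}\,\log n$. Adding the three bounds (splitting $\delta$ across the two union bounds) gives the displayed inequality.

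The $(\epsilon,\delta)$ guarantee and the complexity count then follow by balancing each error term against $\epsilon$: pick $\bar s$ so that $\alpha_0(1-\alpha_0)\bar s^{-\upsilon}\lesssim\epsilon$ (giving $\bar s\asymp\epsilon^{-1/\upsilon}$ once $\alpha_0(1-\alpha_0)=O(1)$ is absorbed), then $m_1$ large enough that $\bar s\sqrt{\log(n/\delta)/m_1}\lesssim\epsilon$, then $m_2\ge1$ large enough that $\alpha_0(1-\alpha_0)\sqrt{\log(n/\delta)/m_2}\,\log n\lesssim\epsilon$ (the values in the statement are such choices, up to harmless over-allocation). The small regime then costs $O(\bar s^2 m_1)=O(\mathrm{polylog}\,n)$ utility evaluations after fixing $\epsilon,\delta$, and the large regime costs $O(n m_2)$; substituting the chosen $m_2$ and noting $\log n=o(n)$ so the small-regime cost is lower order gives $O\!\bigl(n\max\{1,(\alpha_0(1-\alpha_0))^2(\log n)^3\}\bigr)$ evaluations.

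I expect the main obstacle to be the large-$s$ analysis. One must both (i) verify that the paired scheme \eqref{eqn::Delta_mu_approximation_empirical} is \emph{exactly} unbiased for the $\Delta\mu$ that Theorem~\ref{proposition::calT} isolates, despite the cardinality-$s$-versus-$(s+1)$ bookkeeping and the two distinct $\mathscr{A}$-classes involved, and (ii) extract the $O(1/s)$ range of the \emph{paired} difference from deletion stability rather than the crude $O(1)$ range of a single $U$-difference; this $1/s$ factor is exactly what converts the otherwise fatal $\sum_s 1=\Theta(n)$ into the benign $\sum_s 1/s=O(\log n)$, so it is not optional. By comparison, the small-$s$ bound (Hoeffding propagated through a bounded linear functional) and the final optimization of $(\bar s,m_1,m_2)$ are routine.
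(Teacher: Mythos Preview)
Your proposal is correct and follows essentially the same route as the paper: split at $\bar s$, use Hoeffding plus a union bound over the $O(\bar s^2)$ grid points for the small-$s$ regime (with the $O(1)$ coefficient bound on $\tfrac{n}{n-s}\bigl(\tfrac{s_1}{s}-\alpha_0\bigr)$), invoke Theorem~\ref{proposition::calT} for the large-$s$ regime to isolate the $O(s^{-(1+\upsilon)})$ bias, and use deletion stability to get the $O(1/s)$ Hoeffding range for the paired estimator \eqref{eqn::Delta_mu_approximation_empirical} so that $\sum_{s\ge\bar s}1/s=O(\log n)$. The paper packages the same steps into two intermediate lemmas (one for the error bound with general $\beta(s)$, one for the complexity optimization) before specializing to $\beta(s)=O(1/s)$, and it proves the coefficient is $O(1)$ for \emph{all} $s$ via two equivalent forms rather than relying on $s<\bar s\ll n$, but these are presentational differences only.
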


Theorem~\ref{theorem::computationalcomplexity} implies that our algorithm requires only $O(n\,\mathrm{Poly}(\log n))$ utility evaluations to achieve an $(\epsilon, \delta)$-approximation for the FGSV of a group $S_0$ whose size scales as a constant fraction of $n$.

To compare our method's computational complexity with existing works, we notice
numerous recent efficient algorithms for approximating individual Shapley values \citep{jia2019towards, wang2023note2, li2024one, lundberg2017unified, covert2021improving, musco2025provably, li2024faster, zhang2023efficient}.
They target an $(\epsilon, \delta)$-approximation on the full individual Shapley-value vector $\|\hat{\theta} - \theta\|_2$, where $\theta:=(\SV(1),\ldots,\SV(n))$ and $\hat\theta$ is the approximation for $\theta$. 
The SOTA method achieves this guarantee in $O(n\epsilon ^{-2}\log (n/\delta))$ utility evaluations. 
Approximating $\FGSV(S_0) = \sum_{i \in S_0} \theta_i$ by simply summing up individual approximations can lead to an additive error bounded by $\sqrt{s_0}\|\hat{\theta} - \theta\|_2$. Thus, achieving the same $(\epsilon, \delta)$-approximation on $\FGSV(S_0)$ requires tightening the error tolerance to $\epsilon /\sqrt{s_0}$, increasing the sample complexity to $O\big(\alpha_0 n^2 \mathrm{Poly}(\log n)\big)$.
By directly targeting the group objective, our method avoids this quadratic blow-up and offers a significant speed-up over the SOTA. 
It continues to outperform SOTA individual-based methods even when evaluating multiple groups in parallel, provided that the number of groups is $o(n)$.

\subsection{Utility function values for small input sets}
\label{sec::noninf}

In some machine learning scenarios, $U(S)$ might not be well-defined for small $|S|$.
For instance, methods such as LLM's would only produce meaningful result from sufficiently large data sets.
Meanwhile, Shapley value emphasizes the contributions from small games, therefore, we cannot simply ignore the small $|S|$ terms in the computation of FGSV \eqref{eqn::rewrite-FGSV}.
One way is to use variants of Shapley value, such as beta-Shapley and Banzhaf, that down-weight small $|S|$ terms, but these alternatives do not satisfy all axioms for faithful group data valuation per Theorem \ref{theorem::axioms}.
The other way is to fill in $U(S)$ for small $S$ with random or zero values.
However, these ad-hoc solutions lack principle and may risk significantly distorting the valuation.

To motivate our approach, we make two simple observations.
First, when $S=\emptyset$, the trained method should behave as if it were trained on pure noise, corresponding to a \emph{baseline} utility value.
Second, big-data reliant methods might not value small $S$ very differently than baseline, regardless of their content.
In this context, both quality and quantity of data matter for valuation.

Our remedy is very simple.
We set a threshold for input size, denoted by $B$, such that $U(S)$ is always well-defined and meaningful for $|S|\geq B$.  For example, in a linear regression with $p$ predictors, observing that $U(S)$ is undefined for $|S|<p$, we may set $B=cp$ for some constant $c\geq 1$.
If $|S|<B$, we inject $B-|S|$ \emph{non-informative} data points, to elevate the input size to $B$; otherwise no change is made to $U(S)$. 
 As a concrete example for the non-informative distribution, consider a supervised learning scenario with data $D = \{(x_i,y_i)\}^n_{i=1}$. 
Here, we can randomly shuffle $y_i$'s and use the resulting empirical data distribution as the non-informative distribution $\mathcal{P}_{\rm null}$.
Clearly, we can expect that as the size of informative data (i.e., $|S|$) increases, less amount of non-informative data will be injected, and $U(S)$ gradually becomes non-baseline.
For further theoretical and algorithmic details, see Appendix \ref{sec::algorithm-2}.

While our approach was inspired by the feature deletion technique in \emph{machine unlearning}, to our best knowledge, we are the first to adapt the idea for data Shapley, in a distinct fashion.

\subsection{Examples of second-order stable algorithms} \label{subsec::examples_of_second_order_stable_algorithms}

Among the conditions our theory needs, Assumption \ref{assump::utility_stability} is arguably the least intuitive.
To demonstrate that Assumption \ref{assump::utility_stability} is in fact quite mild, here, we verify that it holds for two important applications.

\subsubsection{Stochastic Gradient Descent (SGD)}
\label{subsec::our-method::SGD}

SGD on a training set $\mathcal{S}=\{z_i\}_{i=1}^s$ minimizes the empirical loss
$
    {\cal L}(w) 
    = 
    \frac1s\sum_{i=1}^s\ell(w;z_i).
$
Starting from an initial $w_0$, SGD runs for $T$ steps, each step $t\in[T]$ updating
$
    w_t 
    = 
    w_{t-1} - \alpha_t\cdot\frac1m\sum_{i\in I_t}\nabla\ell(w_{t-1};z_i)
$
with learning rate $\alpha_t$, where the mini-batch $I_t$ of size $m$ is drawn independently and uniformly from $[n]$.
Denote the final output by $w_T=w(z_{I_1},\dots,z_{I_T})$.
The performance of $w_T$ can be represented as some scalar function $u(\cdot)$ (e.g., classification accuracy on a test data).
Define the utility as the expected performance:
\begin{align}
    U(\mathcal{S})
    :=&~
    \ep_{I_1,\ldots,I_T}[u(w_T)].
    \label{eqn::utility::SGD}
\end{align}

\begin{assumption}
    \label{assump::smoothness_for_SGD}
    Suppose $\ell$ and $u$ satisfy the following regularity conditions.
    All conditions hold for some constants $C, L, \beta, \rho>0$ and all $w,w',z$.
    \begin{enumerate}
        \item (Smoothness of $u$)
        $
            \|\nabla u(w)\| \leq C
        $
        and
        $
            \|\nabla u(w') - \nabla u(w)\| \leq C \|w' - w\|.
        $
        
        \item ($L$-Lipschitz of $\ell$)
        $
            |\ell(w'; z) - \ell(w; z)| \leq L \|w' - w\|.
        $
        
        \item ($\beta$-smoothness of $\ell$)
        $
            \|\nabla_w \ell(w'; z) - \nabla_w \ell(w; z)\| \leq \beta \|w' - w\|.
        $
        
        \item (Smoothness of $\ell$'s Hessian)
        $
            \|\nabla^2_w \ell(w'; z) - \nabla^2_w \ell(w; z)\| \leq \rho \|w' - w\|.
        $
    \end{enumerate}
\end{assumption}

Assumption~\ref{assump::smoothness_for_SGD}.1 imposes mild smoothness on the utility function $u$, while the remaining assumptions concern the loss function $\ell$.  
Compared with the classical analysis of SGD stability by~\citet{hardt2016train}, the only additional requirement is Assumption~\ref{assump::smoothness_for_SGD}.4, which ensures smoothness of the Hessian. This condition is mild and is satisfied by standard smooth loss functions, such as the squared loss, when the prediction function is twice continuously differentiable in terms of parameters $w$.

\begin{proposition}
    \label{proposition::SGD_final_prop}
    Set $\alpha_t \asymp s^{-\tau_1}/t$, $m \asymp s^{\tau_2}$, and $T \asymp s^{\tau_3}$, where constants $\tau_1 > 0$ and $\tau_2, \tau_3 \geq 0$. 
    Suppose $\upsilon := 2\tau_1 + \tau_2 - 1/2 \geq 0$. 
    Under Assumption~\ref{assump::smoothness_for_SGD}, the utility function \eqref{eqn::utility::SGD} satisfies:
    \begin{equation*}
        \left| U(\mathcal{S} \cup \{z_1, z_1'\}) - U(\mathcal{S} \cup \{z_1, z_2\}) - U(\mathcal{S} \cup \{z_1', z_2'\}) + U(\mathcal{S} \cup \{z_2, z_2'\}) \right| \lesssim s^{-(3/2 + \upsilon)}.
    \end{equation*}
\end{proposition}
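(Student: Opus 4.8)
The plan is to realize the four-term difference as a \emph{mixed second finite difference} along a \emph{coupled} SGD trajectory, reduce it via smoothness of $u$ to bounds on first- and second-order divergences of the parameter iterates, and then control the second-order divergence by a stability recursion in the spirit of \citet{hardt2016train} that additionally tracks Hessian smoothness. Since $U$ is a symmetric set function, $U(\mathcal S\cup\{z_1',z_2'\})=U(\mathcal S\cup\{z_2',z_1'\})$ and $U(\mathcal S\cup\{z_2,z_2'\})=U(\mathcal S\cup\{z_2',z_2\})$, so the quantity of interest equals $[U(\mathcal S\cup\{z_1,z_1'\})-U(\mathcal S\cup\{z_1,z_2\})]-[U(\mathcal S\cup\{z_2',z_1'\})-U(\mathcal S\cup\{z_2',z_2\})]$. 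I would run four coupled SGD chains $A,B,C,D$ on the four size-$(s{+}2)$ datasets, placing $\mathcal S$ at positions $1,\dots,s$ and the two extra points at positions $s{+}1,s{+}2$ --- so $A$ holds $(z_1,z_1')$, $B$ holds $(z_1,z_2)$, $C$ holds $(z_2',z_1')$, $D$ holds $(z_2',z_2)$ there --- all sharing the same initialization $w_0$ and the same mini-batch index sequence $I_1,\dots,I_T$. The target then equals $\mathbb E_{I_{1:T}}[u(w_T^A)-u(w_T^B)-u(w_T^C)+u(w_T^D)]$, and crucially $A,B$ (resp.\ $C,D$) differ only at position $s{+}2$ while $A,C$ (resp.\ $B,D$) differ only at position $s{+}1$.

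Writing $a_t:=w_t^A-w_t^B$ and $c_t:=w_t^C-w_t^D$, a second-order Taylor expansion together with the smoothness of $u$ (Assumption~\ref{assump::smoothness_for_SGD}.1) gives $|u(w_T^A)-u(w_T^B)-u(w_T^C)+u(w_T^D)|\lesssim\|a_T-c_T\|+\|w_T^B-w_T^D\|\,\|c_T\|+\max\{\|a_T\|^2,\|c_T\|^2\}$, so it suffices to bound, in expectation (using Cauchy--Schwarz on the cross term, hence also in $L^2$), the first-order divergences $\|w_T^X-w_T^Y\|$ for pairs differing in one position and the second-order divergence $\Delta_T:=\|a_T-c_T\|$. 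For the first-order divergences I would run the classical coupling argument: on steps where the differing index is absent from $I_t$ the update map is $(1+\alpha_t\beta)$-expansive by $\beta$-smoothness (Assumption~\ref{assump::smoothness_for_SGD}.3), and when it is present (probability $m/(s{+}2)$) the $L$-Lipschitz bound (Assumption~\ref{assump::smoothness_for_SGD}.2) adds a forcing term $O(\alpha_t L/m)$. Since $\sum_{t\le T}\alpha_t\asymp s^{-\tau_1}\log T\to0$, the product of expansion factors is $1+o(1)$, so taking expectations over $I_{1:T}$ yields $\mathbb E\|w_T^X-w_T^Y\|\lesssim s^{-1-\tau_1}\log s=:\bar D$ and $\mathbb E\|w_T^X-w_T^Y\|^2\lesssim\bar D^2$.

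The heart of the argument is the recursion for $\Delta_t$. Subtracting the $A{-}B$ and $C{-}D$ update recursions and decomposing the resulting mini-batch gradient difference over $i\in I_t$: for a common index $i\le s$, the $i$-th term is a second finite difference of $\nabla_w\ell(\cdot;z_i)$ at the four current iterates, which by $\beta$-smoothness and Hessian-Lipschitzness (Assumption~\ref{assump::smoothness_for_SGD}.4) is at most $\beta\Delta_{t-1}+\rho(\|w_{t-1}^B-w_{t-1}^D\|+\Delta_{t-1})\|c_{t-1}\|$, i.e.\ $\beta\Delta_{t-1}$ plus a piece that is quadratically small in first-order divergences; for a special index $i\in\{s{+}1,s{+}2\}$ (at most two of them) I would regroup the gradients \emph{by datum}, turning that term into differences of $\nabla_w\ell$ over a shared datum at parameters differing by a \emph{first-order} divergence, hence $\le 2\beta\max_{X,Y}\|w_{t-1}^X-w_{t-1}^Y\|$, contributing only $O(\beta\bar D/m)$ after mini-batch averaging and only on steps that sample a special index. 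This yields $\Delta_t\le(1+\alpha_t\beta)\Delta_{t-1}+\alpha_t\rho(\Delta_{t-1}+\bar D)\bar D+\alpha_t\mathbbm{1}\{I_t\cap\{s{+}1,s{+}2\}\neq\varnothing\}\,O(\beta\bar D/m)$ with $\Delta_0=0$. Unrolling, using $\prod_r(1+\alpha_r\beta)=1+o(1)$, a Gr\"onwall step to absorb the $\alpha_t\rho\Delta_{t-1}\bar D$ self-term, and $\Pr(I_t\cap\{s{+}1,s{+}2\}\neq\varnothing)\asymp m/s$ so the $1/m$ cancels, the dominant contribution is $(\sum_t\alpha_t)\cdot(1/s)\cdot\beta\bar D\asymp s^{-\tau_1}\log s\cdot s^{-1}\cdot s^{-1-\tau_1}\log s$, giving $\mathbb E[\Delta_T]\lesssim s^{-2-2\tau_1}(\log s)^2$.

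Combining with the reduction above, every term on the right-hand side is $\lesssim s^{-2-2\tau_1}(\log s)^2$; since $m\asymp s^{\tau_2}$ is a mini-batch size we have $\tau_2\le1$, whence $s^{-2-2\tau_1}\le s^{-1-2\tau_1-\tau_2}=s^{-(3/2+\upsilon)}$ for $\upsilon=2\tau_1+\tau_2-1/2$, and the polylog is absorbed for $\tau_2<1$ (the degenerate full-batch boundary $\tau_2=1$ needs only sharper logarithmic accounting). I expect the main obstacle to be the special-index analysis in the third paragraph: one must show those contributions scale like a \emph{first-order} divergence times $1/m$ rather than like $1/m$ alone --- this is exactly what promotes the trajectory gap from first order ($\asymp1/s$) to second order ($\asymp1/s^2$), and it hinges on the regroup-by-datum trick together with Hessian smoothness; the remaining work (moment bookkeeping via Cauchy--Schwarz, Gr\"onwall constants, and log factors) is routine.
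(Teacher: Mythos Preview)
Your overall architecture matches the paper's: couple four SGD trajectories sharing the same mini-batch index sequence, reduce the four-term utility difference via smoothness of $u$ to the second-order parameter divergence $\Delta_T = \|w_T^A - w_T^B - w_T^C + w_T^D\|$ plus squared first-order divergences, and control $\Delta_t$ by a recursion that uses Hessian--Lipschitzness on the common indices and your regroup-by-datum observation on the two special indices. Your first-moment bound $\mathbb E\|w_T^X - w_T^Y\| \lesssim s^{-1-\tau_1}\log s$ is in fact sharper than what the paper records.

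The gap is the claim $\mathbb E\|w_T^X-w_T^Y\|^2 \lesssim \bar D^2$: it does not follow from $\mathbb E\|w_T^X-w_T^Y\| \lesssim \bar D$, and it is false for $\tau_2 < 1$. Pathwise, the first-order divergence is bounded by $X_T \approx \tfrac{2L}{m}\sum_{t_0 \le T}\alpha_{t_0}\,\mathbb I\{\text{differing index} \in I_{t_0}\}$; while $\mathbb E[X_T]\lesssim\bar D$, the indicators are random and $\operatorname{Var}(X_T)\asymp \tfrac{1}{m^2}\sum_{t_0}\alpha_{t_0}^2\cdot\tfrac{m}{s}\asymp\tfrac{c^2}{ms}=s^{-1-2\tau_1-\tau_2}$, which dominates $\bar D^2=s^{-2-2\tau_1}(\log s)^2$ whenever $\tau_2<1$. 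The same issue reappears implicitly in your $\Delta_t$ recursion, where you insert $\bar D$ as if it were a pathwise bound on $\|c_{t-1}\|$ and $\|w_{t-1}^B-w_{t-1}^D\|$: after taking expectations the $\rho$-term carries $\mathbb E[\|\cdot\|^2]$, not $(\mathbb E\|\cdot\|)^2$. The paper tracks this variance explicitly (in its second-moment lemma for first-order divergences), and it is precisely this $c^2/(ms)$ term that produces the $m$-dependence and hence the exponent $3/2+\upsilon=1+2\tau_1+\tau_2$. Once you replace $\bar D^2$ by the honest second moment $\asymp s^{-2}+s^{-1-2\tau_1-\tau_2}$ throughout, your argument completes and the $\tau_2\le 1$ maneuver in the last paragraph becomes unnecessary.
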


\subsubsection{Influence Function (IF)}

The influence function (IF) method \citep{hampel1974influence, koh2017understanding} considers the following regularized empirical loss:
\begin{align} \label{eqn::param_definition_IF}
    \hat\theta_\mathcal{S}
    :=&~
    \arg\min_\theta
    \mathcal{L}(\theta; \mathcal{S})
    := 
    \arg\min_\theta
    \Big\{
        s^{-1} \sum_{z \in \mathcal{S}} \ell(\theta; z) + (\lambda/2)\cdot \|\theta\|_2^2
    \Big\}.
\end{align}
Like in Section \ref{subsec::our-method::SGD}, suppose the utility can be written as $U(S) = u(\hat{\theta}_S)$.
Under mild conditions (see Proposition \ref{proposition::application::IF}), standard IF theory~\citep{hampel1974influence, koh2017understanding} implies that
\begin{align} 
    U(\mathcal{S}\cup \{z_1, z_1'\}) - U(\mathcal{S}) 
    \approx&~
    s^{-1}\nabla_\theta u(\hat{\theta}_\mathcal{S})^\top H_{\hat{\theta}_\mathcal{S}}^{-1} 
    \Big\{ 
        \nabla \ell(\hat{\theta}_\mathcal{S}; z_1) + \nabla \ell(\hat{\theta}_\mathcal{S}; z_1')
    \Big\}
    \label{eqn::IF_influence_functions_approx}
\end{align}
for any $z_1, z_1'$,
where \( H_{\hat{\theta}_\mathcal{S}} \) is the Hessian of ${\cal L}$ evaluated at \( \hat{\theta}_\mathcal{S} \).
This yields the following proposition.

\begin{proposition} 
    \label{proposition::application::IF}
    Suppose $u$ is continuously differentiable with $L$‑Lipschitz and bounded gradient.
    Assume that the loss function $\ell(\theta; z)$ is convex in $\theta$, three times continuously differentiable with all derivatives up to order three uniformly bounded.
    Then for any \( z_1, z_1', z_2, z_2' \in \mathcal{Z} \),
    \begin{align}
        \big| 
            U(\mathcal{S}\cup\{z_1, z_1'\}) - U(\mathcal{S}\cup\{z_1, z_2\}) - U(\mathcal{S}\cup\{z_1', z_2'\}) + U(\mathcal{S}\cup\{z_2, z_2'\}) 
        \big| 
        \lesssim s^{-2}.
        \label{eqn::application::IF}
    \end{align}
\end{proposition}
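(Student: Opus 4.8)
The plan is to show that, as a function of the two added data points $(a,b)$, the quantity $U(\mathcal S\cup\{a,b\})$ is additively separable up to an $O(s^{-2})$ remainder, and then observe that the four-term alternating combination in \eqref{eqn::application::IF} annihilates everything except that remainder. Concretely, I would first establish a decomposition $U(\mathcal S\cup\{a,b\}) = g_0 + g_1(a) + g_1(b) + g_2(a,b)$, where $g_0$ depends on $\mathcal S$ only, $g_1$ on a single added point, and $\sup_{a,b}|g_2(a,b)|\le C' s^{-2}$ for a constant $C'$ depending only on $\lambda$, the uniform bounds on the derivatives of $\ell$, and the constants $C,L$ for $u$. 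Granting this, note that the four pairs $(z_1,z_1'),(z_1,z_2),(z_1',z_2'),(z_2,z_2')$ enter with signs $+,-,-,+$: the $g_0$ contributions cancel, and since each of $z_1,z_1',z_2,z_2'$ occurs in exactly two of the four pairs with opposite signs, the $g_1$ contributions also cancel. Hence the left-hand side of \eqref{eqn::application::IF} equals $g_2(z_1,z_1')-g_2(z_1,z_2)-g_2(z_1',z_2')+g_2(z_2,z_2')$, which is at most $4C's^{-2}$ in absolute value. This reduces the proposition to producing the decomposition.

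To produce it, I would make the influence-function heuristic \eqref{eqn::IF_influence_functions_approx} rigorous with an explicit second-order error. Fix $\mathcal S$ with $|\mathcal S|=s$ and, since all four utilities in \eqref{eqn::application::IF} are built from exactly $s+2$ points, adopt the common reference $\tilde\theta_{\mathcal S} := \arg\min_\theta\{(s+2)^{-1}\sum_{z\in\mathcal S}\ell(\theta;z)+(\lambda/2)\|\theta\|_2^2\}$. Write $\theta(a,b)$ for the regularized minimizer on $\mathcal S\cup\{a,b\}$ and $\delta:=\theta(a,b)-\tilde\theta_{\mathcal S}$. By $\lambda$-strong convexity (convexity of $\ell$ plus the ridge term), the perturbation bound $\|\delta\|\le\lambda^{-1}\|\nabla_\theta\mathcal{L}(\tilde\theta_{\mathcal S};\mathcal S\cup\{a,b\})\|$ holds, and $\nabla_\theta\mathcal{L}(\tilde\theta_{\mathcal S};\mathcal S\cup\{a,b\}) = (s+2)^{-1}(\nabla\ell(\tilde\theta_{\mathcal S};a)+\nabla\ell(\tilde\theta_{\mathcal S};b))$ because $\tilde\theta_{\mathcal S}$ zeroes the $\mathcal S$-only gradient; with $\nabla\ell$ bounded this gives $\|\delta\|=O(s^{-1})$. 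Next, Taylor-expanding the optimality condition $\nabla_\theta\mathcal{L}(\theta(a,b);\mathcal S\cup\{a,b\})=0$ about $\tilde\theta_{\mathcal S}$ to second order, with the quadratic remainder controlled by the uniform bound on $\nabla^3\ell$ and the $O(s^{-1})$ Hessian contributions of $a,b$ absorbed into the error, yields $\delta = -(s+2)^{-1}\tilde H_0^{-1}\big(\nabla\ell(\tilde\theta_{\mathcal S};a)+\nabla\ell(\tilde\theta_{\mathcal S};b)\big)+O(s^{-2})$, where $\tilde H_0 := (s+2)^{-1}\sum_{z\in\mathcal S}\nabla^2\ell(\tilde\theta_{\mathcal S};z)+\lambda I$ satisfies $\tilde H_0\succeq\lambda I$. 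The leading term splits as $\delta_1(a)+\delta_1(b)$ with $\delta_1(z):=-(s+2)^{-1}\tilde H_0^{-1}\nabla\ell(\tilde\theta_{\mathcal S};z)$; denote the $O(s^{-2})$ remainder by $\delta_2(a,b)$. Finally, a first-order Taylor expansion of $u$ about $\tilde\theta_{\mathcal S}$, whose remainder is $O(\|\delta\|^2)=O(s^{-2})$ by the Lipschitz gradient of $u$, gives $U(\mathcal S\cup\{a,b\}) = u(\tilde\theta_{\mathcal S}) + \nabla u(\tilde\theta_{\mathcal S})^\top\delta_1(a) + \nabla u(\tilde\theta_{\mathcal S})^\top\delta_1(b) + \big[\nabla u(\tilde\theta_{\mathcal S})^\top\delta_2(a,b)+O(s^{-2})\big]$, which is exactly the claimed decomposition with $|g_2|\le C\|\delta_2\| + O(s^{-2}) = O(s^{-2})$ since $\|\nabla u\|\le C$.

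All constants above — $\lambda$, the uniform bounds on $\|\nabla\ell\|,\|\nabla^2\ell\|,\|\nabla^3\ell\|$, and $C,L$ for $u$ — are universal, so every $O(\cdot)$ is uniform in $\mathcal S$ and in $a,b$ (hence in $z_1,z_1',z_2,z_2'$), as \eqref{eqn::application::IF} requires. I expect the main obstacle to be the rigorous, uniform second-order influence-function expansion: showing that all the first- and second-order Taylor remainders (for $\nabla_\theta\ell$ and for $u$) collapse into a single clean $O(s^{-2})$ term, while keeping track that the $1/(s+2)$ versus $1/s$ normalization mismatch affects only the common reference $\tilde\theta_{\mathcal S}$ and therefore drops out of the alternating sum. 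The cancellation of $g_0$ and $g_1$ is then elementary bookkeeping.
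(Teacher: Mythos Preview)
Your proposal is correct and follows essentially the same approach as the paper: the same common reference point $\tilde\theta_{\mathcal S}$ with the $(s+2)^{-1}$ normalization, the same first-order influence-function expansion $\delta=\delta_1(a)+\delta_1(b)+O(s^{-2})$, the same first-order Taylor expansion of $u$, and the same additive-separability cancellation. The only minor difference is that the paper derives the parameter expansion via an implicit-function-theorem argument in a scalar perturbation parameter (its Lemma~\ref{lemma::theta_diff_IF_polished}), whereas you obtain it directly from strong convexity plus a Taylor expansion of the optimality condition; both routes are standard and yield the same bound.
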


Proposition \ref{proposition::application::IF} is not surprising in view of \eqref{eqn::IF_influence_functions_approx}, as the main terms cancel out on the LHS of \eqref{eqn::application::IF}.

\section{Experiments}

We empirically compare our method (FGSV) to the approach of first computing individual Shapley values using SOTA methods and then summing them to form a group valuation.
The results demonstrate the superiority of our method in both speed and approximation accuracy and its faithfulness under shell company attack in applications such as copyright attribution in generative AI and explainable AI. 
We report the key findings in the main paper and relegate full details to Appendix~\ref{sec::exp_detail}.

\subsection{Approximation accuracy and computational efficiency}
\label{sec::benchmark}

In this experiment, we compare our method to the following benchmarks:
(1) \textbf{Permutation}-based estimator \citep{castro2009polynomial, ghorbani2019data} that averages marginal contributions over random data permutations; 
(2) \textbf{Group Testing}-based estimator \citep{jia2019towards, wang2023note2} that uses randomized group inclusion tests to estimate pairwise Shapley value differences; 
(3) \textbf{Complementary Contribution} estimator \citep{zhang2023efficient} that uses stratified sampling of complementary coalitions; 
(4) \textbf{One-for-All} estimator \citep{li2024one} that uses weighted subsets and reuses utility evaluations; 
(5) \textbf{KernelSHAP} \citep{lundberg2017unified}, a regression-based method with locally weighted samples;  
(6) \textbf{Unbiased KernelSHAP} \citep{covert2021improving}, KernelSHAP with a bias-correction; 
and (7) \textbf{LeverageSHAP} \citep{musco2025provably} that speeds up KernelSHAP by weighted sampling based on leverage scores. 

\paragraph{Experimental setup.}  We consider the Sum-of-Unanimity (SOU) cooperative game~\citep{li2024one}, where the individual Shapley values have a closed-form expression. 
Specifically, the utility function is defined as $U(S) = \sum_{j=1}^{d} \alpha_j \mathbf{1}_{\mathcal{A}_j \subseteq S}$, 
and the corresponding Shapley value for player $i \in[n]$ is given by
$\SV(i) = \sum_{j=1}^{d} \frac{\alpha_j}{|\mathcal{A}_j|} \mathbf{1}_{i \in \mathcal{A}_j}$. 
Here, each subset $\mathcal{A}_j$ is generated by sampling a random size uniformly from $1$ to $n$ and then drawing that many players without replacement. 
We set $d=n^2$ and the coefficient $\alpha_j$ as the average of the weights of all players $i\in \mathcal{A}_j$, where each player $i$ has a weight $\frac{(i \bmod 4)}{4}$. 
We partition all players into $4$ groups based on their indices:
$S_k = \big\{ i \in [n] : i \equiv k-1 (\bmod 4) \big\}$ for $k\in [4]$, and our goal is to estimate the group value $\FGSV(S_k) = \sum_{i \in S_k} \SV(i)$. 
Given a fixed budget of 20,000 utility function evaluations, we record the absolute relative error of the $\FGSV$ estimate every 200 iterations for each method.
We summarize convergence behavior using the \textit{Area Under the Convergence Curve} (AUCC), defined as:
$\operatorname{AUCC}(S_k) = \frac{1}{100} \sum_{t=1}^{100} | \frac{ \FGSV(S_k) - \widehat{\FGSV}^{(200 \cdot t)}(S_k)}{ \FGSV(S_k) } |,$
which captures both speed and stability of convergence.
We also report the \textit{Absolute Relative Error} (ARE) of the final estimate:
$
\operatorname{ARE}(S_k) = \left| \frac{ \FGSV(S_k) - \widehat{\FGSV}^{(20000)}(S_k)}{ \FGSV(S_k) } \right|.
$
Lower AUCC and ARE indicate faster convergence and better accuracy, respectively.
We record the average runtime per iteration for each method.
\begin{figure}[!t]
    \centering
    \vspace{-0.2cm}
    \includegraphics[width=\textwidth]{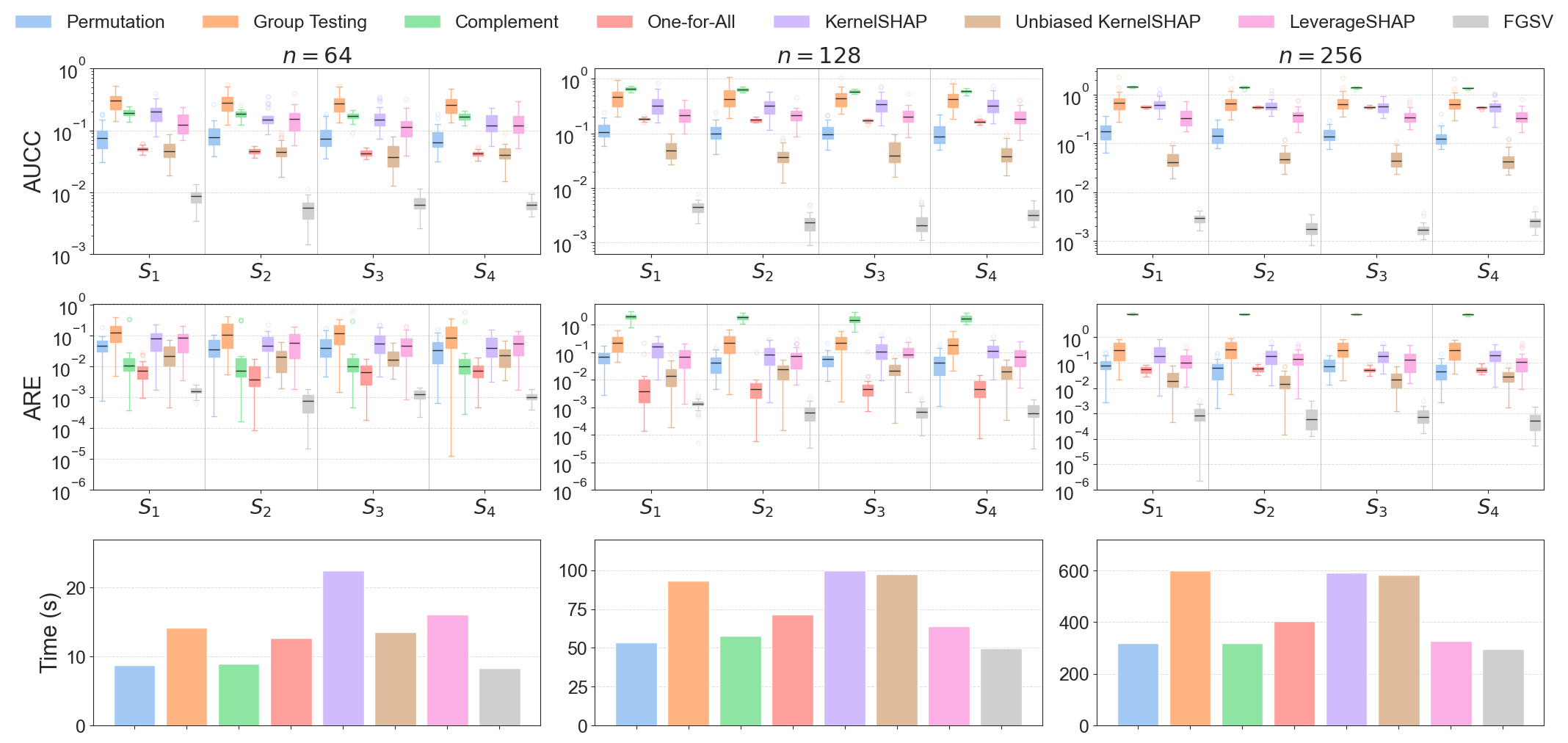}
    \vspace{-0.5cm}
    \caption{Performance comparison in the SOU game. 
    Top: Our method (FGSV) achieves the lowest AUCC and ARE across all problem sizes.
    Bottom: Our method costs the lowest runtime per iteration.}
    \label{fig:sou_results}
\end{figure}
\paragraph{Results.}
Figure~\ref{fig:sou_results} summarizes the average performance over $30$ replications for $n\in \{64, 128, 256\}$. 
The top two rows together indicate that our method shows overall superior performance across all problem sizes $n$, exhibiting both the lowest average AUCC and ARE.
The bottom row suggests that, while using the same utility evaluation budget, our method is among the fastest. 
In contrast, baselines such as Group Testing, KernelSHAP, and Unbiased KernelSHAP incur substantial higher computational overhead, due to internal optimization processes.  
Overall, our method achieves both faster convergence and improved computational efficiency, which demonstrates the benefits of directly estimating the group value rather than aggregating individual Shapley estimates.

\subsection{Application to faithful copyright attribution in generative AI}
\label{sec::copyright}

Group data valuation is important for fairly compensating copyright holders whose data are used to train generative AI models.  
Existing approaches, such as the Shapley Royalty Share (SRS) proposed by~\cite{wang2024economic}, adopt the GaI approach based on GSV. 
Given a partition of the training data into $K$ disjoint groups $S_1, \dots, S_K$, SRS is defined as $\operatorname{SRS}(S_k; x_{\mathrm{gen}}) := \frac{\GSV(S_k; x_{\mathrm{gen}}) }{\sum_{j=1}^K \GSV(S_j; x_{\mathrm{gen}}) }$. 
However, as discussed in Section~\ref{sec::background}, GSV is prone to the shell company attack. 
To address this vulnerability, we propose the Faithful Shapley Royalty Share (FSRS) that replaces GSV with our $\FGSV$ to faithfully reward individuals in a group by their contributions, not by tactical grouping strategies: 
$$
    \operatorname{FSRS}(S_k; x_{\mathrm{gen}}) := \frac{\FGSV(S_k; x_{\mathrm{gen}})}{\sum_{j=1}^K \FGSV(S_j; x_{\mathrm{gen}})}.
$$

\paragraph{Experimental setup.} Following \cite{wang2024economic}, we fine-tune Stable Diffusion v1.4~\citep{rombach2022high} using Low-Rank Adaptation (LoRA; \cite{hu2022lora}) on four brand logos from FlickrLogo-27~\citep{kalantidis2011scalable}. The utility $U(\cdot;x^{(\mathrm{gen})})$ is the average log-likelihood of generating $20$ brand-specific images  $x^{(\mathrm{gen})}$ using the prompt ``\texttt{A logo by [brand name]}’’ (see example images in Panel (a) of Figure~\ref{fig:srs-comparison}). 
We compare SRS and FSRS under two grouping scenarios: (1) $30$ images from each brand form a single group, and (2) the Google and Sprite datasets are each split into two subgroups (20/10 images), launching a shell company attack. 
Importantly, the total data per brand remains unchanged across two scenarios.

\begin{figure}[h]
    \centering
    \includegraphics[width=\linewidth]{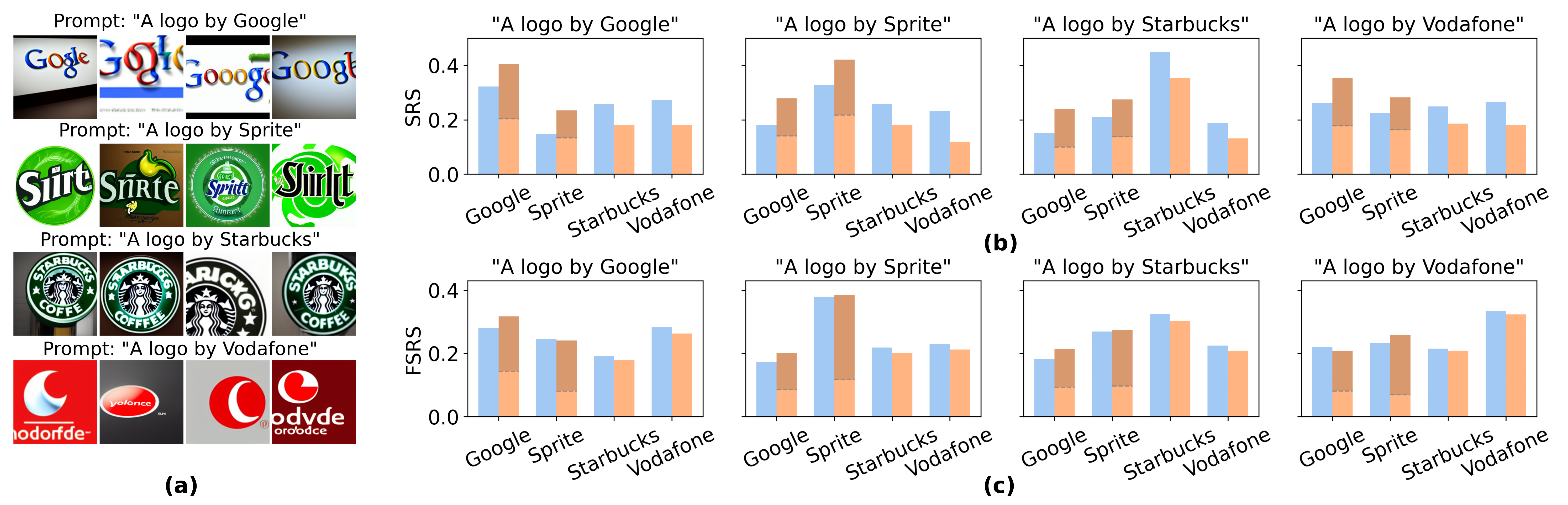}
    \vspace{-1.5em}
    \caption{
        Comparison of SRS and FSRS for copyright attribution. 
        \textbf{(a)} Example images generated using brand prompts.
        \textbf{(b)} Shapley Royalty Share (SRS, \citep{wang2024economic}) based on GSV. 
        \textbf{(c)} Faithful SRS (FSRS, our method) based on $\FGSV$.
        Blue bars: valuation under Scenario 1 (single group per brand);
        orange bars: valuation under Scenario 2 (Google/Sprite data each split into size-20/10 subgroups, colored in dark and light orange).
        }
        \vspace{-0.5em}
    \label{fig:srs-comparison}
\end{figure}

\paragraph{Results.}
Figure~\ref{fig:srs-comparison} illustrates the impact of the shell company attack on copyright attribution under SRS and FSRS. 
Each subplot corresponds to a brand-specific prompt and shows the royalty shares assigned to each brand. 
Conceptually, the brand corresponding to the prompt should receive the highest royalty share. 
Panel (b) shows that applying a shell company attack that favors the Google and Sprite groups indeed inflates their total SRS shares substantially, while reducing others', despite the contents contributed by each group remain unchanged. 
This produces misleading results---for example, Google and Sprite receive higher overall SRS than Vodafone under the ``\texttt{A logo by Vodafone}'' prompt.
In contrast, Panel~(c) shows that FSRS yields consistent and stable valuations under the shell company attack. 
This demonstrates that FSRS mitigates the effects of strategic data partitioning and provides a more faithful reflection of group contribution for copyright attribution.

\subsection{Application to faithful explainable AI}
\label{sec::explainable}

Group data valuation is also a crucial tool in explainable AI, providing interpretable summaries of data contributions at the group level--- particularly in contexts where groups correspond to socially or scientifically significant categories. 
For example, \cite{ghorbani2019data} used GSV to quantify the contributions of data from different demographic groups to patient readmission prediction accuracy \citep{strack2014impact}. 
However, GSV's sensitivity to the choice of grouping can cause group values to fluctuate dramatically under different data partitions, leading to inconsistent interpretations. 
To address this problem, we recommend practitioners to use FGSV and empirically demonstrate that it produces more consistent and reliable interpretations across a variety of grouping configurations.

\paragraph{Experimental setup.}
We conduct our experiment on the Diabetes dataset~\citep{efron2004least}, which contains 442 individuals, each described by 10 demographic and health-related features (e.g., {\it sex}, {\it age}, and {\it BMI}). 
The task is to predict the progression of diabetes one year after baseline.
We construct 7 grouping schemes, partitioned by all non-empty subsets of the variables \textit{sex}, \textit{age}, and \textit{BMI}, excluding the trivial no-partition case. 
For the continuous variables \textit{age} and \textit{BMI}, we discretize each into three quantile-based categories.
Overall, we will have between 2 to 18 total groups. 
For each grouping scheme, we compute GSV exactly and estimate FGSV via 30 Monte Carlo replications.
Our predictive model is ridge regression, and we measure utility as the \textit{negative} mean squared error on a held-out test set, with the null utility set to the variance of the test responses.
Thus, higher group valuations indicate greater contributions to the model's predictive accuracy.
To compare category-level contributions across different grouping schemes, we aggregate group values as follows: for a given variable (e.g., \textit{sex}), we sum the (F)GSVs of all groups that include that category.
This allows for consistent cross-scheme comparisons, where one would expect a category to receive stable valuations after aggregation across different grouping strategies if the valuation method is robust.

\begin{figure}[!t]
    \centering
    \vspace{-0.2cm}
    \includegraphics[width=\textwidth]{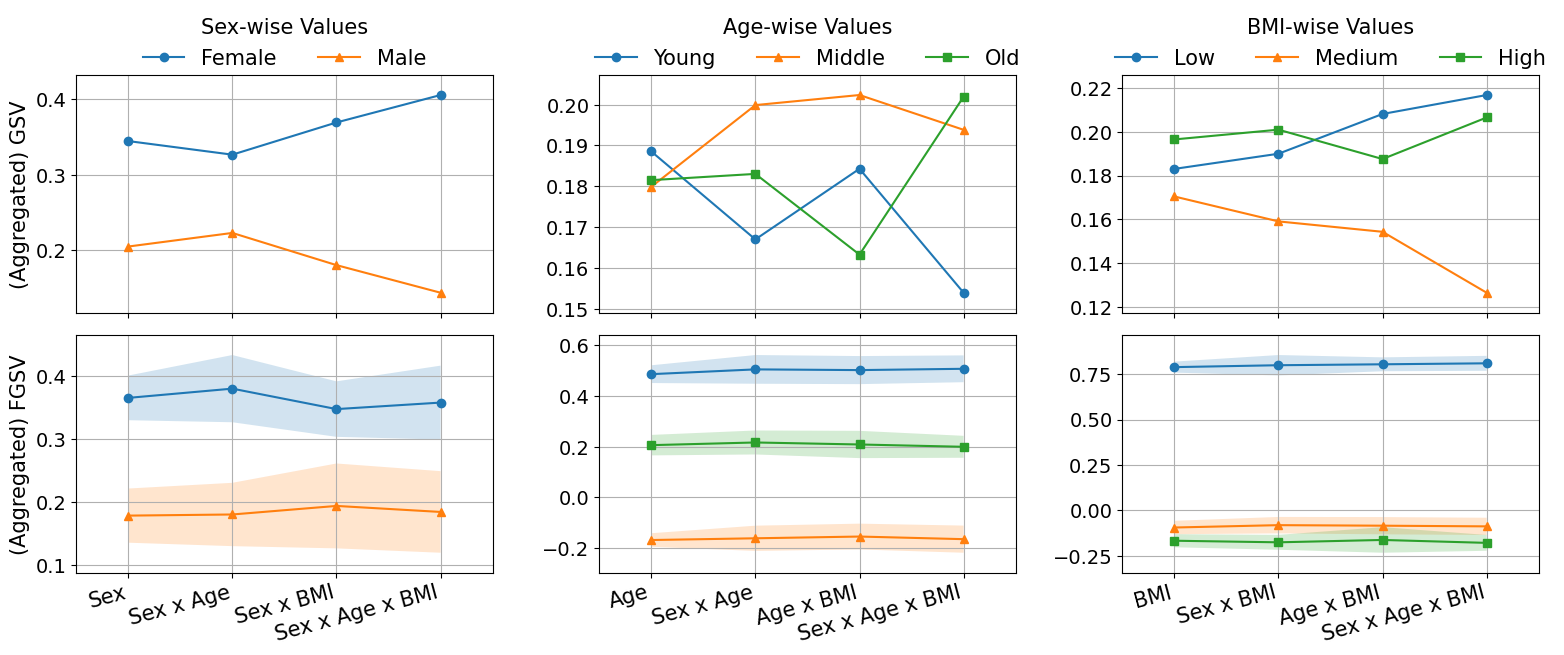}
    \vspace{-0.5cm}
    \caption{
Comparison of GSV (top row) and FGSV (bottom row) in a regression task for explainable AI. 
Each column aggregates category-level values for a specific variable: \textit{sex} (left), \textit{age} (middle), and \textit{BMI} (right). Shaded areas represent $\pm1$ standard deviation across 30 replications. 
}
    \label{fig:exp_results}
\end{figure}

\paragraph{Results.}
Figure~\ref{fig:exp_results} visualizes the aggregated category-level values from GSV (top row) and FGSV (bottom row) across various grouping schemes. 
FGSV produces significantly more stable and consistent group rankings compared to GSV.
For instance, in the \textit{age}-wise plots (second column), GSV gives inconsistent rankings: each of the three age groups---``Young,'' ``Middle,'' and ``Old''---appears as the top-ranked group under at least one grouping scheme.
This instability undermines the reliability of the interpretation.
In contrast, FGSV consistently assigns the highest value to the ``Young'' group across all grouping schemes, demonstrating its robustness.
Overall, FGSV provides more reliable explanations that are less sensitive to arbitrary grouping schemes.

\section{Conclusion and discussion}
\label{section::discussion}

In this paper, we proposed FGSV for faithful group data valuation. 
We showed that FGSV is the unique group valuation method that satisfies a desirable set of principles, including faithfulness, which ensures that group value remains unchanged to arbitrary re-grouping among other players, thereby defending against the shell company attack. 
Our algorithm also achieves lower sample complexity over SOTA methods that sum up individual Shapley value estimates, as demonstrated through both theoretical analysis and numerical experiments. 
We further illustrated the robustness of FGSV in applications to copyright attribution and explainable AI, where it faithfully reflects and fairly rewards individual contributions from group members.

Beyond the shell company attack, there exists another unfair competition strategy, namely, the {\bf copier attack}, in which a group may steal valuation from other groups by duplicating their high-value data points.
An ad-hoc remedy is to pre-process the dataset to detect and remove such duplicated entries before applying our FGSV method.
In existing literature, \cite{Falconer2025-kj} suggests designing the utility function using Pearl’s ``do operation''. 
This defense can be incorporated into our framework, letting our method also defend against the copier attack; but such do-utility function is not always available.
\cite{han2022replication} shows that with a submodular utility function, Banzhaf \citep{wang2023data} and leave-one-out can successfully discourage a genuine contributor from duplicating itself to gain higher total valuation; however, their method is prone to ``pure infringers'' who only copy from valuable data points without contributing original contents.
Overall, it remains an open challenge to defend against the copier attack for general utility functions, while maintaining the defense against the shell company attack to ensure a safe group data valuation.

\section*{Code}
The code and instructions to reproduce the experiments are provided in the supplementary material and available at \url{https://github.com/KiljaeL/Faithful_GSV}.

\section*{Acknowledgements}
Lee and Zhang were supported by NSF DMS-2311109. 
Liu and Tang were supported by NSF DMS-2412853.

\bibliographystyle{abbrvnat}
\bibliography{reference}

\clearpage

\appendix

\section{Proofs}

\etocsettocdepth{subsubsection}
\localtableofcontents

\subsection{Proof of Proposition~\ref{proposition::problems_with_group_as_individual}}
\label{sec::proof_proposition_problems_with_group_as_individual}

We first introduce some notation and preliminary results. Proposition~\ref{proposition::problems_with_group_as_individual} considers a group $S_k$ (from an initial set of $K+1$ groups $S_0, \dots, S_K$) which is split into two non-empty, disjoint subgroups, $S_k'$ and $S_k''$. We denote their cardinalities as $s_l = |S_l|$ (for $l=0, \dots, K$), $s_k' = |S_k'|$, and $s_k'' = |S_k''|$, such that $s_k = s_k' + s_k''$.
Let $\bar{U}(s)$ be the expected utility function defined in Proposition~\ref{proposition::problems_with_group_as_individual}, which depends only on the dataset size $s$.
We define an auxiliary function $\Delta(x)$ for any $x \in \mathbb{N}$ (where $x$ will typically represent the sum of cardinalities of other groups in a coalition):
\begin{align*}
    \Delta(x) := \bar{U}(x + s_{k}') + \bar{U}(x + s_{k}'') - \bar{U}(x) - \bar{U}(x + s_{k}' + s_{k}'').
\end{align*}
The proof of Proposition~\ref{proposition::problems_with_group_as_individual} will rely on the following technical lemmas. Their proofs, which build upon the prudence condition \eqref{eqn::prudence}, are deferred to Appendix~\ref{sec::lemma_proposition_problems_with_group_as_individual}.

\begin{restatable}{lemma}{LemmaPrudenceConvexDeltaU}[Prudence Implication for First Differences] \label{lemma:prudence_convex_delta_U}
    Suppose the expected utility function $\bar{U}(s)$ satisfies the prudence condition \eqref{eqn::prudence}.
    Let $\Delta \bar{U}(t) := \bar{U}(t+1) - \bar{U}(t)$ for $t \ge 0$. Then, for any integer $x \ge 0$ and positive integers $a, b \in \mathbb{N}_+$, it holds that
    \begin{equation*}
        \Delta \bar{U}(x) + \Delta \bar{U}(x + a + b) > \Delta \bar{U}(x + a) + \Delta \bar{U}(x + b).
    \end{equation*}
    (This property indicates that the first-order difference $\Delta \bar{U}(x)$ is a strictly convex function.)
\end{restatable}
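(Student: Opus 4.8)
The plan is to recognize that the \emph{prudence} condition \eqref{eqn::prudence} is exactly a discrete strict-convexity statement about the first difference $\Delta\bar U$. Writing $g(t) := \Delta\bar U(t) = \bar U(t+1) - \bar U(t)$, a one-line computation shows that the third forward difference of $\bar U$ coincides with the second forward difference of $g$:
$$
\Delta_s^3\bar U(s) = g(s+2) - 2g(s+1) + g(s).
$$
Hence \eqref{eqn::prudence} says precisely that $g(s+2) - 2g(s+1) + g(s) > 0$ for all $s \ge 0$, i.e., the increments $d(t) := g(t+1) - g(t)$ form a strictly increasing sequence. This identification is really the only ``idea'' in the proof.

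Next I would reduce the target inequality to a term-by-term comparison. The claimed inequality $\Delta\bar U(x) + \Delta\bar U(x+a+b) > \Delta\bar U(x+a) + \Delta\bar U(x+b)$ rearranges to
$$
g(x+a+b) - g(x+b) > g(x+a) - g(x).
$$
Since the statement is symmetric in $a$ and $b$, assume without loss of generality that $a \le b$. Telescoping, both sides become sums of exactly $a$ consecutive increments of $g$:
$$
g(x+a+b) - g(x+b) = \sum_{j=0}^{a-1} d(x+b+j), \qquad g(x+a) - g(x) = \sum_{j=0}^{a-1} d(x+j).
$$

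Finally I would conclude by monotonicity: since $b \ge 1$ we have $x+b+j > x+j$ for each $j$, so strict monotonicity of $d$ gives $d(x+b+j) > d(x+j)$ for every $j = 0, \dots, a-1$; summing these $a \ge 1$ strict inequalities yields the desired strict inequality. The parenthetical remark in the lemma (that $\Delta\bar U$ is strictly convex) is then just a restatement of the identity in the first paragraph. I do not expect a genuine obstacle here: the argument is bookkeeping once the reduction $\Delta_s^3\bar U = \Delta^2(\Delta\bar U)$ is made, and the only point requiring care is keeping the telescoping index ranges aligned and invoking $b \ge 1$ (together with $a \ge 1$) to preserve strictness.
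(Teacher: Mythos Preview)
Your proposal is correct and follows essentially the same route as the paper: both identify the prudence condition as strict monotonicity of the second difference $d(t)=\Delta\bar U(t+1)-\Delta\bar U(t)$, rearrange the target inequality, telescope each side into a sum of consecutive $d$-values, and compare term by term. The only cosmetic difference is that the paper telescopes into sums of length $b$ (rearranging as $\Delta\bar U(x+a+b)-\Delta\bar U(x+a)>\Delta\bar U(x+b)-\Delta\bar U(x)$) whereas you telescope into sums of length $a$; your WLOG $a\le b$ is harmless but unnecessary, since either telescoping works for arbitrary positive $a,b$.
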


\begin{restatable}{lemma}{LemmaDeltaXStrictlyDecreasing}[Strict Monotonicity of $\Delta(x)$] \label{lemma:delta_x_strictly_decreasing}
    If $\bar{U}(s)$ satisfies the prudence condition \eqref{eqn::prudence} (which implies Lemma~\ref{lemma:prudence_convex_delta_U}), then $\Delta(x)$ is a strictly decreasing function of $x$. That is, for any $x_2 > x_1 \ge 0$, we have $\Delta(x_1) > \Delta(x_2)$.
\end{restatable}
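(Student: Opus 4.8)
The plan is to reduce the claimed monotonicity of $\Delta(x)$ to the strict convexity of the first difference $\Delta\bar U$ established in Lemma~\ref{lemma:prudence_convex_delta_U}, and it suffices to prove the one-step inequality $\Delta(x+1) < \Delta(x)$ for every integer $x \ge 0$; the general statement for arbitrary $x_2 > x_1 \ge 0$ then follows by telescoping over the intervening integer steps.

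First I would compute the one-step difference of $\Delta$ directly from its definition. Writing $\Delta\bar U(t) := \bar U(t+1) - \bar U(t)$, grouping the four increments, and cancelling, one gets
\begin{align*}
    \Delta(x+1) - \Delta(x)
    =&~ \Delta\bar U(x + s_k') + \Delta\bar U(x + s_k'')
       - \Delta\bar U(x) - \Delta\bar U(x + s_k' + s_k'').
\end{align*}
Since the subgroups $S_k'$ and $S_k''$ are non-empty, we have $s_k', s_k'' \in \mathbb{N}_+$, so Lemma~\ref{lemma:prudence_convex_delta_U} applies with $a = s_k'$ and $b = s_k''$, giving $\Delta\bar U(x) + \Delta\bar U(x + s_k' + s_k'') > \Delta\bar U(x + s_k') + \Delta\bar U(x + s_k'')$. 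Substituting into the display above yields $\Delta(x+1) - \Delta(x) < 0$, i.e.\ strict decrease at every integer step.

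To conclude, for $x_2 > x_1 \ge 0$ I would write $\Delta(x_1) - \Delta(x_2) = \sum_{x = x_1}^{x_2 - 1} \big( \Delta(x) - \Delta(x+1) \big)$, a sum of strictly positive terms, hence $\Delta(x_1) > \Delta(x_2)$. I do not anticipate any real obstacle here: the only thing to watch is that the increments used in invoking Lemma~\ref{lemma:prudence_convex_delta_U} are genuinely positive integers, which is guaranteed by the non-emptiness of $S_k'$ and $S_k''$, and that the algebraic cancellation in the one-step difference is carried out correctly. The substantive content has already been absorbed into Lemma~\ref{lemma:prudence_convex_delta_U}; this lemma is essentially its bookkeeping consequence.
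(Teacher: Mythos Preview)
Your proposal is correct and follows essentially the same approach as the paper: compute the one-step difference $\Delta(x)-\Delta(x+1)$, recognize it as the exact expression appearing in Lemma~\ref{lemma:prudence_convex_delta_U} with $a=s_k'$ and $b=s_k''$, and conclude strict monotonicity. The paper's version is slightly terser in the final step (it simply asserts that the one-step decrease implies $\Delta(x_1)>\Delta(x_2)$ without writing out the telescoping sum), but the substance is identical.
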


\begin{restatable}{lemma}{LemmaAlgebraicSumIdentity}[Sum Symmetrization Identity] \label{lemma:algebraic_sum_identity_prop1}
    Let $[n] := \{1, \dots, n\}$ be the set of player indices, and let $f: 2^{[n]} \to \mathbb{R}$ be a function that assigns a real value to each subset of $[n]$. Then, for any integer $m$ such that $0 \leq m \leq \lfloor n/2 \rfloor$, the following identity holds:
    \begin{equation*}
        \sum_{\substack{I \subseteq [n] \\ |I| = m}} f(I) + \sum_{\substack{I \subseteq [n] \\ |I| = n - m}} f(I)
        = \frac{1}{\binom{n - m}{m}} \sum_{\substack{I \subseteq [n] \\ |I| = m}} \sum_{\substack{J \subseteq [n] \setminus I \\ |J| = n - 2m}} \left[ f(I) + f(I \cup J) \right].
    \end{equation*}
\end{restatable}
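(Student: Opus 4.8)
The plan is to prove the identity by expanding the right-hand side, splitting it into the contribution coming from the $f(I)$ term and the contribution coming from the $f(I\cup J)$ term, and then matching each piece against one of the two terms on the left-hand side via an elementary counting argument. No analytic input (such as the prudence condition) is needed here; the statement is purely combinatorial.

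First I would record the basic fact underlying the normalization: for a fixed $I$ with $|I|=m$, the inner index set $\{J\subseteq[n]\setminus I:|J|=n-2m\}$ has cardinality $\binom{n-m}{n-2m}=\binom{n-m}{m}$, since $|[n]\setminus I|=n-m$ and $n-2m\ge 0$ by the hypothesis $m\le\lfloor n/2\rfloor$. This cardinality is exactly the prefactor $\binom{n-m}{m}^{-1}$, so the ``$f(I)$ part'' of the RHS is
\[
\frac{1}{\binom{n-m}{m}}\sum_{\substack{I\subseteq[n]\\|I|=m}}\;\sum_{\substack{J\subseteq[n]\setminus I\\ |J|=n-2m}} f(I)
=\frac{1}{\binom{n-m}{m}}\sum_{\substack{I\subseteq[n]\\|I|=m}} f(I)\binom{n-m}{m}
=\sum_{\substack{I\subseteq[n]\\|I|=m}} f(I),
\]
which reproduces the first term on the LHS.

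Next I would handle the ``$f(I\cup J)$ part''. In each pair $(I,J)$ appearing in the double sum, $I$ and $J$ are disjoint (because $J\subseteq[n]\setminus I$) and $|I\cup J|=m+(n-2m)=n-m$. So I would reindex the double sum by the set $K:=I\cup J$, which ranges over all subsets of $[n]$ of size $n-m$. The key bijection observation is that for a fixed such $K$, a pair $(I,J)$ with $I\cup J=K$, $I\cap J=\emptyset$, and $|I|=m$ is completely determined by choosing $I\subseteq K$ with $|I|=m$ and setting $J=K\setminus I$ (which automatically has size $n-2m$ and satisfies $J\subseteq[n]\setminus I$); moreover every such pair arises this way. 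Hence each fiber over $K$ has exactly $\binom{n-m}{m}$ elements, and since $m\le\lfloor n/2\rfloor$ every size-$(n-m)$ set $K$ is indeed attained. This yields
\[
\frac{1}{\binom{n-m}{m}}\sum_{\substack{I\subseteq[n]\\|I|=m}}\;\sum_{\substack{J\subseteq[n]\setminus I\\ |J|=n-2m}} f(I\cup J)
=\frac{1}{\binom{n-m}{m}}\sum_{\substack{K\subseteq[n]\\|K|=n-m}} f(K)\binom{n-m}{m}
=\sum_{\substack{K\subseteq[n]\\|K|=n-m}} f(K),
\]
the second term on the LHS. Adding the two parts gives the claimed identity.

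I do not anticipate a serious obstacle, since the argument is pure double counting. The only point requiring genuine care is justifying the reindexing in the third paragraph: one must check both that the map $(I,J)\mapsto I\cup J$ is surjective onto the size-$(n-m)$ subsets of $[n]$ (which uses $n-2m\ge 0$) and that every fiber has precisely $\binom{n-m}{m}$ elements. Once this ``$\binom{n-m}{m}$-to-one'' structure is verified, the whole proof collapses to the elementary identity $\binom{n-m}{n-2m}=\binom{n-m}{m}$ together with interchanging the order of summation.
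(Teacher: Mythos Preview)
Your proof is correct and essentially identical to the paper's: both split the right-hand side into the $f(I)$ part and the $f(I\cup J)$ part, handle the first by counting that each fixed $I$ has $\binom{n-m}{n-2m}=\binom{n-m}{m}$ choices of $J$, and handle the second by reindexing via $K=I\cup J$ and observing that each size-$(n-m)$ set $K$ is hit by exactly $\binom{n-m}{m}$ pairs $(I,J)$.
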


We now proceed to prove Proposition~\ref{proposition::problems_with_group_as_individual}.

\begin{proof}[Proof of Proposition~\ref{proposition::problems_with_group_as_individual}]
Let $N_0 = \{0, \dots, K\}$ be the set of indices for the initial $K+1$ groups $S_0, \dots, S_K$. 
The Group Shapley Value for $S_k$ can be re-expressed in terms of $\bar U$, as follows:
\begin{align}
    \mathbb{E}[\GSV(S_k)]
    &= \sum_{I^* \subseteq N_0\setminus\{k\}} \frac{|I^*|!(K-|I^*|)!}{(K+1)!} \left[ \bar{U}\left( \sum_{l \in I^*} s_{l} + s_k \right) - \bar{U}\left( \sum_{l \in I^*} s_{l} \right) \right] \nonumber \\
    &= \frac{1}{K+1} \sum_{m^*=0}^{K} \frac{1}{\binom{K}{m^*}} \sum_{\substack{I^* \subseteq N_0\setminus\{k\} \\ |I^*|=m^*}} \left[ \bar{U}\left( \sum_{l \in I^*} s_{l} + s_k \right) - \bar{U}\left( \sum_{l \in I^*} s_{l} \right) \right]. \label{eq:gsv_sk_expanded}
\end{align}
After splitting $S_k$ into $S_k'$ and $S_k''$, we have $K+2$ groups. Let $N' = (N_0\setminus\{k\}) \cup \{k', k''\}$ be the set of indices for these $K+2$ groups, where $k'$ and $k''$ index $S_k'$ and $S_k''$ respectively. The GSVs for $S_k'$ and $S_k''$ are:
\begin{align*}
    \mathbb{E}[\GSV(S_k')]
    &= \sum_{I \subseteq N'\setminus\{k'\}} \frac{|I|!(K+1-|I|)!}{(K+2)!} \left[ \bar{U}\left( \sum_{l \in I} s_{l} + s_k' \right) - \bar{U}\left( \sum_{l \in I} s_{l} \right) \right]\\
    &= \sum_{m^*=0}^{K+1}  \frac{1}{K+2}\frac{1}{\binom{K+1}{m^*}} \sum_{\substack{I^* \subseteq N'\setminus\{k'\} \\ |I^*|=m^*}} \left[ \bar{U}\left( \sum_{l \in I^*} s_{l} + s_k' \right) - \bar{U}\left( \sum_{l \in I^*} s_{l} \right) \right], \\
    \mathbb{E}[\GSV(S_k'')]
    &= \sum_{I \subseteq N'\setminus\{k''\}} \frac{|I|!(K+1-|I|)!}{(K+2)!} \left[ \bar{U}\left( \sum_{l \in I} s_{l} + s_k'' \right) - \bar{U}\left( \sum_{l \in I} s_{l} \right) \right]\\
    &= \sum_{m^*=0}^{K+1}  \frac{1}{K+2}\frac{1}{\binom{K+1}{m^*}} \sum_{\substack{I^* \subseteq N'\setminus\{k''\} \\ |I^*|=m^*}} \left[ \bar{U}\left( \sum_{l \in I^*} s_{l} + s_k'' \right) - \bar{U}\left( \sum_{l \in I^*} s_{l} \right) \right].
\end{align*}

Let $I_0 = N_0 \setminus \{k\}$ be the set of $K$ "other" groups.
To analyze $\mathbb{E}[\text{GSV}(S_k')] + \mathbb{E}[\text{GSV}(S_k'')]$, we examine the terms associated with each $I^* \subseteq I_0$. Let $m^* := |I^*|$.
For a given $I^*$, its coalition can be joined by $S_k'$ alone, $S_k''$ alone, or by $S_k'$ and $S_k''$ in either order.
Collecting the marginal contributions for $S_k'$ and $S_k''$ across these scenarios for a fixed $I^*$, we obtain:
\begin{align*}
    & \frac{1}{K+2} \left\{ \frac{1}{\binom{K+1}{m^*}} \left[ \bar{U}\left(\sum_{l\in I^*}s_l + s_k'\right) - \bar{U}\left(\sum_{l\in I^*}s_l\right) + \bar{U}\left(\sum_{l\in I^*}s_l + s_k''\right) - \bar{U}\left(\sum_{l\in I^*}s_l\right) \right] \right. \\
    & \left. + \frac{1}{\binom{K+1}{m^*+1}} \left[ \bar{U}\left(\sum_{l\in I^*}s_l + s_k'' + s_k'\right) - \bar{U}\left(\sum_{l\in I^*}s_l + s_k''\right) + \bar{U}\left(\sum_{l\in I^*}s_l + s_k' + s_k''\right) - \bar{U}\left(\sum_{l\in I^*}s_l + s_k'\right) \right] \right\}.
\end{align*}
Summing over all $I^* \subseteq I_0$, the total expected value is:
\begin{align*}
    &\mathbb{E}[\text{GSV}(S_k')] + \mathbb{E}[\text{GSV}(S_k'')] \\
    & = \sum_{I^* \subseteq I_0} \frac{1}{K+2} \left\{ \frac{1}{\binom{K+1}{|I^*|}} \left[ \bar{U}\left(\sum_{l\in I^*}s_l + s_k'\right) - \bar{U}\left(\sum_{l\in I^*}s_l\right) + \bar{U}\left(\sum_{l\in I^*}s_l + s_k''\right) - \bar{U}\left(\sum_{l\in I^*}s_l\right) \right] \right. \\
    &\quad \left. + \frac{1}{\binom{K+1}{|I^*|+1}} \left[ \bar{U}\left(\sum_{l\in I^*}s_l + s_k'' + s_k'\right) - \bar{U}\left(\sum_{l\in I^*}s_l + s_k''\right) + \bar{U}\left(\sum_{l\in I^*}s_l + s_k' + s_k''\right) - \bar{U}\left(\sum_{l\in I^*}s_l + s_k'\right) \right] \right\}.
\end{align*}

Let $s_{I^*} = \sum_{l \in I^*} s_l$. 
The above form
can be simplified into:
\begin{align*}
    &\mathbb{E}[\GSV(S_k')] + \mathbb{E}[\GSV(S_k'')] \\
    & = \sum_{I^* \subseteq I_0} \frac{1}{K+2} \left\{ \frac{1}{\binom{K+1}{|I^*|}} \left[ \bar{U}(s_{I^*} + s_k') + \bar{U}(s_{I^*} + s_k'') - 2\bar{U}(s_{I^*}) \right] \right. \\
    &\quad \quad \quad \quad \quad \quad  \left. \quad + \frac{1}{\binom{K+1}{|I^*|+1}} \left[ 2\bar{U}(s_{I^*} + s_k) - \bar{U}(s_{I^*} + s_k'') - \bar{U}(s_{I^*} + s_k') \right] \right\}.
\end{align*}

For $\mathbb{E}[\GSV(S_k)]$ in \Cref{eq:gsv_sk_expanded}, using the identity $\frac{1}{K+1}\frac{1}{\binom{K}{m^*}} = \frac{1}{K+2}\left(\frac{1}{\binom{K+1}{m^*}} + \frac{1}{\binom{K+1}{m^*+1}}\right)$, we have:
\begin{align*}
    \mathbb{E}[\GSV(S_k)] &= \sum_{I^* \subseteq I_0} \frac{1}{K+1}\frac{1}{\binom{K}{|I^*|}} \left[ \bar{U}(s_{I^*} + s_k) - \bar{U}(s_{I^*}) \right] \\
    &= \sum_{I^* \subseteq I_0} \frac{1}{K+2} \left(\frac{1}{\binom{K+1}{|I^*|}} + \frac{1}{\binom{K+1}{|I^*|+1}}\right) \left[ \bar{U}(s_{I^*} + s_k) - \bar{U}(s_{I^*}) \right].
\end{align*}
Then, the difference in the expected GSVs is:
\begin{align*}
    &\mathbb{E}[\text{GSV}(S_k')] + \mathbb{E}[\text{GSV}(S_k'')] - \mathbb{E}[\text{GSV}(S_k)]  \\
    & = \sum_{I^* \subseteq I_0} \frac{1}{K+2} \left\{ \frac{1}{\binom{K+1}{|I^*|}} \left[ \bar{U}(s_{I^*} + s_k') + \bar{U}(s_{I^*} + s_k'') - 2\bar{U}(s_{I^*}) - \bar{U}(s_{I^*} + s_k) + \bar{U}(s_{I^*})\right] \right. \\
    & \quad \quad \quad \quad \quad \quad \left. \quad + \frac{1}{\binom{K+1}{|I^*|+1}} \left[ 2\bar{U}(s_{I^*} + s_k) - \bar{U}(s_{I^*} + s_k'') - \bar{U}(s_{I^*} + s_k') - \bar{U}(s_{I^*} + s_k) + \bar{U}(s_{I^*}) \right] \right\} \\
   & = \sum_{I^* \subseteq I_0} \frac{1}{K+2} \left( \frac{1}{\binom{K+1}{m^*}} - \frac{1}{\binom{K+1}{m^*+1}} \right) \Delta(s_{I^*}).
\end{align*}
We further group the terms with the same cardinality $m^* = |I^*|$, which ranges from $0$ to $K$. For each~$m^*$, there are $\binom{K}{m^*}$ such coalitions $I^*$. This yields:
\begin{align}
    &\mathbb{E}[\text{GSV}(S_k')] + \mathbb{E}[\text{GSV}(S_k'')] - \mathbb{E}[\text{GSV}(S_k)] \nonumber \\
    &= \frac{1}{K+2} \sum_{m^*=0}^{K} \binom{K}{m^*} \left( \frac{1}{\binom{K+1}{m^*}} - \frac{1}{\binom{K+1}{m^*+1}} \right) \frac{\sum_{I^*:|I^*|=m^*}\Delta(s_{I^*})}{\binom{K}{m^*}} \nonumber \\
    &= \frac{1}{(K+1)(K+2)} \sum_{m^*=0}^{K} (K - 2m^*) \frac{\sum_{I^*:|I^*|=m^*}\Delta(s_{I^*})}{\binom{K}{m^*}}, \label{eq:diff_sum_final_form}
\end{align}
where the last step uses the fact $\binom{K}{m^*} \left( \frac{1}{\binom{K+1}{m^*}} - \frac{1}{\binom{K+1}{m^*+1}} \right) = \frac{K-2m^*}{K+1}$.

To determine the sign of the RHS of \eqref{eq:diff_sum_final_form}, we define an auxiliary function 
$$
    g(I^*) := \frac{K-2|I^*|}{\binom{K}{|I^*|}} \Delta(s_{I^*})
$$ 
for all ${I^*} \subseteq I_0$. 
We have
\begin{align}
    \mathbb{E}[\GSV(S_k')] + \mathbb{E}[\GSV(S_k'')] - \mathbb{E}[\GSV(S_k)] = \frac{1}{(K+1)(K+2)} \sum_{{I^*} \subseteq I_0} g({I^*}). \label{eq:D_total_sum_gX}
\end{align}
We can rewrite the sum $\sum_{{I^*} \subseteq I_0} g({I^*})$ by pairing up terms corresponding to coalitions of sizes $m^*$ and $K-m^*$, for each $m^* = 0, \dots, \lfloor (K-1)/2 \rfloor$.
We have
\begin{align*}
    \sum_{{I^*} \subseteq I_0} g({I^*}) = \sum_{m^*=0}^{\lfloor (K-1)/2 \rfloor} \left( \sum_{\substack{I^* \subseteq I_0 \\ |I^*|=m^*} } g(I^*) + \sum_{\substack{J^* \subseteq I_0 \\ |J^*|=K-m^*} } g(J^*) \right).
\end{align*}
(If $K$ is even, the middle term where $m^*=K/2$ has $K-2m^*=0$, so $g({I^*})=0$ for $|{I^*}|=K/2$.)
Applying Lemma~\ref{lemma:algebraic_sum_identity_prop1} to each parenthesized pair of sums:
\begin{align*}
    \sum_{\substack{I^* \subseteq I_0 \\ |I^*|=m^*} } g(I^*) + \sum_{\substack{J^* \subseteq I_0 \\ |J^*|=K-m^*} } g(J^*) = \frac{1}{\binom{K-m^*}{m^*}} \sum_{\substack{I^* \subseteq I_0 \\ |I^*|=m^*} } \sum_{\substack{L^* \subseteq I_0 \setminus I^* \\ |L^*|=K-2m^*} } \left[ g(I^*) + g(I^* \cup L^*) \right].
\end{align*}
Substituting $g(I^*) = \frac{K-2m^*}{\binom{K}{m^*}} \Delta(s_{I^*})$ and noting that $|I^* \cup L^*| = K-m^*$, we have $g(I^* \cup L^*) = \frac{K-2(K-m^*)}{\binom{K}{K - m^*}} \Delta(s_{I^* \cup L^*}) = -\frac{(K-2m^*)}{\binom{K}{m^*}} \Delta(s_{I^* \cup L^*})$.
Thus, the term $[g(I^*) + g(I^* \cup L^*)]$ becomes $\frac{K-2m^*}{\binom{K}{m^*}} \left[ \Delta(s_{I^*}) - \Delta(s_{I^* \cup L^*}) \right]$.
So, $\mathbb{E}[\GSV(S_k')] + \mathbb{E}[\GSV(S_k'')] - \mathbb{E}[\GSV(S_k)]$ is:
\begin{align*}
    \frac{1}{(K+1)(K+2)} \sum_{m^*=0}^{\lfloor (K-1)/2 \rfloor} \frac{1}{\binom{K-m^*}{m^*}} \frac{K - 2m^*}{\binom{K}{m^*}} \sum_{\substack{I^* \subseteq I_0 \\ |I^*|=m^*} } \sum_{\substack{L^* \subseteq I_0 \setminus I^* \\ |L^*|=K-2m^*} } \left[ \Delta(s_{I^*}) - \Delta(s_{I^* \cup L^*}) \right].
\end{align*}
For $m^* < K/2$, we have:
\begin{enumerate}
    \item The coefficient $\frac{K-2m^*}{(K+1)(K+2)\binom{K-m^*}{m^*}\binom{K}{m^*}}$ is strictly positive.
    \item The set $L^*$ is non-empty (since $K-2m^* > 0$). Assuming at least some $s_l > 0$ for $l \in L^*$, we have $s_{I^* \cup L^*} > s_{I^*}$.
    \item By Lemma~\ref{lemma:delta_x_strictly_decreasing}, we have $\Delta(s_{I^*}) > \Delta(s_{I^* \cup L^*})$.
\end{enumerate}
Combining these observations shows $\ep[\GSV(S_k')] + \ep[\GSV(S_k'')] > \ep[\GSV(S_k)]$ and completes the proof of Proposition~\ref{proposition::problems_with_group_as_individual}.

\end{proof}

\subsubsection{Technical lemmas used in the proof of Proposition~\ref{proposition::problems_with_group_as_individual}} \label{sec::lemma_proposition_problems_with_group_as_individual}

\LemmaPrudenceConvexDeltaU*

\begin{proof}[Proof of Lemma~\ref{lemma:prudence_convex_delta_U}]
Define the second-order difference as $\Delta^2 \bar{U}(t) := \Delta \bar{U}(t+1) - \Delta \bar{U}(t)$. 
The prudence condition on $\bar{U}(s)$ is given by \eqref{eqn::prudence}:
$ \Delta^3 \bar{U}(x) := \bar{U}(x+3) - 3\bar{U}(x+2) + 3\bar{U}(x+1) - \bar{U}(x) > 0 $
for any valid $x$. We can express $\Delta^3 \bar{U}(x)$ as follows:
\begin{align*}
    \Delta^3 \bar{U}(x) &= \left( \bar{U}(x+3) - \bar{U}(x+2) \right) - 2\left( \bar{U}(x+2) - \bar{U}(x+1) \right) + \left( \bar{U}(x+1) - \bar{U}(x) \right) \\
    &= \Delta \bar{U}(x+2) - 2\Delta \bar{U}(x+1) + \Delta \bar{U}(x) \\
    &= \Delta^2\bar{U}(x+1) - \Delta^2\bar{U}(x).
\end{align*}
Since $\Delta^3 \bar{U}(x) > 0$, we have $\Delta^2 \bar{U}(x+1) > \Delta^2 \bar{U}(x)$. As this holds for any valid $x$, it implies that the function $\Delta^2 \bar{U}(t)$ is strictly increasing in $t$.

We want to prove that for any integer $x \ge 0$ and positive integers $a, b \in \mathbb{N}_+$:
\begin{equation*}
    \Delta \bar{U}(x) + \Delta \bar{U}(x + a + b) > \Delta \bar{U}(x + a) + \Delta \bar{U}(x + b).
\end{equation*}
This inequality can be rearranged to:
\begin{equation}
    \Delta \bar{U}(x + a + b) - \Delta \bar{U}(x + a) > \Delta \bar{U}(x + b) - \Delta \bar{U}(x). \label{eq:lemma_target_rearranged_proof}
\end{equation}
The left-hand side (LHS) of \eqref{eq:lemma_target_rearranged_proof} can be written as a sum of second-order differences:
\begin{align*}
    \Delta \bar{U}(x + a + b) - \Delta \bar{U}(x + a) &= \sum_{j=0}^{b-1} \left( \Delta \bar{U}(x+a+j+1) - \Delta \bar{U}(x+a+j) \right) \\
    &= \sum_{j=0}^{b-1} \Delta^2 \bar{U}(x+a+j).
\end{align*}
Similarly, the right-hand side (RHS) of \eqref{eq:lemma_target_rearranged_proof} is:
\begin{align*}
    \Delta \bar{U}(x + b) - \Delta \bar{U}(x) &= \sum_{j=0}^{b-1} \left( \Delta \bar{U}(x+j+1) - \Delta \bar{U}(x+j) \right) \\
    &= \sum_{j=0}^{b-1} \Delta^2 \bar{U}(x+j).
\end{align*}
Since $a \in \mathbb{N}_+$, we have $a \ge 1$. Therefore, for each $j \in \{0, \dots, b-1\}$, the argument $x+a+j$ is strictly greater than $x+j$.
Because $\Delta^2 \bar{U}(t)$ is a strictly increasing function of $t$, it follows that for each term in the summations:
\begin{equation*}
    \Delta^2 \bar{U}(x+a+j) > \Delta^2 \bar{U}(x+j).
\end{equation*}
Summing these strict inequalities from $j=0$ to $b-1$ (since $b \in \mathbb{N}_+$, there is at least one term in the sum):
\begin{equation*}
    \sum_{j=0}^{b-1} \Delta^2 \bar{U}(x+a+j) > \sum_{j=0}^{b-1} \Delta^2 \bar{U}(x+j).
\end{equation*}
This directly implies that the LHS of \eqref{eq:lemma_target_rearranged_proof} is strictly greater than its RHS:
\begin{equation*}
    \Delta \bar{U}(x + a + b) - \Delta \bar{U}(x + a) > \Delta \bar{U}(x + b) - \Delta \bar{U}(x).
\end{equation*}
Rearranging this yields the desired result:
\begin{equation*}
    \Delta \bar{U}(x) + \Delta \bar{U}(x + a + b) > \Delta \bar{U}(x + a) + \Delta \bar{U}(x + b).
\end{equation*}
This completes the proof.
\end{proof}

\LemmaDeltaXStrictlyDecreasing*

\begin{proof}[Proof of Lemma~\ref{lemma:delta_x_strictly_decreasing}]
We want to show that $\Delta(x)$ is a strictly decreasing function of $x$. This is equivalent to proving that $\Delta(x) - \Delta(x+1) > 0$ for any $x \ge 0$ (assuming $x$ and $x+1$ lead to valid arguments for $\bar{U}$ as per the lemma statement).

Let the first-order difference of $\bar{U}$ be defined as $\Delta \bar{U}(t) := \bar{U}(t+1) - \bar{U}(t)$, for $t \ge 0$, ensuring all arguments to $\bar{U}$ are valid.
Recall the definition of $\Delta(x)$:
\begin{equation*}
    \Delta(x) = \bar{U}(x + s_{k}') + \bar{U}(x + s_{k}'') - \bar{U}(x) - \bar{U}(x + s_{k}' + s_{k}'').
\end{equation*}
Then, the difference $\Delta(x) - \Delta(x+1)$ is:
\begin{align*}
    &\Delta(x) - \Delta(x+1) \\
    &= \left[ \bar{U}(x + s_{k}') + \bar{U}(x + s_{k}'') - \bar{U}(x) - \bar{U}(x + s_{k}' + s_{k}'') \right] \\
    &\quad - \left[ \bar{U}(x+1 + s_{k}') + \bar{U}(x+1 + s_{k}'') - \bar{U}(x+1) - \bar{U}(x+1 + s_{k}' + s_{k}'') \right].
\end{align*}
We regroup the terms and leverage the definition of $\Delta \bar{U}(t)$, this becomes:
\begin{align*}
    \Delta(x) - \Delta(x+1) = \Delta \bar{U}(x) + \Delta \bar{U}(x + s_{k}' + s_{k}'') - \left[ \Delta \bar{U}(x + s_{k}') + \Delta \bar{U}(x + s_{k}'') \right].
\end{align*}
Since $S_k'$ and $S_k''$ are non-empty, $s_k'$ and $s_k''$ are positive integers (i.e., $a, b \in \mathbb{N}_+$). By Lemma~\ref{lemma:prudence_convex_delta_U}, we have:
\begin{equation*}
    \Delta \bar{U}(x) + \Delta \bar{U}(x + s_{k}' + s_{k}'') > \Delta \bar{U}(x + s_{k}') + \Delta \bar{U}(x + s_{k}'').
\end{equation*}
Therefore,
\begin{equation*}
    \Delta \bar{U}(x) + \Delta \bar{U}(x + s_{k}' + s_{k}'') - \left[ \Delta \bar{U}(x + s_{k}') + \Delta \bar{U}(x + s_{k}'') \right] > 0.
\end{equation*}
This implies $\Delta(x) - \Delta(x+1) > 0$, so $\Delta(x) > \Delta(x+1)$.
Since this holds for any valid $x \ge 0$, $\Delta(x)$ is a strictly decreasing function of $x$. Consequently, for any $x_2 > x_1 \ge 0$, we have $\Delta(x_1) > \Delta(x_2)$.
\end{proof}

\LemmaAlgebraicSumIdentity*

\begin{proof}[Proof of Lemma~\ref{lemma:algebraic_sum_identity_prop1}]
We start by analyzing the right-hand side (RHS) of the proposed identity:
\begin{equation*}
    \text{RHS} = \frac{1}{\binom{n - m}{m}} \sum_{\substack{I \subseteq [n] \\ |I| = m}} \sum_{\substack{J \subseteq [n] \setminus I \\ |J| = n - 2m}} \left[ f(I) + f(I \cup J) \right].
\end{equation*}
We can split the sum inside the square brackets into two parts:
\begin{equation*}
    \text{RHS} = \frac{1}{\binom{n - m}{m}} \left[ \sum_{\substack{I \subseteq [n] \\ |I| = m}} \sum_{\substack{J \subseteq [n] \setminus I \\ |J| = n - 2m}} f(I) + \sum_{\substack{I \subseteq [n] \\ |I| = m}} \sum_{\substack{J \subseteq [n] \setminus I \\ |J| = n - 2m}} f(I \cup J) \right].
\end{equation*}

Let's analyze the first double summation: $\sum_{\substack{I \subseteq [n] \\ |I| = m}} \sum_{\substack{J \subseteq [n] \setminus I \\ |J| = n - 2m}} f(I)$.
For any fixed subset $I$ such that $|I|=m$, the term $f(I)$ is constant with respect to the inner sum over $J$. The number of ways to choose a subset $J \subseteq [n] \setminus I$ with $|J| = n - 2m$ is given by $\binom{|[n] \setminus I|}{n - 2m}$. Since $|[n] \setminus I| = n - m$, this count is $\binom{n-m}{n-2m}$.
Using the identity $\binom{k}{r} = \binom{k}{k-r}$, we have $\binom{n-m}{n-2m} = \binom{n-m}{(n-m) - (n-2m)} = \binom{n-m}{m}$.
So, for each fixed $I$ with $|I|=m$, $f(I)$ appears $\binom{n-m}{m}$ times.
Thus, the first double summation is:
\begin{equation*}
    \sum_{\substack{I \subseteq [n] \\ |I| = m}} \binom{n-m}{m} f(I).
\end{equation*}

Now, let's analyze the second double summation: $\sum_{\substack{I \subseteq [n] \\ |I| = m}} \sum_{\substack{J \subseteq [n] \setminus I \\ |J| = n - 2m}} f(I \cup J)$.
Let $K = I \cup J$. Since $I \cap J = \emptyset$, $|I|=m$, and $|J|=n-2m$, it follows that $|K| = m + (n-2m) = n-m$.
We want to count how many times a specific subset $K \subseteq [n]$ with $|K|=n-m$ appears as $I \cup J$ in this summation.
For a fixed $K$ (where $|K|=n-m$), we need to find an $I \subseteq K$ such that $|I|=m$. Once such an $I$ is chosen, $J$ is uniquely determined as $J = K \setminus I$. The size of this $J$ will be $|K \setminus I| = (n-m) - m = n-2m$. Also, $J \subseteq [n] \setminus I$ is satisfied.
The number of ways to choose such a subset $I \subseteq K$ with $|I|=m$ is $\binom{|K|}{m} = \binom{n-m}{m}$.
So, for each fixed $K$ with $|K|=n-m$, the term $f(K)$ appears $\binom{n-m}{m}$ times.
Thus, the second double summation is:
\begin{equation*}
    \sum_{\substack{K \subseteq [n] \\ |K| = n - m}} \binom{n-m}{m} f(K).
\end{equation*}
(Renaming the dummy variable $K$ to $I$ for consistency with the LHS):
\begin{equation*}
    \sum_{\substack{I \subseteq [n] \\ |I| = n - m}} \binom{n-m}{m} f(I).
\end{equation*}

Substituting these back into the expression for the RHS:
\begin{align*}
    \text{RHS} &= \frac{1}{\binom{n - m}{m}} \left[ \sum_{\substack{I \subseteq [n] \\ |I| = m}} \binom{n-m}{m} f(I) + \sum_{\substack{I \subseteq [n] \\ |I| = n - m}} \binom{n-m}{m} f(I) \right] \\
    &= \sum_{\substack{I \subseteq [n] \\ |I| = m}} f(I) + \sum_{\substack{I \subseteq [n] \\ |I| = n - m}} f(I).
\end{align*}
This is exactly the left-hand side (LHS) of the identity stated in the lemma. The condition $0 \leq m \leq \lfloor n/2 \rfloor$ ensures that $n-2m \ge 0$ (so $|J|$ is a valid size) and $m \le n-m$ (so $\binom{n-m}{m}$ is well-defined and typically non-zero, unless $n-m < m$ which is prevented by $m \le n/2$. If $m=n/2$, $\binom{n-m}{m}=\binom{m}{m}=1$).
This completes the proof.
\end{proof}

\subsection{Proof of 
Theorem~\ref{theorem::axioms}}

\begin{proof}[Proof of Theorem~\ref{theorem::axioms}]
Let $[n] = \{1, \dots, n\}$ be the set of indices for the data points. The utility function $U$ is defined on subsets of $[n]$, and $\Pi$ is a partition of $[n]$.
The proof consists of two parts: sufficiency and uniqueness.

\textbf{Sufficiency.}
We verify that the proposed valuation $\nu_{U, \mathcal{D}, \Pi}(S) = \sum_{i \in S} \SV(i)$ satisfies all five axioms from Definition~\ref{definition::axioms}. For any $S_j \in \Pi$:
\begin{enumerate}
    \item \textbf{Null player:} If every subset $S' \subseteq S_j$ satisfies $U(S'' \cup S') = U(S'')$ for all $S'' \subseteq [n] \setminus S'$, then as a special case, for every element $k \in S_j$, it holds that $U(S'' \cup \{k\}) = U(S'')$ for all $S'' \subseteq [n] \setminus \{k\}$.
    By the definition of individual Shapley value, we have $\SV(k) = 0$. Thus, $\nu_{U, \mathcal{D}, \Pi}(S_j) = \sum_{k \in S_j} \SV(k) = 0$.
    
    \item \textbf{Symmetry:} 
    Let $S_1,S_2$ be the said sets and $\sigma$ be the said mapping between them, all as described in the axiom.
    For every individual $k \in S_1$, we take $S'=\{k\}$.
    By the axiom, for
    all $S'' \subset [n] \setminus \{k, \sigma(k)\}$, it holds that $U(S'' \cup \{k\}) = U(S'' \cup \{\sigma(k)\})$.
    Then it is easy to verify that $\SV(k) = \SV(\sigma(k))$.
    Therefore, $\nu_{U, \mathcal{D}, \Pi}(S_1) = \sum_{k \in S_1} \SV(k) = \sum_{k \in S_1} \SV(\sigma(k)) = \sum_{l \in S_2} \SV(l) = \nu_{U, \mathcal{D}, \Pi}(S_2)$.
    \item \textbf{Linearity:} For utility functions $U_1, U_2$ and scalars $\alpha_1, \alpha_2$, the individual Shapley value is linear: $\SV_{\alpha_1 U_1 + \alpha_2 U_2}(k) = \alpha_1 \SV_{U_1}(k) + \alpha_2 \SV_{U_2}(k)$. Summing over $k \in S_j$ preserves this linearity:
    $
    \nu_{\alpha_1 U_1 + \alpha_2 U_2, \mathcal{D}, \Pi}(S_j) = \sum_{k \in S_j} \SV_{\alpha_1 U_1 + \alpha_2 U_2}(k) 
    = \alpha_1 \sum_{k \in S_j} \SV_{U_1}(k) + \alpha_2 \sum_{k \in S_j} \SV_{U_2}(k) 
    = \alpha_1 \nu_{U_1, \mathcal{D}, \Pi}(S_j) + \alpha_2 \nu_{U_2, \mathcal{D}, \Pi}(S_j)$.
    \item \textbf{Efficiency:} For any partition $\Pi = \{S_1, \dots, S_M\}$ of $[n]$, the axiom implies that $\sum_{j=1}^{M} \nu_{U, \mathcal{D}, \Pi}(S_j) = \sum_{j=1}^{M} \sum_{k \in S_j} \SV(k) = \sum_{k \in [n]} \SV(k)$. By the efficiency property of the individual Shapley value, we have $\sum_{k \in [n]} \SV(k) = U([n])$.
    \item \textbf{Faithfulness:} The proposed valuation $\nu_{U, \mathcal{D}, \Pi}(S) = \sum_{i \in S} \SV(i)$ depends only on the group $S$ itself (and $U, \mathcal{D}$), not on the specific partition $\Pi$ that $S$ belongs to. Thus, if $S_0 \in \Pi_1 \cap \Pi_2$, then $\nu_{U, \mathcal{D}, \Pi_1}(S_0) = \sum_{i \in S_0} \SV(i)$ and $\nu_{U, \mathcal{D}, \Pi_2}(S_0) = \sum_{i \in S_0} \SV(i)$. These are equal, so the Faithfulness axiom is satisfied.
\end{enumerate}
All axioms are satisfied.

\textbf{Uniqueness.}
Let $\nu'_{U, \mathcal{D}, \Pi}$ be any group-level data valuation method satisfying Axioms 1-5 from Definition~\ref{definition::axioms}.
We have the following progressive arguments.
\begin{enumerate}
    \item Consider the partition $\Pi^* = \{\{k\}\}_{k \in [n]}$, where each group is a singleton. When Axioms 1-4 (null player, symmetry, linearity, and efficiency) are applied to the groups $\{k\} \in \Pi^*$, they correspond to the standard axioms for individual player valuations. The Shapley value $\SV(k)$ is the unique valuation satisfying these axioms for individual players. Thus, for any $k \in [n]$, we must have $\nu'_{U, \mathcal{D}, \Pi^*}(\{k\}) = \SV(k)$.

    \item For any non-empty subset $S \subseteq [n]$, consider the specific partition $\Pi_S = \{S\} \cup \{\{j\}\}_{j \in [n] \setminus S}$.
    For any singleton group $\{j\}$ where $j \in [n] \setminus S$, note that $\{j\} \in \Pi_S$ and also $\{j\} \in \Pi^*$. By Axiom 5 (faithfulness):
    \[
    \nu'_{U, \mathcal{D}, \Pi_S}(\{j\}) = \nu'_{U, \mathcal{D}, \Pi^*}(\{j\}).
    \]
    Using the result from step 1, $\nu'_{U, \mathcal{D}, \Pi^*}(\{j\}) = \SV(j)$. So, for $j \in [n] \setminus S$:
    \[
    \nu'_{U, \mathcal{D}, \Pi_S}(\{j\}) = \SV(j).
    \]
    Now, apply Axiom 4 (efficiency) to the partition $\Pi_S$:
    \[
    \nu'_{U, \mathcal{D}, \Pi_S}(S) + \sum_{j \in [n] \setminus S} \nu'_{U, \mathcal{D}, \Pi_S}(\{j\}) = U([n]).
    \]
    Substituting $\nu'_{U, \mathcal{D}, \Pi_S}(\{j\}) = \SV(j)$ for $j \in [n] \setminus S$:
    \[
    \nu'_{U, \mathcal{D}, \Pi_S}(S) + \sum_{j \in [n] \setminus S} \SV(j) = U([n]).
    \]
    By the efficiency of individual Shapley values, $U([n]) = \sum_{k \in [n]} \SV(k)$. Therefore:
    \[
    \nu'_{U, \mathcal{D}, \Pi_S}(S) = \sum_{k \in [n]} \SV(k) - \sum_{j \in [n] \setminus S} \SV(j) = \sum_{i \in S} \SV(i).
    \]

    \item Now, consider any arbitrary partition $\Pi$ of $[n]$ such that $S \in \Pi$.
    Since $S \in \Pi$ and $S \in \Pi_S$, by Axiom 5 (faithfulness):
    \[
    \nu'_{U, \mathcal{D}, \Pi}(S) = \nu'_{U, \mathcal{D}, \Pi_S}(S).
    \]
    Substituting the result from step 2:
    \[
    \nu'_{U, \mathcal{D}, \Pi}(S) = \sum_{i \in S} \SV(i).
    \]
\end{enumerate}
Since this holds for any valuation $\nu'$ satisfying the axioms and for any $S \in \Pi$, the valuation method is unique and given by the sum of individual Shapley values.
\end{proof}

\subsection{Proof of Lemma~\ref{lemma::group_shapley_closed_form_exact}}

\begin{proof}[Proof of Lemma~\ref{lemma::group_shapley_closed_form_exact}]
By definition, $\operatorname{FGSV}(S_0) = \sum_{i \in S_0} SV(i)$. Using the standard definition of the individual Shapley value $SV(i)$ from \eqref{eqn::shapley_value_def}:
\begin{align}
    \operatorname{FGSV}(S_0) &= \sum_{i \in S_0} \sum_{S \subseteq [n]\setminus\{i\}} \frac{|S|!(n-|S|-1)!}{n!} \left[ U(S \cup \{i\}) - U(S) \right]. \label{eqn::fgsb_def_sum_sv}
\end{align}
The definition \eqref{eqn::fgsb_def_sum_sv} shows that $\operatorname{FGSV}(S_0)$ is a linear combination of the utility values $U(S)$ for $S \subseteq [n]$. We now determine the coefficient $C(S)$ for a specific subset $S$ with cardinality $s = |S|$.
A term $U(S)$ appears positively in the sum for $SV(i)$ if $i \in S_0 \cap S$, and negatively if $i \in S_0 \setminus S$.
The respective Shapley weights are $\frac{(s-1)!(n-s)!}{n!}$ and $\frac{s!(n-s-1)!}{n!}$. Summing over $i \in S_0$, the coefficient $C(S)$ is therefore:
\begin{align*}
    C(S) &= |S_0 \cap S| \frac{(s-1)!(n-s)!}{n!} - |S_0 \setminus S| \frac{s!(n-s-1)!}{n!}.
\end{align*}
Let $s_1 = |S_0 \cap S|$. Then $|S_0 \setminus S| = s_0 - s_1$. Substituting $s=|S|$ and simplifying the factorials (for $0 < s < n$):
\begin{align*}
    C(S) &= s_1 \frac{(s-1)!(n-s)!}{n!} - (s_0 - s_1) \frac{s!(n-s-1)!}{n!} \\
         &= \frac{s!(n-s)!}{n!} \left[ \frac{s_1}{s} - \frac{s_0 - s_1}{n-s} \right] \\
         &= \frac{1}{\binom{n}{s}} \left[ \frac{s_1(n-s) - (s_0 - s_1)s}{s(n-s)} \right] = \frac{s_1 n - s_0 s}{s(n-s)} \frac{1}{\binom{n}{s}}.
\end{align*}
This coefficient depends only on $s=|S|$ and $s_1=|S_0 \cap S|$. We rewrite $\operatorname{FGSV}(S_0) = \sum_{S \subseteq [n]} C(S) U(S)$ by grouping terms with the same $s$ and $s_1$. We handle the edge cases $s=0$ ($S=\varnothing$) and $s=n$ ($S=[n]$) separately.
For $S=\varnothing$ ($s=0$), $s_1=0$, the coefficient is $C(\varnothing) = -s_0/n$.
For $S=[n]$ ($s=n$), $s_1=s_0$, the coefficient is $C([n]) = s_0/n$.
So we can write:
\begin{align*}
    \operatorname{FGSV}(S_0) &= C([n]) U([n]) + C(\varnothing) U(\varnothing) + \sum_{s=1}^{n-1} \sum_{\substack{S \subseteq [n] \\ |S|=s}} C(S) U(S) \\
    &= \frac{s_0}{n} \left[ U([n]) - U(\varnothing) \right] + \sum_{s=1}^{n-1} \sum_{\substack{S \subseteq [n] \\ |S|=s}} \left( \frac{s_1 n - s_0 s}{s(n-s)} \frac{1}{\binom{n}{s}} \right) U(S),
\end{align*}
where $s_1 = |S \cap S_0|$ within the inner sum. Now, we focus on handling
\begin{equation*}
    \mathcal{T} := \sum_{s=1}^{n-1} \sum_{\substack{S \subseteq [n] \\ |S|=s}} \left( \frac{s_1 n - s_0 s}{s(n-s)} \frac{1}{\binom{n}{s}} \right) U(S).
\end{equation*}
By grouping the inner sum by the value of $s_1 = |S \cap S_0|$ for each fixed size $s$, we have:
\begin{align*}
   \mathcal{T} = \sum_{s=1}^{n-1} \sum_{s_1=\max\{0,s+s_0-n\}}^{\min\{s,s_0\}} \left( \frac{s_1 n - s_0 s}{s(n-s)} \frac{1}{\binom{n}{s}} \right) \sum_{\substack{S: |S|=s \\ |S \cap S_0|=s_1}} U(S).
\end{align*}
Using the definition $\mu\left( \frac{s_1}{s}; s, s_0, n \right) = \frac{\sum_{S:|S|=s, |S\cap S_0|=s_1} U(S)}{\binom{s_0}{s_1}\binom{n-s_0}{s-s_1}}$ (from \eqref{eqn::def_mu}), we have:
\begin{align*}
    \mathcal{T} = \sum_{s=1}^{n-1} \sum_{s_1} \frac{s_1 n - s_0 s}{s(n-s)} \frac{\binom{s_0}{s_1}\binom{n-s_0}{s-s_1}}{\binom{n}{s}} \mu\left( \frac{s_1}{s}; s, s_0, n \right).
\end{align*}
Recognizing the hypergeometric probability $\pr_{\boldsymbol{s_1}\sim \mathcal{HG}(n, s_0, s)}(\boldsymbol{s_1}=s_1) = \frac{\binom{s_0}{s_1}\binom{n-s_0}{s-s_1}}{\binom{n}{s}}$:
\begin{align*}
    \mathcal{T} &= \sum_{s=1}^{n-1} \sum_{s_1} \pr(\boldsymbol{s_1}=s_1) \frac{n (s_1 - s s_0/n)}{s(n-s)} \mu\left( \frac{s_1}{s}; s, s_0, n \right) \\
    &= \sum_{s=1}^{n-1} \mathbb{E}_{\boldsymbol{s_1} \sim \mathcal{HG}(n,s_0,s)} \left[ \frac{n}{s(n - s)} \left( \boldsymbol{s_1} - \frac{s s_0}{n} \right) \mu\left( \frac{\boldsymbol{s_1}}{s}; s, s_0, n \right) \right] \\
    &= \sum_{s=1}^{n-1} \mathcal{T}(s),
\end{align*}
where $\mathcal{T}(s)$ matches the definition \eqref{eqn::def::T(s)}.
Combining the parts, we get $\operatorname{FGSV}(S_0) = \frac{s_0}{n} \left[ U([n]) - U(\varnothing) \right] + \sum_{s=1}^{n - 1} \mathcal{T}(s)$, proving the lemma.
\end{proof}

\subsection{Proof of Theorem~\ref{proposition::calT}} \label{sec::proof_proposition_calT}

Before presenting the proof of~\Cref{proposition::calT}, we establish two lemmas that demonstrate how Assumption~\ref{assump::utility_stability} implies a smoothness condition on the function $\mu(\cdot)$. The first lemma shows that this assumption yields a bound on the second-order finite difference of $\mu$.

\begin{restatable}{lemma}{LemmaMuSecondOrderStability}
Under Assumption~\ref{assump::utility_stability}, the following second-order difference bound holds for $\mu\left(\frac{s_1}{s}; s, s_0, n\right)$:
\begin{equation} \label{eqn::mu_stability}
    \left| \mu\left(\frac{s_1+1}{s}; s, s_0, n\right) - 2\mu\left(\frac{s_1}{s}; s, s_0, n \right) + \mu\left(\frac{s_1 - 1}{s}; s, s_0, n\right)\right| \leq \frac{C}{s^{\frac{3}{2}+\upsilon}}.
\end{equation}
\label{lemma::mu_second_order_stability}
\end{restatable}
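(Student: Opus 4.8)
The plan is to express the second-order finite difference of $\mu$ as an \emph{average} of ``mixed second differences'' of $U$ that exactly match the left-hand side of Assumption~\ref{assump::utility_stability}, and then pass from the pointwise bound there to a bound on the average by the triangle inequality. Throughout, all suppressed arguments of $\mu$ are $(s,s_0,n)$, and I work in the feasible range $\max\{1,\,s_0+s-n+1\}\le s_1\le\min\{s_0-1,\,s-1\}$, in which $\mathscr{A}_{s,s_1-1}$, $\mathscr{A}_{s,s_1}$ and $\mathscr{A}_{s,s_1+1}$ are all nonempty.

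Recall from \eqref{eqn::def_mu} that $\mu(s_1/s)$ is the mean of $U(S)$ over $S\in\mathscr{A}_{s,s_1}$; equivalently, writing $S=T_{\mathrm{in}}\cup T_{\mathrm{out}}$ with $T_{\mathrm{in}}=S\cap S_0$ and $T_{\mathrm{out}}=S\cap S_0^c$, it equals $\mathbb{E}[U(T_{\mathrm{in}}\cup T_{\mathrm{out}})]$ when $T_{\mathrm{in}}$ is uniform over size-$s_1$ subsets of $S_0$ and, independently, $T_{\mathrm{out}}$ is uniform over size-$(s-s_1)$ subsets of $S_0^c$. The key device is a single coupling producing draws for all three values $s_1-1,s_1,s_1+1$ at once: pick a core $P\subseteq S_0$ with $|P|=s_1-1$, a core $Q\subseteq S_0^c$ with $|Q|=s-s_1-1$, an ordered pair $(a,a')$ of distinct elements of $S_0\setminus P$, and an ordered pair $(b,b')$ of distinct elements of $S_0^c\setminus Q$, all drawn uniformly and with the two sides independent; set $\mathcal{S}:=P\cup Q$, so $|\mathcal{S}|=s-2$. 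A short counting check on each side (the two reservoirs are disjoint, so they decouple) shows that $\mathcal{S}\cup\{a,a'\}$ is uniform over $\mathscr{A}_{s,s_1+1}$, that $\mathcal{S}\cup\{a,b\}$ and $\mathcal{S}\cup\{a',b'\}$ are each uniform over $\mathscr{A}_{s,s_1}$, and that $\mathcal{S}\cup\{b,b'\}$ is uniform over $\mathscr{A}_{s,s_1-1}$.

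Taking expectations and using linearity then yields the identity
\begin{align*}
&\mu\left(\frac{s_1+1}{s}\right) - 2\,\mu\left(\frac{s_1}{s}\right) + \mu\left(\frac{s_1-1}{s}\right) \\
&\qquad = \mathbb{E}\Big[\, U(\mathcal{S}\cup\{a,a'\}) - U(\mathcal{S}\cup\{a,b\}) - U(\mathcal{S}\cup\{a',b'\}) + U(\mathcal{S}\cup\{b,b'\}) \,\Big].
\end{align*}
The bracketed quantity is precisely the four-term expression in Assumption~\ref{assump::utility_stability} with base set $\mathcal{S}$ (of size $s-2$) and $z_1=a$, $z_1'=a'$, $z_2=b$, $z_2'=b'$, hence every realization has absolute value at most $C\,(s-2)^{-(3/2+\upsilon)}$. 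Applying the triangle inequality to the expectation and noting $(s/(s-2))^{3/2+\upsilon}\le 3^{3/2+\upsilon}$ for $s\ge 3$ (the finitely many smaller cases being absorbed into the constant, using boundedness of $U$) gives $\bigl|\mu(\frac{s_1+1}{s}) - 2\mu(\frac{s_1}{s}) + \mu(\frac{s_1-1}{s})\bigr|\le C s^{-(3/2+\upsilon)}$, which is \eqref{eqn::mu_stability}.

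I expect the main obstacle to be purely the combinatorial bookkeeping in the coupling: verifying that the three push-forward laws are exactly uniform over the intended $\mathscr{A}_{s,\cdot}$ classes (the case $\mathcal{S}\cup\{a',b'\}$ is the subtle one, since $a'$ is constrained to avoid $a$, and one must check that the number of preimages of any target set is still independent of that set), and tracking the range restrictions on $s_1$ so that every object invoked actually exists. Once the displayed identity is in hand, the analytic step is immediate.
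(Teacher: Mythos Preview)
Your proposal is correct and follows essentially the same approach as the paper. The paper defines the sum $\mathcal{M}(s_1,s_2)$ over all tuples $(S_1',S_2',z_1,z_1',z_2,z_2')$ with $|S_1'|=s_1-1$, $|S_2'|=s-s_1-1$, ordered distinct $z_1,z_1'\in S_0\setminus S_1'$ and $z_2,z_2'\in S_0^c\setminus S_2'$, verifies by explicit coefficient counting that $\mathcal{M}/N_{\rm terms}$ equals the second difference of $\mu$, and then bounds via Assumption~\ref{assump::utility_stability}; your coupling with $(P,Q,a,a',b,b')$ is exactly this construction phrased probabilistically, and your anticipated ``subtle'' check for $\mathcal{S}\cup\{a',b'\}$ corresponds to the paper's computation of the coefficient $C(s_1,s)$.
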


The proof of Lemma~\ref{lemma::mu_second_order_stability} is in Appendix~\ref{sec::lemma_proposition_calT}.

Next, we show that this second-order difference bound implies a first-order approximation error bound -- a discrete analogue of Taylor’s theorem. 
To formalize this, we define the following continuity-extended versions of a discrete function and its first-order derivative.

\begin{definition}[Continuity-extended function]
Let $f_s(x)$ be a function defined on rational points of the form $x = \frac{s_1}{s}$ for integers $s_1 \in \{0, 1, \dots, s\}$. The \emph{continuity-extended function} $\tilde{f}_s(x)$ is defined as the linear interpolation:
\begin{equation*}
    \tilde{f}_s(x) = \begin{cases} 
        (sx - \lfloor sx \rfloor) \cdot f_s\left( \frac{\lfloor sx \rfloor + 1}{s} \right) + (\lfloor sx \rfloor + 1 - sx) \cdot f_s\left( \frac{\lfloor sx \rfloor}{s} \right), & x \in [0, 1), \\
        f_s(1), & x = 1.
    \end{cases}
\end{equation*}
\label{def::continuity_extended_func}
\end{definition}

Intuitively, $\tilde f_s$ linearly interpolates $f_s$ between adjacent grid points.
Notice that $\tilde f_s$ is indeed smooth in between, thus $\tilde f_s'$ is well-defined there.
When taking the derivative of $\tilde f_s$ w.r.t. $x$, notice that $\lfloor sx \rfloor$ is constant between grid points, thus differentiate to zero.
This naturally leads to the following definition.

\begin{definition}[Continuity-extended first-order derivative]
Let $f_s(x)$ be as above. The \emph{continuity-extended first-order derivative} $\tilde{f}_s'(x)$ is defined as:
\begin{equation*}
    \tilde{f}_s'(x) = s \cdot \left[ f_s\left( \frac{\lfloor sx \rfloor + 1}{s} \right) - f_s\left( \frac{\lfloor sx \rfloor}{s} \right) \right], \quad x \in (0, 1).
\end{equation*}
\label{def::continuity_extended_diff}
\end{definition}

\begin{restatable}{lemma}{LemmaMuFirstOrderExpansion}
    \label{lemma::mu_first_order_expansion}
    Let $\{f_s(x)\}_{s=1}^\infty$ be a sequence of functions defined on rational points $x = \frac{s_1}{s}$ with $s_1 \in \{0, 1, \dots, s\}$. Suppose $f_s(x)$ satisfies the discrete second-order difference bound:
    \begin{equation} \label{eqn::discrete_second_order_diff}
        \left| f_s\left( \frac{s_1+1}{s} \right) - 2f_s\left(\frac{s_1}{s}\right) + f_s\left(\frac{s_1-1}{s}\right) \right| \leq \frac{C}{s^{\frac{3}{2}+\upsilon}}, \quad 1 \leq s_1 < s - 1,
    \end{equation}
    for constants $C > 0$ and $\upsilon > 0$. Then for all $s_1, x$ with $0 < \frac{s_1}{s}, x < 1$, the following first-order approximation holds:
    \begin{equation} \label{eqn::discrete_first_order_cond}
        \left| f_s\left( \frac{s_1}{s} \right) - \tilde{f}_s(x) - \tilde{f}_s'(x) \cdot \left( \frac{s_1}{s} - x \right) \right| \leq Cs^{\frac{1}{2} - \upsilon} \left( \frac{s_1}{s} - x \right)^2.
    \end{equation}
\end{restatable}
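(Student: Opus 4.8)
The plan is to recognize the quantity inside the absolute value in~\eqref{eqn::discrete_first_order_cond} as a \emph{discrete second-order Taylor remainder} anchored at a grid point near $x$, and then control it by telescoping the bounded second differences in~\eqref{eqn::discrete_second_order_diff}. Write $D_s(k) := f_s\big(\tfrac{k+1}{s}\big) - f_s\big(\tfrac{k}{s}\big)$ for $0 \le k \le s-1$, so that the hypothesis reads $\big|D_s(k) - D_s(k-1)\big| \le C s^{-(3/2+\upsilon)}$ at every admissible index $k$. Fix $x \in (0,1)$ and set $j := \lfloor sx \rfloor$. Unwinding Definitions~\ref{def::continuity_extended_func} and~\ref{def::continuity_extended_diff} gives $\tilde f_s(x) = f_s\big(\tfrac{j}{s}\big) + (sx - j)\,D_s(j)$ and $\tilde f_s'(x) = s\,D_s(j)$; substituting these and cancelling the $(sx-j)$ contributions shows that the bracketed expression in~\eqref{eqn::discrete_first_order_cond} equals
\begin{equation*}
    E \;:=\; f_s\big(\tfrac{s_1}{s}\big) - f_s\big(\tfrac{j}{s}\big) - (s_1 - j)\,D_s(j),
\end{equation*}
i.e.\ the error of the tangent-line approximation to $f_s$ at the grid point $s_1/s$, with base index $j$.

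Next I would bound $|E|$ by a double telescoping. Assume first $s_1 \ge j$; the case $s_1 \le j$ is symmetric. Telescoping, $f_s\big(\tfrac{s_1}{s}\big) - f_s\big(\tfrac{j}{s}\big) = \sum_{k=j}^{s_1-1} D_s(k)$, so
\begin{equation*}
    E \;=\; \sum_{k=j}^{s_1-1}\big(D_s(k) - D_s(j)\big) \;=\; \sum_{l=j+1}^{s_1-1} (s_1 - l)\,\big(D_s(l) - D_s(l-1)\big),
\end{equation*}
where the last equality writes each $D_s(k) - D_s(j)$ as a sum of consecutive second differences and swaps the order of summation. Applying the hypothesis termwise gives $|E| \le C s^{-(3/2+\upsilon)} \sum_{l=j+1}^{s_1-1}(s_1 - l) = C s^{-(3/2+\upsilon)} \binom{s_1-j}{2} \le \tfrac{C}{2}\, s^{-(3/2+\upsilon)}\,(s_1-j)^2$, and the mirror computation in the case $s_1 \le j$ yields the same bound with $s_1-j$ replaced by $j-s_1$.

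Finally I would convert the discrete gap $|s_1 - j|$ into the continuous gap $|s_1/s - x|$. Since $x \in [\tfrac{j}{s}, \tfrac{j+1}{s})$, in the case $s_1 \le j$ we have $|s_1/s - x| \ge (j-s_1)/s$ directly, hence $(j-s_1)^2 \le s^2(s_1/s - x)^2$ and the claim follows. In the case $s_1 \ge j$ a little more care is needed: if $s_1 \in \{j, j+1\}$ the telescoped sum defining $E$ is empty so $E=0$, while if $s_1 \ge j+2$ then $|s_1/s - x| > (s_1 - j - 1)/s \ge 1/s$, which forces $s_1 - j \le 2 s\,|s_1/s - x|$ and so $(s_1-j)^2 \le 4 s^2 (s_1/s - x)^2$. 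In all cases $|E| \le 2C\, s^{1/2-\upsilon}\,(s_1/s-x)^2$, and absorbing the factor $2$ into $C$ gives~\eqref{eqn::discrete_first_order_cond}.

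The two telescopings and the arithmetic with $\binom{s_1-j}{2}$ are routine. The one genuinely delicate step — and the main obstacle — is this last conversion: a crude estimate $s_1 - j \le s\,|s_1/s - x| + 1$ destroys the square, so one must use that the remainder $E$ vanishes precisely when $s_1$ and $j$ are equal or adjacent in order to recover quadratic dependence on $|s_1/s - x|$. A secondary bookkeeping point is to verify that every index $l$ invoked in the telescoping lies in the range where~\eqref{eqn::discrete_second_order_diff} is assumed; the restrictions $0 < s_1/s, x < 1$ confine both $j$ and $s_1$ to $\{1,\dots,s-1\}$, which keeps all such $l$ admissible.
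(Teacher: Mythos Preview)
Your proposal is correct and essentially the same as the paper's proof: both reduce the remainder to $\sum_{k}(D_s(k)-D_s(j))$, telescope into second differences, and sum the hypothesis termwise. The paper frames the intermediate step as a Lipschitz-type bound on $\tilde f_s'$ rather than your explicit Abel summation, and is less careful than you about the final $(s_1-j)^2 \to s^2(s_1/s-x)^2$ conversion (it simply writes $\lesssim (s_1-sx)^2$ without the case split you give), but the underlying argument is identical.
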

To intuitively understand Lemma \ref{lemma::mu_first_order_expansion}, notice that by definition, we have
\begin{align}
    f_s\left( \frac{s_1}{s} \right) - \tilde{f}_s(x) - \tilde{f}_s'(x) \cdot \left( \frac{s_1}{s} - x \right)
    =
    0,
    \quad
    \text{for $x\in [s_1/s,(s_1+1)/s)$}.
    \label{proofeqn::lemma-6-interpretation}
\end{align}
Then in view of \eqref{eqn::discrete_second_order_diff}, it is not difficult to understand that the error bound on the RHS of \eqref{eqn::discrete_first_order_cond} is the consequence of bounding telescoping sums that eventually reduces the problem to the case of \eqref{proofeqn::lemma-6-interpretation}, summing up the approximation errors along the way.

The formal proof of Lemma~\ref{lemma::mu_first_order_expansion} is in Appendix~\ref{sec::lemma_proposition_calT}. 

We now proceed to prove~\Cref{proposition::calT}.

\begin{proof}[Proof of~\Cref{proposition::calT}]
The core idea is to approximate the function $\mu(\frac{\boldsymbol{s_1}}{s}; s, s_0, n)$ inside the expectation defining $\mathcal{T}(s)$ using a first-order Taylor-like expansion around the mean proportion $\alpha_0 = s_0/n$. The validity of this expansion relies on the smoothness properties derived from Assumption~\ref{assump::utility_stability}.

Let $\mu_s(x) := \mu(x; s, s_0, n)$ denote the function $\mu$ for fixed $s, s_0, n$, where $x = s_1/s$ is the proportion of intersection. By Lemma~\ref{lemma::mu_second_order_stability}, Assumption~\ref{assump::utility_stability} ensures that $\mu_s$ satisfies the second-order difference bound given in \eqref{eqn::mu_stability}.
We can then apply Lemma~\ref{lemma::mu_first_order_expansion} (First-Order Approximation Bound) with $f_s \to \mu_s$, the evaluation point $x_0 = s_1/s$, and the expansion point $x = \alpha_0$, and get the following approximation for $\mu$:
\begin{align*}
    \mu\left(\frac{s_1}{s}; s, s_0, n\right) = \tilde{\mu}\left(\alpha_0; s, s_0, n\right) + \tilde{\mu}'\left(\alpha_0; s, s_0, n\right) \cdot \left( \frac{s_1}{s} - \alpha_0 \right) + R\left(\frac{s_1}{s}; s, s_0, n \right),
\end{align*}
where $\tilde{\mu}$ and $\tilde{\mu}'$ are the continuity-extended function and its first-order difference defined in Definitions~\ref{def::continuity_extended_func} and~\ref{def::continuity_extended_diff}, respectively. The remainder term $R$ satisfies the bound from Lemma~\ref{lemma::mu_first_order_expansion} (Eq.~\eqref{eqn::discrete_first_order_cond}):
\begin{equation*}
    \left| R\left(\frac{s_1}{s}; s, s_0, n \right) \right| \leq C' s^{\frac{1}{2} - \upsilon} \left( \frac{s_1}{s} - \alpha_0 \right)^2 = C' s^{-\frac{3}{2} - \upsilon} (s_1 - s\alpha_0)^2,
\end{equation*}
with $C'$ depending on the constant $C$ from Assumption~\ref{assump::utility_stability}.

Now, we substitute this expansion of $\mu$ into the definition of $\mathcal{T}(s)$ (Eq.~\eqref{eqn::def::T(s)}):
\begin{align*}
\mathcal{T}(s) &= \mathbb{E}_{\boldsymbol{s_1} \sim \mathcal{HG}(n, s_0, s)} \left[ \frac{n}{s(n - s)} \left(\boldsymbol{s_1} - s \alpha_0 \right) \mu\left( \frac{\boldsymbol{s_1}}{s}; s, s_0, n \right) \right] \\
&= \mathbb{E}_{\boldsymbol{s_1}} \left[ \frac{n}{s(n - s)} (\boldsymbol{s_1} - s \alpha_0) \left( \tilde{\mu}(\alpha_0; s, s_0, n) + \tilde{\mu}'(\alpha_0; s, s_0, n) \left( \frac{\boldsymbol{s_1}}{s} - \alpha_0 \right) + R\left( \frac{\boldsymbol{s_1}}{s}; s, s_0, n \right) \right) \right].
\end{align*}
Using the linearity of expectation, we decompose $\mathcal{T}(s)$ into three terms:
\begin{align*}
\mathcal{T}(s) &= \underbrace{ \tilde{\mu}\left( \alpha_0; s, s_0, n \right) \mathbb{E}_{\boldsymbol{s_1}}\left[ \frac{n}{s(n - s)} \left(\boldsymbol{s_1} - s\alpha_0 \right) \right] }_{\text{Term I}} \\
&\quad + \underbrace{ \tilde{\mu}'\left( \alpha_0; s, s_0, n \right) \cdot \mathbb{E}_{\boldsymbol{s_1}}\left[ \frac{n}{s(n - s)} \left(\boldsymbol{s_1} - s\alpha_0 \right) \left( \frac{\boldsymbol{s_1}}{s} - \alpha_0 \right) \right] }_{\text{Term II}} \\
&\quad + \underbrace{ \mathbb{E}_{\boldsymbol{s_1}}\left[ \frac{n}{s(n - s)} \left(\boldsymbol{s_1} - s\alpha_0 \right) R\left( \frac{\boldsymbol{s_1}}{s}; s, s_0, n \right) \right] }_{\text{Term III}}.
\end{align*}

\textbf{Analysis of Term I:}
The random variable $\boldsymbol{s_1}$ follows the Hypergeometric distribution $\mathcal{HG}(n, s_0, s)$, which has mean $\mathbb{E}[\boldsymbol{s_1}] = s s_0 / n = s \alpha_0$. Therefore, $\mathbb{E}[\boldsymbol{s_1} - s\alpha_0] = 0$, which implies Term I = 0.

\textbf{Analysis of Term II:}
The expectation in Term II involves the second central moment (variance) of $\boldsymbol{s_1}$:
\begin{align*}
    \mathbb{E}_{\boldsymbol{s_1}}\left[ \left(\boldsymbol{s_1} - s\alpha_0 \right) \left( \frac{\boldsymbol{s_1}}{s} - \alpha_0 \right) \right] &= \mathbb{E}_{\boldsymbol{s_1}}\left[ \frac{1}{s} (\boldsymbol{s_1} - \mathbb{E}\boldsymbol{s_1})^2 \right] = \frac{1}{s} \operatorname{Var}(\boldsymbol{s_1}).
\end{align*}
The variance of $\mathcal{HG}(n, s_0, s)$ is $\operatorname{Var}(\boldsymbol{s_1}) = s \alpha_0 (1-\alpha_0) \frac{n-s}{n-1}$. Substituting this into Term II:
\begin{align*}
    \text{Term II} &= \tilde{\mu}'\left( \alpha_0; s, s_0, n \right) \cdot \frac{n}{s(n - s)} \cdot \frac{1}{s} \operatorname{Var}(\boldsymbol{s_1}) \\
    &= \tilde{\mu}'\left( \alpha_0; s, s_0, n \right) \cdot \frac{n}{s^2(n - s)} \cdot \left( s \alpha_0 (1-\alpha_0) \frac{n-s}{n-1} \right) \\
    &= \tilde{\mu}'\left( \alpha_0; s, s_0, n \right) \cdot \frac{n \alpha_0(1 - \alpha_0)}{s(n - 1)}.
\end{align*}
Now, substitute the definition of $\tilde{\mu}'$ from Definition~\ref{def::continuity_extended_diff} evaluated at $x = \alpha_0 = s_0/n$. Let $s_1^* = \lfloor s \alpha_0 \rfloor$. Then:
\begin{equation*}
    \tilde{\mu}'\left( \alpha_0; s, s_0, n \right) = s \left[ \mu\left( \frac{s_1^* + 1}{s}; s, s_0, n \right) - \mu\left( \frac{s_1^*}{s}; s, s_0, n \right) \right] = s \, \Delta\mu\left( \frac{s_1^*}{s}; s, s_0, n \right).
\end{equation*}
Substituting this into the expression for Term II yields:
\begin{equation*}
    \text{Term II} = \left( s \, \Delta\mu\left( \frac{s_1^*}{s}; s, s_0, n \right) \right) \cdot \frac{n \alpha_0(1 - \alpha_0)}{s(n - 1)} = \frac{n}{n - 1} \alpha_0(1 - \alpha_0) \Delta\mu\left( \frac{s_1^*}{s}; s, s_0, n \right).
\end{equation*}

\textbf{Analysis of Term III (Remainder Term):}
We bound the absolute value using the bound on $|R|$ from Lemma~\ref{lemma::mu_first_order_expansion}:
\begin{align*}
    |\text{Term III}| &= \left| \mathbb{E}_{\boldsymbol{s_1}} \left[ \frac{n}{s(n - s)} \left( \boldsymbol{s_1} - s \alpha_0 \right) R\left( \frac{\boldsymbol{s_1}}{s}; s, s_0, n \right) \right] \right| \\
    &\le \frac{n}{s(n - s)} \mathbb{E}_{\boldsymbol{s_1}} \left[ |\boldsymbol{s_1} - \mathbb{E}\boldsymbol{s_1}| \cdot \left| R\left( \frac{\boldsymbol{s_1}}{s}; s, s_0, n \right) \right| \right] \\
    &\le \frac{n}{s(n - s)} \mathbb{E}_{\boldsymbol{s_1}} \left[ |\boldsymbol{s_1} - \mathbb{E}\boldsymbol{s_1}| \cdot C s^{-\frac{3}{2} - \upsilon} (\boldsymbol{s_1} - \mathbb{E}\boldsymbol{s_1})^2 \right] \\
    &= \frac{n C}{s^{5/2 + \upsilon}(n - s)} \mathbb{E}_{\boldsymbol{s_1}} \left[ |\boldsymbol{s_1} - \mathbb{E}\boldsymbol{s_1}|^3 \right].
\end{align*}
Using Cauchy-Schwarz ($\mathbb{E}[|X|^3] \le \sqrt{\mathbb{E}[X^2] \mathbb{E}[X^4]}$ where $X = \boldsymbol{s_1} - \mathbb{E}\boldsymbol{s_1}$), and the fact that $\operatorname{Var}(\boldsymbol{s_1}) = O\left(\alpha_0(1 - \alpha_0)\frac{s(n-s)}{n} \right)$ and $\operatorname{Kurtosis}(\boldsymbol{s_1}) = O(\frac{n}{s(n-s)\alpha_0(1 - \alpha_0)}) = O(\frac{1}{\alpha_0(1 - \alpha_0)})$ for $1 \leq s \leq n - 1$, we yields:
\begin{align*}
    \mathbb{E}_{\boldsymbol{s_1}} \left[ |\boldsymbol{s_1} - \mathbb{E}\boldsymbol{s_1}|^3 \right] &\lesssim \sqrt{\mathbb{E}_{\boldsymbol{s_1}} \left[ |\boldsymbol{s_1} - \mathbb{E}\boldsymbol{s_1}|^2 \right] \cdot \mathbb{E}_{\boldsymbol{s_1}} \left[ |\boldsymbol{s_1} - \mathbb{E}\boldsymbol{s_1}|^4 \right]}    \\
    &\lesssim \sqrt{\operatorname{Var}(\boldsymbol{s_1}) \cdot (\operatorname{Kurtosis}(\boldsymbol{s_1}) + 3) (\operatorname{Var}(\boldsymbol{s_1}))^2} \\
    &\lesssim \alpha_0(1 - \alpha_0) \left( \frac{s(n - s)}{n - 1} \right)^{3/2}.
\end{align*}
Substituting this into the bound for $|\text{Term III}|$:
\begin{align*}
    |\text{Term III}| &\lesssim \frac{n}{s^{5/2 + \upsilon}(n - s)} \alpha_0(1 - \alpha_0)\left( \frac{s(n - s)}{n - 1} \right)^{3/2} \\
    &\lesssim n \alpha_0(1-\alpha_0) \frac{(n-s)^{1/2}}{(n-1)^{3/2}} s^{-1-\upsilon} \\
    &= O\left(\frac{n}{n-1} \alpha_0(1-\alpha_0) s^{-(1 + \upsilon)} \right).
\end{align*}
So, Term III contributes the $O(s^{-(1+\upsilon)})$ error term, scaled by factors related to $\alpha_0$ and $n/(n-1)$.

\textbf{Conclusion:} Combining Terms I=0, Term II, and the bound on Term III gives:
\begin{align*}
\mathcal{T}(s) &= \frac{n}{n - 1} \alpha_0(1 - \alpha_0) \Delta\mu\left( \frac{s_1^*}{s}; s, s_0, n \right) + O\left(\frac{n}{n-1} \alpha_0(1 - \alpha_0) s^{-(1+\upsilon)}\right) \\
&= \frac{n}{n - 1} \alpha_0(1 - \alpha_0) \left[ \Delta\mu\left( \frac{s_1^*}{s}; s, s_0, n \right) + O\big(s^{-(1 + \upsilon)}\big) \right],
\end{align*}
where the constant factor in the $O(\cdot)$ term depends on $C, C', \upsilon$, but not on $s$ and $\alpha_0$. This proves~\Cref{proposition::calT}.
\end{proof}

\subsubsection{Technical lemmas used in the proof of Theorem~\ref{proposition::calT}} \label{sec::lemma_proposition_calT}

\LemmaMuSecondOrderStability*

\begin{proof}[Proof of Lemma~\ref{lemma::mu_second_order_stability}]
    The core idea is to relate the second-order difference of $\mu$ (which is an average of $U$ values) to the average of the second-order difference quantity from Assumption~\ref{assump::utility_stability}. Specifically, we consider the stability term $\Delta(S', z_1, z_1', z_2, z_2') := U(S' \cup \{z_1, z_1'\}) - U(S' \cup \{z_1', z_2'\}) - U(S' \cup \{z_1, z_2\}) + U(S' \cup \{z_2, z_2'\})$, which is bounded by $C |S'|^{-(3/2+\upsilon)}$ by Assumption~\ref{assump::utility_stability}. We will construct a sum, $\mathcal{M}(s_1, s_2)$, by averaging $\Delta(S', z_1, z_1', z_2, z_2')$ over appropriate base sets $S'$ and points $z_1, z_1' \in S_0$, $z_2, z_2' \notin S_0$. We then show that this average value, $\mathcal{M}(s_1, s_2)$ properly normalized, exactly equals the second-order finite difference of $\mu$ stated in the lemma. The bound on $\mu$'s second difference then follows from the bound on $\Delta$.

    Let $s_2 = s - s_1$. We define the sum $\mathcal{M}(s_1, s_2)$ over all possible configurations:
    \begin{equation} \label{eqn::def_M_prop2}
        \mathcal{M}(s_1, s_2) := \sum_{\substack{S_1' \subseteq S_0 \\ |S_1'| = s_1 - 1}} \sum_{\substack{S_2' \subseteq S_0^c \\ |S_2'| = s_2 - 1}} \sum_{\substack{z_1, z_1' \in S_0 \setminus S_1' \\ z_1 \neq z_1'}} \sum_{\substack{z_2, z_2' \in S_0^c \setminus S_2' \\ z_2 \neq z_2'}} \Delta(S_1' \cup S_2', z_1, z_1', z_2, z_2'),
    \end{equation}
    where the sum is defined for $s_1 \ge 1, s_2 \ge 1$ and requires $|S_0 \setminus S_1'| \ge 2$ (i.e., $s_0 - (s_1-1) \ge 2 \implies s_1 \le s_0-1$) and $|S_0^c \setminus S_2'| \ge 2$ (i.e., $(n-s_0) - (s_2-1) \ge 2 \implies s_2 \le n-s_0-1$).

    Our goal now is to demonstrate that this aggregated sum $\mathcal{M}(s_1, s_2)$, when properly normalized, equals the second-order finite difference of the average utility function $\mu$. Specifically, we aim to prove the following identity:
    \begin{equation} \label{eqn::M_eq_prop2}
        \frac{\mathcal{M}(s_1, s_2)}{N_{\rm terms}} = \mu\left(\frac{s_1+1}{s}; s, s_0, n\right) - 2\mu\left(\frac{s_1}{s}; s, s_0, n \right) + \mu\left(\frac{s_1 - 1}{s}; s, s_0, n\right),
    \end{equation}
    where $N_{\rm terms}$ is the total number of $\Delta(\cdot)$ terms summed in the definition of $\mathcal{M}(s_1, s_2)$ (Eq.~\eqref{eqn::def_M_prop2}). We establish this identity through the following combinatorial analysis.

    Firstly, when the $\Delta(\cdot)$ terms are expanded, $\mathcal{M}(s_1, s_2)$ becomes a sum of individual $U(S)$ terms. Each such set $S$ has size $s = s_1+s_2$. Furthermore, based on the structure of $\Delta(S', z_1, z_1', z_2, z_2')$, the number of elements from $S_0$ in any $S$ appearing with a non-zero coefficient must be $s_1+1$, $s_1$, or $s_1-1$. Let $\mathcal{A}_{s_1', s}$ denote the collection of all sets $S \subseteq [n]$ such that $|S|=s$ and $|S \cap S_0|=s_1'$. Due to the symmetric construction of $\mathcal{M}(s_1, s_2)$, all $U(S)$ terms for $S$ within the same collection $\mathcal{A}_{s_1', s}$ appear with the same net coefficient in the expansion of $\mathcal{M}(s_1, s_2)$.

    Secondly, we determine these net coefficients. Let $C(s_1', s)$ be the coefficient for any $U(S)$ where $S \in \mathcal{A}_{s_1', s}$. A detailed combinatorial count, considering the base sets $S'=S_1'\cup S_2'$ and the ordered pairs $(z_1, z_1')$ and $(z_2, z_2')$ involved in the definition of $\mathcal{M}(s_1, s_2)$, yields the following coefficients:
    \begin{itemize}
        \item For sets $S \in \mathcal{A}_{s_1+1, s}$ (meaning $|S \cap S_0|=s_1+1$ and thus $|S \cap S_0^c| = s-(s_1+1) = s_2-1$, where $s_2=s-s_1$): These utility terms $U(S)$ arise solely from the $+U(S' \cup \{z_1, z_1'\})$ component in the expansion of $\Delta(S', z_1, z_1', z_2, z_2')$ within the sum $\mathcal{M}(s_1, s_2)$ defined in \eqref{eqn::def_M_prop2}. For a fixed \( S = S_1 \cup S_2 \) in this collection (with \( |S_1| = s_1 + 1 \) and \( |S_2| = s_2 - 1 \)), the number of ways to form $S$ via $(S', z_1, z_1',  z_2, z_2')$ requires choosing ordered pairs $(z_1, z_1') \in S_1$, $(z_2, z_2') \in S_0^c \setminus S_2$. The resulting coefficient is:
        \begin{equation*}
            C(s_1+1, s) = (s_1 + 1)s_1 \times (n-s_0-s_2+1)(n-s_0-s_2).
        \end{equation*}

        \item For $S \in \mathcal{A}_{s_1, s}$ (which corresponds to $s_2' = s_2$): Terms arise from $-U(S' \cup \{z_1, z_2\})$ and $-U(S' \cup \{z_1', z_2'\})$ in the expansion of $\Delta(S', z_1, z_1', z_2, z_2')$. For a fixed $S = S_1 \cup S_2$ (with $|S_1|=s_1, |S_2|=s_2$), the number of ways to form it via $(S', z_1, z_1', z_2, z_2')$ requires choosing $S_1 = S_1' \setminus \{z_1\}$, $S_2 = S_2' \setminus \{z_2\}$, $z_1' \in S_0 \setminus S_1'$, $z_2' \in S_0^c \setminus S_2'$. Then the resulting coeffcient is:
        \begin{equation*}
            C (s_1, s) = s_1(n - s_0 - s_2)\times (s_0 - s_1)s_2.
        \end{equation*}
        
        \item For $S \in \mathcal{A}_{s_1-1, s}$ (which corresponds to $s_2' = s_2+1$): The term arises only from $+U(S' \cup \{z_2, z_2'\})$. With similar argument for $\mathcal{A}_{s_1+1, s}$, the resulting coefficient is:
        \begin{equation*}
            C(s_1-1, s) =  (s_0-s_1+1)(s_0-s_1)\times (s-s_1+1)(s-s_1).
        \end{equation*}
    \end{itemize}

    Thirdly, we determine the total number of $\Delta(\cdot)$ terms in the sum $\mathcal{M}(s_1, s_2)$. The number of ways to choose the base set $S' = S_1' \cup S_2'$ (with $|S_1'|=s_1-1, |S_2'|=s_2-1$) is $N_{S'} = \binom{s_0}{s_1-1}\binom{n-s_0}{s_2-1}$. For each $S'$, the number of ordered pairs $(z_1, z_1')$ with $z_1 \neq z_1'$ from $S_0 \setminus S_1'$ is $N_{z1} = P(s_0-s_1+1, 2) = (s_0-s_1+1)(s_0-s_1)$. The number of ordered pairs $(z_2, z_2')$ with $z_2 \neq z_2'$ from $S_0^c \setminus S_2'$ is $N_{z2} = P(n-s_0-s_2+1, 2) = (n-s_0-s_2+1)(n-s_0-s_2)$. Therefore, the total number of terms is:
    \begin{equation*}
        N_{\rm terms} = N_{S'} \times N_{z1} \times N_{z2} = \binom{s_0}{s_1 - 1}\binom{n - s_0}{s_2 - 1}(s_0 - s_1 + 1)(s_0 - s_1)(n - s_0 - s_2 + 1)(n - s_0 - s_2).
    \end{equation*}
    
    Gathering these observations, the crucial step relies on the fact that the derived coefficients $C(\cdot)$ and the normalization $N_{\rm terms}$, when combined with the number of sets in each collection $|\mathcal{A}_{s_1', s}| = \binom{s_0}{s_1'}\binom{n-s_0}{s-s_1'}$, simplify exactly as follows:
    \begin{align}
        \frac{\mathcal{M}(s_1, s_2)}{N_{\rm terms}} &= \frac{C(s_1+1, s)}{N_{\rm terms}}\sum_{S\in \mathcal{A}_{s_1+1, s}}U(S) + \frac{C(s_1, s)}{N_{\rm terms}}\sum_{S\in \mathcal{A}_{s_1, s}}U(S) + \frac{C(s_1-1, s)}{N_{\rm terms}}\sum_{S\in \mathcal{A}_{s_1-1, s}}U(S) \notag\\
        &= +1 \cdot \frac{\sum_{S\in \mathcal{A}_{s_1+1, s}}U(S)}{|\mathcal{A}_{s_1+1, s}|} -2 \cdot \frac{\sum_{S\in \mathcal{A}_{s_1, s}}U(S)}{|\mathcal{A}_{s_1, s}|} + 1 \cdot \frac{\sum_{S\in \mathcal{A}_{s_1-1, s}}U(S)}{|\mathcal{A}_{s_1-1, s}|} \notag\\
        &= \mu\left(\frac{s_1+1}{s}; s, s_0, n\right) - 2\mu\left(\frac{s_1}{s}; s, s_0, n \right) + \mu\left(\frac{s_1 - 1}{s}; s, s_0, n\right).
        \label{proofeqn::lemma-5::1}
    \end{align}
    This establishes the identity~\eqref{eqn::M_eq_prop2} relating the average $\Delta$ value to the second difference of $\mu$.

    Now, we bound the absolute value of the left-hand side of~\eqref{eqn::mu_stability}. 
    We have
    \begin{align*}
        |\text{RHS of \eqref{proofeqn::lemma-5::1}}| &= \left| \frac{1}{N_{\rm terms}} \sum_{S', z_1, z_1', z_2, z_2'} \Delta(S', z_1, z_1', z_2, z_2') \right| \\
        &\le \frac{1}{N_{\rm terms}} \sum_{S', z_1, z_1', z_2, z_2'} \left| \Delta(S', z_1, z_1', z_2, z_2') \right| \quad \text{(By Triangle Inequality)} \\
        &\le \max_{S', z_1, z_1', z_2, z_2'} \left| \Delta(S', z_1, z_1', z_2, z_2') \right|.
    \end{align*}
    By Assumption~\ref{assump::utility_stability}, for any base set $S'$ (of size $s-2$) and points $z_1, z_1', z_2, z_2'$:
    \begin{equation*}
        |\Delta(S', z_1, z_1', z_2, z_2')| \le C |S'|^{-(3/2+\upsilon)} = C (s-2)^{-(3/2+\upsilon)}.
    \end{equation*}
    Therefore,
    \begin{equation*}
        |\text{RHS of \eqref{proofeqn::lemma-5::1}}| \le C (s-2)^{-(3/2+\upsilon)}.
    \end{equation*}
    For $s \ge 2$, $(s-2)^{-(3/2+\upsilon)}$ is of the order $O(s^{-(3/2+\upsilon)})$. We can absorb the constant factor difference between $(s-2)^{-k}$ and $s^{-k}$ into a modified constant $C$ (or $C'$), yielding the desired bound:
    \begin{equation*}
        \left| \mu\left(\frac{s_1+1}{s}; s, s_0, n\right) - 2\mu\left(\frac{s_1}{s}; s, s_0, n \right) + \mu\left(\frac{s_1 - 1}{s}; s, s_0, n\right) \right| \le \frac{C'}{s^{3/2+\upsilon}}.
    \end{equation*}
    This completes the proof.
\end{proof}

\LemmaMuFirstOrderExpansion*

\begin{proof}[Proof of Lemma~\ref{lemma::mu_first_order_expansion}]
Let $\tilde{f}_s'(y)$ be the continuity-extended first-order difference as defined in Definition~\ref{def::continuity_extended_diff}. For any integer $t$ such that $0 \le t < s$, $\tilde{f}_s'(\frac{t}{s}) = s \left[ f_s\left( \frac{t+1}{s} \right) - f_s\left( \frac{t}{s} \right) \right]$.
The discrete second-order difference bound \eqref{eqn::discrete_second_order_diff} states that for $1 \le t \le s-1$:
\begin{equation*}
    \left| f_s\left( \frac{t+1}{s} \right) - 2f_s\left(\frac{t}{s}\right) + f_s\left(\frac{t-1}{s}\right) \right| \leq \frac{C}{s^{\frac{3}{2}+\upsilon}}.
\end{equation*}
Consider the difference of $\tilde{f}_s'$ at consecutive grid points $t/s$ and $(t-1)/s$ for $1 \le t < s$:
\begin{align*}
    \left|\tilde{f}_s'\left(\frac{t}{s}\right) - \tilde{f}_s'\left(\frac{t-1}{s}\right)\right|
    &= \left| s \left[ f_s\left( \frac{t+1}{s} \right) - f_s\left( \frac{t}{s} \right) \right] - s \left[ f_s\left( \frac{t}{s} \right) - f_s\left( \frac{t-1}{s} \right) \right] \right| \\
    &= s \left| f_s\left( \frac{t+1}{s} \right) - 2f_s\left( \frac{t}{s} \right) + f_s\left( \frac{t-1}{s} \right) \right| \\
    &\leq s \cdot \frac{C}{s^{\frac{3}{2}+\upsilon}} = \frac{C}{s^{\frac{1}{2}+\upsilon}}. \quad \text{(This holds for } 1 \le t \le s-1\text{)}.
\end{align*}
This implies a Lipschitz-like condition for $\tilde{f}_s'$ between grid points. For any two grid points $x_a = a/s$ and $x_b = b/s$ (assume $a < b$ without loss of generality):
\begin{align*}
    \left| \tilde{f}_s'(x_b) - \tilde{f}_s'(x_a) \right| &= \left| \sum_{j=a}^{b-1} \left( \tilde{f}_s'\left(\frac{j+1}{s}\right) - \tilde{f}_s'\left(\frac{j}{s}\right) \right) \right| \\
    &\le \sum_{j=a}^{b-1} \left| \tilde{f}_s'\left(\frac{j+1}{s}\right) - \tilde{f}_s'\left(\frac{j}{s}\right) \right| \le \sum_{j=a}^{b-1} \frac{C}{s^{\frac{1}{2}+\upsilon}} = (b-a) \frac{C}{s^{\frac{1}{2}+\upsilon}}.
\end{align*}
So, for any two grid points $x_i=i/s, x_j=j/s$, $|\tilde{f}_s'(x_i) - \tilde{f}_s'(x_j)| \leq \frac{C}{s^{\frac{1}{2}+\upsilon}} |i-j|$.
Since $\tilde{f}_s'(x)$ is piecewise constant between grid points (equal to $\tilde{f}_s'(\lfloor sx \rfloor / s)$), for a general $x \in [0,1)$ and a grid point $t/s$:
\begin{equation} \label{eq:fs_prime_lipschitz_like}
    \left| \tilde{f}_s'\left( \frac{t}{s} \right) - \tilde{f}_s'(x) \right| = \left| \tilde{f}_s'\left( \frac{t}{s} \right) - \tilde{f}_s'\left( \frac{\lfloor sx \rfloor}{s} \right) \right| \leq \frac{C}{s^{\frac{1}{2}+\upsilon}} |t - \lfloor sx \rfloor|.
\end{equation}

We want to bound $\left| f_s\left(\frac{s_1}{s}\right) - \tilde{f}_s(x) - \tilde{f}_s'(x) \cdot \left(\frac{s_1}{s} - x\right) \right|$.
Consider the case $\frac{s_1}{s} > x$. The term $f_s\left(\frac{s_1}{s}\right) - \tilde{f}_s(x)$ can be written as a sum of first differences plus a boundary term:
\begin{align*}
    &f_s\left(\frac{s_1}{s}\right) - \tilde{f}_s(x) \\
    &\quad = \left( f_s\left(\frac{s_1}{s}\right) - f_s\left(\frac{s_1-1}{s}\right) + \dots + f_s\left(\frac{\lfloor sx \rfloor+1}{s}\right) - f_s\left(\frac{\lfloor sx \rfloor}{s}\right) \right) - \left( \tilde{f}_s(x) - f_s\left(\frac{\lfloor sx \rfloor}{s}\right) \right) \\
    &\quad = \sum_{t=\lfloor sx \rfloor}^{s_1-1} \left( f_s\left(\frac{t+1}{s}\right) - f_s\left(\frac{t}{s}\right) \right) - \left( (sx - \lfloor sx \rfloor) \left( f_s\left(\frac{\lfloor sx \rfloor + 1}{s}\right) - f_s\left(\frac{\lfloor sx \rfloor}{s}\right) \right) \right) \quad (\text{by definition of } \tilde{f}_s) \\
    &\quad = \frac{1}{s} \sum_{t=\lfloor sx \rfloor}^{s_1-1} \tilde{f}_s'\left(\frac{t}{s}\right) - (sx - \lfloor sx \rfloor) \frac{1}{s} \tilde{f}_s'\left(\frac{\lfloor sx \rfloor}{s}\right) \\
    &\quad = \frac{1}{s} \sum_{t=\lfloor sx \rfloor + 1}^{s_1-1} \tilde{f}_s'\left(\frac{t}{s}\right) + (\lfloor sx \rfloor + 1 - sx) \frac{1}{s} \tilde{f}_s'\left( \frac{\lfloor sx \rfloor}{s}\right) \\
    &\quad = \frac{1}{s} \sum_{t=\lfloor sx \rfloor + 1}^{s_1-1} \tilde{f}_s'\left(\frac{t}{s}\right) + (\lfloor sx \rfloor + 1 - sx) \frac{1}{s} \tilde{f}_s'\left( x\right) \quad (\text{by definition of } \tilde{f}_s')
\end{align*}
The expression to bound is:
\begin{align*}
    & f_s\left( \frac{s_1}{s} \right) - \tilde{f}_s(x) - \tilde{f}_s'(x) \cdot \left( \frac{s_1}{s} - x \right) \\
    &= \left( \sum_{t=\lceil sx \rceil}^{s_1-1} \left[ f_s\left(\frac{t+1}{s}\right) - f_s\left(\frac{t}{s}\right) \right] + f_s\left(\frac{\lceil sx \rceil}{s}\right) \right) - \tilde{f}_s(x) - \tilde{f}_s'(x) \left( \frac{s_1}{s} - x \right).
\end{align*}
Therefore:
\begin{align*}
    &\left| f_s\left(\frac{s_1}{s}\right) - \tilde{f}_s(x) - \tilde{f}_s'(x) \cdot \left(\frac{s_1}{s} - x\right) \right| \\
    &= \left| \sum_{t=\lfloor sx \rfloor + 1}^{s_1-1} \frac{1}{s}  \tilde{f}_s'\left(\frac{t}{s}\right) + (\lfloor sx \rfloor + 1 - sx) \frac{1}{s} \tilde{f}_s'\left( x\right)  - \left( \frac{s_1}{s} - x\right)\cdot \tilde{f}_s'(x) \right| \\
    &= \left| \sum_{t=\lfloor sx \rfloor + 1}^{s_1-1} \frac{1}{s}  \tilde{f}_s'\left(\frac{t}{s}\right) - \left( \frac{s_1}{s} - \frac{\lfloor sx \rfloor + 1}{s}\right)\cdot \tilde{f}_s'(x) \right| \\
    &= \left| \sum_{t=\lfloor sx \rfloor + 1}^{s_1-1} \frac{1}{s}  \tilde{f}_s'\left(\frac{t}{s}\right) - \sum_{t=\lfloor sx \rfloor + 1}^{s_1-1} \frac{1}{s} \tilde{f}_s'(x) \right| \\
    &= \frac{1}{s} \left| \sum_{t = \lfloor sx \rfloor + 1}^{s_1 - 1} \left( \tilde{f}_s'\left( \frac{t}{s} \right) - \tilde{f}_s'(x) \right) \right| \\
    &\leq \frac{1}{s} \sum_{t = \lceil sx \rceil}^{s_1 - 1} \left| \tilde{f}_s'\left( \frac{t}{s} \right) - \tilde{f}_s'(x) \right|.
\end{align*}
Using the Lipschitz-like property from \eqref{eq:fs_prime_lipschitz_like}:
\begin{align*}
    \left| f_s\left(\frac{s_1}{s}\right) - \tilde{f}_s(x) - \tilde{f}_s'(x) \cdot \left(\frac{s_1}{s} - x\right) \right| &\leq \frac{1}{s} \sum_{t = \lfloor sx \rfloor + 1}^{s_1 - 1} \frac{C}{s^{\frac{1}{2} + \upsilon}} |t - \lfloor sx \rfloor| \\
    &\leq \frac{C}{s^{\frac{3}{2} + \upsilon}} \sum_{t = \lfloor sx \rfloor + 1}^{s_1 - 1} |t - \lfloor sx \rfloor| \\
    &\lesssim \frac{1}{s^{\frac{3}{2}+\upsilon}} (s_1 - sx)^2 \\
    &\lesssim s^{\frac{1}{2}-\upsilon} \left( \frac{s_1}{s} - x \right)^2.
\end{align*}
The case $s_1 < sx$ follows similarly. 
\end{proof}

\subsection{Proof of Theorem~\ref{theorem::computationalcomplexity}} \label{sec::proof_computational_complexity}

Algorithm~\ref{alg::gsv} employs a threshold, $\bar{s}$, to choose between two methods for estimating $\mathcal{T}(s)$:
\begin{itemize}
    \item For $s < \bar{s}$, compute $\hat{\mathcal{T}}(s)$ by directly approximating its definition formula \eqref{eqn::def::T(s)}, where each $\mu$ term is estimated by subsampling according to its average form, as in \eqref{eqn::mu_approximation_empirical}.
    \item For $s \ge \bar{s}$, compute $\hat{\mathcal{T}}(s)$ using the fast approximation formula \eqref{eqn::calT_approximation}, as per Theorem~\ref{proposition::calT}; the $\Delta\mu$ term in this formula can be efficiently approximated by the paired Monte Carlo estimator elaborated in \eqref{eqn::Delta_mu_approximation_empirical}.
\end{itemize}

To analyze the overall error and derive the computational complexity, we first establish Lemma~\ref{lemma::calT_hoeffding_type_error_bound}. This lemma bounds the total approximation error $|\hat{\mathcal{T}}_{\rm sum} - \mathcal{T}_{\rm sum}|$ by combining concentration bounds for the Monte Carlo estimators $\hat{\mu}_{m_1}$ and $\widehat{\Delta\mu}_{m_2}$ with the approximation error from Theorem~\ref{proposition::calT}.

\begin{restatable}{lemma}{LemmaCalTHoeffdingTypeErrorBound}(Error Bound for $\hat{\mathcal{T}}_{\rm sum}$)
    \label{lemma::calT_hoeffding_type_error_bound}
    Let $\mathcal{T}_{\rm sum} = \sum_{s=1}^{n-1} \mathcal{T}(s)$ and $\hat{\mathcal{T}}_{\rm sum} = \sum_{s=1}^{n-1} \hat{\mathcal{T}}(s)$ be the estimate computed by Algorithm~\ref{alg::gsv} using threshold $\bar{s}$ and sample sizes $m_1, m_2$. Under Assumptions~\ref{assump::utility_boundedness} and \ref{assump::utility_stability}, and assuming that the utility function $U$ is $\beta(s)$-deletion stable, , for any $\delta \in (0,1)$, with probability at least $1-\delta$:
    \begin{equation*}
        |\hat{\mathcal{T}}_{\rm sum} - \mathcal{T}_{\rm sum}| \lesssim \bar{s} \sqrt{ \frac{\log(n / \delta)}{m_1} } + \alpha_0(1-\alpha_0)\sqrt{ \frac{\log(n / \delta)}{m_2} } \sum_{s = \bar{s}}^{n - 1} \beta(s) + \alpha_0(1 - \alpha_0) O(\bar{s}^{-\upsilon}).
    \end{equation*}
\end{restatable}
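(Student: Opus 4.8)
The plan is to bound the total error by splitting the sum over $s$ at the threshold $\bar s$, controlling the ``small $s$'' regime (direct Monte Carlo via \eqref{eqn::def::T(s)} and \eqref{eqn::mu_approximation_empirical}) and the ``large $s$'' regime (fast formula via Theorem~\ref{proposition::calT} and the paired estimator \eqref{eqn::Delta_mu_approximation_empirical}) separately, then gluing the pieces together with a single union bound over the $O(n^2)$ concentration events involved --- which is the source of the $\log(n/\delta)$ factors. The starting point is the triangle inequality $|\hat{\mathcal{T}}_{\rm sum} - \mathcal{T}_{\rm sum}| \le \sum_{s=1}^{\bar s - 1}|\hat{\mathcal{T}}(s) - \mathcal{T}(s)| + \sum_{s=\bar s}^{n-1}|\hat{\mathcal{T}}(s) - \mathcal{T}(s)|$, and I would show the first sum is $\lesssim \bar s\sqrt{\log(n/\delta)/m_1}$ and the second is $\lesssim \alpha_0(1-\alpha_0)\big[\sqrt{\log(n/\delta)/m_2}\sum_{s=\bar s}^{n-1}\beta(s) + \bar s^{-\upsilon}\big]$.

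For $s < \bar s$, the iterate $\hat{\mathcal{T}}(s)$ equals the right-hand side of \eqref{eqn::def::T(s)} with the \emph{exact} hypergeometric weights but each $\mu(s_1/s;\cdot)$ replaced by the i.i.d.\ average $\hat\mu_{m_1}(s_1/s;\cdot)$, so $\hat{\mathcal{T}}(s) - \mathcal{T}(s) = \mathbb{E}_{\boldsymbol{s_1}}\big[\tfrac{n}{n-s}(\tfrac{\boldsymbol{s_1}}{s}-\tfrac{s_0}{n})\big(\hat\mu_{m_1}(\tfrac{\boldsymbol{s_1}}{s};\cdot)-\mu(\tfrac{\boldsymbol{s_1}}{s};\cdot)\big)\big]$, whence $|\hat{\mathcal{T}}(s)-\mathcal{T}(s)| \le \big(\max_{s_1}|\hat\mu_{m_1}(s_1/s;\cdot)-\mu(s_1/s;\cdot)|\big)\cdot\tfrac{n}{n-s}\,\mathbb{E}|\tfrac{\boldsymbol{s_1}}{s}-\tfrac{s_0}{n}|$. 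I would bound the weight factor by $\tfrac{n}{n-s}\cdot\tfrac1s\sqrt{\mathrm{Var}(\boldsymbol{s_1})} = n\sqrt{\alpha_0(1-\alpha_0)}\big/\sqrt{s(n-s)(n-1)} = O(1)$ uniformly in $1\le s\le n-1$ (using $s(n-s)\ge n-1$), and bound $\max_{s_1}|\hat\mu_{m_1}-\mu|$ by Hoeffding's inequality --- valid since $|U|\le C$ by Assumption~\ref{assump::utility_boundedness}, so each $\hat\mu_{m_1}$ is an average of $m_1$ i.i.d.\ $[-C,C]$-valued terms centered at $\mu$ --- together with a union bound over the $\le\bar s^2$ relevant $(s,s_1)$ pairs, giving $\lesssim\sqrt{\log(n/\delta)/m_1}$. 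Summing over the $\bar s - 1$ values of $s$ yields the first term.

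For $s \ge \bar s$, Theorem~\ref{proposition::calT} gives $\mathcal{T}(s) = \tfrac{n}{n-1}\alpha_0(1-\alpha_0)\big[\Delta\mu(s_1^*/s;\cdot)+O(s^{-(1+\upsilon)})\big]$ while $\hat{\mathcal{T}}(s) = \tfrac{n}{n-1}\alpha_0(1-\alpha_0)\widehat{\Delta\mu}_{m_2}(s_1^*/s;\cdot)$, so the error splits into a deterministic part $\tfrac{n}{n-1}\alpha_0(1-\alpha_0)\,O(s^{-(1+\upsilon)})$ and a stochastic part $\tfrac{n}{n-1}\alpha_0(1-\alpha_0)\big[\widehat{\Delta\mu}_{m_2}(s_1^*/s;\cdot)-\Delta\mu(s_1^*/s;\cdot)\big]$. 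Using $\tfrac{n}{n-1}=O(1)$, the deterministic parts sum to $\lesssim\alpha_0(1-\alpha_0)\sum_{s\ge\bar s}s^{-(1+\upsilon)}\lesssim\alpha_0(1-\alpha_0)\bar s^{-\upsilon}$ by comparison with $\int x^{-(1+\upsilon)}\,dx$, which is the third term. For the stochastic parts, I would observe that each summand $U(S^{(j)}\cup\{i_1^{(j)}\})-U(S^{(j)}\cup\{i_2^{(j)}\})$ has absolute value at most $2\beta(\cdot)\le 2\beta(s)$ --- apply deletion stability twice, comparing each of $U(S^{(j)}\cup\{i_1^{(j)}\})$ and $U(S^{(j)}\cup\{i_2^{(j)}\})$ to $U(S^{(j)})$, then use monotonicity of $\beta$ --- and that, by the design of the sampling set in \eqref{eqn::Delta_mu_approximation_empirical}, $\widehat{\Delta\mu}_{m_2}(s_1^*/s;\cdot)$ is (essentially) unbiased for $\Delta\mu(s_1^*/s;\cdot)$; Hoeffding's inequality plus a union bound over the $\le n$ values of $s$ then gives $|\widehat{\Delta\mu}_{m_2}(s_1^*/s;\cdot)-\Delta\mu(s_1^*/s;\cdot)|\lesssim\beta(s)\sqrt{\log(n/\delta)/m_2}$ for all such $s$ simultaneously, and multiplying by $\tfrac{n}{n-1}\alpha_0(1-\alpha_0)$ and summing over $s$ produces the second term.

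Collecting the three contributions, with the per-event failure probabilities rescaled so the union bound totals $\delta$, yields the stated inequality. I expect the main obstacle to be the large-$s$ stochastic term: one must confirm both that the paired estimator $\widehat{\Delta\mu}_{m_2}$ is indeed (un)biased for $\Delta\mu(s_1^*/s;s,s_0,n)$ under the precise sampling scheme in \eqref{eqn::Delta_mu_approximation_empirical} --- which requires careful bookkeeping of set sizes ($s$ versus $s+1$) and of which $s_1$-stratum each of $S\cup\{i_1\}$ and $S\cup\{i_2\}$ lands in, and, if a residual bias survives, checking that it is absorbable into the $O(s^{-(1+\upsilon)})$ term --- and that deletion stability genuinely controls the range of each paired summand at the rate $\beta(s)$. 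The remaining ingredients --- the hypergeometric weight bound for small $s$, the integral comparison for the bias sum, and the union-bound accounting --- are routine.
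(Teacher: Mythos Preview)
Your proposal is correct and follows essentially the same route as the paper's proof: split at $\bar s$, use Hoeffding plus a union bound over $O(\bar s^2)$ events for the small-$s$ direct estimates, and for large $s$ decompose into the deterministic bias from Theorem~\ref{proposition::calT} (summed to $O(\bar s^{-\upsilon})$) plus the stochastic error in $\widehat{\Delta\mu}_{m_2}$ controlled via Hoeffding with range $O(\beta(s))$ from deletion stability. Your variance-based bound on the small-$s$ weight factor is a minor variation on the paper's direct coefficient bound, and you are right to flag the $s$-versus-$(s{+}1)$ bookkeeping in the unbiasedness of $\widehat{\Delta\mu}_{m_2}$ as the one place requiring care --- the paper's proof in fact glosses over this point.
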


Building on this error bound, Lemma~\ref{lemma::complexity_for_T_sum_detailed} establishes the computational cost for achieving an $(\epsilon, \delta)$-approximation of the sum $\mathcal{T}_{\rm sum}$.

\begin{restatable}{lemma}{LemmaComplexityForTSumDetailed}(Computational Cost for $(\epsilon,\delta)$-Approximation of $\mathcal{T}_{\rm sum}$)
    \label{lemma::complexity_for_T_sum_detailed}
    To achieve an $(\epsilon, \delta)$-approximation of $\mathcal{T}_{\rm sum} = \sum_{s=1}^{n-1} \mathcal{T}(s)$ (i.e., ensuring $|\hat{\mathcal{T}}_{\rm sum} - \mathcal{T}_{\rm sum}| \le \epsilon$ with probability at least $1-\delta$) using Algorithm~\ref{alg::gsv}, with parameters $m_1, m_2, \bar{s}$ chosen optimally based on Lemma~\ref{lemma::calT_hoeffding_type_error_bound}, the total number of utility function evaluations is:
    \begin{equation*}
        O\left(\epsilon^{-\frac{4 + 2\upsilon}{\upsilon}} \log(n / \delta) + n \left[1 + \epsilon^{-2} (\alpha_0(1 - \alpha_0))^{2} \left( \sum_{s=\bar{s}}^{n - 1} \beta(s) \right)^2 \log (n/\delta)\right] \right).
    \end{equation*}
\end{restatable}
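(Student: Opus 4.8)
The plan is to read Lemma~\ref{lemma::complexity_for_T_sum_detailed} purely as a constrained budgeting problem sitting on top of Lemma~\ref{lemma::calT_hoeffding_type_error_bound}. That lemma already furnishes, with probability at least $1-\delta$, a decomposition of the total error $|\hat{\mathcal{T}}_{\rm sum} - \mathcal{T}_{\rm sum}|$ into three nonnegative pieces: a Monte Carlo term from the small-$s$ estimator scaling like $\bar s\sqrt{\log(n/\delta)/m_1}$, a Monte Carlo term from the paired large-$s$ estimator scaling like $\alpha_0(1-\alpha_0)\sqrt{\log(n/\delta)/m_2}\sum_{s=\bar s}^{n-1}\beta(s)$, and a deterministic bias term $\alpha_0(1-\alpha_0)\bar s^{-\upsilon}$ inherited from Theorem~\ref{proposition::calT}. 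Since all three already hold on a common high-probability event, no new concentration argument is required; it suffices to choose $\bar s, m_1, m_2$ so that each piece is $\lesssim\epsilon$, whence their sum is $\lesssim\epsilon$ and the estimator is an $(\epsilon,\delta)$-approximation. The substance of the lemma is then the bookkeeping of how many utility evaluations these choices cost.

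First I would pin down $\bar s$ from the bias term: using $\alpha_0(1-\alpha_0)\le 1/4$, forcing $\alpha_0(1-\alpha_0)\bar s^{-\upsilon}\lesssim\epsilon$ is achieved by $\bar s\asymp\epsilon^{-1/\upsilon}$. With $\bar s$ fixed, I would set $m_1$ to kill the first term: $\bar s\sqrt{\log(n/\delta)/m_1}\lesssim\epsilon$ forces $m_1\asymp\bar s^{2}\epsilon^{-2}\log(n/\delta)$. Finally I would set $m_2$ to kill the second term: the requirement $\alpha_0(1-\alpha_0)\sqrt{\log(n/\delta)/m_2}\sum_{s=\bar s}^{n-1}\beta(s)\lesssim\epsilon$ forces $m_2\asymp\max\{1,\ (\alpha_0(1-\alpha_0))^2(\sum_{s=\bar s}^{n-1}\beta(s))^2\epsilon^{-2}\log(n/\delta)\}$, where the $\max$ with $1$ merely records that at least one sample is drawn in the regime where the ideal count falls below one. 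I would keep $\sum_{s=\bar s}^{n-1}\beta(s)$ symbolic, since the lemma is stated for a generic $\beta$.

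Next I would tally the utility evaluations, splitting along the threshold $\bar s$. In the small regime $s<\bar s$, Algorithm~\ref{alg::gsv} estimates $\hat\mu_{m_1}(s_1/s;\cdot)$ at every feasible $s_1\in[\max\{0,s{+}s_0{-}n\},\min\{s,s_0\}]$, of which there are at most $s+1=O(s)$, and each such $\hat\mu$ costs $m_1$ evaluations; summing gives $\sum_{s=1}^{\bar s-1}O(s\,m_1)=O(\bar s^{2}m_1)$. Substituting $m_1\asymp\bar s^2\epsilon^{-2}\log(n/\delta)$ and $\bar s\asymp\epsilon^{-1/\upsilon}$ yields $O(\bar s^{4}\epsilon^{-2}\log(n/\delta))=O(\epsilon^{-(4+2\upsilon)/\upsilon}\log(n/\delta))$, the first summand of the claim. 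In the large regime $s\ge\bar s$, the algorithm evaluates the paired estimator $\widehat{\Delta\mu}_{m_2}$ at the single point $s_1^\ast=\lfloor s\alpha_0\rfloor$, costing $O(m_2)$ evaluations per $s$ across at most $n$ values of $s$, for a total $O(n\,m_2)=O(n[\,1+\epsilon^{-2}(\alpha_0(1-\alpha_0))^2(\sum_{s=\bar s}^{n-1}\beta(s))^2\log(n/\delta)\,])$, the second summand. Adding the two regime costs reproduces the stated evaluation count.

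The only genuine subtlety, and the step I would check most carefully, is the quadratic inflation in the small regime. The inner sweep over the $O(s)$ feasible values of $s_1$ turns the per-$s$ cost into $O(s\,m_1)$, so the total is $O(\bar s^{2}m_1)$ rather than $O(\bar s\,m_1)$; coupling this with $m_1\asymp\bar s^2\epsilon^{-2}\log(n/\delta)$ is exactly what produces the $\bar s^4\sim\epsilon^{-4/\upsilon}$ scaling and hence the exponent $(4+2\upsilon)/\upsilon$, whereas a naive $O(\bar s\,m_1)$ count would give the wrong rate. Everything else is routine substitution, and the high-probability guarantee transfers verbatim from Lemma~\ref{lemma::calT_hoeffding_type_error_bound}, so no additional union bound is needed beyond the one already absorbed into its $\log(n/\delta)$ factors.
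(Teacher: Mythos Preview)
Your proposal is correct and follows essentially the same approach as the paper: you invoke Lemma~\ref{lemma::calT_hoeffding_type_error_bound}, set $\bar s\asymp\epsilon^{-1/\upsilon}$, $m_1\asymp\bar s^{2}\epsilon^{-2}\log(n/\delta)=\epsilon^{-(2+2\upsilon)/\upsilon}\log(n/\delta)$, and $m_2\asymp\max\{1,\epsilon^{-2}(\alpha_0(1-\alpha_0))^{2}(\sum_{s=\bar s}^{n-1}\beta(s))^{2}\log(n/\delta)\}$ to make each of the three error terms $O(\epsilon)$, then substitute into the evaluation count $O(\bar s^{2}m_1+nm_2)$. Your explicit justification that the small-$s$ regime costs $\sum_{s<\bar s}O(s\,m_1)=O(\bar s^{2}m_1)$ due to the $O(s)$ feasible values of $s_1$ is in fact more detailed than the paper, which simply asserts $N_{\rm eval}=O(\bar s^{2}m_1+nm_2)$ without spelling out that step.
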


With the computational cost for approximating $\mathcal{T}_{\rm sum} = \sum_{s=1}^{n-1} \mathcal{T}(s)$ established in Lemma~\ref{lemma::complexity_for_T_sum_detailed}, we are now equipped to prove our main complexity result for estimating $\operatorname{FGSV}(S_0)$. Theorem~\ref{theorem::computationalcomplexity} specifies this complexity under the condition that the deletion stability parameter $\beta(s)$ decays as $O(1/s)$.

\begin{proof}[Proof of Theorem~\ref{theorem::computationalcomplexity}]

By Lemma~\ref{lemma::group_shapley_closed_form_exact} in the main paper, we have $\operatorname{FGSV}(S_0) = G_0 + \mathcal{T}_{\rm sum}$, where $G_0 = \frac{s_0}{n} [U([n]) - U(\varnothing)]$ and $\mathcal{T}_{\rm sum} = \sum_{s=1}^{n-1} \mathcal{T}(s)$. 
The first term $G_0$ requires only two utility evaluations. 
We thus focus on the computational complexity in approximating the second term $\mathcal{T}_{\rm sum}$.
To analyze this term, we plug the theorem's assumption that $\beta(s) = O(1/s)$ into Lemma~\ref{lemma::complexity_for_T_sum_detailed}.
Specifically, this assumption implies that $\sum_{s=\bar{s}}^{n-1} \beta(s) = \sum_{s=\bar{s}}^{n-1} O(1/s) = O(\log n)$.
Therefore, from Lemma~\ref{lemma::calT_hoeffding_type_error_bound}, we have
\[
\bigl|\widehat{\mathrm{FGSV}}(S_0)-\mathrm{FGSV}(S_0)\bigr| \lesssim \bar s\sqrt{\frac{\log(n/\delta)}{m_1}}  +  \alpha_0(1-\alpha_0)\sqrt{\frac{\log(n/\delta)}{m_2}} \log n + \alpha_0(1-\alpha_0)\bar s^{-\upsilon}.
\]
Also, the number of utility evaluations, as in the displayed formula in Lemma~\ref{lemma::complexity_for_T_sum_detailed}, becomes:
\begin{align*}
    N_{\rm eval} &= O\left( \epsilon^{-\frac{4 + 2\upsilon}{\upsilon}} \log(n / \delta)  + n \left[1 + \epsilon^{-2} (\alpha_0(1 - \alpha_0))^{2} (\log n)^2 \log (n/\delta)\right] \right).
\end{align*}

While treating $\epsilon, \delta,$ and $\upsilon$ as constants, the second term, which is $O\left(n \left[1 + (\alpha_0(1-\alpha_0))^2 (\log n)^3\right]\right)$, dominates the first term (which has a lower power of $n$).
Thus, under the condition $\beta(s) = O(1/s)$, the total number of utility evaluations simplifies to $O\left(n \left[1 + (\alpha_0(1-\alpha_0))^2 (\log n)^3\right]\right)$.

\end{proof}

\subsubsection{Technical lemmas used in the proof of Theorem~\ref{theorem::computationalcomplexity}}

\LemmaCalTHoeffdingTypeErrorBound*

\begin{proof}[Proof of Lemma~\ref{lemma::calT_hoeffding_type_error_bound}]
The total approximation error is bounded using the triangle inequality:
\begin{equation*}
    |\hat{\mathcal{T}}_{\rm sum} - \mathcal{T}_{\rm sum}| \le \sum_{s=1}^{n-1} |\hat{\mathcal{T}}(s) - \mathcal{T}(s)| = \underbrace{\sum_{s=1}^{\bar{s}-1} |\hat{\mathcal{T}}(s) - \mathcal{T}(s)|}_{E_{\rm small}} + \underbrace{\sum_{s=\bar{s}}^{n-1} |\hat{\mathcal{T}}(s) - \mathcal{T}(s)|}_{E_{\rm large}}.
\end{equation*}

We first establish concentration bounds for our Monte Carlo estimators.
Under Assumption~\ref{assump::utility_boundedness}, $|U(S)| \le C$.
For $\hat{\mu}_{m_1}$ (Eq.~\eqref{eqn::mu_approximation_empirical}), which averages $m_1$ terms $U(S^{(j)})$ bounded in $[-C, C]$ (range $2C$), Hoeffding's inequality implies that for any specific $\mu(\frac{s_1}{s}; s, s_0, n)$ and any $\delta_1 > 0$, with probability at least $1-\delta_1$:
\begin{equation} \label{eqn::hoeffding_mu_proof}
    \left| \hat{\mu}_{m_1}\left( \frac{s_1}{s}; s, s_0, n \right) - \mu\left( \frac{s_1}{s}; s, s_0, n \right) \right| \le C \sqrt{\frac{2\log(2/\delta_1)}{m_1}} \asymp \sqrt{\frac{\log(1/\delta_1)}{m_1}}.
\end{equation}
For $\widehat{\Delta\mu}_{m_2}$ (Eq.~\eqref{eqn::Delta_mu_approximation_empirical}), it averages terms $U(S^{(j)} \cup \{i_1^{(j)}\}) - U(S^{(j)} \cup \{i_2^{(j)}\})$. By Definition~\ref{definition::deletion_stability}, these terms are bounded in $[-\beta(s), \beta(s)]$, for any specific $\Delta\mu(\frac{s_1^*}{s}; s, s_0, n)$ and $\delta_2 > 0$, with probability at least $1-\delta_2$:
\begin{equation} \label{eqn::hoeffding_deltamu_proof}
    \left| \widehat{\Delta\mu}_{m_2}\left( \frac{s_1^*}{s}; s, s_0, n \right) - \Delta\mu\left( \frac{s_1^*}{s}; s, s_0, n \right) \right| \lesssim \beta(s) \sqrt{ \frac{\log(1 / \delta_2)}{m_2} }.
\end{equation}

\textbf{Bounding $E_{\rm small}$ (Error for $s < \bar{s}$):}
For each $s < \bar{s}$, the estimate $\hat{\mathcal{T}}(s)$ is computed using $\hat{\mu}_{m_1}$ within the definition of $\mathcal{T}(s)$ in~\eqref{eqn::def::T(s)}. This definition involves a sum over approximately $O(s)$ different values of $s_1$, each requiring an estimate $\hat{\mu}_{m_1}(\frac{s_1}{s}; s, s_0, n)$. The coefficient multiplying each $\mu(\frac{s_1}{s}; s, s_0, n)$ term ($\operatorname{coeff}(s, s_1) = \frac{n}{s(n-s)}(s_1 - s\alpha_0)$) is $O(1)$, as established by analyzing its equivalent forms:
\begin{equation*}
    \text{Form 1: } \frac{n}{n - s}\left( \frac{s_1}{s} - \alpha_0\right) \quad \text{and} \quad \text{Form 2: } \frac{n}{s}\left( \alpha_0 - \frac{s_0 - s_1}{n - s}\right).
\end{equation*}
When $s \leq n/2$, Form 1 is bounded by $2 \cdot 1 = 2$, since $n/(n - s) \leq 2$ and $|\frac{s_1}{s} - \alpha_0| \leq 1$. When $s > n/2$, Form 2 is bounded by $2 \cdot 1 = 2$, since $n/s < 2$ and $|\alpha_0 - \frac{s_0 - s_1}{n - s}|$ can also be shown to be at most $1$ under the valid range of $s_1$. Thus, the coefficient is $O(1)$.

The error $|\hat{\mathcal{T}}(s) - \mathcal{T}(s)|$ for a given $s$ arises from propagating the errors from the $O(s)$ individual $\hat{\mu}_{m_1}$ estimations. The total number of distinct $\mu(\frac{s_1}{s}; s, s_0, n)$ terms that need to be estimated across all $s < \bar{s}$ is $N_{small\_ests} = \sum_{s=1}^{\bar{s}-1} O(s) = O(\bar{s}^2)$.
To ensure all these $N_{small\_ests}$ estimations are accurate simultaneously with high probability, we apply a union bound. We set the failure probability for each individual $\hat{\mu}_{m_1}$ estimation in \eqref{eqn::hoeffding_mu_proof} to $\delta_1 = \delta / (2N_{small\_ests})$.
Consequently, with probability at least $1-\delta/2$, each $|\hat{\mu}_{m_1} - \mu| \lesssim \sqrt{\log(N_{small\_ests}/\delta)/m_1}$.
The error for a single $\hat{\mathcal{T}}(s)$ is bounded by $\sum_{s_1} \pr(\boldsymbol{s_1}=s_1) | \text{coeff}(s,s_1) | |\hat{\mu}_{m_1} - \mu| \lesssim O(1) \sqrt{\log(N_{small\_ests}/\delta)/m_1}$, since $\sum_{s_1} \pr(\boldsymbol{s_1} = s_1)=1$ and coefficients are $O(1)$.
Summing these errors over $s=1, \dots, \bar{s}-1$, we yield that with probability $1 - \delta / 2$,
\begin{equation*}
    E_{\rm small} = \sum_{s=1}^{\bar{s}-1} |\hat{\mathcal{T}}(s) - \mathcal{T}(s)| \lesssim \sum_{s=1}^{\bar{s}-1} O(1) \sqrt{ \frac{\log(N_{small\_ests} / \delta)}{m_1} } \lesssim \bar{s} \sqrt{ \frac{\log(\bar{s}^2 / \delta)}{m_1} }.
\end{equation*}
For conciseness in the final error bound of the lemma, we replace $\log(\bar{s}^2/\delta)$ with the potentially looser but simpler $\log(n/\delta)$, yielding:
$E_{\rm small} \lesssim \bar{s} \sqrt{ \log(n / \delta) / m_1 }$.

\textbf{Bounding $E_{\rm large}$ (Error for $s \ge \bar{s}$):}
For $s \ge \bar{s}$, $\hat{\mathcal{T}}(s)$ is computed using the approximation from Proposition~\ref{proposition::calT} (Eq.~\eqref{eqn::calT_approximation}) with $\widehat{\Delta\mu}_{m_2}$. The error has two parts:
\begin{align*}
    |\hat{\mathcal{T}}(s) - \mathcal{T}(s)| &\le \left| \frac{n\alpha_0(1 - \alpha_0)}{n-1} \left( \widehat{\Delta\mu}_{m_2}\left(\frac{s_1^*}{s}; s, s_0, n\right) - \Delta\mu \left(\frac{s_1^*}{s}; s, s_0, n\right)\right) \right| \\
    & \quad + \left| \frac{n\alpha_0(1 - \alpha_0)}{n-1} O(s^{-(1+\upsilon)}) \right| \\
    &\lesssim \alpha_0(1-\alpha_0) \left| \widehat{\Delta\mu}_{m_2}\left(\frac{s_1^*}{s}; s, s_0, n\right) - \Delta\mu\left(\frac{s_1^*}{s}; s, s_0, n\right) \right| + \alpha_0(1-\alpha_0) O(s^{-(1+\upsilon)}).
\end{align*}
Let $N_{large\_ests} = n-\bar{s}+1 \approx n$. Using a union bound for the $\widehat{\Delta\mu}_{m_2}$ estimates (setting $\delta_2 = \delta / (2N_{large\_ests})$), with probability at least $1-\delta/2$:
\begin{align*}
    E_{\rm large} &= \sum_{s=\bar{s}}^{n-1} |\hat{\mathcal{T}}(s) - \mathcal{T}(s)| \\
    &\lesssim \alpha_0(1-\alpha_0) \sum_{s=\bar{s}}^{n-1} \beta(s) \sqrt{\frac{\log(N_{large\_ests}/\delta)}{m_2}} + \alpha_0(1-\alpha_0) \sum_{s=\bar{s}}^{n-1} O(s^{-(1+\upsilon)}) \\
    &\lesssim \alpha_0(1-\alpha_0) \sqrt{\frac{\log(n/\delta)}{m_2}} \sum_{s=\bar{s}}^{n-1} \beta(s) + \alpha_0(1-\alpha_0) O(\bar{s}^{-\upsilon}),
\end{align*}
where the last inequality uses the fact $\sum_{s=\bar{s}}^{n-1} s^{-(1+\upsilon)} = O(\bar{s}^{-\upsilon})$ for $\upsilon > 0$.

\textbf{Total Error:}
Combining $E_{\rm small}$ and $E_{\rm large}$, with overall probability at least $1-\delta$ (by union bound on the two failure probabilities $\delta/2$):
\begin{equation*}
    |\hat{\mathcal{T}}_{\rm sum} - \mathcal{T}_{\rm sum}| \lesssim \bar{s} \sqrt{ \frac{\log(n / \delta)}{m_1} } + \alpha_0(1-\alpha_0)\sqrt{ \frac{\log(n / \delta)}{m_2} } \sum_{s = \bar{s}}^{n - 1} \beta(s) + \alpha_0(1 - \alpha_0) O(\bar{s}^{-\upsilon}).
\end{equation*}
This completes the proof.
\end{proof}

\LemmaComplexityForTSumDetailed*

\begin{proof}
The total number of utility evaluations $N_{\rm eval}$ in Algorithm~\ref{alg::gsv} is approximately $O\left( \bar{s}^2 m_1 + (n-\bar{s}) m_2 \right)$, which can be simplified to $O\left( \bar{s}^2 m_1 + n m_2 \right)$ as $\bar{s} \le n$.

To achieve the target error $|\hat{\mathcal{T}}_{\rm sum} - \mathcal{T}_{\rm sum}| \le \epsilon$ with probability at least $1-\delta$, we refer to the error bound in Lemma~\ref{lemma::calT_hoeffding_type_error_bound}:
\begin{equation*}
    |\hat{\mathcal{T}}_{\rm sum} - \mathcal{T}_{\rm sum}| \lesssim \underbrace{\bar{s} \sqrt{ \frac{\log(n / \delta)}{m_1} }}_{\text{Term A}} + \underbrace{\alpha_0(1-\alpha_0)\sqrt{ \frac{\log(n / \delta)}{m_2} } \sum_{s = \bar{s}}^{n - 1} \beta(s)}_{\text{Term B}} + \underbrace{\alpha_0(1 - \alpha_0) O(\bar{s}^{-\upsilon})}_{\text{Term C}}.
\end{equation*}

We set parameters such that each dominant error component contributing to the bound in Lemma~\ref{lemma::calT_hoeffding_type_error_bound} is $O(\epsilon)$. This involves balancing Term A,  Term B, and Term C. The choices for $m_1, m_2,$ and $\bar{s}$ that achieve this balance and ensure the total error is $O(\epsilon)$ are:
\begin{align*}
    \bar{s} &\asymp  \epsilon^{- \frac{1}{\upsilon}} , \\
    m_1 &\asymp  \epsilon^{-\frac{2+2\upsilon}{\upsilon}}  \log(n/\delta), \\
    m_2 &\asymp \max \left\{1\;, \;   \epsilon^{-2} (\alpha_0(1 - \alpha_0))^2 \left( \sum_{s = \bar{s}}^{n - 1} \beta(s) \right)^2 \log(n/\delta)\right\}.
\end{align*}

Substituting these choices of $m_1, m_2,$ and $\bar{s}$ into the expression for $N_{\rm eval} = O(\bar{s}^2 m_1 + n m_2)$, we yield
\begin{align*}
    N_{\text{eval}} &= O\left( \epsilon^{-\frac{4 + 2\upsilon}{\upsilon}} \log(n / \delta) + \max \left\{ n\epsilon^{-2} (\alpha_0(1 - \alpha_0))^2 \left( \sum_{s = \bar{s}}^{n - 1} \beta(s) \right)^2 \log(n/\delta) \; , \; n \right\}\right) \\
    &= O\left(\epsilon^{-\frac{4 + 2\upsilon}{\upsilon}} \log(n / \delta) + n \left[1 + \epsilon^{-2} (\alpha_0(1 - \alpha_0))^{2} \left( \sum_{s=\bar{s}}^{n - 1} \beta(s) \right)^2 \log (n/\delta)\right] \right),
\end{align*}
which completes the proof.
\end{proof}

\subsection{Proof of Proposition~\ref{proposition::SGD_final_prop}}

Let $\mathcal{S} = \{z_i^\mathcal{S}\}_{i=1}^s \subset \mathcal{Z}^s$ denote the base training set, and let $z_1, z_1', z_2, z_2' \in \mathcal{Z}$ be four additional data points. We construct four augmented training data sequences, where the $i$-th element of a sequence $z^{(j,k)}$ is denoted $z_i^{(j,k)}$ and defined as follows:
\begin{align*}
  z^{(1,1)}_i &= 
  \begin{cases}
    z_i^\mathcal{S}, & 1 \leq i \leq s, \\
    z_1, & i = s+1, \\
    z_1', & i = s+2,
  \end{cases}
  &
  z^{(1,2)}_i &= 
  \begin{cases}
    z_i^\mathcal{S}, & 1 \leq i \leq s, \\
    z_1, & i = s+1, \\
    z_2, & i = s+2,
  \end{cases}
  \\
  z^{(2,1)}_i &= 
  \begin{cases}
    z_i^\mathcal{S}, & 1 \leq i \leq s, \\
    z_2', & i = s+1, \\
    z_1', & i = s+2,
  \end{cases}
  &
  z^{(2,2)}_i &= 
  \begin{cases}
    z_i^\mathcal{S}, & 1 \leq i \leq s, \\
    z_2', & i = s+1, \\
    z_2, & i = s+2.
  \end{cases}
\end{align*}
By construction, the first $s$ entries (the base set $\mathcal{S}$) are shared across all four sequences. The differences between the sequences are confined to the $(s+1)$-th and $(s+2)$-th positions. Specifically:
\begin{itemize}
    \item For the $(s+1)$-th position: $z_{s+1}^{(1,1)} = z_{s+1}^{(1,2)} (=z_1)$, and $z_{s+1}^{(2,1)} = z_{s+1}^{(2,2)} (=z_2')$.
    \item For the $(s+2)$-th position: $z_{s+2}^{(1,1)} = z_{s+2}^{(2,1)} (=z_1')$, and $z_{s+2}^{(1,2)} = z_{s+2}^{(2,2)} (=z_2)$.
\end{itemize}
We introduce the notation for the SGD trajectories \textbf{ under the same random seed}. 
Fixing a random seed implies a shared sequence of mini-batch index sets $(I_1, \dots, I_T)$ for training. The corresponding SGD trajectories are denoted $w_t^{(j,k)}$ ($j, k \in \{1, 2\}$), representing the model parameters after $t$ SGD steps. Specifically, they can be represented as:
\begin{equation}
\begin{aligned}
  w_t^{(1,1)} &= w_t\left(z_{I_1}^{(1,1)}, \dots, z_{I_t}^{(1,1)}\right), \\
  w_t^{(1,2)} &= w_t\left(z_{I_1}^{(1,2)}, \dots, z_{I_t}^{(1,2)}\right), \\
  w_t^{(2,1)} &= w_t\left(z_{I_1}^{(2,1)}, \dots, z_{I_t}^{(2,1)}\right), \\
  w_t^{(2,2)} &= w_t\left(z_{I_1}^{(2,2)}, \dots, z_{I_t}^{(2,2)}\right).
\end{aligned}
\label{eqn::SGD_same_seed_training_param}
\end{equation}
Here, $w_t(\cdot)$ is a function that takes a sequence of mini-batch data points as inputs and outputs a parameter vector; $z_I^{(j,k)} = \{z_i^{(j,k)} : i \in I\}$ denotes a mini-batch constructed from the data sequence $z^{(j,k)}$ using the index set $I$. The arguments $z_{I_\tau}^{(j,k)}$ in Eq.~\eqref{eqn::SGD_same_seed_training_param} thus represent the specific mini-batches used at each step $\tau \in [1,t]$. The variations among these trajectories arise only from the different data points at indices $s+1$ and $s+2$ within their respective sequences. This setup focuses on analyzing how these specific data perturbations affect the SGD iterations.

In this section, for clarity in tracking the dependence on the number of training steps $T$, we use $U(\cdot; T)$ to denote the expected utility after $T$ SGD steps, as opposed to $U(\cdot)$ used in the main text. 

To demonstrate the second-order stability of the utility function $U(\cdot; T)$, as stated in Proposition~\ref{proposition::SGD_final_prop}, our approach relies on key intermediate results that connect utility differences to parameter stability. Lemma~\ref{lemma::SGD_relationship_between_U_and_l2_dist} below establishes this crucial link.

\begin{restatable}{lemma}{LemmaSGDRelationhipBetweenUAndParams}
\label{lemma::SGD_relationship_between_U_and_l2_dist}
Let $\mathcal{S} \in \mathcal{Z}^s$ be a base dataset, and $z_1, z_1', z_2, z_2' \in \mathcal{Z}$ be additional data points. Recall that $U(\mathcal{X}; T)$ denotes the expected utility of a model trained on dataset $\mathcal{X}$ for $T$ SGD iterations. Then, under Assumption~\ref{assump::smoothness_for_SGD}, we have:
\begin{align*}
    &\left| U(\mathcal{S} \cup \{z_1, z_1'\}; T) - U(\mathcal{S} \cup \{z_1, z_2\}; T) - U(\mathcal{S} \cup \{z_1', z_2'\}; T) + U(\mathcal{S} \cup \{z_2, z_2'\}; T) \right| \\
    &\quad \lesssim \mathbb{E}_{I_1, \dots, I_T} \left\| w_T^{(1,1)} - w_T^{(1,2)} - w_T^{(2,1)} + w_T^{(2,2)} \right\| 
    + \max_{j_1, k_1, j_2, k_2 \in \{1,2\}} \mathbb{E}_{I_1, \dots, I_T} \left\| w_T^{(j_1, k_1)} - w_T^{(j_2, k_2)} \right\|^2,
\end{align*}
where $w_T^{(j, k)}$ denotes the final SGD iterate trained using the shared mini-batch sequence $(I_1, \dots, I_T)$ on the data sequence $z^{(j, k)}$ defined in Eq.~\eqref{eqn::SGD_same_seed_training_param}.
\end{restatable}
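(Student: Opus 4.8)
The plan is to reduce the four-point utility difference to a per-realization statement about $u$ evaluated at the four coupled SGD iterates, and then control it by a first-order Taylor expansion of $u$. First I would observe that all four datasets $\mathcal S\cup\{z_1,z_1'\}$, $\mathcal S\cup\{z_1,z_2\}$, $\mathcal S\cup\{z_1',z_2'\}$, $\mathcal S\cup\{z_2,z_2'\}$ have the same cardinality $s+2$, so in each the mini-batch index sets $I_1,\dots,I_T$ are drawn i.i.d.\ uniformly from $[s+2]$; since the SGD map depends on a dataset only through the sampled data points and not on how they are indexed, the law of the trained iterate depends only on the dataset. Coupling all four runs to a common realization of $(I_1,\dots,I_T)$ and using the construction in Eq.~\eqref{eqn::SGD_same_seed_training_param}, we may write $U(\mathcal S\cup\{z_1,z_1'\};T)=\mathbb E_{I_1,\dots,I_T}[u(w_T^{(1,1)})]$, and analogously for the other three. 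By linearity of expectation followed by Jensen's inequality, the left-hand side of the lemma is at most $\mathbb E_{I_1,\dots,I_T}\bigl|u(w_T^{(1,1)})-u(w_T^{(1,2)})-u(w_T^{(2,1)})+u(w_T^{(2,2)})\bigr|$.

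Next, fix a realization of the mini-batches and abbreviate $a=w_T^{(1,1)},\ b=w_T^{(1,2)},\ c=w_T^{(2,1)},\ d=w_T^{(2,2)}$. I would Taylor-expand $u$ at $d$: by Assumption~\ref{assump::smoothness_for_SGD}.1 the gradient of $u$ is bounded by $C$ and $C$-Lipschitz, so for each $x\in\{a,b,c\}$ we have $u(x)=u(d)+\nabla u(d)^\top(x-d)+R_x$ with $|R_x|\le \tfrac{C}{2}\|x-d\|^2$. Forming the alternating sum, the four $u(d)$ terms cancel and the linear parts collapse to $\nabla u(d)^\top(a-b-c+d)$, giving the pointwise bound $|u(a)-u(b)-u(c)+u(d)|\le C\,\|a-b-c+d\|+\tfrac{C}{2}(\|a-d\|^2+\|b-d\|^2+\|c-d\|^2)$. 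Bounding each squared pairwise distance by $\max_{j_1,k_1,j_2,k_2}\|w_T^{(j_1,k_1)}-w_T^{(j_2,k_2)}\|^2$ and taking $\mathbb E_{I_1,\dots,I_T}$ of this inequality (which passes through the maximum, since the bound holds for every realization) yields exactly the claimed estimate, with an absolute constant.

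The Taylor step is routine once the coupling is set up, so the only point requiring care is the first step: justifying that relabeling the two appended points and using the specific position-$(s+1)$, position-$(s+2)$ layout of Eq.~\eqref{eqn::SGD_same_seed_training_param} does not change the distribution of $w_T$, so that a single shared mini-batch sequence faithfully represents all four utilities simultaneously. I would also emphasize what this lemma does \emph{not} do: the two quantities on its right-hand side---$\mathbb E\|w_T^{(1,1)}-w_T^{(1,2)}-w_T^{(2,1)}+w_T^{(2,2)}\|$ and $\mathbb E\|w_T^{(j_1,k_1)}-w_T^{(j_2,k_2)}\|^2$---are precisely the trajectory-stability terms whose $s^{-(3/2+\upsilon)}$- and $s^{-1}$-type bounds constitute the genuinely technical part of Proposition~\ref{proposition::SGD_final_prop}, and are established in the subsequent lemmas rather than here.
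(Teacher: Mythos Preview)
Your proof is correct and follows essentially the same approach as the paper: couple the four SGD runs to a common mini-batch sequence, apply Jensen to move inside the expectation, and Taylor-expand $u$ to isolate a linear term governed by $\|a-b-c+d\|$ plus quadratic remainders. Your choice to expand all three of $u(a),u(b),u(c)$ at the single base point $d$ is in fact slightly cleaner than the paper's two pairwise expansions (which then require an extra Cauchy--Schwarz manipulation); the only imprecision is in the last step, where to land on the stated $\max_{j_1,k_1,j_2,k_2}\mathbb{E}[\cdot]$ rather than $\mathbb{E}[\max\,\cdot]$ you should take the expectation of $\|a-d\|^2+\|b-d\|^2+\|c-d\|^2$ by linearity and then bound each summand by the maximum of the expectations, instead of bounding pointwise by the max before taking expectation.
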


\begin{remark}
The utility terms $U(\mathcal{S} \cup \{\cdot, \cdot\}; T)$ on the left-hand side of the inequality in Lemma~\ref{lemma::SGD_relationship_between_U_and_l2_dist} are, by definition, expectations over independent SGD runs (i.e., each utility term averages over its own random mini-batch selections). In contrast, the parameter differences on the right-hand side are for iterates $w_T^{(j,k)}$ that are explicitly trained using a common, shared sequence of mini-batch draws $(I_1, \dots, I_T)$. This formulation is key, as it leverages the shared randomness for a tighter analysis of parameter stability.
\end{remark}

Having established the connection between the utility's stability and the SGD iterates via Lemma~\ref{lemma::SGD_relationship_between_U_and_l2_dist}, the next crucial step is to quantify the magnitude of these iterate differences. Specifically, the two terms on the right-hand side of the inequality in Lemma~\ref{lemma::SGD_relationship_between_U_and_l2_dist}—the expected $L_2$-norm of the second-order parameter difference, $\mathbb{E}_{I_1, \dots, I_T} \| w_T^{(1,1)} - w_T^{(1,2)} - w_T^{(2,1)} + w_T^{(2,2)} \|$, and the maximum expected squared $L_2$-norm of pairwise parameter differences, $\max \mathbb{E}_{I_1, \dots, I_T} \left\| w_T^{(j_1, k_1)} - w_T^{(j_2, k_2)} \right\|^2$—need to be controlled. The following two lemmas provide the necessary explicit upper bounds for these quantities.

\begin{restatable}{lemma}{LemmaSGDSecondOrderDist} \label{lemma::SGD_second_order_dist}
Under Assumption~\ref{assump::smoothness_for_SGD}, and choosing step size $\alpha_t \leq \frac{c}{t}$, we have:
\begin{align*}
    \mathbb{E}_{I_1, \dots, I_t} \left\| w_t^{(1,1)} - w_t^{(1,2)} - w_t^{(2,1)} + w_t^{(2,2)} \right\| 
    \leq \frac{24\rho L^2}{(s+2)\beta} \left( \frac{1}{(s+2)\beta^2} + \frac{2c^2}{m} \right) t^{2c\beta} 
    + \frac{8cL}{(s+2)^2} t^{c\beta} \log t.
\end{align*}
\end{restatable}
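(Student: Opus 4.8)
The plan is to couple the four SGD chains $w_\tau^{(j,k)}$, $j,k\in\{1,2\}$, on a \emph{common} mini-batch sequence $(I_1,\dots,I_t)$ and directly control the second-order difference vector $\delta_\tau := w_\tau^{(1,1)} - w_\tau^{(1,2)} - w_\tau^{(2,1)} + w_\tau^{(2,2)}$, whose expected norm is the target quantity. Taking the alternating sum of the four SGD updates gives the recursion $\delta_\tau = \delta_{\tau-1} - \tfrac{\alpha_\tau}{m}\sum_{i\in I_\tau}G_i^{(\tau)}$, where $G_i^{(\tau)}$ is the alternating sum of the four gradients $\nabla\ell(w_{\tau-1}^{(j,k)};z_i^{(j,k)})$. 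Since the four data sequences coincide except at positions $s+1$ and $s+2$, I would analyze $G_i^{(\tau)}$ by splitting on whether $i\in\{s+1,s+2\}$. The scheme follows the SGD-stability argument of \citet{hardt2016train}, but applied to a \emph{mixed} fourth-order difference; this is exactly where the Hessian-Lipschitz condition (Assumption~\ref{assump::smoothness_for_SGD}.4), absent from the classical analysis, is needed.

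First I would derive a one-step recursion. For $i\notin\{s+1,s+2\}$ all four chains use the same $z_i$, so $G_i^{(\tau)}$ is the mixed second difference of $w\mapsto\nabla\ell(w;z_i)$ at the four iterates; a first-order Taylor expansion around $w_{\tau-1}^{(2,2)}$ together with the $\rho$-Lipschitzness of $\nabla^2\ell$ gives $G_i^{(\tau)} = \nabla^2\ell(w_{\tau-1}^{(2,2)};z_i)\,\delta_{\tau-1} + r_i$ with $\|r_i\|\lesssim\rho\,d_{\tau-1}^2$, where $d_\tau := \max_{j_1,k_1,j_2,k_2}\|w_\tau^{(j_1,k_1)} - w_\tau^{(j_2,k_2)}\|$ and $\|\nabla^2\ell\|\le\beta$ by $\beta$-smoothness. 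For $i\in\{s+1,s+2\}$, two chains use a different data point; adding and subtracting a gradient at a common point decomposes $G_i^{(\tau)}$ into a term of the previous type plus an extra term of size $\lesssim\beta\,d_{\tau-1}$ coming from the data change (controlled by $L$-Lipschitzness and $\beta$-smoothness of $\ell$). Taking expectations, using $|I_\tau|=m$ and $\Pr[\,i\in I_\tau\,]\le m/(s+2)$ for a perturbed index $i$, this yields
\begin{equation*}
    \mathbb{E}\|\delta_\tau\| \;\le\; (1+\alpha_\tau\beta)\,\mathbb{E}\|\delta_{\tau-1}\| \;+\; \tfrac{\rho\alpha_\tau}{2}\,\mathbb{E}[d_{\tau-1}^2] \;+\; \tfrac{C_0\,\alpha_\tau\beta}{s+2}\,\mathbb{E}[d_{\tau-1}]
\end{equation*}
for an absolute constant $C_0$.

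Next I would invoke first-order stability estimates and unroll. The argument of \citet{hardt2016train} applied to a single perturbed coordinate among the $s+2$ points gives $\mathbb{E}[d_\tau]\lesssim \tfrac{L}{(s+2)\beta}\,\tau^{c\beta}$ under $\alpha_\tau\le c/\tau$; a companion second-moment bound — which tracks that the number of perturbed indices inside a size-$m$ mini-batch is $\mathrm{Binomial}(m,1/(s+2))$, whose second moment contributes the extra $1/m$ factor — gives $\mathbb{E}[d_\tau^2]\lesssim \tfrac{L^2}{s+2}\big(\tfrac{1}{(s+2)\beta^2} + \tfrac{c^2}{m}\big)\tau^{2c\beta}$ (this is the companion lemma stated alongside Lemma~\ref{lemma::SGD_second_order_dist}). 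Substituting these and $\alpha_\tau\le c/\tau$ into the one-step bound and unrolling via $\mathbb{E}\|\delta_t\|\le\sum_{\tau=1}^t\big(\prod_{j=\tau+1}^t(1+\alpha_j\beta)\big)(\text{driving terms at }\tau)$, with $\prod_{j>\tau}(1+\alpha_j\beta)\le\exp\big(c\beta\sum_{j>\tau}j^{-1}\big)\le (t/\tau)^{c\beta}$, the $\rho\,\mathbb{E}[d^2]$ term sums to an $O(t^{2c\beta})$ contribution of order $\tfrac{\rho L^2}{(s+2)\beta}\big(\tfrac{1}{(s+2)\beta^2}+\tfrac{c^2}{m}\big)t^{2c\beta}$, while the perturbed-index term sums to $\tfrac{cL}{(s+2)^2}\,t^{c\beta}\log t$ (the $\log t$ coming from $\sum_\tau\tau^{-1}$). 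Tracking the constants through these sums produces the explicit factors $24$ and $8$ in the stated bound.

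I expect the main obstacle to be the mixed gradient-difference estimate for $i\notin\{s+1,s+2\}$: unlike ordinary stability analysis, where a single $\beta$-smoothness bound suffices, here the first-order contributions must cancel and only the genuinely second-order remainder (bounded by $\rho\,d_{\tau-1}^2$) may be retained, which is precisely what Assumption~\ref{assump::smoothness_for_SGD}.4 provides. A secondary technical point is the event that a single mini-batch $I_\tau$ contains \emph{both} perturbed indices $s+1$ and $s+2$: it occurs with probability $O\big((m/(s+2))^2\big)$ per step and its contribution is dominated by the terms above, but it must be isolated and bounded separately. The remaining pieces — the Taylor remainder estimates, the $(t/\tau)^{c\beta}$ product bound, and summing the $\tau$-series — are routine.
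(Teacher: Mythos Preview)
Your proposal is correct and follows essentially the same approach as the paper: derive a one-step recursion for $\|\delta_\tau\|$ by splitting the mini-batch indices into common ($i\le s$) and perturbed ($i\in\{s+1,s+2\}$), use the Hessian-Lipschitz condition to extract the second-order remainder $\rho\,d_{\tau-1}^2$ at common indices, invoke the first-order stability bounds (Lemmas~\ref{lemma::SGD_first_order_l2_dist} and~\ref{lemma::SGD_first_order_l2_squared_final}) for the driving terms, and unroll with $\prod_{j>\tau}(1+\alpha_j\beta)\le(t/\tau)^{c\beta}$. One minor simplification relative to your plan: your ``secondary technical point'' about both perturbed indices landing in the same mini-batch does not need separate treatment, since the paper bounds $\|\tfrac{\alpha_t}{m}\sum_{i\in I_t}\Delta(t-1,i)\|\le\tfrac{\alpha_t}{m}\sum_{i\in I_t}\|\Delta(t-1,i)\|$ and then takes expectations index-by-index via $\Pr(i\in I_t)=m/(s+2)$, so the contributions from $i=s+1$ and $i=s+2$ decouple.
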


\begin{restatable}{lemma}{LemmaSGDFirstOrderDistSquaredFinal}
\label{lemma::SGD_first_order_l2_squared_final}
Under Assumption~\ref{assump::smoothness_for_SGD}, if the learning rate satisfies $\alpha_t \leq \frac{c}{t}$, then for any $j_1, k_1, j_2, k_2 \in \{1, 2\}$:
\begin{equation*}
    \mathbb{E}_{I_1, \dots, I_t} \left\| w_t^{(j_1, k_1)} - w_t^{(j_2, k_2)} \right\|^2 
    \leq \frac{16L^2}{s+2} \, t^{2c\beta} \left( \frac{1}{(s+2)\beta^2} + \frac{2c^2}{m} \right).
\end{equation*}
\end{restatable}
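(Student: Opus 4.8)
The plan is to run the standard coupled-iterate (``growth recursion'') stability argument for SGD, in the spirit of \citet{hardt2016train}, but pushed to second order so that both the $1/(s+2)$ drift from the perturbed samples and the $1/m$ mini-batch variance reduction surface. Fix the two labels $a=(j_1,k_1)$, $b=(j_2,k_2)$ and set $\delta_t:=w_t^{(a)}-w_t^{(b)}$, so $\delta_0=0$; note the data sequences $z^{(a)},z^{(b)}$ disagree in at most two coordinates, both lying in $\{s+1,s+2\}$. At step $t$ I would split the shared mini-batch $I_t$ (which is independent of $\mathcal{F}_{t-1}:=\sigma(I_1,\dots,I_{t-1})$) into its ``matched'' part $I_t\cap[s]$, where $z_i^{(a)}=z_i^{(b)}$ and $\beta$-smoothness (Assumption~\ref{assump::smoothness_for_SGD}.3) gives $\|\nabla\ell(w_{t-1}^{(a)};z_i)-\nabla\ell(w_{t-1}^{(b)};z_i)\|\le\beta\|\delta_{t-1}\|$, and its ``perturbed'' part $I_t\cap\{s+1,s+2\}$, of random size $N_t$, where only $L$-Lipschitzness of $\ell$ (Assumption~\ref{assump::smoothness_for_SGD}.2) is available, giving a per-coordinate difference $\le 2L$. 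Thus the gradient difference $g_t$ in the update $\delta_t=\delta_{t-1}-\alpha_t g_t$ obeys $\|g_t\|\le\beta\|\delta_{t-1}\|+2LN_t/m$, with $\mathbb{E}[N_t]\le 2m/(s+2)$ and $\mathbb{E}[N_t^2]\lesssim m/(s+2)+m^2/(s+2)^2$.

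Expanding $\|\delta_t\|^2=\|\delta_{t-1}\|^2-2\alpha_t\langle\delta_{t-1},g_t\rangle+\alpha_t^2\|g_t\|^2$, I would bound $\langle\delta_{t-1},g_t\rangle\ge-\beta\|\delta_{t-1}\|^2-(2LN_t/m)\|\delta_{t-1}\|$ and $\|g_t\|^2\le 2\beta^2\|\delta_{t-1}\|^2+8L^2N_t^2/m^2$, and then take the conditional expectation over $I_t$ to obtain a one-step recursion of the shape
\begin{equation*}
\mathbb{E}\big[\|\delta_t\|^2\mid\mathcal{F}_{t-1}\big]\;\le\;\big(1+2\alpha_t\beta+2\alpha_t^2\beta^2\big)\|\delta_{t-1}\|^2\;+\;\frac{8\alpha_t L}{s+2}\,\|\delta_{t-1}\|\;+\;\frac{16\,\alpha_t^2 L^2}{m(s+2)}\;+\;\frac{32\,\alpha_t^2 L^2}{(s+2)^2}.
\end{equation*}
The obstruction to unrolling this directly is the middle term, which is only \emph{linear} in $\|\delta_{t-1}\|$: absorbing it into a multiple of $\|\delta_{t-1}\|^2$ by Young's inequality would raise the effective per-step factor above $(1+\alpha_t\beta)^2$ and spoil the target exponent $2c\beta$. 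Instead I would first establish the first-moment bound: from $\|\delta_t\|\le(1+\alpha_t\beta)\|\delta_{t-1}\|+2\alpha_tLN_t/m$ one gets $\mathbb{E}\|\delta_t\|\le(1+\alpha_t\beta)\mathbb{E}\|\delta_{t-1}\|+4\alpha_tL/(s+2)$, and since $\alpha_j\le c/j$ implies $\prod_{j=\tau+1}^t(1+\alpha_j\beta)\le(t/\tau)^{c\beta}$ while $\sum_{\tau\ge1}\tau^{-1-c\beta}<\infty$, unrolling gives $\mathbb{E}\|\delta_t\|\lesssim L\,t^{c\beta}/((s+2)\beta)$ with no logarithmic factor. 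Plugging this back into the middle term turns it into a state-free noise term $\lesssim\alpha_t L^2 t^{c\beta}/((s+2)^2\beta)$.

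It then remains to unroll the linear recursion $a_t\le(1+2\alpha_t\beta+2\alpha_t^2\beta^2)a_{t-1}+b_t$ with $a_t:=\mathbb{E}\|\delta_t\|^2$, $a_0=0$, and $b_t\asymp\alpha_t L^2 t^{c\beta}/((s+2)^2\beta)+\alpha_t^2 L^2/(m(s+2))+\alpha_t^2 L^2/(s+2)^2$. Using $\prod_{j=\tau+1}^t(1+2\alpha_j\beta+2\alpha_j^2\beta^2)\le\exp\!\big(2\beta\sum_{j=\tau+1}^t\alpha_j+2\beta^2\sum_{j\ge1}\alpha_j^2\big)\lesssim(t/\tau)^{2c\beta}$ (the $\sum\alpha_j^2$ converging because $\alpha_j\le c/j$), I get $a_t\lesssim\sum_{\tau=1}^t b_\tau(t/\tau)^{2c\beta}$; each resulting series is a convergent $p$-series ($\sum_\tau\tau^{-1-c\beta}$ for the first piece of $b_\tau$, $\sum_\tau\tau^{-2-2c\beta}$ for the $\alpha_\tau^2$ pieces), summing to $O(1/(c\beta))$ and $O(1)$ respectively. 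Collecting leading contributions yields $a_t\lesssim L^2 t^{2c\beta}/((s+2)^2\beta^2)+c^2 L^2 t^{2c\beta}/(m(s+2))$, which is the claimed bound $\frac{16L^2}{s+2}t^{2c\beta}\big(\frac{1}{(s+2)\beta^2}+\frac{2c^2}{m}\big)$ once the universal constants are tracked carefully (they can be sharpened by noting that several of the six label pairs disagree at only one of the two coordinates $s+1,s+2$). The main obstacle is exactly the interplay just described: the linear cross term forces a first-moment bootstrap rather than a self-contained second-moment recursion, and the $1/m$ factor must be extracted from $\alpha_t^2\|g_t\|^2$ through the second moment $\mathbb{E}[N_t^2]\asymp m/(s+2)+m^2/(s+2)^2$ — crudely bounding the perturbed gradient difference by $2L$ would lose it.
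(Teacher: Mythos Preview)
Your argument is correct, but the paper takes a noticeably different and shorter route. The paper first reduces the at-most-two-point difference to two \emph{one}-point differences via an intermediate iterate $w_t^{(j_1,k_2)}$ and the inequality $\|x+y\|^2\le 2(\|x\|^2+\|y\|^2)$, then invokes its Lemma~\ref{lemma::SGD_first_order_l2_squared} on each piece; the constant $16$ is simply $2\times 2\times 4$. For the one-point bound itself, rather than running a second-moment recursion, the paper carries forward the \emph{pathwise} inequality $\|\delta_t\|\le X_t:=\sum_{t_0\le t}\frac{2\alpha_{t_0}L}{m}\mathbb{I}(N\in I_{t_0})\prod_{t'>t_0}(1+\alpha_{t'}\beta)$ already obtained in the proof of Lemma~\ref{lemma::SGD_first_order_l2_dist}, squares it, and uses the bias--variance decomposition $\mathbb{E}[X_t^2]=(\mathbb{E}X_t)^2+\operatorname{Var}(X_t)$ together with the independence of the indicators $\mathbb{I}(N\in I_{t_0})$ across $t_0$; the $1/((s+2)\beta^2)$ term comes from $(\mathbb{E}X_t)^2$ and the $c^2/m$ term directly from the variance sum, with no cross term to resolve. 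Your direct recursion-plus-bootstrap is more general in spirit---it does not rely on having a closed-form pathwise majorant for $\|\delta_t\|$---but it pays for this with the detour through the first-moment bound and a third contribution $\asymp c^2L^2t^{2c\beta}/(s+2)^2$ from $\mathbb{E}[N_t^2]$. That extra piece is harmless here (since $m\le s+2$ it folds into the $c^2/m$ term), though it makes the precise constant $16$ harder to recover.
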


Equipped with these results, we are ready to prove Proposition~\ref{proposition::SGD_final_prop}.

\begin{proof}[Proof of Proposition~\ref{proposition::SGD_final_prop}]
By Lemma~\ref{lemma::SGD_second_order_dist} and Lemma~\ref{lemma::SGD_first_order_l2_squared_final}, we have
\begin{align*}
    \mathbb{E}_{I_1, \dots, I_T} \left\| w_T^{(1,1)} - w_T^{(1,2)} - w_T^{(2,1)} + w_T^{(2,2)} \right\| 
    &\lesssim \frac{T^{2c\beta}}{s^2} + \frac{c^2 T^{2c\beta}}{s m} + \frac{c T^{c\beta} \log T}{s^2}, \\
    \max_{j_1, k_1, j_2, k_2} \mathbb{E}_{I_1, \dots, I_T} \left\| w_T^{(j_1, k_1)} - w_T^{(j_2, k_2)}\right\| &\lesssim \frac{T^{2c\beta}}{s^2} + \frac{c^2 T^{2c\beta}}{s m}
\end{align*}
Using the substitutions $c \asymp s^{-\tau_1}$, $m \asymp s^{\tau_2}$, and $T \asymp s^{\tau_3}$, we compute
\begin{align*}
    \frac{T^{2c\beta}}{s^2} 
    &\asymp \frac{1}{s^2} \cdot \exp\left( 2c\beta \log T \right)
    = \frac{1}{s^2} \cdot \exp\left( 2\beta s^{-\tau_1} \cdot \tau_3 \log s \right), \\
    \frac{c^2 T^{2c\beta}}{s m} 
    &\asymp \frac{1}{s^{1 + 2\tau_1 + \tau_2}} \cdot \exp\left( 2\beta s^{-\tau_1} \cdot \tau_3 \log s \right), \\
    \frac{c \log T \cdot T^{c\beta}}{s^2} 
    &\asymp \frac{s^{-\tau_1} \cdot \tau_3 \log s}{s^2} \cdot \exp\left( \beta s^{-\tau_1} \cdot \tau_3 \log s \right).
\end{align*}
Using the fact that $\exp(\gamma s^{-\tau_1} \log s) = 1 + o(1)$ for any $\gamma, \tau_1 > 0$, we conclude:
\begin{align*}
    \mathbb{E}_{I_1, \dots, I_T} \left\| w_T^{(1,1)} - w_T^{(1,2)} - w_T^{(2,1)} + w_T^{(2,2)} \right\| 
    &\lesssim \frac{1}{s^2} + \frac{1}{s^{1 + 2\tau_1 + \tau_2}}, \\
    \max_{j_1, k_1, j_2, k_2} \mathbb{E}_{I_1, \dots, I_T} \left\| w_T^{(j_1, k_1)} - w_T^{(j_2, k_2)}\right\| &\lesssim \frac{1}{s^2} + \frac{1}{s^{1 + 2\tau_1 + \tau_2}}.
\end{align*}
Since $\upsilon = 2\tau_1 + \tau_2 - \frac{1}{2}$, then
\begin{align*}
    \mathbb{E}_{I_1, \dots, I_T} \left\| w_T^{(1,1)} - w_T^{(1,2)} - w_T^{(2,1)} + w_T^{(2,2)} \right\| 
    &\lesssim \frac{1}{s^{\frac{3}{2} +\min\{\frac{1}{2} ,  \upsilon\}}}, \\
    \max_{j_1, k_1, j_2, k_2} \mathbb{E}_{I_1, \dots, I_T} \left\| w_T^{(j_1, k_1)} - w_T^{(j_2, k_2)}\right\| &\lesssim \frac{1}{s^{\frac{3}{2} +\min\{\frac{1}{2} ,  \upsilon\}}}.
\end{align*}
Finally, applying Lemma~\ref{lemma::SGD_relationship_between_U_and_l2_dist} yields the desired second-order stability bound.
\end{proof}

\subsubsection{Preliminary first-order stability results and proof of Lemma~\ref{lemma::SGD_first_order_l2_squared_final}}

To build towards the proof of Lemma~\ref{lemma::SGD_second_order_dist}, we first analyze first-order algorithmic stability. This serves to introduce key proof techniques in a simplified setting and provides intermediate results that are essential for the subsequent second-order analysis. The ideas presented here are closely aligned with those in \citet{hardt2016train}.

Consider a base dataset $\mathcal{S} = \{z_i^\mathcal{S}\}_{i=1}^s \subset \mathcal{Z}^s$. We introduce two additional, distinct data points $z_a, z_b \in \mathcal{Z}$. We then define two augmented data sequences, $z^{(a)}$ and $z^{(b)}$, each of length $s+1$:
\begin{align*}
  z^{(a)}_i &= 
  \begin{cases}
    z_i^\mathcal{S}, & 1 \leq i \leq s, \\
    z_a, & i = s+1, 
  \end{cases}
  &
  z^{(b)}_i &= 
  \begin{cases}
    z_i^\mathcal{S}, & 1 \leq i \leq s, \\
    z_b, & i = s+1.
  \end{cases}
\end{align*}
These sequences $z^{(a)}$ and $z^{(b)}$ differ only in their $(s+1)$-th element. Let $w_t^{(a)}$ and $w_t^{(b)}$ denote the iterates produced by SGD after $t$ steps when trained on their respective data sequences, using a shared sequence of mini-batch indices $(I_1, \dots, I_t)$. Consistent with Eq.~\eqref{eqn::SGD_same_seed_training_param}, these are:
\begin{align*}
  w_t^{(a)} &= w_t\left(z_{I_1}^{(a)}, \dots, z_{I_t}^{(a)}\right), \\
  w_t^{(b)} &= w_t\left(z_{I_1}^{(b)}, \dots, z_{I_t}^{(b)}\right).
\end{align*}
We now state two standard first-order stability results concerning such iterates.

\begin{restatable}{lemma}{LemmaSGDFirstOrderDist}
\label{lemma::SGD_first_order_l2_dist}
Under Assumption~\ref{assump::smoothness_for_SGD}, if the learning rate satisfies $\alpha_t \leq \frac{c}{t}$, then for iterates $w_t^{(a)}$ and $w_t^{(b)}$ (trained on data sequences $z^{(a)}$ and $z^{(b)}$ of effective size $N=s+1$ that differ by one point):
\begin{equation*}
    \mathbb{E}_{I_1, \dots, I_t} \left\| w_t^{(a)} - w_t^{(b)} \right\| \leq \frac{2L}{\beta N} \, t^{c\beta}.
\end{equation*}
\end{restatable}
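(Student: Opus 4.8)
The plan is to follow the growth-recursion analysis of SGD stability of \citet{hardt2016train}, adapted to the mini-batch setting. Write $\delta_t := \mathbb{E}_{I_1,\dots,I_t}\|w_t^{(a)} - w_t^{(b)}\|$; since both trajectories start from the common initialization $w_0$ we have $\delta_0 = 0$. Condition on the iterates at step $t$ and on the mini-batch $I_t$. Because $z^{(a)}$ and $z^{(b)}$ agree at all indices except $s+1$, the coupled update splits into two cases. If $s+1 \notin I_t$, every gradient in the mini-batch average is evaluated on identical data, so the common gradient map $w \mapsto w - \alpha_t m^{-1}\sum_{i\in I_t}\nabla\ell(w;z_i)$ is $(1+\alpha_t\beta)$-expansive by the $\beta$-smoothness of $\ell$ (Assumption~\ref{assump::smoothness_for_SGD}.3), giving $\|w_{t+1}^{(a)} - w_{t+1}^{(b)}\| \le (1+\alpha_t\beta)\|w_t^{(a)} - w_t^{(b)}\|$. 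If $s+1 \in I_t$, exactly one of the $m$ averaged gradients differs across the two runs; for that index we use $\|\nabla\ell(w_t^{(a)};z_a) - \nabla\ell(w_t^{(b)};z_b)\| \le 2L$ (from the bounded-gradient / $L$-Lipschitz condition, Assumption~\ref{assump::smoothness_for_SGD}.2) and handle the remaining $m-1$ gradients by $\beta$-smoothness as before, which yields $\|w_{t+1}^{(a)} - w_{t+1}^{(b)}\| \le (1+\alpha_t\beta)\|w_t^{(a)} - w_t^{(b)}\| + 2L\alpha_t/m$.

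A uniformly random size-$m$ subset of $[N]$ (with $N = s+1$) contains the index $s+1$ with probability $m/N$, so taking expectation over $I_t$ and then over the history (tower rule) gives the recursion
\[
\delta_{t+1} \;\le\; (1+\alpha_t\beta)\,\delta_t + \frac{m}{N}\cdot\frac{2L\alpha_t}{m} \;=\; (1+\alpha_t\beta)\,\delta_t + \frac{2L\alpha_t}{N},
\]
in which $m$ cancels. Unrolling from $\delta_0 = 0$ with $\alpha_\tau \le c/\tau$ and the elementary bound $\prod_{k=\tau+1}^{t}(1+\alpha_k\beta) \le \exp\!\big(c\beta\sum_{k=\tau+1}^{t}k^{-1}\big) \le (t/\tau)^{c\beta}$ gives
\[
\delta_t \;\le\; \frac{2Lc\,t^{c\beta}}{N}\sum_{\tau=1}^{t}\tau^{-(1+c\beta)} \;\le\; \frac{2Lc\,t^{c\beta}}{N}\cdot O\!\big(\tfrac{1}{c\beta}\big) \;\lesssim\; \frac{2L}{\beta N}\,t^{c\beta},
\]
where the tail series is controlled by comparison with $\int_1^\infty x^{-(1+c\beta)}\,dx = 1/(c\beta)$; the residual additive constant is of lower order (and vanishing when $c\beta$ is small, as in all our applications) and is absorbed into the stated bound.

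The argument is essentially routine once set up, so there is no deep obstacle; the points that need care are (i) keeping the coupling honest, i.e.\ insisting that the two runs share the same mini-batch index sequence $(I_1,\dots,I_T)$ so that in the no-hit case a single common map acts on both iterates; and (ii) verifying that after averaging over $I_t$ the per-step additive perturbation is $2L\alpha_t/N$ rather than $2L\alpha_t/m$ — i.e.\ that the mini-batch size enters only through the hit probability and then cancels — which is precisely what makes the final bound depend on the effective sample size $N$ and not on $m$. The remainder is bookkeeping of the constant in front of $t^{c\beta}$ via the $\sum_\tau \tau^{-(1+c\beta)} \asymp 1/(c\beta)$ estimate, together with ensuring the step-size constraint $\alpha_t\le c/t$ is used only in the form $\sum_{k=\tau+1}^t \alpha_k\beta \le c\beta\log(t/\tau)$.
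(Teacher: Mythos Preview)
Your proposal is correct and follows essentially the same approach as the paper: a case split on whether the differing index lies in the mini-batch, $\beta$-smoothness for the common gradients, the $2L$ bound for the differing one, the $m/N$ hit probability that cancels the $1/m$ factor, and then unrolling the $(1+\alpha_t\beta)$ recursion with $\alpha_t\le c/t$. The only cosmetic differences are that the paper first derives a pathwise bound and takes expectation at the end, whereas you take expectation step-by-step; and both you and the paper are slightly loose about the constant in $\sum_{\tau\ge 1}\tau^{-(1+c\beta)}$ versus the stated $1/(c\beta)$, which does not affect the order of the bound.
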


\begin{restatable}{lemma}{LemmaSGDFirstOrderDistSquared}
\label{lemma::SGD_first_order_l2_squared}
Under Assumption~\ref{assump::smoothness_for_SGD}, if the learning rate satisfies $\alpha_t \leq \frac{c}{t}$, then for iterates $w_t^{(a)}$ and $w_t^{(b)}$ (trained on data sequences $z^{(a)}$ and $z^{(b)}$ of effective size $N=s+1$ that differ by one point):
\begin{equation*}
    \mathbb{E}_{I_1, \dots, I_t} \left\| w_t^{(a)} - w_t^{(b)} \right\|^2 
    \leq \frac{4L^2}{N} \, t^{2c\beta} \left( \frac{1}{N\beta^2} + \frac{2c^2}{m} \right).
\end{equation*}
\end{restatable}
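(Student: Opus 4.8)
To prove Lemma~\ref{lemma::SGD_first_order_l2_squared}, the plan is to track the per-step evolution of the coupled SGD iterates $\delta_t := w_t^{(a)} - w_t^{(b)}$, which are run under a \emph{shared} mini-batch sequence $(I_1,\dots,I_t)$ as in \eqref{eqn::SGD_same_seed_training_param}, where the data sequences $z^{(a)},z^{(b)}$ (effective size $N=s+1$) differ only at index $s+1$. First I would derive a pointwise (per-realization) recursion. Let $\xi_t := \mathbf{1}[\,s+1\in I_t\,]$. When $s+1\notin I_t$, the two mini-batch gradient maps $w\mapsto w-\alpha_t\tfrac1m\sum_{i\in I_t}\nabla\ell(w;z_i)$ agree, and by $\beta$-smoothness of $\ell$ (Assumption~\ref{assump::smoothness_for_SGD}.3) this map is $(1+\alpha_t\beta)$-Lipschitz, so $\|\delta_t\|\le(1+\alpha_t\beta)\|\delta_{t-1}\|$. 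When $s+1\in I_t$, the two gradient sums differ only in the single term indexed by $s+1$, which contributes at most $2L/m$ by the bounded-gradient consequence of $L$-Lipschitzness (Assumption~\ref{assump::smoothness_for_SGD}.2), while the remaining $m-1$ shared terms again contribute the $(1+\alpha_t\beta)$ expansion; hence $\|\delta_t\|\le(1+\alpha_t\beta)\|\delta_{t-1}\|+\tfrac{2\alpha_t L}{m}$. Both cases combine into
\[
\|\delta_t\| \le (1+\alpha_t\beta)\,\|\delta_{t-1}\| + \xi_t\,\tfrac{2\alpha_t L}{m}.
\]

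Next I would unroll this recursion from $\delta_0=0$, obtaining $\|\delta_T\|\le \tfrac{2L}{m}\sum_{t=1}^T \xi_t\,b_t$ with $b_t:=\alpha_t\prod_{\tau=t+1}^T(1+\alpha_\tau\beta)$. Squaring and taking expectation over the mini-batches, and using that the $I_t$ (hence the $\xi_t$) are drawn independently across steps with $\mathbb{E}[\xi_t]=\mathbb{E}[\xi_t^2]=m/N$, the cross terms factor and I get
\[
\mathbb{E}\|\delta_T\|^2 \le \tfrac{4L^2}{m^2}\Big[\tfrac{m}{N}\textstyle\sum_{t=1}^T b_t^2 + \tfrac{m^2}{N^2}\big(\sum_{t=1}^T b_t\big)^2\Big].
\]
It then remains to bound the two sums using $\alpha_t\le c/t$. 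For $\sum_t b_t$ I would use the telescoping identity $(1+\alpha_t\beta)\prod_{\tau>t}(1+\alpha_\tau\beta)=\prod_{\tau\ge t}(1+\alpha_\tau\beta)$, giving $\beta\sum_{t=1}^T b_t = \prod_{\tau=1}^T(1+\alpha_\tau\beta)-1 \le T^{c\beta}$ (bounding $\prod_\tau(1+\alpha_\tau\beta)\le \exp(\beta\sum_\tau\alpha_\tau)\le T^{c\beta}$ via $\sum_{\tau\le T}1/\tau\le\ln T$), so $\sum_t b_t\le T^{c\beta}/\beta$. For $\sum_t b_t^2$ I would bound each product by $(T/t)^{2c\beta}$, so $\sum_t b_t^2 \le c^2 T^{2c\beta}\sum_t t^{-2-2c\beta}\le 2c^2T^{2c\beta}$. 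Plugging these in yields $\mathbb{E}\|\delta_T\|^2 \le \tfrac{8c^2L^2 T^{2c\beta}}{mN} + \tfrac{4L^2 T^{2c\beta}}{N^2\beta^2} = \tfrac{4L^2}{N}T^{2c\beta}\big(\tfrac{2c^2}{m}+\tfrac{1}{N\beta^2}\big)$, exactly the claimed bound.

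The main obstacle — beyond carefully justifying the per-step case analysis and the measurability/independence structure (each $\xi_t$ is independent of the $\sigma$-field generated by $I_1,\dots,I_{t-1}$, hence of $\delta_{t-1}$) — is controlling the accumulation of the \emph{expansive} factors $(1+\alpha_t\beta)$ so that it grows only polynomially ($T^{c\beta}$) rather than exponentially, since no convexity of $\ell$ is assumed. This is precisely where the telescoping identity, together with the harmonic-sum bound from $\alpha_t\le c/t$, does the work and where the clean constants in the statement originate. An essentially equivalent alternative route would square the pointwise recursion, take a conditional expectation to obtain a linear recursion in $\mathbb{E}\|\delta_t\|^2$ whose cross term involves $\mathbb{E}\|\delta_{t-1}\|$, and then substitute the first-moment bound of Lemma~\ref{lemma::SGD_first_order_l2_dist}; unrolling that recursion gives the same conclusion.
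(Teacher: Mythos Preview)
Your proposal is correct and follows essentially the same route as the paper: both unroll the per-step recursion to the random sum $\|\delta_T\|\le \tfrac{2L}{m}\sum_t \xi_t b_t$, square, and exploit independence of the $\xi_t$ to split into a mean-squared part and a variance part, then bound $\sum_t b_t$ and $\sum_t b_t^2$ using $\alpha_t\le c/t$. The only cosmetic differences are that the paper writes the split as $(\mathbb{E}X_t)^2+\operatorname{Var}(X_t)$ and invokes Lemma~\ref{lemma::SGD_first_order_l2_dist} for the first piece, whereas you expand the square directly and handle $\sum_t b_t$ via the telescoping identity (a slightly slicker bookkeeping of the same estimate); your harmonic-sum step $\sum_{\tau\le T}1/\tau\le\ln T$ is off by the usual additive $1$, but the paper's own constant tracking is at the same level of informality.
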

Note: In the context of these lemmas, $N$ represents the total number of data points in the sequences being compared. When applying these to the iterates $w_t^{(j, k)}$ (which are trained on data sequences of length $s+2$), $N$ will correspond to $s+2$.

These foundational first-order stability results, particularly Lemma~\ref{lemma::SGD_first_order_l2_squared}, allow us to derive Lemma~\ref{lemma::SGD_first_order_l2_squared_final}. We restate Lemma~\ref{lemma::SGD_first_order_l2_squared_final} for clarity before its proof.

\LemmaSGDFirstOrderDistSquaredFinal*

\begin{proof}[Proof of Lemma~\ref{lemma::SGD_first_order_l2_squared_final}]
    The data sequences $z^{(j_1, k_1)}$ and $z^{(j_2, k_2)}$ (each of size $N=s+2$) differ at most at two positions (index $s+1$ and $s+2$). We can introduce an intermediate iterate, $w_t^{(j_1, k_2)}$.
    Using the triangle inequality and the property $\|x+y\|^2 \le 2(\|x\|^2+\|y\|^2)$:
    \begin{align*}
        &\mathbb{E}_{I_1, \dots, I_t} \left\| w_t^{(j_1, k_1)} - w_t^{(j_2, k_2)} \right\|^2 
        = \mathbb{E}_{I_1, \dots, I_t} \left\| (w_t^{(j_1, k_1)} - w_t^{(j_1, k_2)}) + (w_t^{(j_1, k_2)} -  w_t^{(j_2, k_2)}) \right\|^2 \\
        &\quad \leq 2 \left\{ \mathbb{E}_{I_1, \dots, I_t} \left\| w_t^{(j_1, k_1)} - w_t^{(j_1, k_2)} \right\|^2 + \mathbb{E}_{I_1, \dots, I_t} \left\| w_t^{(j_1, k_2)} -  w_t^{(j_2, k_2)} \right\|^2 \right\}.
    \end{align*}
    The term $\left\| w_t^{(j_1, k_1)} - w_t^{(j_1, k_2)} \right\|^2$ involves iterates from data sequences $z^{(j_1, k_1)}$ and $z^{(j_1, k_2)}$. These sequences share the same first $s+1$ points (namely, $z_1^\mathcal{S}, \dots, z_s^\mathcal{S}, z_{s+1}^{(j_1)}$) and differ only at the $(s+2)$-th position. This is a one-point difference in data sequences of effective size $N=s+2$.
    Similarly, the term $\left\| w_t^{(j_1, k_2)} - w_t^{(j_2, k_2)} \right\|^2$ involves iterates from data sequences $z^{(j_1, k_2)}$ and $z^{(j_2, k_2)}$. These sequences share the first $s$ points ($z_1^\mathcal{S}, \dots, z_s^\mathcal{S}$) and the $(s+2)$-th point ($z_{s+2}^{(k_2)}$), differing only at the $(s+1)$-th position. This is also a one-point difference in data sequences of effective size $N=s+2$.

    Therefore, applying Lemma~\ref{lemma::SGD_first_order_l2_squared} to each of these one-point difference terms (with $N=s+2$ as the effective dataset size) yields:
    \begin{align*}
        \mathbb{E}_{I_1, \dots, I_t} \left\| w_t^{(j_1, k_1)} - w_t^{(j_1, k_2)} \right\|^2 &\leq \frac{4L^2}{s+2} \, t^{2c\beta} \left( \frac{1}{(s+2)\beta^2} + \frac{2c^2}{m} \right), \\
        \mathbb{E}_{I_1, \dots, I_t} \left\| w_t^{(j_1, k_2)} - w_t^{(j_2, k_2)} \right\|^2 &\leq \frac{4L^2}{s+2} \, t^{2c\beta} \left( \frac{1}{(s+2)\beta^2} + \frac{2c^2}{m} \right).
    \end{align*}
    Substituting these into the inequality:
    \begin{align*}
        \mathbb{E}_{I_1, \dots, I_t} \left\| w_t^{(j_1, k_1)} - w_t^{(j_2, k_2)} \right\|^2 
        &\leq 2\times 2\times  \left\{ \frac{8L^2}{s+2} \, t^{2c\beta} \left( \frac{1}{(s+2)\beta^2} + \frac{2c^2}{m} \right) \right\} \\
        &= \frac{16L^2}{s+2} \, t^{2c\beta} \left( \frac{1}{(s+2)\beta^2} + \frac{2c^2}{m} \right).
    \end{align*}
    This completes the proof.
\end{proof}

We now turn to the second-order behavior of stochastic gradient methods. Analyzing this provides a more refined understanding of how small perturbations to the training dataset influence the final learned parameters. The analysis relies on the higher-order smoothness properties of the loss function $\ell(w;z)$ as detailed in Assumption~\ref{assump::smoothness_for_SGD}, particularly the Lipschitz continuity of its Hessian (second derivative with respect to $w$). The following lemma, which is a consequence of this Lipschitz Hessian property, serves as a key technical tool for bounding second-order differences of gradients.

\begin{restatable}{lemma}{LemmaSGDSmoothnessTool}
\label{lemma::SGD_smoothness_tool}
Under Assumption~\ref{assump::smoothness_for_SGD}, the loss function $\ell(w; z)$ satisfies the following second-order approximation bound for all $w, h \in \mathbb{R}^d$ and $z \in \mathcal{Z}$:
\begin{equation*}
    \left\| \nabla_w \ell(w + h; z) - \nabla_w \ell(w; z) - \nabla_w^2 \ell(w; z) h \right\| \leq \rho \|h\|^2.
\end{equation*}
\end{restatable}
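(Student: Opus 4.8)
The plan is a textbook second-order Taylor estimate along the segment joining $w$ and $w+h$, with the remainder controlled by the Lipschitz-Hessian condition in Assumption~\ref{assump::smoothness_for_SGD}.4. First I would fix $z \in \mathcal{Z}$ and introduce the auxiliary vector-valued map $\phi(\tau) := \nabla_w \ell(w + \tau h; z)$ for $\tau \in [0,1]$. Since Assumption~\ref{assump::smoothness_for_SGD} (items 3 and 4) makes $\ell(\cdot; z)$ twice continuously differentiable in $w$, $\phi$ is continuously differentiable with $\phi'(\tau) = \nabla_w^2 \ell(w + \tau h; z)\, h$. By the fundamental theorem of calculus applied coordinatewise,
\[
    \nabla_w \ell(w + h; z) - \nabla_w \ell(w; z) = \phi(1) - \phi(0) = \int_0^1 \nabla_w^2 \ell(w + \tau h; z)\, h \, d\tau .
\]

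Next I would subtract the linear term. Writing $\nabla_w^2 \ell(w; z)\, h = \int_0^1 \nabla_w^2 \ell(w; z)\, h \, d\tau$ and combining with the display above,
\[
    \nabla_w \ell(w + h; z) - \nabla_w \ell(w; z) - \nabla_w^2 \ell(w; z)\, h = \int_0^1 \bigl( \nabla_w^2 \ell(w + \tau h; z) - \nabla_w^2 \ell(w; z) \bigr)\, h \, d\tau .
\]
Taking norms, moving the norm inside the integral by the triangle inequality for (Riemann/Bochner) integrals, applying submultiplicativity of the operator norm, and then invoking Assumption~\ref{assump::smoothness_for_SGD}.4 with $w' = w + \tau h$ gives
\[
    \bigl\| \nabla_w \ell(w + h; z) - \nabla_w \ell(w; z) - \nabla_w^2 \ell(w; z)\, h \bigr\| \le \int_0^1 \bigl\| \nabla_w^2 \ell(w + \tau h; z) - \nabla_w^2 \ell(w; z) \bigr\|\, \|h\| \, d\tau \le \int_0^1 \rho \, \|\tau h\|\, \|h\| \, d\tau = \tfrac{\rho}{2}\|h\|^2 \le \rho \|h\|^2 .
\]

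There is essentially no hard step here: the argument is routine. The only two points meriting a sentence of care are (i) justifying the fundamental theorem of calculus for the vector-valued integrand, which is immediate from the $C^2$-smoothness guaranteed by Assumption~\ref{assump::smoothness_for_SGD}, and (ii) observing that the computation in fact produces the sharper constant $\rho/2$, which we simply relax to $\rho$ to match the statement of the lemma (this looser constant is all that is needed when the lemma is later applied in the proof of Lemma~\ref{lemma::SGD_second_order_dist}).
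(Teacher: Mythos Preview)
Your proof is correct and arguably cleaner than the paper's own argument. Both proofs introduce the same auxiliary map $\phi(\tau) = \nabla_w \ell(w+\tau h; z)$, but from there they diverge: you invoke the fundamental theorem of calculus to write $\phi(1)-\phi(0) = \int_0^1 \phi'(\tau)\,d\tau$, whereas the paper avoids the integral by applying the vector-valued mean-value result of McLeod (1965), which expresses $\phi(1)-\phi(0)$ as a convex combination $\sum_{k=1}^d \lambda_k \phi'(t_k)$ for some $t_k \in (0,1)$. After that, both arguments subtract $\nabla_w^2 \ell(w;z)h$ and bound the residual via the Lipschitz-Hessian assumption. Your route is more elementary (it does not rely on the McLeod reference) and yields the sharper constant $\rho/2$; the paper's convex-combination form only gives $\sum_k \lambda_k t_k \le 1$, hence the looser constant $\rho$. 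Either approach suffices for the downstream use in Lemma~\ref{lemma::SGD_second_order_dist}.
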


With this tool, we now begin the proof of Lemma~\ref{lemma::SGD_second_order_dist}, which bounds the expected $L_2$-norm of the second-order difference of the SGD iterates. We restate Lemma~\ref{lemma::SGD_second_order_dist} for clarity before its proof.

\LemmaSGDSecondOrderDist*

\begin{proof}[Proof of Lemma~\ref{lemma::SGD_second_order_dist}]
From the SGD update rule, the difference between the second-order differences of the iterates at step $t$ and $t-1$ can be bounded:
\begin{align*}
    &\left\| w_t^{(1,1)} - w_t^{(1,2)} - w_t^{(2,1)} + w_t^{(2,2)} \right\|
    - \left\| w_{t-1}^{(1,1)} - w_{t-1}^{(1,2)} - w_{t-1}^{(2,1)} + w_{t-1}^{(2,2)} \right\| \\
    &\quad \leq \left\| \frac{\alpha_t}{m} \sum_{i \in I_t} \left[
        \nabla_w \ell(w_{t-1}^{(1,1)}; z_i^{(1,1)})
        - \nabla_w \ell(w_{t-1}^{(1,2)}; z_i^{(1,2)})
        - \nabla_w \ell(w_{t-1}^{(2,1)}; z_i^{(2,1)})
        + \nabla_w \ell(w_{t-1}^{(2,2)}; z_i^{(2,2)})
    \right] \right\|.
\end{align*}
Let's define the term inside the norm on the right-hand side, representing the sum of gradient differences for a given mini-batch $I_t$:
\[
\Delta(t-1, i) := 
    \nabla_w \ell(w_{t-1}^{(1,1)}; z_i^{(1,1)})
    - \nabla_w \ell(w_{t-1}^{(1,2)}; z_i^{(1,2)})
    - \nabla_w \ell(w_{t-1}^{(2,1)}; z_i^{(2,1)})
    + \nabla_w \ell(w_{t-1}^{(2,2)}; z_i^{(2,2)}).
\]
Taking the expectation with respect to the mini-batch sequences $I_1, \dots, I_t$, we have:
\begin{align}
    &\mathbb{E}_{I_1, \dots, I_t} \left\| w_t^{(1,1)} - w_t^{(1,2)} - w_t^{(2,1)} + w_t^{(2,2)} \right\|
    - \mathbb{E}_{I_1, \dots, I_{t-1}} \left\| w_{t-1}^{(1,1)} - w_{t-1}^{(1,2)} - w_{t-1}^{(2,1)} + w_{t-1}^{(2,2)} \right\| \nonumber \\
    &\quad = \mathbb{E}_{I_1, \dots, I_{t-1}} \left\{ \mathbb{E}_{I_t} \left[ \left\| w_t^{(1,1)} - w_t^{(1,2)} - w_t^{(2,1)} + w_t^{(2,2)} \right\| \right] - \left\| w_{t-1}^{(1,1)} - w_{t-1}^{(1,2)} - w_{t-1}^{(2,1)} + w_{t-1}^{(2,2)} \right\| \right\} \nonumber \\
    &\quad \leq \mathbb{E}_{I_1, \dots, I_{t-1}} \left\{ \mathbb{E}_{I_t} \left[ \left\|\frac{\alpha_t}{m} \sum_{i \in I_t} \Delta(t-1, i) \right\| \right] \right\} \nonumber \\
    &\quad \leq \mathbb{E}_{I_1, \dots, I_{t-1}} \left\{ \frac{\alpha_t}{m} \mathbb{E}_{I_t} \left[ \sum_{i \in I_t} \left\|\Delta(t-1, i)\right\| \right] \right\} \nonumber \\
    &\quad = \mathbb{E}_{I_1, \dots, I_{t-1}} \left\{ \frac{\alpha_t}{m} \sum_{i=1}^{s+2} \left\| \Delta(t-1, i) \right\| \cdot \mathbb{P}(z_j \in I_t) \right\} \nonumber  \\
    &\quad = \frac{\alpha_t}{s+2} \sum_{i=1}^{s+2} \mathbb{E}_{I_1, \dots, I_{t-1}} \left\|\Delta(t-1, i)\right\| \label{eqn::exp_diff_recursion_step}
\end{align}
In the last step, $\mathbb{P}(i \in I_t) = \frac{m}{s+2}$ because $I_t$ is sampled uniformly from the $s+2$ points. 

Now, we analyze the bounds for $\mathbb{E}_{I_1, \dots, I_{t-1}} \left\|\Delta(t-1, i) \right\|$.

Case 1: $i \leq s$.
For these points, $z_i^{(1,1)} = z_i^{(1,2)} = z_i^{(2,1)} = z_i^{(2,2)} = z_i^S$. Applying Lemma~\ref{lemma::SGD_smoothness_tool} (which relies on the Lipschitz Hessian property):
\begin{align*}
    \left\|\Delta(t-1, i)\right\| & \leq \left\| \nabla_w^2 \ell(w_{t-1}^{(1,1)}; z_i^S) \left[
        - (w_{t-1}^{(1,2)} - w_{t-1}^{(1,1)})
        - (w_{t-1}^{(2,1)} - w_{t-1}^{(1,1)})
        + (w_{t-1}^{(2,2)} - w_{t-1}^{(1,1)})
    \right] \right\| \\
    &\quad + \rho \|w_{t-1}^{(1,2)} - w_{t-1}^{(1,1)}\|^2
    + \rho \|w_{t-1}^{(2,1)} - w_{t-1}^{(1,1)}\|^2
    + \rho \|w_{t-1}^{(2,2)} - w_{t-1}^{(1,1)}\|^2 \\
    & \leq \beta \|w_{t-1}^{(1,1)} - w_{t-1}^{(1,2)} - w_{t-1}^{(2,1)} + w_{t-1}^{(2,2)}\| \\
    &\quad + \rho \|w_{t-1}^{(1,2)} - w_{t-1}^{(1,1)}\|^2
    + \rho \|w_{t-1}^{(2,1)} - w_{t-1}^{(1,1)}\|^2 \\
    &\quad + 2\rho \left( \|w_{t-1}^{(2,2)} - w_{t-1}^{(1,2)}\|^2 + \|w_{t-1}^{(1,2)} - w_{t-1}^{(1,1)}\|^2 \right).
\end{align*}
Taking the expectation and applying Lemma~\ref{lemma::SGD_first_order_l2_squared} for the squared difference terms:
\begin{align*}
    \mathbb{E}_{I_1, \dots, I_{t-1}}\left\| \Delta(t-1, i)\right\| & \leq \beta \cdot  \mathbb{E}_{I_1, \dots, I_{t-1}} \|w_{t-1}^{(1,1)} - w_{t-1}^{(1,2)} - w_{t-1}^{(2,1)} + w_{t-1}^{(2,2)}\| \\
    & \quad + \rho \cdot \mathbb{E}_{I_1, \dots, I_{t-1}} \|w_{t-1}^{(1,2)} - w_{t-1}^{(1,1)}\|^2
    + \rho  \cdot \mathbb{E}_{I_1, \dots, I_{t-1}} \|w_{t-1}^{(2,1)} - w_{t-1}^{(1,1)}\|^2 \\
    & \quad + 2\rho \left(  \mathbb{E}_{I_1, \dots, I_{t-1}} \|w_{t-1}^{(2,2)} - w_{t-1}^{(1,2)}\|^2 +  \mathbb{E}_{I_1, \dots, I_{t-1}}\|w_{t-1}^{(1,2)} - w_{t-1}^{(1,1)}\|^2 \right) \\
    & \leq \beta \cdot  \mathbb{E}_{I_1, \dots, I_{t-1}} \|w_{t-1}^{(1,1)} - w_{t-1}^{(1,2)} - w_{t-1}^{(2,1)} + w_{t-1}^{(2,2)}\| \\
    & \quad + 6\rho \cdot \frac{4L^2}{s+2} (t-1)^{2c\beta} \left( \frac{1}{(s+2)\beta^2} + \frac{2c^2}{m} \right).
\end{align*}

Case 2: $i = s+1$.
Here, $z_{s+1}^{(1,1)} = z_{s+1}^{(1,2)} (=z_1)$ and $z_{s+1}^{(2,1)} = z_{s+1}^{(2,2)} (=z_2')$.
\begin{align*}
    \Delta(t-1, s+1) &= \left\|
    (\nabla_w \ell(w_{t-1}^{(1,1)}; z_1) - \nabla_w \ell(w_{t-1}^{(1,2)}; z_1))
    - (\nabla_w \ell(w_{t-1}^{(2,1)}; z_2') - \nabla_w \ell(w_{t-1}^{(2,2)}; z_2'))
    \right\| \\
    &\leq \left\| \nabla_w \ell(w_{t-1}^{(1,1)}; z_1) - \nabla_w \ell(w_{t-1}^{(1,2)}; z_1) \right\|
    + \left\| \nabla_w \ell(w_{t-1}^{(2,1)}; z_2') - \nabla_w \ell(w_{t-1}^{(2,2)}; z_2') \right\| \\
    &\leq \beta \|w_{t-1}^{(1,1)} - w_{t-1}^{(1,2)}\| + \beta \|w_{t-1}^{(2,1)} - w_{t-1}^{(2,2)}\|.
\end{align*}
Taking expectation and using Lemma~\ref{lemma::SGD_first_order_l2_dist} (first-order stability for one-point difference):
\begin{align*}
    \mathbb{E}_{I_1, \dots, I_{t-1}} \left\{\Delta(t-1, s+1) \right\}
    &\leq \beta \cdot \mathbb{E}_{I_1, \dots, I_{t-1}}\|w_{t-1}^{(1,1)} - w_{t-1}^{(1,2)}\| + \beta \cdot \mathbb{E}_{I_1, \dots, I_{t-1}} \|w_{t-1}^{(2,1)} - w_{t-1}^{(2,2)}\| \\
    &\leq \beta \cdot \frac{2L}{\beta (s+2)} (t-1)^{c\beta} + \beta \cdot \frac{2L}{\beta (s+2)} (t-1)^{c\beta} \\
    &\leq \frac{4L}{s+2} t^{c\beta}.
\end{align*}
A similar bound holds for $i=s+2$.

Substituting these bounds into Eq.~\eqref{eqn::exp_diff_recursion_step}:
The sum over $i=1, \dots, s+2$ has $s$ terms from Case 1 and 2 terms from Case 2.
\begin{align*}
    &\mathbb{E}_{I_1, \dots, I_t} \left\| w_t^{(1,1)} - w_t^{(1,2)} - w_t^{(2,1)} + w_t^{(2,2)} \right\|
    - \mathbb{E}_{I_1, \dots, I_{t-1}} \left\| w_{t-1}^{(1,1)} - w_{t-1}^{(1,2)} - w_{t-1}^{(2,1)} + w_{t-1}^{(2,2)} \right\| \\
    & \quad \leq \frac{\alpha_t s}{s+2} \left[ \beta \mathbb{E}_{I_1, \dots, I_{t-1}} \|w_{t-1}^{(1,1)} - \dots + w_{t-1}^{(2,2)}\|
    + \frac{24\rho L^2}{s+2} t^{2c\beta} \left( \frac{1}{(s+2)\beta^2} + \frac{2c^2}{m} \right) \right] \\
    & \qquad + \frac{\alpha_t \cdot 2}{s+2} \left[ \frac{4L}{s+2} t^{c\beta} \right] \\
    & \quad \leq \alpha_t \beta \mathbb{E}_{I_1, \dots, I_{t-1}} \|w_{t-1}^{(1,1)} - w_{t-1}^{(1,2)} - w_{t-1}^{(2,1)} + w_{t-1}^{(2,2)}\| \quad (\text{since } s/(s+2) < 1) \\
    & \qquad + \alpha_t \frac{24\rho L^2}{s+2} t^{2c\beta} \left( \frac{1}{(s+2)\beta^2} + \frac{2c^2}{m} \right)
    + \alpha_t \frac{8L}{(s+2)^2} t^{c\beta}.
\end{align*}
Using $\alpha_t \leq c/t$:
\begin{align*}
    &\mathbb{E}_{I_1, \dots, I_t} \left\| w_t^{(1,1)} - w_t^{(1,2)} - w_t^{(2,1)} + w_t^{(2,2)} \right\| \\
    & \quad \leq \left(1 + \frac{c\beta}{t}\right) \mathbb{E}_{I_1, \dots, I_{t-1}} \|w_{t-1}^{(1,1)} - w_{t-1}^{(1,2)} - w_{t-1}^{(2,1)} + w_{t-1}^{(2,2)}\| \\
    & \qquad + \frac{c}{t} \left[ \frac{24\rho L^2}{s+2} t^{2c\beta} \left( \frac{1}{(s+2)\beta^2} + \frac{2c^2}{m} \right)
    + \frac{8L}{(s+2)^2} t^{c\beta} \right] \\
    & \quad \leq \exp\left( \frac{c\beta}{t} \right) \mathbb{E}_{I_1, \dots, I_{t-1}} \|w_{t-1}^{(1,1)} - w_{t-1}^{(1,2)} - w_{t-1}^{(2,1)} + w_{t-1}^{(2,2)}\| \\
    & \qquad + \frac{24c\rho L^2}{s+2} \left( \frac{1}{(s+2)\beta^2} + \frac{2c^2}{m} \right) t^{2c\beta - 1}
    + \frac{8cL}{(s+2)^2} t^{c\beta - 1}.
\end{align*}
Unrolling this recurrence from $t_0 = 1$ to $t$:
\begin{align*}
    &\mathbb{E}_{I_1, \dots, I_t} \left\| w_t^{(1,1)} - w_t^{(1,2)} - w_t^{(2,1)} + w_t^{(2,2)} \right\| \\
    &\quad \leq \sum_{t_0=1}^t \left[
        \frac{24c \rho L^2}{s+2} \left( \frac{1}{(s+2)\beta^2} + \frac{2c^2}{m} \right) t_0^{2c\beta - 1}
        + \frac{8cL}{(s+2)^2} t_0^{c\beta - 1}
    \right] \prod_{k=t_0+1}^t \exp\left( \frac{c\beta}{k} \right) \\
    &\quad = \sum_{t_0=1}^t \left[
        \frac{24c \rho L^2}{s+2} \left( \frac{1}{(s+2)\beta^2} + \frac{2c^2}{m} \right) t_0^{2c\beta - 1}
        + \frac{8cL}{(s+2)^2} t_0^{c\beta - 1}
    \right] \exp\left( \sum_{k=t_0+1}^t \frac{c\beta}{k} \right).
\end{align*}
Using the approximation $\sum_{k=t_0+1}^t \frac{1}{k} \approx \log(t/t_0)$:
\begin{align*}
    &\mathbb{E}_{I_1, \dots, I_t} \left\| w_t^{(1,1)} - w_t^{(1,2)} - w_t^{(2,1)} + w_t^{(2,2)} \right\| \\
    &\quad \leq \sum_{t_0=1}^t \left[
        \frac{24c \rho L^2}{s+2} \left( \frac{1}{(s+2)\beta^2} + \frac{2c^2}{m} \right) t_0^{2c\beta - 1}
        + \frac{8cL}{(s+2)^2} t_0^{c\beta - 1}
    \right] \left( \frac{t}{t_0} \right)^{c\beta} \\
    &\quad = \frac{24c \rho L^2 t^{c\beta}}{s+2} \left( \frac{1}{(s+2)\beta^2} + \frac{2c^2}{m} \right) \sum_{t_0=1}^t t_0^{c\beta - 1}
    + \frac{8cL t^{c\beta}}{(s+2)^2} \sum_{t_0=1}^t t_0^{-1}.
\end{align*}
Using $\sum_{t_0=1}^t t_0^{c\beta - 1} \approx \frac{t^{c\beta}}{c\beta}$ for $c\beta > 0$, and $\sum_{t_0=1}^t t_0^{-1} \approx \log t$:
\begin{align*}
    &\mathbb{E}_{I_1, \dots, I_t} \left\| w_t^{(1,1)} - w_t^{(1,2)} - w_t^{(2,1)} + w_t^{(2,2)} \right\| \\
    &\quad \leq \frac{24c \rho L^2 t^{c\beta}}{s+2} \left( \frac{1}{(s+2)\beta^2} + \frac{2c^2}{m} \right) \frac{t^{c\beta}}{c\beta}
    + \frac{8cL t^{c\beta}}{(s+2)^2} \log t \\
    &\quad = \frac{24\rho L^2}{(s+2)\beta} \left( \frac{1}{(s+2)\beta^2} + \frac{2c^2}{m} \right) t^{2c\beta}
    + \frac{8cL}{(s+2)^2} t^{c\beta} \log t.
\end{align*}
This completes the proof.
\end{proof}

\subsubsection{Technical lemmas used in the proof of Proposition~\ref{proposition::SGD_final_prop}}

\LemmaSGDRelationhipBetweenUAndParams*

\begin{proof}[Proof of Lemma~\ref{lemma::SGD_relationship_between_U_and_l2_dist}]
Let $\mathcal{S}$ be the base dataset. We expand the expression using the definition of $U(\mathcal{S}; T) = \mathbb{E}[u(w_T)]$ where the expectation is over the choice of mini-batches $I_1, \dots, I_T$:
\begin{align*}
    &\left| U(\mathcal{S} \cup \{z_1, z_1'\}; T) - U(\mathcal{S} \cup \{z_1, z_2\}; T) - U(\mathcal{S} \cup \{z_1', z_2'\}; T) + U(\mathcal{S} \cup \{z_2, z_2'\}; T) \right| \\
    & \quad = \left| \mathbb{E}_{I_1, \dots, I_T} \left[u(w_T^{(1,1)})\right] - \mathbb{E}_{I_1, \dots, I_T} \left[u(w_T^{(1,2)})\right] - \mathbb{E}_{I_1, \dots, I_T} \left[u(w_T^{(2,1)})\right] + \mathbb{E}_{I_1, \dots, I_T} \left[u(w_T^{(2,2)})\right] \right| \\
    & \quad = \left| \mathbb{E}_{I_1, \dots, I_T} \left[ u(w_T^{(1,1)}) - u(w_T^{(1,2)}) - u(w_T^{(2,1)}) + u(w_T^{(2,2)}) \right] \right| \\
    & \quad \leq \mathbb{E}_{I_1, \dots, I_T} \left| u(w_T^{(1,1)}) - u(w_T^{(1,2)}) - u(w_T^{(2,1)}) + u(w_T^{(2,2)}) \right|. \quad \text{(by Jensen's inequality)}
\end{align*}

We now expand the differences of $u$ using a Taylor expansion. For $u(w_A) - u(w_B)$, by Taylor's theorem with remainder:
$u(w_A) - u(w_B) = \langle \nabla_\theta u(w_B), w_A - w_B \rangle + R_{A,B}$,
where the remainder $R_{A,B}$ satisfies $|R_{A,B}| \leq C_u \|w_A - w_B\|^2$ if $\nabla_\theta u$ is $C_u$-Lipschitz (from Assumption~\ref{assump::smoothness_for_SGD}, using $C_u$ for this constant). Thus, this remainder is $O(\|w_A - w_B\|^2)$.

So, we have:
\begin{align*}
    u(w_T^{(1,1)}) - u(w_T^{(1,2)}) &= \left\langle \nabla_\theta u(w_T^{(1,2)}), w_T^{(1,1)} - w_T^{(1,2)} \right\rangle + O\left( \| w_T^{(1,1)} - w_T^{(1,2)} \|^2 \right), \\
    u(w_T^{(2,1)}) - u(w_T^{(2,2)}) &= \left\langle \nabla_\theta u(w_T^{(2,2)}), w_T^{(2,1)} - w_T^{(2,2)} \right\rangle + O\left( \| w_T^{(2,1)} - w_T^{(2,2)} \|^2 \right).
\end{align*}
Then, the term inside the expectation is:
\begin{align*}
    &u(w_T^{(1,1)}) - u(w_T^{(1,2)}) - (u(w_T^{(2,1)}) - u(w_T^{(2,2)})) \\
    &\quad = \left\langle \nabla_\theta u(w_T^{(1,2)}), w_T^{(1,1)} - w_T^{(1,2)} \right\rangle - \left\langle \nabla_\theta u(w_T^{(2,2)}), w_T^{(2,1)} - w_T^{(2,2)} \right\rangle \\
    &\quad \quad + O\left( \| w_T^{(1,1)} - w_T^{(1,2)} \|^2 \right) + O\left( \| w_T^{(2,1)} - w_T^{(2,2)} \|^2 \right).
\end{align*}
The sum of the $O(\cdot)$ terms is $O\left( \| w_T^{(1,1)} - w_T^{(1,2)} \|^2 + \| w_T^{(2,1)} - w_T^{(2,2)} \|^2 \right)$.

We manipulate the inner product terms:
\begin{align*}
    &\left\langle \nabla_\theta u(w_T^{(1,2)}), w_T^{(1,1)} - w_T^{(1,2)} \right\rangle - \left\langle \nabla_\theta u(w_T^{(2,2)}), w_T^{(2,1)} - w_T^{(2,2)} \right\rangle \\
    &= \left\langle \nabla_\theta u(w_T^{(1,2)}), w_T^{(1,1)} - w_T^{(1,2)} - w_T^{(2,1)} + w_T^{(2,2)} \right\rangle \\
    &\quad + \left\langle \nabla_\theta u(w_T^{(1,2)}) - \nabla_\theta u(w_T^{(2,2)}), w_T^{(2,1)} - w_T^{(2,2)} \right\rangle.
\end{align*}
For the second term in this sum, using Cauchy-Schwarz and the $C_u$-Lipschitz continuity of $\nabla_\theta u$:
\begin{align*}
    &\left| \left\langle \nabla_\theta u(w_T^{(1,2)}) - \nabla_\theta u(w_T^{(2,2)}), w_T^{(2,1)} - w_T^{(2,2)} \right\rangle \right| \\
    &\quad \leq \left\| \nabla_\theta u(w_T^{(1,2)}) - \nabla_\theta u(w_T^{(2,2)}) \right\| \cdot \left\| w_T^{(2,1)} - w_T^{(2,2)} \right\| \\
    &\quad \leq C_u \left\| w_T^{(1,2)} - w_T^{(2,2)} \right\| \cdot \left\| w_T^{(2,1)} - w_T^{(2,2)} \right\|.
\end{align*}
Using the inequality $ab \leq \frac{1}{2}(a^2 + b^2)$, this is bounded by:
\[
    \frac{C_u}{2} \left( \left\| w_T^{(1,2)} - w_T^{(2,2)} \right\|^2 + \left\| w_T^{(2,1)} - w_T^{(2,2)} \right\|^2 \right).
\]
This term is also of the order $O\left( \|w_T^{(1,2)} - w_T^{(2,2)}\|^2 + \|w_T^{(2,1)} - w_T^{(2,2)}\|^2 \right)$.

Combining all terms, the absolute value $\left| u(w_T^{(1,1)}) - u(w_T^{(1,2)}) - u(w_T^{(2,1)}) + u(w_T^{(2,2)}) \right|$ is bounded by:
\begin{align*}
    &\left| \left\langle \nabla_\theta u(w_T^{(1,2)}), w_T^{(1,1)} - w_T^{(1,2)} - w_T^{(2,1)} + w_T^{(2,2)} \right\rangle \right| \\
    &\quad + O\left( \| w_T^{(1,1)} - w_T^{(1,2)} \|^2 + \| w_T^{(2,1)} - w_T^{(2,2)} \|^2 + \| w_T^{(1,2)} - w_T^{(2,2)} \|^2 \right).
\end{align*}
Let $C_{\|\nabla u\|}$ be the bound on $\|\nabla_\theta u(\cdot)\|$ from Assumption~\ref{assump::smoothness_for_SGD}. Then the first term is bounded by:
\[ C_{\|\nabla u\|} \left\| w_T^{(1,1)} - w_T^{(1,2)} - w_T^{(2,1)} + w_T^{(2,2)} \right\|. \]
The sum of $O(\cdot)$ terms involves various pairwise squared differences. Each such term is of the form $\|w_T^{(j_1,k_1)} - w_T^{(j_2,k_2)}\|^2$. The sum is thus $O\left( \max_{j_1,k_1,j_2,k_2} \|w_T^{(j_1,k_1)} - w_T^{(j_2,k_2)}\|^2 \right)$.

Taking the expectation over $I_1, \dots, I_T$:
\begin{align*}
    &\left| U(\mathcal{S} \cup \{z_1, z_1'\}; T) - U(\mathcal{S} \cup \{z_1, z_2\}; T) - U(\mathcal{S} \cup \{z_1', z_2'\}; T) + U(\mathcal{S} \cup \{z_2, z_2'\}; T) \right| \\
    &\quad \leq C_{\|\nabla u\|} \mathbb{E}_{I_1, \dots, I_T} \left[ \left\| w_T^{(1,1)} - w_T^{(1,2)} - w_T^{(2,1)} + w_T^{(2,2)} \right\| \right] \\
    &\qquad + O\left( \max_{j_1, k_1, j_2, k_2 \in \{1, 2\}} \mathbb{E}_{I_1, \dots, I_T} \left[ \left\| w_T^{(j_1, k_1)} - w_T^{(j_2, k_2)} \right\|^2 \right] \right).
\end{align*}
This completes the proof.
\end{proof}

\LemmaSGDFirstOrderDist*

\begin{proof}[Proof of Lemma~\ref{lemma::SGD_first_order_l2_dist}]
Let $w_t^{(a)}$ and $w_t^{(b)}$ be the iterates at step $t$ when training on data sequences $z^{(a)}$ and $z^{(b)}$ respectively. These sequences are of length $N=s+1$ and differ only at the $N$-th position (i.e., $z_N^{(a)} = z_a$ and $z_N^{(b)} = z_b$, while $z_j^{(a)} = z_j^{(b)}$ for $j < N$).
The SGD update implies $w_t^{(a)} = w_{t-1}^{(a)} - \frac{\alpha_t}{m} \sum_{j \in I_t} \nabla_w \ell(w_{t-1}^{(a)}; z_j^{(a)})$.

The difference in iterates is:
\begin{align*}
    \left\|w_t^{(a)} - w_t^{(b)}\right\| &= \left\| \left[w_{t-1}^{(a)} - \frac{\alpha_t}{m} \sum_{j \in I_t} \nabla_w \ell(w_{t-1}^{(a)}; z_j^{(a)})\right] - \left[w_{t-1}^{(b)} - \frac{\alpha_t}{m} \sum_{j \in I_t} \nabla_w \ell(w_{t-1}^{(b)}; z_j^{(b)})\right] \right\| \\
    &= \left\| (w_{t-1}^{(a)} - w_{t-1}^{(b)}) - \frac{\alpha_t}{m} \sum_{j \in I_t} \left( \nabla_w \ell(w_{t-1}^{(a)}; z_j^{(a)}) - \nabla_w \ell(w_{t-1}^{(b)}; z_j^{(b)}) \right) \right\| \\
    &\leq \left\| w_{t-1}^{(a)} - w_{t-1}^{(b)} \right\| + \frac{\alpha_t}{m} \sum_{j \in I_t} \left\| \nabla_w \ell(w_{t-1}^{(a)}; z_j^{(a)}) - \nabla_w \ell(w_{t-1}^{(b)}; z_j^{(b)}) \right\|. \quad \text{(by triangle inequality)}
\end{align*}
Let $E_{t-1} = \|w_{t-1}^{(a)} - w_{t-1}^{(b)}\|$.
Consider the term $\left\| \nabla_w \ell(w_{t-1}^{(a)}; z_j^{(a)}) - \nabla_w \ell(w_{t-1}^{(b)}; z_j^{(b)}) \right\|$ for $j \in I_t$.

Case 1: The data point $z_j$ is common to both sequences.
If $z_j^{(a)} = z_j^{(b)} = z_j$ (i.e., $j < N$ or the point at index $j$ in the sequences is from the common part $\mathcal{S}$), then by $\beta$-smoothness of $\ell$ (Assumption~\ref{assump::smoothness_for_SGD}):
\[ \left\| \nabla_w \ell(w_{t-1}^{(a)}; z_j) - \nabla_w \ell(w_{t-1}^{(b)}; z_j) \right\| \leq \beta \left\| w_{t-1}^{(a)} - w_{t-1}^{(b)} \right\| = \beta E_{t-1}. \]

Case 2: The data point $z_j$ is the one that differs.
If $z_j^{(a)} = z_a$ and $z_j^{(b)} = z_b$ (i.e., $j=N$, the differing $(s+1)$-th point), then by Lipschitz assumption in Assumption~\ref{assump::smoothness_for_SGD}, $\|\nabla_w \ell(w, z)\|$ is bounded by $L$, so that,
\begin{align*}
    \left\| \nabla_w \ell(w_{t-1}^{(a)}; z_a) - \nabla_w \ell(w_{t-1}^{(b)}; z_b) \right\| &\leq \left\| \nabla_w \ell(w_{t-1}^{(a)}; z_a) - \nabla_w \ell(w_{t-1}^{(b)}; z_a) \right\| \\ & \quad + \left\| \nabla_w \ell(w_{t-1}^{(b)}; z_a) - \nabla_w \ell(w_{t-1}^{(b)}; z_b) \right\| \\
    &\leq \beta \left\| w_{t-1}^{(a)} - w_{t-1}^{(b)} \right\| + (\|\nabla_w \ell(w_{t-1}^{(b)}; z_a)\| + \|\nabla_w \ell(w_{t-1}^{(b)}; z_b)\|) \\
    &\leq \beta E_{t-1} + 2L.
\end{align*}

Then the sum $\sum_{j \in I_t} \left\| \nabla_w \ell(w_{t-1}^{(a)}; z_j^{(a)}) - \nabla_w \ell(w_{t-1}^{(b)}; z_j^{(b)}) \right\|$ can be bounded:
It consists of $(m-\mathbb{I}(N \in I_t))$ terms of Case 1 and $\mathbb{I}(N \in I_t)$ terms of Case 2.
Sum $\leq (m-\mathbb{I}(N \in I_t)) \beta E_{t-1} + \mathbb{I}(N \in I_t) (\beta E_{t-1} + 2L) = m \beta E_{t-1} + 2L\mathbb{I}(N \in I_t)$.
So,
\begin{align*}
    \left\|w_t^{(a)} - w_t^{(b)}\right\| &\leq E_{t-1} + \frac{\alpha_t}{m} (m \beta E_{t-1} + 2L \cdot \mathbb{I}(N \in I_t)) \\
    &= (1 + \alpha_t \beta) E_{t-1} + \frac{2\alpha_t L}{m} \cdot \mathbb{I}(N \in I_t) \\
    &= (1 + \alpha_t \beta) \left\| w_{t-1}^{(a)} - w_{t-1}^{(b)} \right\| + \frac{2\alpha_t L}{m} \cdot \mathbb{I}(N \in I_t).
\end{align*}
This recurrence relation for the norm (holding for each realization of $I_t$) leads to:
\begin{equation}
\label{eqn::SGD_no_convexity_first_order_diff_closed_form_lem12}
    \left\| w_t^{(a)} - w_t^{(b)} \right\| \leq \sum_{t_0=1}^t \frac{2\alpha_{t_0}L}{m} \mathbb{I}(N \in I_{t_0}) \prod_{t'=t_0+1}^t (1 + \alpha_{t'} \beta).
\end{equation}
Taking expectation over the mini-batch choices $I_1, \dots, I_t$, and substituting $\alpha_t \leq \frac{c}{t}$:
\begin{align*}
    \mathbb{E} \left\| w_t^{(a)} - w_t^{(b)} \right\|
    &\leq \sum_{t_0=1}^t \frac{2\alpha_{t_0}L}{m} \mathbb{E}[\mathbb{I}(N \in I_{t_0})] \prod_{t'=t_0+1}^t (1 + \alpha_{t'} \beta)  \\
    &\leq \sum_{t_0=1}^t \frac{2cL}{t_0 m} \cdot \mathbb{P}(N \in I_{t_0}) \prod_{t'=t_0+1}^t \left(1 + \frac{c\beta}{t'}\right).
\end{align*}
Since the mini-batch $I_{t_0}$ of size $m$ is sampled uniformly from the $N=s+1$ data points, $\mathbb{P}(N \in I_{t_0}) = \frac{m}{N} = \frac{m}{s+1}$.
\begin{align*}
    \mathbb{E} \left\| w_t^{(a)} - w_t^{(b)} \right\|
    &\leq \sum_{t_0=1}^t \frac{2cL}{t_0 m} \cdot \frac{m}{s+1} \cdot \exp\left(c\beta \sum_{t'=t_0+1}^t \frac{1}{t'} \right) \quad \left(\text{using } 1+x \le e^x\right) \\
    &\leq \frac{2cL}{s+1} \sum_{t_0=1}^t \frac{1}{t_0} \exp\left(c\beta \log\left(\frac{t}{t_0}\right)\right) \quad \left(\text{using } \sum_{k=a+1}^b 1/k \approx \log(b/a)\right) \\
    &= \frac{2cL}{s+1} \sum_{t_0=1}^t \frac{1}{t_0} \left(\frac{t}{t_0}\right)^{c\beta}.
\end{align*}
Using the bound $\sum_{k=1}^n \frac{1}{k} \left( \frac{n}{k} \right)^{\gamma} \leq \frac{n^{\gamma}}{\gamma}$ for $\gamma > 0$ (here $\gamma = c\beta$, $n=t$, $k=t_0$):
\[ \mathbb{E} \left\| w_t^{(a)} - w_t^{(b)} \right\| \leq \frac{2cL}{s+1} \cdot \frac{t^{c\beta}}{c\beta} = \frac{2L}{(s+1)\beta} t^{c\beta}. \]
This completes the proof.
\end{proof}

\LemmaSGDFirstOrderDistSquared*

\begin{proof}[Proof of Lemma~\ref{lemma::SGD_first_order_l2_squared}]
Let $N=s+1$ be the effective size of the data sequences $z^{(a)}$ and $z^{(b)}$.
From Equation~\eqref{eqn::SGD_no_convexity_first_order_diff_closed_form_lem12} in the proof of Lemma~\ref{lemma::SGD_first_order_l2_dist}, we have the bound for each realization of mini-batch choices:
\[ \left\| w_t^{(a)} - w_t^{(b)} \right\| \leq X_t := \sum_{t_0=1}^t \frac{2\alpha_{t_0}L}{m} \, \mathbb{I}(N \in I_{t_0}) \prod_{t'=t_0+1}^t (1 + \alpha_{t'}\beta). \]
We want to bound $\mathbb{E} \left\| w_t^{(a)} - w_t^{(b)} \right\|^2$. Since $\left\| w_t^{(a)} - w_t^{(b)} \right\| \ge 0$, if $\left\| w_t^{(a)} - w_t^{(b)} \right\| \leq X_t$, then $\left\| w_t^{(a)} - w_t^{(b)} \right\|^2 \leq X_t^2$. Thus, $\mathbb{E} \left\| w_t^{(a)} - w_t^{(b)} \right\|^2 \leq \mathbb{E}[X_t^2]$.
Using the property $\mathbb{E}[X_t^2] = (\mathbb{E}[X_t])^2 + \operatorname{Var}[X_t]$:
\begin{align*}
    \mathbb{E} \left\| w_t^{(a)} - w_t^{(b)} \right\|^2
    &\leq \left( \mathbb{E} [X_t] \right)^2 + \operatorname{Var} \left[ X_t \right] \\
    &= \left( \mathbb{E} \left[ \sum_{t_0=1}^t \frac{2\alpha_{t_0}L}{m} \, \mathbb{I}(N \in I_{t_0}) \prod_{t'=t_0+1}^t (1 + \alpha_{t'}\beta) \right] \right)^2 \\
    &\quad + \operatorname{Var} \left[ \sum_{t_0=1}^t \frac{2\alpha_{t_0}L}{m} \, \mathbb{I}(N \in I_{t_0}) \prod_{t'=t_0+1}^t (1 + \alpha_{t'}\beta) \right].
\end{align*}
Let $Y_{t_0} = \frac{2\alpha_{t_0}L}{m} \, \mathbb{I}(N \in I_{t_0}) \prod_{t'=t_0+1}^t (1 + \alpha_{t'}\beta)$. Since the random variables $\mathbb{I}(N \in I_{t_0})$ are independent across different time steps $t_0$ (as mini-batches $I_{t_0}$ are sampled independently), the terms $Y_{t_0}$ are independent. Thus, the variance of the sum is the sum of variances:
\[ \operatorname{Var} \left[ \sum_{t_0=1}^t Y_{t_0} \right] = \sum_{t_0=1}^t \operatorname{Var} \left( Y_{t_0} \right). \]
Let $K_{t_0} = \frac{2\alpha_{t_0}L}{m} \prod_{t'=t_0+1}^t (1 + \alpha_{t'}\beta)$. This term is deterministic once $\alpha$ values are fixed.
Let $p_{t_0} = \mathbb{P}(N \in I_{t_0}) = \frac{m}{N} = \frac{m}{s+1}$.
Then $\operatorname{Var}(Y_{t_0}) = \operatorname{Var}(K_{t_0} \mathbb{I}(N \in I_{t_0})) = K_{t_0}^2 \operatorname{Var}(\mathbb{I}(N \in I_{t_0})) = K_{t_0}^2 p_{t_0}(1-p_{t_0}) \leq K_{t_0}^2 p_{t_0}$.
Using this observation for the variance term and the bound from Lemma~\ref{lemma::SGD_first_order_l2_dist} for the $(\mathbb{E}[X_t])^2$ term:
\begin{align*}
    \mathbb{E} \left\| w_t^{(a)} - w_t^{(b)} \right\|^2
    &\leq \left( \frac{2L}{(s+1)\beta} t^{c\beta} \right)^2
    + \sum_{t_0=1}^t \left( \frac{2\alpha_{t_0}L}{m} \prod_{t'=t_0+1}^t (1 + \alpha_{t'}\beta) \right)^2 \frac{m}{s+1} \\
    &= \frac{4L^2}{(s+1)^2\beta^2} t^{2c\beta}
    + \sum_{t_0=1}^t \frac{4\alpha_{t_0}^2 L^2}{m^2} \cdot \frac{m}{s+1} \left( \prod_{t'=t_0+1}^t (1 + \alpha_{t'}\beta) \right)^2 \\
    &\leq \frac{4L^2}{(s+1)^2 \beta^2} t^{2c\beta}
    + \sum_{t_0=1}^t \frac{4c^2 L^2}{t_0^2 m (s+1)} \left( \prod_{t'=t_0+1}^t \exp\left( \frac{c\beta}{t'} \right) \right)^2 \\
    &\leq \frac{4L^2}{(s+1)^2 \beta^2} t^{2c\beta}
    + \sum_{t_0=1}^t \frac{4c^2 L^2}{t_0^2 m (s+1)} \exp\left( 2c\beta \sum_{t'=t_0+1}^t \frac{1}{t'} \right) \\
    &\leq \frac{4L^2}{(s+1)^2 \beta^2} t^{2c\beta}
    + \frac{4c^2 L^2}{m(s+1)} \sum_{t_0=1}^t \frac{1}{t_0^2} \left( \frac{t}{t_0} \right)^{2c\beta}.
\end{align*}

Using the bound $\sum_{k=1}^\infty \frac{1}{k^{2 + \gamma}} \leq (1 + \frac{1}{1+\gamma})$ for $\gamma \ge 0$ (here, $k=t_0$, $\gamma = 2c\beta$):
\begin{align*}
    \mathbb{E} \left\| w_t^{(a)} - w_t^{(b)} \right\|^2
    &\leq \frac{4L^2}{(s+1)^2 \beta^2} t^{2c\beta}
    + \frac{4c^2 L^2}{m(s+1)} \left( 1 + \frac{1}{1 + 2c\beta} \right) t^{2c\beta} \\
    &\leq \frac{4L^2}{s+1} \, t^{2c\beta} \left( \frac{1}{(s+1)\beta^2} + \frac{2c^2}{m} \right)
\end{align*}
Replacing $s+1$ with $N$ (the effective dataset size as per the lemma statement):
\[ \mathbb{E} \left\| w_t^{(a)} - w_t^{(b)} \right\|^2 \leq \frac{4L^2}{N} \, t^{2c\beta} \left( \frac{1}{N\beta^2} + \frac{2c^2}{m} \right). \]
This completes the proof.
\end{proof}

\LemmaSGDSmoothnessTool*

\begin{proof}[Proof of Lemma~\ref{lemma::SGD_smoothness_tool}]
Define the function $\phi(t) := \nabla_w \ell(w + t h; z)$ for $t \in [0, 1]$. Then, the function $\phi : [0,1] \to \mathbb{R}^d$ is continuously differentiable. By the mean-value-type expansion result of \citet{mcleod1965mean}, we can write:
\begin{equation*}
    \phi(1) = \phi(0) + \sum_{k=1}^d \lambda_k \phi'(t_k),
\end{equation*}
for some $t_k \in (0,1)$, non-negative weights $\lambda_k \geq 0$ such that $\sum_{k=1}^d \lambda_k = 1$.

The derivative $\phi'(t)$ is $\frac{d}{dt} \nabla_w \ell(w + th; z) = \nabla_w^2 \ell(w + th; z) h$.
Substituting the definitions of $\phi(0)$, $\phi(1)$, and $\phi'(t_k)$ into the expansion, we get:
\begin{align*}
    \nabla_w \ell(w + h; z) - \nabla_w \ell(w; z)
    &= \sum_{k=1}^d \lambda_k \left( \nabla_w^2 \ell(w + t_k h; z) h \right) \\
    &= \left( \sum_{k=1}^d \lambda_k \nabla_w^2 \ell(w + t_k h; z) \right) h.
\end{align*}
To isolate the term $\nabla_w^2 \ell(w; z) h$, we add and subtract it:
\begin{align*}
    \nabla_w \ell(w + h; z) - \nabla_w \ell(w; z)
    &= \nabla_w^2 \ell(w; z) h
    + \left( \sum_{k=1}^d \lambda_k \left[ \nabla_w^2 \ell(w + t_k h; z) - \nabla_w^2 \ell(w; z) \right] \right) h.
\end{align*}
Rearranging the terms, we have:
\[
    \nabla_w \ell(w + h; z) - \nabla_w \ell(w; z) - \nabla_w^2 \ell(w; z) h
    = \sum_{k=1}^d \lambda_k \left[ \nabla_w^2 \ell(w + t_k h; z) - \nabla_w^2 \ell(w; z) \right] h.
\]
Taking norms on both sides and applying the triangle inequality for sums, followed by the properties of matrix norms:
\begin{align*}
    \left\| \nabla_w \ell(w + h; z) - \nabla_w \ell(w; z) - \nabla_w^2 \ell(w; z) h \right\|
    &\leq \left\| \sum_{k=1}^d \lambda_k \left[ \nabla_w^2 \ell(w + t_k h; z) - \nabla_w^2 \ell(w; z) \right] h \right\| \\
    &\leq \sum_{k=1}^d \lambda_k \left\| \nabla_w^2 \ell(w + t_k h; z) - \nabla_w^2 \ell(w; z) \right\| \cdot \|h\|.
\end{align*}
By the assumption of Lipschitz continuity of the Hessian of $\ell$ with constant $\rho$ in Assumption~\ref{assump::smoothness_for_SGD}:
\begin{align*}
    \left\| \nabla_w^2 \ell(w + t_k h; z) - \nabla_w^2 \ell(w; z) \right\| \leq \rho \|(w + t_k h) - w\| = \rho t_k \|h\|.
\end{align*}
Substituting this into the inequality:
\begin{align*}
    \left\| \nabla_w \ell(w + h; z) - \nabla_w \ell(w; z) - \nabla_w^2 \ell(w; z) h \right\|
    &\leq \sum_{k=1}^d \lambda_k \cdot (\rho t_k \|h\|) \cdot \|h\| \\
    &= \rho \|h\|^2 \sum_{k=1}^d \lambda_k t_k.
\end{align*}
Since $t_k \in (0,1)$, we have $t_k \leq 1$. Also, $\lambda_k \geq 0$ and $\sum_{k=1}^d \lambda_k = 1$.
Therefore, $\sum_{k=1}^d \lambda_k t_k \leq \sum_{k=1}^d \lambda_k \cdot 1 = 1$.
Thus,
\[ \left\| \nabla_w \ell(w + h; z) - \nabla_w \ell(w; z) - \nabla_w^2 \ell(w; z) h \right\| \leq \rho \|h\|^2. \]
This completes the proof.
\end{proof}

\subsection{Proof of Proposition~\ref{proposition::application::IF} (algorithmic stability of Influence Function (IF))}
\label{sec::proof_proposition_application_IF}

The proof of Proposition~\ref{proposition::application::IF} uses some familiar techniques from the standard IF theory, but also features our original analysis.
We first define an auxiliary parameter vector $\tilde{\theta}_{\mathcal{S}}$. 
This vector is the minimizer of the empirical loss over $\mathcal{S}$ but with its denominator scaled as if two additional points were present in the averaging, which provides a good anchor point that facilitates a cleaner Taylor expansion form for $\hat{\theta}_{\mathcal{S}\cup\{z_1,z_2\}}$.
\begin{equation*}
    \tilde{\theta}_{\mathcal{S}} := \arg \min_{\theta} \left\{ \frac{1}{s + 2} \sum_{z \in \mathcal{S}} \ell(\theta; z) + \frac{\lambda}{2} \|\theta\|_2^2 \right\}.
\end{equation*}
Next, we present two technical lemmas, based on which, the validity of Proposition~\ref{proposition::application::IF} would become clear.
The proofs of these technical lemmas are relegated Appendix~\ref{sec::proofs_for_IF_lemmas_appendix}.

\begin{restatable}[Expansion of parameter difference via IF]{lemma}{LemmaThetaDiffIFPolished} \label{lemma::theta_diff_IF_polished}
    Suppose the loss function $\ell(\theta; z)$ is convex in $\theta$, three times continuously differentiable with respect to $\theta$, and its first-, second-, and third-order derivatives with respect to $\theta$ are uniformly bounded for all $z \in \mathcal{Z}$. For any dataset $\mathcal{S} \in \mathcal{Z}^s$ (where $s=|\mathcal{S}|$) and any two data points $z_1, z_2 \in \mathcal{Z}$, let $\hat{\theta}_{\mathcal{S} \cup \{z_1, z_2\}}$ be the parameter vector minimizing the regularized loss on $\mathcal{S} \cup \{z_1, z_2\}$ as per \eqref{eqn::param_definition_IF}. 
    Then we have
    \begin{enumerate}
        \item $\|\hat{\theta}_{\mathcal{S} \cup \{z_1, z_2\}} - \tilde{\theta}_{\mathcal{S}}\| = O(s^{-1})$.
        \item $\hat{\theta}_{\mathcal{S} \cup \{z_1, z_2\}} - \tilde{\theta}_{\mathcal{S}} = -H_{\tilde{\theta}_{\mathcal{S}}}^{-1} \left[ \nabla_\theta\ell(\tilde{\theta}_{\mathcal{S}}; z_1) + \nabla_\theta\ell(\tilde{\theta}_{\mathcal{S}}; z_2) \right]\frac{1}{s + 2} + O(s^{-2})$,
        where $H_{\tilde{\theta}_{\mathcal{S}}} := \left( \frac{1}{s + 2} \sum_{z \in \mathcal{S}} \nabla_\theta^2\ell(\tilde{\theta}_{\mathcal{S}}; z) \right) + \lambda I$.
    \end{enumerate}
\end{restatable}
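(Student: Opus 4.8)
The plan is to link $\tilde\theta_{\mathcal S}$ and $\hat\theta_{\mathcal S\cup\{z_1,z_2\}}$ by a one‑parameter homotopy and apply the implicit function theorem. For $\epsilon\in[0,1]$ let $\theta(\epsilon)$ be the minimizer of
\[
    R_\epsilon(\theta):=\frac{1}{s+2}\sum_{z\in\mathcal S}\ell(\theta;z)+\frac{\epsilon}{s+2}\big[\ell(\theta;z_1)+\ell(\theta;z_2)\big]+\frac{\lambda}{2}\|\theta\|_2^2 .
\]
Since $\ell$ is convex and $\lambda>0$, $R_\epsilon$ is $\lambda$‑strongly convex, so $\theta(\epsilon)$ is well defined, $\theta(0)=\tilde\theta_{\mathcal S}$, and $\theta(1)=\hat\theta_{\mathcal S\cup\{z_1,z_2\}}$. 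Because $\ell\in C^3$, $\nabla R_\epsilon$ is $C^2$ jointly in $(\theta,\epsilon)$ and $\nabla^2 R_\epsilon\succeq\lambda I$, so the implicit function theorem shows $\theta(\cdot)\in C^2[0,1]$; writing $A(\epsilon):=\nabla^2 R_\epsilon(\theta(\epsilon))$ and $b(\epsilon):=\frac{1}{s+2}\big[\nabla_\theta\ell(\theta(\epsilon);z_1)+\nabla_\theta\ell(\theta(\epsilon);z_2)\big]$, differentiating the stationarity identity $\nabla R_\epsilon(\theta(\epsilon))=0$ in $\epsilon$ gives $A(\epsilon)\theta'(\epsilon)=-b(\epsilon)$. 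Note $A(0)=H_{\tilde\theta_{\mathcal S}}$ and $\|A(\epsilon)^{-1}\|\le 1/\lambda$ uniformly in $\epsilon$, $\mathcal S$, $z_1$, $z_2$.

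\textbf{Part 1.} With $B_1$ a uniform bound on $\|\nabla_\theta\ell\|$ we get $\|b(\epsilon)\|\le 2B_1/(s+2)$, hence $\|\theta'(\epsilon)\|\le\|A(\epsilon)^{-1}\|\,\|b(\epsilon)\|\le 2B_1/\big(\lambda(s+2)\big)=O(s^{-1})$ uniformly, and integrating, $\|\hat\theta_{\mathcal S\cup\{z_1,z_2\}}-\tilde\theta_{\mathcal S}\|=\big\|\int_0^1\theta'(\epsilon)\,d\epsilon\big\|=O(s^{-1})$. Evaluating $A(\epsilon)\theta'(\epsilon)=-b(\epsilon)$ at $\epsilon=0$ already isolates the claimed leading term $\theta'(0)=-H_{\tilde\theta_{\mathcal S}}^{-1}b(0)$.

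\textbf{Part 2.} Taylor's theorem with integral remainder gives $\theta(1)=\theta(0)+\theta'(0)+\int_0^1(1-\epsilon)\theta''(\epsilon)\,d\epsilon$, so it suffices to show $\sup_{\epsilon}\|\theta''(\epsilon)\|=O(s^{-2})$. Differentiating $A(\epsilon)\theta'(\epsilon)=-b(\epsilon)$ once more yields $\theta''(\epsilon)=-A(\epsilon)^{-1}\big[A'(\epsilon)\theta'(\epsilon)+b'(\epsilon)\big]$. Here $A'(\epsilon)$ consists of $\frac{1}{s+2}\sum_{z\in\mathcal S}\nabla_\theta^3\ell(\theta(\epsilon);z)[\theta'(\epsilon)]$, of norm $\le B_3\|\theta'(\epsilon)\|=O(s^{-1})$ since the averaged third derivative is $O(1)$; plus $\frac{1}{s+2}\big[\nabla_\theta^2\ell(\theta(\epsilon);z_1)+\nabla_\theta^2\ell(\theta(\epsilon);z_2)\big]=O(s^{-1})$; plus $\frac{\epsilon}{s+2}\big[\nabla_\theta^3\ell(\theta(\epsilon);z_1)+\nabla_\theta^3\ell(\theta(\epsilon);z_2)\big][\theta'(\epsilon)]=O(s^{-2})$. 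Thus $\|A'(\epsilon)\|=O(s^{-1})$, and likewise $b'(\epsilon)=\frac{1}{s+2}\big[\nabla_\theta^2\ell(\theta(\epsilon);z_1)+\nabla_\theta^2\ell(\theta(\epsilon);z_2)\big]\theta'(\epsilon)=O(s^{-1})\cdot O(s^{-1})=O(s^{-2})$. Combining with $\|\theta'(\epsilon)\|=O(s^{-1})$ and $\|A(\epsilon)^{-1}\|\le1/\lambda$ gives $\|\theta''(\epsilon)\|=O(s^{-2})$ uniformly, so the remainder is $O(s^{-2})$ and the expansion of Lemma~\ref{lemma::theta_diff_IF_polished}.2 follows.

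\textbf{Main obstacle.} The one genuinely delicate point is the bound $\|A'(\epsilon)\|=O(s^{-1})$ rather than $O(1)$: even though the empirical Hessian $\frac{1}{s+2}\sum_{z\in\mathcal S}\nabla_\theta^2\ell(\theta(\epsilon);z)$ is a sum of $\Theta(s)$ terms, the chain rule attaches the factor $\|\theta'(\epsilon)\|=O(s^{-1})$ to its $\epsilon$‑derivative, and this is exactly what makes $\theta''$ second‑order small. One also has to verify that all implicit constants in the $O(\cdot)$ bounds are uniform in $s$, $\mathcal S$, $z_1$, $z_2$, which is immediate from the global bounds $B_1,B_2,B_3$ on the first three derivatives of $\ell$ and from $\|A(\epsilon)^{-1}\|\le 1/\lambda$; everything else is routine bookkeeping of the implicit‑differentiation identities.
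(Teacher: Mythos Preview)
Your proof is correct and follows essentially the same approach as the paper: define a perturbed objective interpolating between $\tilde\theta_{\mathcal S}$ and $\hat\theta_{\mathcal S\cup\{z_1,z_2\}}$, apply the implicit function theorem to get smoothness of the minimizer path, and Taylor-expand. The only cosmetic differences are that the paper parametrizes by $\delta_{\rm val}\in[0,\tfrac{1}{s+2}]$ (so the $s^{-1}$ factor enters via the step size rather than via $\|\theta'\|$) and uses the Lagrange remainder, whereas you parametrize by $\epsilon\in[0,1]$ and use the integral remainder; your explicit bound on $\theta''$ is in fact more detailed than the paper's, which simply asserts uniform boundedness of the second derivative from the smoothness assumptions.
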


\begin{restatable}[Expansion of utility]{lemma}{LemmaUDiffIFPolished} \label{lemma::u_diff_IF_polished}
    Suppose the performance metric $u(\theta)$ is once continuously differentiable and its gradient $\nabla_\theta u(\theta)$ is $L_u$-Lipschitz continuous and bounded. 
    Then, for any parameter vectors $\theta_A$ and $\theta_B$, we have
    \begin{equation*}
        u(\theta_A) - u(\theta_B) = \langle\nabla_\theta u(\theta_B), \theta_A - \theta_B \rangle + O\left( \| \theta_A - \theta_B \|^2\right).
    \end{equation*}
\end{restatable}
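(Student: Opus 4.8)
The plan is to establish Lemma~\ref{lemma::u_diff_IF_polished} as a routine first-order Taylor remainder estimate, relying only on the $C^1$-smoothness of $u$ together with the $L_u$-Lipschitz continuity of $\nabla_\theta u$. Notably, the boundedness of $\nabla_\theta u$ is not needed for this particular bound; it is listed merely to align with the ambient regularity conditions invoked elsewhere (e.g., in Lemma~\ref{lemma::SGD_relationship_between_U_and_l2_dist}). The entire argument reduces to the integral form of Taylor's theorem applied along the segment joining $\theta_B$ to $\theta_A$.

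First I would introduce the scalar interpolation $g(t) := u\big(\theta_B + t(\theta_A - \theta_B)\big)$ for $t \in [0,1]$, so that $g(0) = u(\theta_B)$ and $g(1) = u(\theta_A)$. Since $u$ is continuously differentiable, $g$ is $C^1$ with derivative $g'(t) = \langle \nabla_\theta u(\theta_B + t(\theta_A - \theta_B)), \theta_A - \theta_B \rangle$, and the fundamental theorem of calculus yields
\[
    u(\theta_A) - u(\theta_B) = \int_0^1 \big\langle \nabla_\theta u(\theta_B + t(\theta_A - \theta_B)),\, \theta_A - \theta_B \big\rangle \, \td t.
\]
Next I would subtract the linear term, rewriting $\langle \nabla_\theta u(\theta_B), \theta_A - \theta_B \rangle = \int_0^1 \langle \nabla_\theta u(\theta_B), \theta_A - \theta_B \rangle \, \td t$, to isolate the remainder as
\[
    u(\theta_A) - u(\theta_B) - \big\langle \nabla_\theta u(\theta_B),\, \theta_A - \theta_B \big\rangle = \int_0^1 \big\langle \nabla_\theta u(\theta_B + t(\theta_A - \theta_B)) - \nabla_\theta u(\theta_B),\, \theta_A - \theta_B \big\rangle \, \td t.
\]

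To finish, I would bound the integrand by applying Cauchy--Schwarz followed by the Lipschitz estimate $\|\nabla_\theta u(\theta_B + t(\theta_A - \theta_B)) - \nabla_\theta u(\theta_B)\| \leq L_u \, t \, \|\theta_A - \theta_B\|$, giving a pointwise bound of $L_u \, t \, \|\theta_A - \theta_B\|^2$. Integrating over $t \in [0,1]$ then produces a remainder bounded in absolute value by $(L_u/2)\|\theta_A - \theta_B\|^2 = O(\|\theta_A - \theta_B\|^2)$, which is precisely the claimed expansion.

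There is no substantive obstacle here: the proof is a direct application of the integral form of Taylor's theorem, and the $L_u$-Lipschitz hypothesis supplies exactly the quadratic control on the remainder. The only points meriting a line of verification are the validity of the integral representation (guaranteed by $u \in C^1$) and the confirmation that the hidden constant in the $O(\cdot)$ term depends only on $L_u$, and not on the individual magnitudes of $\theta_A$ or $\theta_B$ --- a fact that follows immediately since the Lipschitz bound is translation-invariant along the segment.
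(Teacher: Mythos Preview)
Your proposal is correct and takes essentially the same approach as the paper: the paper uses the scalar Mean Value Theorem to write $u(\theta_A)-u(\theta_B)=\langle\nabla_\theta u(\xi),\theta_A-\theta_B\rangle$ for some $\xi$ on the segment, then bounds $\langle\nabla_\theta u(\xi)-\nabla_\theta u(\theta_B),\theta_A-\theta_B\rangle$ via Cauchy--Schwarz and the $L_u$-Lipschitz hypothesis, exactly as you do. Your integral form is a cosmetic variant (yielding the slightly sharper constant $L_u/2$ instead of $L_u$), but the underlying idea is identical.
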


\begin{proof}[Proof of Proposition~\ref{proposition::application::IF}]
Let $\Delta_U = U(\mathcal{S}\cup\{z_1,z_1'\}) - U(\mathcal{S}\cup\{z_1,z_2\}) - U(\mathcal{S}\cup\{z_1',z_2'\}) + U(\mathcal{S}\cup\{z_2,z_2'\})$.
Recall $U(\mathcal{X}) = u(\hat{\theta}_{\mathcal{X}})$. We use Lemma~\ref{lemma::u_diff_IF_polished} to expand each $U(\mathcal{S} \cup \{a,b\})$ around $u(\tilde{\theta}_{\mathcal{S}})$:
\begin{equation*}
    U(\mathcal{S} \cup \{a,b\}) - u(\tilde{\theta}_{\mathcal{S}}) = \langle \nabla_\theta u(\tilde{\theta}_{\mathcal{S}}), \hat{\theta}_{\mathcal{S}\cup\{a,b\}} - \tilde{\theta}_{\mathcal{S}} \rangle + O(\|\hat{\theta}_{\mathcal{S}\cup\{a,b\}} - \tilde{\theta}_{\mathcal{S}}\|^2).
\end{equation*}
Let $\delta\hat{\theta}_{ab} := \hat{\theta}_{\mathcal{S}\cup\{a,b\}} - \tilde{\theta}_{\mathcal{S}}$. 
Lemma~\ref{lemma::theta_diff_IF_polished} implies that $\|\delta\hat{\theta}_{ab}\|^2 = O(s^{-2})$.
It also states that the first-order component of $\delta\hat{\theta}_{ab}$ is:
\begin{equation*}
    \delta\hat{\theta}_{ab}^{(1)} := -H_{\tilde{\theta}_{\mathcal{S}}}^{-1} \left[ \nabla_\theta\ell(\tilde{\theta}_{\mathcal{S}}; a) + \nabla_\theta\ell(\tilde{\theta}_{\mathcal{S}}; b) \right]\frac{1}{s + 2}.
\end{equation*}
Thus, $\delta\hat{\theta}_{ab} = \delta\hat{\theta}_{ab}^{(1)} + O(s^{-2})$.
Substituting this into the expansion of $U(\mathcal{S} \cup \{a,b\}) - u(\tilde{\theta}_{\mathcal{S}})$ yields
\begin{equation*}
    U(\mathcal{S} \cup \{a,b\}) = u(\tilde{\theta}_{\mathcal{S}}) + \langle \nabla_\theta u(\tilde{\theta}_{\mathcal{S}}), \delta\hat{\theta}_{ab}^{(1)} \rangle + \langle \nabla_\theta u(\tilde{\theta}_{\mathcal{S}}), O(s^{-2}) \rangle + O(s^{-2}).
\end{equation*}
Since $\nabla_\theta u(\tilde{\theta}_{\mathcal{S}})$ is bounded, the term $\langle \nabla_\theta u(\tilde{\theta}_{\mathcal{S}}), O(s^{-2}) \rangle$ is also $O(s^{-2})$.
Therefore, $U(\mathcal{S} \cup \{a,b\}) = u(\tilde{\theta}_{\mathcal{S}}) + \langle \nabla_\theta u(\tilde{\theta}_{\mathcal{S}}), \delta\hat{\theta}_{ab}^{(1)} \rangle + O(s^{-2})$.

Now, we substitute this expansion into the expression of $\Delta_U$ and obtain
\begin{align*}
    \Delta_U = &\left( u(\tilde{\theta}_{\mathcal{S}}) + \langle \nabla_\theta u(\tilde{\theta}_{\mathcal{S}}), \delta\hat{\theta}_{z_1 z_1'}^{(1)} \rangle \right) - \left( u(\tilde{\theta}_{\mathcal{S}}) + \langle \nabla_\theta u(\tilde{\theta}_{\mathcal{S}}), \delta\hat{\theta}_{z_1 z_2}^{(1)} \rangle \right) \\
    &- \left( u(\tilde{\theta}_{\mathcal{S}}) + \langle \nabla_\theta u(\tilde{\theta}_{\mathcal{S}}), \delta\hat{\theta}_{z_1' z_2'}^{(1)} \rangle \right) + \left( u(\tilde{\theta}_{\mathcal{S}}) + \langle \nabla_\theta u(\tilde{\theta}_{\mathcal{S}}), \delta\hat{\theta}_{z_2 z_2'}^{(1)} \rangle \right) + \sum O(s^{-2}).
\end{align*}
The $u(\tilde{\theta}_{\mathcal{S}})$ terms cancel. The sum of the four $O(s^{-2})$ remainder terms is still $O(s^{-2})$. The sum of the first-order inner product terms is:
\begin{equation*}
    \langle \nabla_\theta u(\tilde{\theta}_{\mathcal{S}}), \delta\hat{\theta}_{z_1 z_1'}^{(1)} - \delta\hat{\theta}_{z_1 z_2}^{(1)} - \delta\hat{\theta}_{z_1' z_2'}^{(1)} + \delta\hat{\theta}_{z_2 z_2'}^{(1)} \rangle.
\end{equation*}
Let $h_x = -\frac{1}{s+2}H_{\tilde{\theta}_{\mathcal{S}}}^{-1} \nabla_\theta\ell(\tilde{\theta}_{\mathcal{S}}; x)$. Then $\delta\hat{\theta}_{ab}^{(1)} = h_a + h_b$.
The sum of influence terms becomes:
\begin{align*}
    &\langle \nabla_\theta u(\tilde{\theta}_{\mathcal{S}}), (h_{z_1} + h_{z_1'}) - (h_{z_1} + h_{z_2}) - (h_{z_1'} + h_{z_2'}) + (h_{z_2} + h_{z_2'}) \rangle \\
    &= \langle \nabla_\theta u(\tilde{\theta}_{\mathcal{S}}), h_{z_1} + h_{z_1'} - h_{z_1} - h_{z_2} - h_{z_1'} - h_{z_2'} + h_{z_2} + h_{z_2'} \rangle \\
    &= \langle \nabla_\theta u(\tilde{\theta}_{\mathcal{S}}), \mathbf{0} \rangle = 0.
\end{align*}
Thus, the first-order influence terms cancel out completely. The remaining terms are all $O(s^{-2})$.
The proof of Proposition~\ref{proposition::application::IF} is complete.
\end{proof}

\subsubsection{Lemmas used in the proof of Proposition~\ref{proposition::application::IF}} 
\label{sec::proofs_for_IF_lemmas_appendix}

\LemmaThetaDiffIFPolished* 
\begin{proof}[Proof of Lemma~\ref{lemma::theta_diff_IF_polished}]
For $\delta_{\rm val} \in [0, \frac{1}{s + 2}]$, define the perturbed objective function:
\begin{equation*}
    \mathcal{L}_{\rm pert}(\theta, \delta_{\rm val}) := \frac{1}{s + 2} \sum_{z \in \mathcal{S}} \ell(\theta; z) + \delta_{\rm val}\left[\ell(\theta; z_1) + \ell(\theta; z_2)\right] + \frac{\lambda}{2} \|\theta\|_2^2,
\end{equation*}
and its minimizer $\hat{\theta}(\delta_{\rm val}) := \arg \min_{\theta} \mathcal{L}_{\rm pert}(\theta, \delta_{\rm val})$.
By construction, $\tilde{\theta}_{\mathcal{S}} = \hat{\theta}(0)$. The parameter vector $\hat{\theta}_{\mathcal{S} \cup \{z_1, z_2\}}$ minimizes $\frac{1}{s+2}\sum_{z \in \mathcal{S}\cup\{z_1,z_2\}} \ell(\theta;z) + \frac{\lambda}{2}\|\theta\|_2^2$. This corresponds to $\hat{\theta}(\frac{1}{s+2})$. Thus, $\hat{\theta}_{\mathcal{S} \cup \{z_1, z_2\}} - \tilde{\theta}_{\mathcal{S}} = \hat{\theta}(\frac{1}{s+2}) - \hat{\theta}(0)$.

The first-order condition for $\hat{\theta}(\delta_{\rm val})$ is $F(\hat{\theta}(\delta_{\rm val}), \delta_{\rm val}) = 0$, where
\begin{equation*}
    F(\theta, \delta_{\rm val}) := 
    \frac{\partial \mathcal{L}_{\rm pert}}{\partial \theta}
    =
    \frac{1}{s + 2} \sum_{z \in \mathcal{S}} \nabla_\theta\ell(\theta; z) + \delta_{\rm val} \left[ \nabla_\theta \ell(\theta; z_1) + \nabla_\theta \ell(\theta; z_2)\right] + \lambda \theta = 0.
\end{equation*}
Given the smoothness conditions on $\ell$ (convexity, continuous third derivatives, bounded derivatives), the Higher-Order Implicit Function Theorem (e.g., \citet{zorich2016differential}) ensures that $\hat{\theta}(\delta_{\rm val})$ is twice continuously differentiable with respect to $\delta_{\rm val}$ around $\delta_{\rm val}=0$.
Differentiating $F(\hat{\theta}(\delta_{\rm val}), \delta_{\rm val})=0$ w.r.t. $\delta_{\rm val}$ gives 
\[ 
\frac{\partial F}{\partial \theta} \frac{\partial \hat{\theta}}{\partial \delta_{\rm val}} + \frac{\partial F}{\partial \delta_{\rm val}} = 0.
\]
Let 
\[ 
H(\delta_{\rm val}) := \frac{\partial F}{\partial \theta} = \frac{1}{s + 2} \sum_{z \in \mathcal{S}} \nabla_\theta^2\ell(\hat{\theta}(\delta_{\rm val}); z) + \delta_{\rm val} [\nabla_\theta^2\ell(\hat{\theta}(\delta_{\rm val}); z_1) + \nabla_\theta^2\ell(\hat{\theta}(\delta_{\rm val}); z_2)] + \lambda I,
\]
and 
\[ 
\frac{\partial F}{\partial \delta_{\rm val}} = \nabla_\theta \ell(\hat{\theta}(\delta_{\rm val}); z_1) + \nabla_\theta \ell(\hat{\theta}(\delta_{\rm val}); z_2).
\]
So, 
\[
\frac{\partial\hat{\theta}(\delta_{\rm val})}{\partial \delta_{\rm val}} = -H(\delta_{\rm val})^{-1} \left[ \nabla_\theta\ell(\hat{\theta}(\delta_{\rm val}); z_1) + \nabla_\theta\ell(\hat{\theta}(\delta_{\rm val}); z_2) \right].
\]
Due to convexity of $\ell$, $\nabla_\theta^2\ell \succeq 0$, so $H(\delta_{\rm val}) \succeq \lambda I$. With $\lambda > 0$, $H(\delta_{\rm val})$ is positive definite and its inverse is bounded. Uniform boundedness of $\nabla_\theta \ell$ implies $\frac{\partial\hat{\theta}(\delta_{\rm val})}{\partial \delta_{\rm val}}$ is uniformly bounded. Similarly, uniform boundedness of up to third-order derivatives of $\ell$ ensures $\frac{\partial^2\hat{\theta}(\delta_{\rm val})}{\partial \delta_{\rm val}^2}$ is uniformly bounded.

By Taylor's theorem with Lagrange remainder, for some $\xi \in [0, \frac{1}{s + 2}]$:
\begin{equation*}
    \hat{\theta}\left(\frac{1}{s + 2}\right) - \hat{\theta}(0) = \left. \frac{\partial\hat{\theta}(\delta_{\rm val})}{\partial\delta_{\rm val}} \right|_{\delta_{\rm val} = 0} \cdot \frac{1}{s + 2} + \frac{1}{2} \left. \frac{\partial^2\hat{\theta}(\delta_{\rm val})}{\partial\delta_{\rm val}^2} \right|_{\delta_{\rm val} = \xi} \cdot \left(\frac{1}{s + 2}\right)^2.
\end{equation*}
Noting $\hat{\theta}(0) = \tilde{\theta}_{\mathcal{S}}$ and $H(0) = H_{\tilde{\theta}_{\mathcal{S}}}$ (as defined in the lemma statement), we have:
\begin{align*}
    \hat{\theta}_{\mathcal{S}\cup \{z_1, z_2\}} - \tilde{\theta}_{\mathcal{S}} = -H_{\tilde{\theta}_{\mathcal{S}}}^{-1} \left[ \nabla_\theta\ell(\tilde{\theta}_{\mathcal{S}}; z_1) + \nabla_\theta\ell(\tilde{\theta}_{\mathcal{S}}; z_2) \right]\frac{1}{s + 2} + O\left(\frac{1}{(s+2)^2}\right).
\end{align*}
This establishes part (2) of the lemma with the corrected negative sign. Part (1), $\|\hat{\theta}_{\mathcal{S}\cup \{z_1, z_2\}} - \tilde{\theta}_{\mathcal{S}}\| = O(s^{-1})$, follows because the first term is $O(s^{-1})$ (since $H_{\tilde{\theta}_{\mathcal{S}}}^{-1}$ and gradients are bounded) and dominates the $O(s^{-2})$ remainder for large $s$.
\end{proof}

\LemmaUDiffIFPolished*
\begin{proof}[Proof of Lemma~\ref{lemma::u_diff_IF_polished}]
By Taylor's theorem (or the Mean Value Theorem for vector functions), since $u$ is once continuously differentiable, for some $\xi$ on the line segment connecting $\theta_A$ and $\theta_B$:
\begin{align*}
    u(\theta_A) - u(\theta_B) &= \langle \nabla_\theta u(\xi), \theta_A - \theta_B \rangle \\
    &= \langle\nabla_\theta u(\theta_B), \theta_A - \theta_B \rangle + \langle \nabla_\theta u(\xi) -\nabla_\theta u(\theta_B), \theta_A - \theta_B \rangle.
\end{align*}
The second term can be bounded using the Cauchy-Schwarz inequality and the $L_u$-Lipschitz continuity of $\nabla_\theta u$:
\begin{align*}
    |\langle \nabla_\theta u(\xi) -\nabla_\theta u(\theta_B), \theta_A - \theta_B \rangle|
    &\le \| \nabla_\theta u(\xi) -\nabla_\theta u(\theta_B) \|_2 \cdot \| \theta_A - \theta_B \|_2 \\
    &\le L_u \| \xi - \theta_B \|_2 \cdot \| \theta_A - \theta_B \|_2.
\end{align*}
Since $\xi$ lies on the line segment between $\theta_A$ and $\theta_B$, we have $\|\xi - \theta_B\|_2 \le \|\theta_A - \theta_B\|_2$.
Therefore, the absolute value of the second term is bounded by $L_u \| \theta_A - \theta_B \|_2^2$, which is $O(\|\theta_A - \theta_B\|^2)$.
This establishes the result.
\end{proof}

\clearpage

\section{Experimental Details}
\label{sec::exp_detail}

This appendix outlines the implementation details for all experiments in the main paper.
To evaluate the effectiveness, efficiency, and robustness of FGSV, four experimental settings were employed.
These consist of: (i) a synthetic data example in the introduction to illustrate the shell company attack (Section~\ref{sec:intro}); 
(ii) a benchmark comparison on the SOU cooperative game (Section~\ref{sec::benchmark});
(iii) an application to copyright attribution using a generative AI model with FlickrLogo-27 Dataset (Section~\ref{sec::copyright});
and (iv) an application to explainable AI using the Diabetes dataset (Section~\ref{sec::explainable}).

All experiments were conducted on the Unity high-performance computing cluster, provided by 
the College of Arts and Sciences at The Ohio State University.  
The copyright attribution experiment (Section~\ref{sec::copyright}) was executed on a GPU node equipped with an NVIDIA V100 GPU, 32GB VRAM, whereas all other experiments were conducted on CPU nodes featuring Intel Xeon E5-2699 v4 processors and 256 GB RAM.

\subsection{Motivational Example (Section~\ref{sec:intro})}

A synthetic setting is constructed to illustrate the shell company attack and its effect on group-level data valuation.
We generate synthetic data for binary classification from a Gaussian mixture. 
Specifically, given a class label $y_i\in\{0,1\}$, each sample $x_i \in \mathbb{R}^2$ is drawn from $\mathcal{N}(\mu_1, I_2)$ if $y_i = 0$, and from $\mathcal{N}(\mu_2, I_2)$ if $y_i = 1$, where $\mu_1 = [-3, 0]^\top$ and $\mu_2 = [3, 0]^\top$.
The class label is sampled independently with equal probability $p = 0.5$.  
An additional $n=200$ samples are independently generated from the same distribution for testing.

The training data is partitioned into two equal-sized groups of 100 samples each.
To simulate the shell company attack, the second group is further subdivided evenly into 2, 3, or 4 subgroups.
This results in a total of $k \in \{2,3,4,5\}$ disjoint groups: the first group consistently contains 100 samples, while the remaining $k-1$ groups are allocated approximately $100/(k-1)$ samples each, with any remainder distributed to maintain balance (e.g., 33, 33, and 34 when $k=4$).

For each subset $S$ of the training data, a logistic regression classifier is trained.  
The utility function $U$ is defined as its classification accuracy on a held-out test set.  
If $S$ contains only a single class, the utility is set to a constant value of $1/2$, corresponding to the expected accuracy of a random classifier.

While the experiment is qualitative and does not include confidence intervals, repeated runs with different random seeds produced consistent trends.  
FGSV is estimated using Algorithm~\ref{alg::gsv} with threshold parameter $\bar{s} = 20$ and Monte Carlo sample sizes $m_1 = m_2 = 2000$.

\subsection{Comparison with benchmark methods (Section~\ref{sec::benchmark})}

For all methods compared here, we adjust their configurations, such that the total number of utility evaluations is aligned to the fixed number of 20,000.
Note that since FGSV estimates the value for one group at a time, for fairness, the total budget is equally divided among the four groups, with 5,000 utility evaluations allocated per group. For FGSV, we set the threshold parameter to $\bar{s} = 10$ and choose $m_1 = m_2$ such that the total number of utility function calls sums to 20,000. 
Below, we describe the setup and computational structure of each method in detail, largely based on the implementation and descriptions from \cite{li2024one}.

\begin{itemize}
    \item 
    {\bf Permutation estimator \citep{castro2009polynomial, ghorbani2019data}.}

    The permutation estimator computes Shapley values by averaging marginal contribution across random permutations.
    At each iteration, it draws a permutation $\pi$ uniformly at random from all possible orderings of the $n$ elements and computes a Monte Carlo estimate based on the following alternative representation of the individual Shapley value:
    $$
    \SV(i) = \ep_{\pi}\left[U(S_{\pi,i}\cup\{i\}) - U(S_{\pi,i})\right],
    $$
    where $S_{\pi,i}$ denotes the set of indices that appear before $i$ in permutation $\pi$. 
    The final estimate is obtained by averaging the marginal contributions over $T$ sampled permutations:
    $$
    \widehat{\SV}_{\text{perm}}(i) = \frac{1}{T} \sum_{t=1}^T \left[ U(S_{\pi_t,i} \cup {i}) - U(S_{\pi_t,i}) \right],
    $$
    where $\pi_t$ denotes the $t$-th sample of random permutation. 
    To simultaneously estimate $\SV(i)$ for all data points $i \in [n]$ per permutation, this method requires $n+1$ utility evaluations -- one for the empty set $U(\varnothing)$, and one each within incremental updates, $U(\{\pi(1),\dots,\pi(j)\})$ for $j=1,\dots,n$.
    Therefore, we set $T = \lceil\frac{20000}{n+1}\rceil$ to maintain the total utility evaluation budget.
    For the last sampled permutation, the estimator stopped when the number of evaluations reached exactly 20,000.

    \item 
    {\bf Group Testing estimator \citep{jia2019towards, wang2023note2}.}
    
    This estimator computes pairwise differences of Shapley values rather than individual values.
    At each step, it draws a random size of subset $s\in[n]$ with $p(s)\propto\frac{1}{s} + \frac{1}{n-s+1}$, and draws a random subset $S_t\subseteq[n+1]$ of size $s$, where $(n+1)$-th element represents a dummy player, i.e., a player whose contribution is always zero and serves as a reference for baseline utility.
    In particular, for a given number of utility evaluations, $T$, a matrix $B\in\mathbb{R}^{T\times (n+1)}$ is constructed, where the $(t,i)$-th entry is given by:
    $$
    B_{t,i} = \begin{cases}
        U(S_t \setminus \{n+1\}) & \text{if } i\in S_t, \\
        0 & \text{otherwise}.
    \end{cases}
    $$
    We set $T=20000$.
    Then, the Shapley value is approximated by
    $$
    \widehat{\SV}_{\text{gt}}(i) = \frac{Z}{T}  \sum_{t=1}^T \left(B_{t,i} -  B_{t,n+1} \right),
    $$
    
    where $B_{t,i}$ denotes the utility observed at iteration $t$ when player $i \in S_t$, and $Z = \sum_{s=1}^np(s)$ is a normalization constant.

    \item 
    {\bf Complement Contribution estimator \citep{zhang2023efficient}.}

    The complement estimator approximates Shapley values by leveraging the symmetry between a subset and its complement.
    In particular, it relies on another equivalent representation of the Shapley value:
    $$
    \SV(i) = \frac{1}{n} \sum_{S\subseteq [n]\setminus \{i\}}\frac{U(S\cup\{i\}) - U([n]\setminus(S\cup\{i\}))}{\binom{n-1}{|S|}}.
    $$
    This expression enables the estimator to reuse utility evaluations for both $S$ and its complement $[n] \setminus S$.
    At each iteration, it samples a subset size $s \in [n]$ uniformly and then uniformly samples a subset $S \subseteq [n]$ of size $s$.
    Then, the final estimator is:
    $$
    \widehat{\SV}_{\text{cc}}(i) = \frac{1}{n} \sum_{s=1}^{n} \frac{1}{T_{i,s}}\sum_{t=1}^{T} \left[v_t  \left\{\mathbb{I}(i \in S_t, |S_t| = s) - \mathbb{I}(i \notin S_t, |S_t| = n-s)\right\}\right],
    $$
    where $v_t = U(S_t) - U([n] \setminus S_t)$ and $T_{i,s}$ is the number of such samples satisfying one of the two conditions.
    In our experiment, we set $T =  20000 / 2 = 10000$ so that each iteration, which requires two utility evaluations, results in exactly 20,000 total evaluations.

    \item 
    {\bf One-for-All estimator \citep{li2024one}.}

    The One-for-All estimator is based on the following alternative formulation of the Shapley value\footnote{While the One-for-All estimator can estimate a variety of values (e.g., Banzhaf \citep{wang2023data}, Beta-Shapley \citep{kwon2021beta}), we focus on its use for Shapley value estimation.}:
    $$
    \SV(i) = \frac{1}{n}\sum_{s=1}^n \left( \ep_{S \subseteq [n], |S| = s, i \in S}[U(S)] - \ep_{S \subseteq [n], |S| = s-1, i \notin S}[U(S)] \right),
    $$
    This formulation allows each sampled subset to contribute to the estimates of all players, enabling efficient sample reuse.
    First, it deterministically allocates $2n + 2$ utility evaluations for subset sizes $s \in \{0, 1, n-1, n\}$, and uses them to compute the corresponding expectations exactly.
    Then, for the remaining possible subset sizes $\{2, \dots, n-2\}$, it samples $s$ with a predefined sampling probability $q(s)$, and then samples a subset $S_t \subseteq [n]$ of size $s$ uniformly at random.
    The final estimation is:
    \begin{align*}
        \widehat{\SV}_{\text{ofa}}(i) &=\frac{1}{n}(U([n]) - U(\varnothing)) \\ &+ 
    \frac{1}{n(n-1)} \left(\sum_{i\in S, |S|=n-1}U(S)  - \sum_{i\notin S, |S|=1}U(S) \right) \\
    &+ \frac{1}{n}\sum_{s=2}^{n-2}\left(\frac{1}{T_{i,s}^{\text{in}}}\sum_{t=1}^T U(S_t)\cdot\mathbb{I}(|S_t| = s, i\in S_t) - \frac{1}{T_{i,s}^{\text{out}}}\sum_{t=1}^T U(S_t)\cdot\mathbb{I}(|S_t| = s, i\notin S_t) \right),
    \end{align*}
    where  $T_{i,s}^{\text{in}}$ and $T_{i,s}^{\text{out}}$ denote the number of Monte Carlo samples of size $s$ in which $i$ is included and excluded, respectively.
    In our experiment, we used $q(s)\propto \frac{1}{\sqrt{s(n - s)}}$, which is proved to be the optimal sampling distribution for Shapley value estimation.
    Next, after allocating $2n + 2$ evaluations deterministically for subset sizes $s \in \{0,1,n-1,n\}$, the remaining evaluations are used for Monte Carlo sampling.
    That is, we set $T = 20000 - (2n + 2)$ for sampling subset sizes $s \in \{2,\dots,n-2\}$.

    \item

    {\bf KernelSHAP \citep{lundberg2017unified}.}
    
    The Shapley value can be characterized as the solution to the following constrained optimization problem:
    $$
    \SV = \arg\min_{\phi \in \mathbb{R}^n} \sum_{\varnothing \subsetneq S \subsetneq [n]} w_S \left( U(S) - U(\varnothing) - \sum_{i \in S} \phi_i \right)^2, \quad \text{subject to} \quad \sum_{i=1}^n \phi_i = U([n]) - U(\varnothing),
    $$
    where $w_S \propto \frac{1}{\binom{n-2}{|S|-1}}$ is the kernel weight assigned to each subset $S$.
    Based on this formulation, KernelSHAP constructs a sample-based approximation of the objective and solves the resulting weighted least squares problem to obtain an estimate of the Shapley value.
    In particular, at each iteration, it samples a subset size $s\in \{1, \dots,n-1\}$ according to the distribution $p(s) \propto \frac{1}{s(n-s)}$, and then draws a subset $S_t\subseteq [n]$ of size $s$ uniformly at random.
    Each subset is encoded as a binary indicator vector $\mathbf{1}_S \in \{0,1\}^n$, and the following quantities are updated:
    $$
    \widehat{A} = \frac{1}{T} \sum_{t=1}^T \mathbf{1}_{S_t} \mathbf{1}_{S_t}^\top, \quad \widehat{b} = \frac{1}{T} \sum_{t=1}^T \left(U(S_t) - U(\varnothing)\right) \cdot \mathbf{1}_{S_t}.
    $$
    The KernelSHAP estimator is obtained in closed form as:
    $$
    \widehat{\SV}_{\text{ks}} = \widehat{A}^{-1} \left( \widehat{b} - \frac{\mathbf{1}^\top \widehat{A}^{-1} \widehat{b} - U([n]) + U(\varnothing)}{\mathbf{1}^\top \widehat{A}^{-1} \mathbf{1}} \cdot \mathbf{1} \right),
    $$
    where $\mathbf{1} \in \mathbb{R}^n$ denotes the vector with all-one entries.
    We set $T=20000$ in our implementation of KernelSHAP.

    \item 
    {\bf Unbiased KernelSHAP \citep{covert2021improving}.}

    The Unbiased KernelSHAP estimator is a variant of KernelSHAP that utilizes the fact that the exact gram matrix, $A = \ep[\mathbf{1}_S \mathbf{1}_S^\top]$, admits a closed-form expression under the same distribution $w_S \propto \frac{1}{\binom{n-2}{|S|-1}}$.
    Instead of estimating $A$ empirically from samples, this estimator directly computes $A$, where its $(i,j)$-entry is defined as:
    $$
    A_{ij} = \begin{cases}
    \frac{1}{2} & \text{ if } i = j, \\
    \frac{1}{n(n-1)}\frac{\sum_{s=2}^{n-1}\frac{s-1}{n-s}}{\sum_{s=1}^{n-1}\frac{1}{s(n - s)}} & \text{otherwise.}
    \end{cases}
    $$
    Unbiased KernelSHAP replaces the empirical matrix $\widehat{A}$ in KernelSHAP with its closed-form expectation $A$:
    $$
    \widehat{\SV}_{\text{uks}} = A^{-1} \left( \widehat{b} - \frac{\mathbf{1}^\top A^{-1} \widehat{b} - U([n]) + U(\varnothing)}{\mathbf{1}^\top A^{-1} \mathbf{1}} \cdot \mathbf{1} \right).
    $$
    In our implementation, we set $T=20000$.

    \item 
    {\bf LeverageSHAP \citep{musco2025provably}.}

    LeverageSHAP is another variant of KernelSHAP designed to reduce the variance of the estimator and improve computational efficiency.
    Unlike KernelSHAP, which samples subset sizes $s\in\{1,\dots,n-1\}$ with the weighting $p(s)\propto \frac{1}{s(n-s)}$, LeverageSHAP draws $s$ uniformly (i.e., $p(s)\propto 1$) and compensates for the mismatch via a correction factor $w(s)=\sqrt{s(n-s)}$.
    Also, it adopts paired sampling, which also includes complement $\bar{S}_t = S_t^c$ in the sample pool when each subset $S_t$ is sampled.
    Based on these modification, it computes the following quantities:
    \begin{align*}
        \widetilde{A} &= \frac{1}{2T} \sum_{t=1}^T w(|S_t|) \left\{\mathbf{1}_{S_t} \mathbf{1}_{S_t}^\top + \mathbf{1}_{\bar{S}_t} \mathbf{1}_{\bar{S}_t}^\top\right\} \\ 
    \widetilde{b} &= \frac{1}{2T} \sum_{t=1}^T  w(|S_t|)\left\{\left(U(S_t) - U(\varnothing)\right) \cdot \mathbf{1}_{S_t} + \left(U(\bar{S}_t) - U(\varnothing)\right) \cdot \mathbf{1}_{\bar{S}_t} \right\}.
    \end{align*}
    The final LeverageSHAP estimator is then computed by:
    $$
    \widehat{\SV}_{\text{ls}} = \widetilde{A}^{-1} \left( \widetilde{b} - \frac{\mathbf{1}^\top \widetilde{A}^{-1} \widetilde{b} - U([n]) + U(\varnothing)}{\mathbf{1}^\top \widetilde{A}^{-1} \mathbf{1}} \cdot \mathbf{1} \right).
    $$
    For LeverageSHAP, we set $T = 20000 / 2 = 10000$ since each iteration requires two utility evaluations due to the use of paired sampling.
    
\end{itemize}

Although all methods use the same total number of utility evaluations (20,000), their actual runtimes differ substantially, as shown in the second row of Figure~\ref{fig:sou_results}.
These discrepancies are attributable to several algorithmic and implementation-specific factors, including computational complexity and sampling distribution over subset sizes.

First, \textbf{Group Testing} requires the construction of a utility matrix $B \in \mathbb{R}^{T \times (n+1)}$.
Maintaining this large matrix introduces relatively high memory usage and per-iteration computational overhead, as each utility evaluation must be copied into multiple locations with structural indexing.
In contrast, other estimators maintain only low-dimensional accumulators or regression statistics, which impose negligible memory cost.

Second, the cost of evaluating $U(S)$ varies dramatically with the size of the subset, $|S| = s$.
Consequently, the \emph{subset size sampling distribution} required by each method significantly impacts speed.
Among the benchmarks, 
\textbf{Group Testing}: $p(s)\propto \frac{1}{s} + \frac{1}{n-s+1}$;
\textbf{One-for-All}: $q(s)\propto \frac{1}{\sqrt{s(n-s)}}$;
and both \textbf{KernelSHAP} and \textbf{Unbiased KernelSHAP}: $p(s)\propto \frac{1}{s(n-s)}$.
They all employ non-uniform subset size distributions that tend to oversample either very small or very large subsets.
This matters significantly in the SOU game, where evaluating $U(S)$ requires checking whether $\mathcal{A}_j\subseteq S$ holds for all $j \in [d]$.
Figure~\ref{fig:sou_time} empirically demonstrates the near-exponential growth in computation time for evaluating $U(S)$ as a function of subset size $s$, across varying values of $n \in \{64,128,256\}$.
This occurs because the time required for set inclusion checks (e.g., \texttt{issubset()} in Python) increases rapidly with subset size.
As a result, estimators that disproportionately sample such extreme subset sizes tend to exhibit higher average computation time per evaluation compared to those that sample subset sizes uniformly (e.g., \textbf{Permutation}, \textbf{Complement Contribution}, \textbf{LeverageSHAP}).
Thus, sampling behavior---not just the number of evaluations---directly impacts computational efficiency.
On the other hand, \textbf{FGSV} does not sample subset sizes $s$, but instead performs estimation through an explicit loop over all possible values of $s$.  
As shown in Algorithm~\ref{alg::gsv_v2}, when $s < \bar{s}$, it evaluates the utility function over all grid points $s_1 \in [\max\{0, s + s_0 - n\}, \min\{s, s_0\}]$.  
For $ s \geq \bar{s}$, the approximation is performed using only two representative values of $s_1$, thereby significantly reducing the number of required evaluations.  
This thresholding mechanism ensures that, under a fixed budget of utility calls, FGSV concentrates more computation on smaller subset sizes, leading to lower overall runtime.

\setcounter{figure}{4}
\begin{figure}[h]
    \centering
    \includegraphics[width=\linewidth]{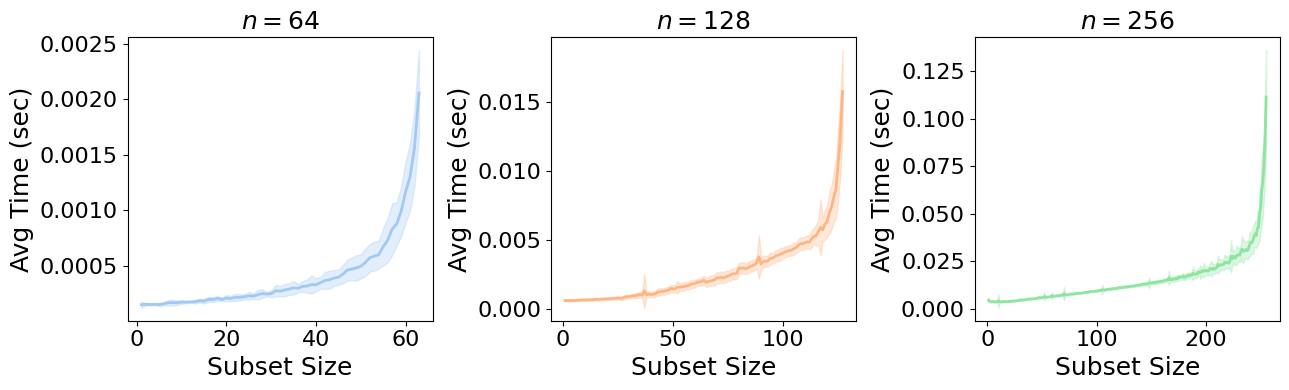}
    \vspace{-1.5em}
    \caption{Empirical average runtime (in seconds) of $U(S)$ evaluation as a function of subset size $s=|S|$. Each curve represents the mean over 50 randomly sampled subsets of size $s$; shaded areas indicate $\pm1$ standard deviation.}
        \vspace{-0.5em}
    \label{fig:sou_time}
\end{figure}

Third, \textbf{KernelSHAP} and its variants—\textbf{Unbiased KernelSHAP} and \textbf{LeverageSHAP}—require solving a constrained least squares problem involving dense matrix inversions and multiplications.
These linear algebra operations, performed either during or after sampling, dominate the overall computational cost, making these methods significantly slower than alternatives that rely solely on running averages or marginal contribution estimates.

\subsection{Faithful copyright attribution in generative AI (Section~\ref{sec::copyright})}

We evaluate SRS and FSRS in the context of group-level copyright attribution for generative models, using a logo generation task based on Stable Diffusion.  
To facilitate comparison, we mostly followed the experimental setup in \cite{wang2024economic}.
Nonetheless, for clarity and completeness, we summarize the relevant details below.

To construct the experiment, we select four logo classes (Google, Sprite, Vodafone, and Starbucks) for evaluation, and use the remaining 23 brands to initialize a baseline model via fine-tuning of Stable Diffusion v1.4~\citep{rombach2022high}.
We then assess the impact of fine-tuning with each of the four excluded brands by measuring how their inclusion alters the generation behavior of the model.

Fine-tuning is performed using Low-Rank Adaptation (LoRA)~\citep{hu2022lora}, a technique that inserts trainable low-rank matrices into the attention layers of the diffusion model.
The rank $r$ and scaling factor $\alpha$ are both set to 8, and fine-tuning is only applied to the attention layers of the UNet module, while all other weights of the Stable Diffusion model are kept frozen.
This configuration ensures efficient and stable adaptation to the small-scale dataset, in line with standard LoRA usage.\footnote{\url{https://huggingface.co/blog/lora}}

We use a learning rate of 0.0001, a batch size of 4, and train for 10 epochs.
After fine-tuning, we generate $N_{\rm MC} = 20$ images for each of the four selected brands using the prompt ``\texttt{A logo by [brand name]}.''  
Image generation is performed with 25 denoising steps and classifier-free guidance (scale 7.5), implemented using the \texttt{DDPMScheduler}.

Following the formulation of \cite{wang2024economic}, we define the utility function as the log-likelihood of generated images.
For a given length of the denoising steps $T$, let $(x_T,\dots,x_1,x_0)$ denote the reverse trajectory defined by the diffusion scheduler, where $x_T\sim \mathcal{N}(0,I)$ is the initial latent noise, and $x_0=x_{(\text{gen})}$ is the final generated image.
The likelihood of $x_{(\text{gen})}$ under model parameters $\theta$ is defined as:
\begin{align*}
    p_\theta(x_{\text{(gen)}}) &= p_\theta(x_0) =\mathbb{E}_{x_1} \left[p_\theta(x_0 \mid x_1)\right],
\end{align*}
by the Markov property of the diffusion process.
The conditional distribution $p_\theta(x_0 \mid x_1)$ is Gaussian, given by
$$
p_\theta(x_0 \mid x_1) \sim \mathcal{N}\left(x_0 ; \frac{1}{\sqrt{\alpha_1}} \left( x_1^{(j)} - \sqrt{1 - \alpha_1} \, \hat{\epsilon}_\theta(x_1^{(j)}, 1) \right), \sigma_1^2 I \right),
$$
where $\hat{\epsilon}_\theta(x_t, t)$ denotes the predicted noise at step $t$, and $\alpha_1, \sigma_1^2$ are scheduler-specific constants.
Based on these formulation, the likelihood of $x_{\text{(gen)}}$ can be approximated as
$$
     \frac{1}{N_{\rm MC}} \sum_{j=1}^{N_{\rm MC}} \mathcal{N}\left(x_0 ; \frac{1}{\sqrt{\alpha_1}} \left( x_1^{(j)} - \sqrt{1 - \alpha_1} \hat{\epsilon}_\theta(x_1^{(j)}, 1) \right), \sigma_1^2 I \right),
$$
where $x_1^{(j)}$ for $j=1,\dots,N_{\rm MC}$ is sampled by reversing the diffusion process from a standard Gaussian noise vector $x_T^{(j)} \stackrel{iid}{\sim} \mathcal{N}(0, I)$.

In our setting, the model is initialized from a pretrained checkpoint based on the remaining 23 brands, which stabilizes the fine-tuning process even when the input subset is small.  
This design parallels the data augmentation strategy described in Section~\ref{sec::noninf}, where non-informative examples are added to ensure that the utility function remains well-defined. 
Ultimately, the resulting setup can be conceptually viewed as one where the utility function consistently receives a sufficiently large dataset as input.
Hence, the computation procedure can be understood as effectively a special case of Algorithm~\ref{alg::gsv_v2}, with the difference that our computation injects \emph{informative}, rather than non-informative, data points.
As a result, when theoretically understanding the performance of our method here, we resort to Theorem~\ref{proposition::calT}, thinking that $s$ is lower-bounded.

Finally, we used $m = 2$ for this approximation.
This choice was primarily informed by practical constraints:
the computation for a single sample, including both fine-tuning and utility evaluation, takes nearly 30 hours given the limited hardware resources available to us.
On the other hand, however, we observed that the FSRS estimates well-illustrates the \emph{faithfulness} property of FGSV---the preservation of group-level value under further partitioning---even within each single experiment.
As shown in Figure~\ref{fig:srs-comparison}, which displays the result from a single experiment, the FSRS values remained stable before and after splitting the Google and Sprite brands.
This suggests that the empirical evidence is clear even with a small $m$.
To understand this phenomenon, we attribute such empirical stability to the fact that the diffusion model here was initialized by a pretrained baseline model.
The baseline model was trained by on big data with very significant computing resources, which tends to yield rather stable initializations.
Consequently, fine-tuning outcomes across random seeds tend to inherit some stability and would not vary wildly.

\subsection{Faithful explainable AI (Section~\ref{sec::explainable})}

We evaluated FGSV and GSV on the Diabetes dataset~\citep{efron2004least}, which includes 442 samples with 10 demographic and health-related features.
The dataset is preprocessed by centering and standardizing the response variable and splitting the dataset into 350 training and 92 test samples using a fixed random seed.

For the base model, we use ridge regression with regularization strength $\alpha = 0.01$.  
The utility $U(S)$ is defined as the negative mean squared error (MSE) on the test set; for empty subsets $S$, the utility is defined as the negative variance of the test labels, which corresponds to a baseline predictor that always outputs the test mean.
FGSV is estimated using Algorithm~\ref{alg::gsv_v2} with parameters $\bar{s} = 35$ and $m_1= m_2 = 1000$, while GSV is computed exactly.
Each setting is repeated for 30 times using independent Monte Carlo replications.

\clearpage

\section{Approximation algorithm with non-informative data augmentation}
\label{sec::algorithm-2}

\begin{algorithm}[H]
\caption{Approximate FGSV with non-informative data augmentation for small input sizes}
\label{alg::gsv_v2}
\begin{algorithmic}[1]
\REQUIRE Dataset $\mathcal{D}$, group $\mathcal{S}_0$, size threshold $B$, subsample size $m$, non-informative distribution $\mathcal{P}_{\operatorname{null}}$.
\STATE Initialize $n \gets |\mathcal{D}|$, $s_0 \gets |S_0|$, $\alpha_0 \gets s_0 / n$.
\STATE Initialize total sum for correction terms $\hat{\mathcal{T}}_{sum} \gets 0$.
\FOR{$s = 1$ to $n - 1$}
    \STATE Set $s_1^* \gets \lfloor s \alpha_0 \rfloor$. \COMMENT{Expected intersection size for current $s$}
    \STATE Initialize sum of utility differences $S_{\Delta U} \gets 0$.
    \FOR{$j=1$ to $m$}
        \STATE Sample a base tuple $(\mathcal{S}, z_1, z_2)$ i.i.d. from $\mathscr{B}_{s, s_1^*} = \{(\mathcal{S}, z_1, z_2): \mathcal{S} \subseteq \mathcal{D}, |\mathcal{S}|=s, |\mathcal{S}\cap \mathcal{S}_0| = s_1, z_1 \in \mathcal{S}_0 \setminus \mathcal{S}, z_2 \in \mathcal{S}_0^c \setminus \mathcal{S}\}$, where $\mathcal{S}_0 = \{z_i: i \in S_0\}$.
        \IF{$s < B$}
            \STATE Sample augmentation set $\mathcal{S}_{null} = \{z'_1, \dots, z'_{B - s}\}$ where $z'_l \stackrel{\text{i.i.d.}}{\sim} \mathcal{P}_{\operatorname{null}}$.
            \STATE $U_1 \gets U(\mathcal{S} \cup \{z_1\} \cup \mathcal{S}_{null})$.
            \STATE $U_2 \gets U(\mathcal{S} \cup \{z_2\} \cup \mathcal{S}_{null})$.
        \ELSE
            \STATE $U_1 \gets U(\mathcal{S} \cup \{z_1\})$.
            \STATE $U_2 \gets U(\mathcal{S} \cup \{z_2\})$.
        \ENDIF
        \STATE $S_{\Delta U} \gets S_{\Delta U} + (U_1 - U_2)$.
    \ENDFOR
    \STATE $\widehat{\Delta\mu}\left(\frac{s_1^*}{s}; s, s_0, n\right) \gets \frac{1}{m} S_{\Delta U}$.
    \STATE $\hat{\mathcal{T}}(s) \gets \frac{n}{n - 1} \alpha_0(1 - \alpha_0) \cdot \widehat{\Delta\mu}\left(\frac{s_1^*}{s}; s, s_0, n\right)$. 
    \STATE $\hat{\mathcal{T}}_{sum} \gets \hat{\mathcal{T}}_{sum} + \hat{\mathcal{T}}(s)$.
\ENDFOR
\STATE $G_0 \gets \frac{s_0}{n} \left[ U([n]) - U(\varnothing) \right]$.
\RETURN $G_0 + \hat{\mathcal{T}}_{sum}$. 
\end{algorithmic}
\end{algorithm}

\end{document}